\definecolor{LightBlue}{rgb}{0.78,1,1}
\newtheorem{theorem}{Theorem}
\newtheorem{lemma}{Lemma}
\newtheorem{proposition}{Proposition}
\newtheorem{assumption}{Assumption}
\newtheorem{remark}{Remark}
\begin{document}
\title{ Local Stochastic Bilevel Optimization with Momentum-Based Variance Reduction  }

\author{ Junyi Li\thanks{Department of Electrical and Computer Engineering, University of Pittsburgh,
Pittsburgh, USA. Email: junyili.ai@gmail.com}, \ Feihu Huang\thanks{Department of Electrical and Computer Engineering, University of Pittsburgh,
Pittsburgh, USA. Email: huangfeihu2018@gmail.com}, \ Heng Huang \thanks{Department of Electrical and Computer Engineering, University of Pittsburgh,
Pittsburgh, USA.  Email: henghuanghh@gmail.com}
}

\date{}
\maketitle

\begin{abstract}
Bilevel Optimization has witnessed notable progress recently with new emerging efficient algorithms and has been applied to many machine learning tasks such as data cleaning, few-shot learning, and neural architecture search. However, little attention has been paid to solve the bilevel problems under distributed setting. Federated learning (FL) is an emerging paradigm which solves machine learning tasks over distributed-located data. FL problems are challenging to solve due to the heterogeneity and communication bottleneck. However, it is unclear how these challenges will affect the convergence of Bilevel Optimization algorithms. In this paper, we study Federated Bilevel Optimization problems. Specifically, we first propose the FedBiO, a deterministic gradient-based algorithm and we show it requires $O(\epsilon^{-2})$ number of iterations to reach an $\epsilon$-stationary point. Then we propose FedBiOAcc to accelerate FedBiO with the momentum-based variance-reduction technique under the stochastic scenario. We show FedBiOAcc has complexity of $O(\epsilon^{-1.5})$. Finally, we validate our proposed algorithms via the important Fair Federated Learning task. More specifically, we define a bilevel-based group fair FL objective. Our algorithms show superior performances compared to other baselines in numerical experiments.
\end{abstract}

\section{Introduction}
Bilevel optimization problems~\cite{willoughby1979solutions, solodov2007explicit} involve two levels of problems: an outer problem and an inner problem. The two problems are entangled where the outer problem is a function of the minimizer of the inner problem. Recently, great progress has been made to solving this type of problems, especially, efficient single loop algorithms have been developed based on various gradient approximation techniques~\cite{ji2020provably, huang2021enhanced}. Bilevel optimization problems also frequently emerge in machine learning tasks, such as hyper-parameter optimization, meta learning, neural architecture search \emph{etc.} However, most existing Bilevel Optimization work focuses on the standard non-distributed setting, and how to solve the Bilevel optimization problems under distributed settings is under-explored. Federated learning is a recently promising distributed learning paradigm. In Federated Learning~\cite{mcmahan2017communication}, a set of clients jointly solve a machine learning task under the coordination of a central server. To protect user privacy and reduce communication burden, clients only exchange their models every a few epochs, but this slows down the convergence. Various algorithms~\cite{wang2019adaptive, yu2019parallel,haddadpour2019convergence, karimireddy2019scaffold, bayoumi2020tighter, xingbig} were proposed to accelerate its training. However,most of these algorithms focus on the standard single level optimization problems. \cite{xingbig} considered one type of bilevel formulation, but their algorithm needs  Hessian matrix communication every iteration, which is impractical in practice. So efficient algorithms designed for Federated Bilevel Optimization are still missing. In this work, we propose two novel algorithms for Federated Bilevel Optimization and aim to make one step forward to mitigate this gap.

In this work, we propose the FedBiO and FedBiOAcc algorithms. The FedBiO algorithm adapts the single loop gradient-based bilevel algorithm to the federated learning setting. More precisely, each client optimizes its local bilevel problem with a single loop algorithm and for every a few epochs, clients average their local states. Then we utilize the momentum-based variance reduction technique in the local updates of FedBiO, which can control the stochastic noise and accelerate the convergence. We denote the algorithm based on this idea as FedBiOAcc. We theoretically analyze the convergence of two algorithms, which involves careful balance between the distributed-related errors and bilevel-related errors. The first type of error is `consensus error'.  In Federated Learning, clients do not communicate the model state at every epoch. As a result, the local model states drift away from each other. So the gradient directions queried at these states may not represent the true descent directions. The associated errors are often named as the `consensus error'. The consensus error also exists in Federated Bilevel Optimization problems where both inner and outer variables will drift. Another type of error is the `inner variable estimation error' or `(hyper)-gradient bias'. For a single level optimization problem, we often assume access of an unbiased gradient oracle. However, this is infeasible in bilevel optimization problems due to high computational complexity. In bilevel optimization, we need to solve the inner problem exactly to get an unbiased estimation of the (hyper-)gradient. A practical way is to solve the inner problem approximately and use the biased (hyper)-gradient. In summary, it is challenging to balance these two types of errors. However, we show that our algorithms FedBiO and FedBiOAcc converge with rate $O(\epsilon^{-2})$ and $O(\epsilon^{-1.5})$ respectively.

Finally, we apply our algorithms to solve an important federated learning task: Improve group fairness in federated learning. Fairness over sensitive groups is one of the most important desiderata in developing machine learning models. However, Federated Learning by design does not learn group-fair models. Meanwhile, due to the fact that sensitive groups often spread across different clients and clients are not allowed to share data with each other. Fair algorithms developed in non-distributed setting can not be applied directly. Recently, several research works focus on the group fairness in Federated Learning: \cite{papadaki2021federating} exploited the minimax fairness notion to learn group fair models, but it required access of the global statistics of sensitive groups;~\cite{cui2021addressing} enforced the local group fairness with linear constraints, but a local fair model may not be global group fair as clients often have heterogeneous distributions. On top of these limitations, we propose a bilevel formulation to develop group fair models. More precisely, we use a small set of samples which are balanced group-wise to tune the groups weights, in other words, we find the optimal group weights such that the learned weighted model can perform well over the validation set. We solve this problem with our two proposed algorithms and validate over real-world datasets. We summarize our contribution as follows:
\begin{enumerate}
\setlength{\itemsep}{-2pt}
    \item We propose two novel federated bilevel learning algorithms, \emph{i.e.} the FedBiO and FedBiOAcc. We show the convergence of both algorithms theoretically: FedBiO has convergence rate $O(\epsilon^{-2})$ and FedBiOACC has convergence rate $O(\epsilon^{-1.5})$. 
    \item We propose a Bilevel Optimization Formulation to improve the group fairness in Federated Learning. We compare our algorithms with various baselines and experimental results show superior performance of our new algorithms.
\end{enumerate}

\noindent \textbf{Organization.} The remainder of this paper is organized as follows: In Section 2, we introduce the related works; In Section 3, we introduce preliminaries about Bilevel Optimization and Federated Learning; In Section 4, we formally define the Federated Bilevel Optimization problem we study and introduce two novel algorithms to solve it: FedBiO and FedBiOAcc; In Section 5, we provide convergence analysis for our proposed algorithms; In section 6, we apply our algorithms to solve the Group Fairness in Federated Learning Problem; In section 7, we make discussions and summarize our approaches.

\noindent \textbf{Notations} We use $\nabla$ to denote the full gradient, use $\nabla_{x}$ to denote the partial derivative for variable x, higher order derivatives follow similar rules. $||\cdot||$ represents $l_2$ norm for vectors and spectral norm for matrices. $[K]$ represents the sequence of integers from 1 to $K$.

\section{Related Works}
\label{sec:related-work}
\noindent\textbf{Bilevel Optimization.}Bilevel optimization dates back to at least 1960s when ~\cite{willoughby1979solutions} proposed a regularization method, and then followed by many research works~\cite{ferris1991finite,solodov2007explicit,yamada2011minimizing,sabach2017first}, while in machine learning community, similar ideas in the name of implicit differentiation were also used in Hyper-parameter Optimization~\cite{larsen1996design,chen1999optimal,bengio2000gradient, do2007efficient}. Early algorithms for Bilevel Optimization solves the accurate inner problem solution for each outer variable. Recently, researchers develop algorithms which solve the inner problem with a fix number of steps, and use `back-propagation through time' technique to compute the hyper-gradient~\cite{domke2012generic, maclaurin2015gradient, franceschi2017forward, pedregosa2016hyperparameter, shaban2018truncated}. 
Very Recently, it witnessed a surge of interest in using implicit differentiation to derive single loop algorithms, \emph{i.e.} solve the inner problem with one step per hyper-iteration.\cite{ghadimi2018approximation, hong2020two, ji2020provably, ji2021lower, khanduri2021near, chen2021single,yang2021provably,huang2021enhanced, li2021fully}. 
Meanwhile, there are also works utilizing other strategies like penalty methods~\cite{mehra2019penalty}, and also other formulations like the case where the inner problem has non-unique minimizers~\cite{li2020improved}. 
The bilevel optimization has been widely applied to various machine learning applications, such as Hyper-parameter optimization~\cite{lorraine2018stochastic, okuno2018hyperparameter, franceschi2018bilevel}, meta learning~\cite{zintgraf2019fast, song2019maml, soh2020meta}, neural architecture search~\cite{liu2018darts, wong2018transfer, xu2019pc}, adversarial learning~\cite{tian2020alphagan, yin2020meta, gao2020adversarialnas}, deep reinforcement learning~\cite{yang2018convergent, tschiatschek2019learner}, \emph{etc.} Please refer to the Table~2 of the survey paper by~\cite{liu2021investigating} for a more thorough review of these applications.

\noindent\textbf{Federated Learning.} 
Federated learning~\cite{mcmahan2017communication} is a promising privacy preserving learning paradigm over distributed data. In FL, a server coordinates a set of clients to learn a model with the constraint that private client data never leaves the local device. A basic algorithm for FL is the FedAvg~\cite{mcmahan2017communication} algorithm, where clients receive the current model from the server at each epoch and then update the model locally for several steps and finally upload the new model back to the server. Compared to the traditional data-center distributed learning, Federated Learning poses new challenges including data heterogeneity, privacy concerns ,high communication cost and unfairness. To deal with these challenges, some variants of FedAvg~\cite{karimireddy2019scaffold, li2019convergence, sahu2018convergence, zhao2018federated, mohri2019agnostic, li2021ditto} are proposed.
For example, \cite{li2018federated} added regularization terms over the client objective to reduce the client drift. \cite{hsu2019measuring,karimireddy2019scaffold,wang2019slowmo} used variance reduction techniques to control variates. 

Fairness in Federated Learning has also drawn more attention recently. Some researchers~\cite{mohri2019agnostic, deng2020distributionally, li2019fair, li2021ditto} focus on making models exhibit similar performance across different clients. More recently, group fairness is also studied in federated learning. One possible approach is to learn optimal group weights by formulating it as a minimax optimization problem~\cite{du2021fairness,papadaki2021federating}. Another approach is to re-weight the sensitive groups based on local or global statistics~\cite{abay2020mitigating,ezzeldin2021fairfed}, this approach often involves the transfer of sensitive information. Then a recent work~\cite{cui2021addressing} proposes FCFL which improves both client fairness and group fairness with multi-objective optimization approach. In our work, we formulate the group fairness as a bilevel optimization problem and use it as an application of our algorithms.


\section{Preliminaries}
\textbf{Bilevel Optimization} Bilevel Optimization problems are composed of two levels of entangled problems as defined in Eq.~\eqref{eq:cen-bi}:
\begin{equation}
\label{eq:cen-bi}
\begin{split}
     h(x) &\coloneqq f(x , y_{x})\ \emph{s.t.}\ y_{x} = \underset{y\in \mathbb{R}^{d}}{\arg\min}\ g(x,y)
\end{split}
\end{equation}
As shown in Eq.~\eqref{eq:cen-bi}, the outer problem ($f(x, y_x)$) depends on the solution of the inner problem ($g(x, y)$.  In machine learning, we usually consider the following stochastic formulation as shown in Eq.~\eqref{eq:cen-bi-stoc}:
\begin{equation}
\label{eq:cen-bi-stoc}
\begin{split}
     h(x) &\coloneqq \mathbb{E} [f(x , y_{x}; \mathcal{B}_f)]\ \emph{s.t.}\ y_{x} = \underset{y\in \mathbb{R}^{d}}{\arg\min}\ \mathbb{E} [g(x,y; \mathcal{B}_g)]
\end{split}
\end{equation}
where both the outer and inner problems are defined as expectation of some random variables $\mathcal{B}_f$ (outer) and $\mathcal{B}_g$ (inner). In this work, we study the non-convex-strongly-convex bilevel optimization problems of Eq.~\eqref{eq:cen-bi-stoc}. More formally, we have Assumption~\ref{assumption:function}:
\begin{assumption}
	\label{assumption:function}
	Function $f(x,y) \coloneqq \mathbb{E}[f(x , y_{x}; \mathcal{B}_f)]$ is possibly non-convex and $g(x,y) \coloneqq \mathbb{E}[g(x , y_{x}; \mathcal{B}_g)]$ is $\mu$-strongly convex \emph{w.r.t} $y$ for any given $x$.
\end{assumption}
Based on the above Assumption~\ref{assumption:function}, we have the Hessian matrix $\nabla_{y^2}^2 g(x,y_x)$ is positive definite, so we can derive the following expression for hyper-gradient, \emph{i.e.} $\nabla h(x)$:
\begin{align}
    \nabla h(x) = &\nabla_x f(x, y_x) -  \nabla_{xy}^2 g(x, y_x)\times [\nabla_{y^2}^2 g(x, y_x)]^{-1} \nabla_y f(x, y_x)
\label{eq:outer_grad}
\end{align}
For the proof of Eq~\eqref{eq:outer_grad}, we refer the readers to related work of bilevel optimization like~\cite{ghadimi2018approximation}. Various algorithms are developed in Bilevel Optimization literature to evaluate Eq.~\eqref{eq:outer_grad} approximately. Suppose we denote $\Phi(x, y)$ as:
\begin{align}
    \Phi(x,y) = &\nabla_x f(x, y) -  \nabla_{xy}^2 g(x, y)\times [\nabla_{y^2}^2 g(x, y)]^{-1} \nabla_y f(x, y)
\label{eq:outer_grad_other}
\end{align}
Note that $\Phi(x,y_x) = \nabla h(x)$. In practice, we estimate the hyper-gradient by randomly sampling from $f$ and $g$ and also uses Neumann series to estimate the Hessian Inverse ($\eta\sum_{i=0}^{\infty}(I-\eta H)^i=H^{-1}$). More precisely, suppose we have independent minibatches of samples $\mathcal{B}_x=\{\mathcal{B}_j(j=1,\ldots,Q), \mathcal{B}_f,\mathcal{B}_g\}$, then we estimate $\Phi(x,y)$ as:
\begin{align}
\label{eq:outer_grad_est}
	 & \Phi(x, y;\mathcal{B}_x) = \nabla_x f(x, y; \mathcal{B}_{f}) - \nabla_{xy}g(x, y; \mathcal{B}_{g}) \eta \sum_{q=-1}^{Q-1} \prod_{j=Q-q}^{Q} (I-\eta \nabla_{y^2}^2 g(x, y; \mathcal{B}_j))\nabla_y f(x,y;\mathcal{B}_{f}),
\end{align}
Then with the smoothness assumption about the outer function $f$/inner function $g$ and unbiased and bounded variance oracle to their gradient/second gradient , $\Phi(x, y;\mathcal{B}_x)$ has bounded variance. 
\begin{assumption}
	\label{assumption:f_smoothness}
	Function $f(x,y)$ is $L$-Lipschitz and has $M$-bounded gradient;
\end{assumption}
\begin{assumption}
	\label{assumption:g_smoothness}
	Function $g(x,y)$ is  $L$-Lipschitz. For higher-order derivatives, we have:
	\begin{itemize}
		\item[a)] $\|\nabla_{xy}^2 g(x,y)\| \le C_{g,xy}$ for some constant $C_{g,xy}$
		\item[b)] $\nabla_{xy}^2 g(x,y)$ and $\nabla_{y^2}^2 g(x,y)$ are Lipschitz continuous with constant $L_{g,xy}$ and $L_{g,y^2}$ respectively
	\end{itemize}
\end{assumption}
\begin{assumption}
	\label{assumption:noise_assumption}
	We have an unbiased stochastic oracle with bounded variance for estimating the related properties (gradient and Hessian), \emph{e.g.} $\mathbb{E}[\nabla_x f(x, y; \xi)] = \nabla_x f(x, y)$ and $var(\nabla_x f(x, y; \xi)) \le \sigma^2$
\end{assumption}
Note for Assumptions~\ref{assumption:f_smoothness} and~\ref{assumption:g_smoothness}, we assume the Lipschitz condition also holds for the stochastic query, \emph{i.e.} $f(x,y;\mathcal{B}_f)$ and $g(x,y;\mathcal{B}_g)$. Furthermore, we require stronger conditions in the above assumptions than single level optimization problems: bounded gradients (for $f$) and second order smoothness (for $g$), but these conditions are necessary to derive the smoothness of $h(x)$ and some other basic properties. Firstly,  we have the following Proposition about $\Phi(x, y;\mathcal{B}_x)$:
\begin{proposition} (Combine Lemma 4 and Lemma 7 in~\cite{yang2021provably} )
\label{prop:hg_var}
Suppose Assumptions \ref{assumption:f_smoothness}, \ref{assumption:g_smoothness} and \ref{assumption:noise_assumption} hold and $\eta < \frac{1}{L}$, the hypergradient estimator $\Phi(x;\mathcal{B}_x)$ w.r.t.\ x based on a minibatch $\mathcal{B}_x$ has bounded variance and bias:
\begin{itemize}
    \item [a)] $\mathbb{E} [\|\mathbb{E}[\Phi(x, y;\mathcal{B}_x)] - \Phi(x, y)\|^2] \leq G_1^2$, where $G_1 = (1 - \eta\mu)^{Q+1}ML/\mu$
    \item [b)] $\mathbb{E} \|\Phi(x, y;\mathcal{B}_x)- \mathbb{E}[\Phi(x, y;\mathcal{B}_x)]\|^2 \leq G_2^2$, where $G_2 = 2M^2+ 12M^2L^2\eta^2(Q+1)^2+ 4M^2L^2(Q+2)(Q+1)^2\eta^4\sigma^2$ 
\end{itemize}
\end{proposition}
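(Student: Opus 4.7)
The plan is to establish both parts by working directly with the explicit form of $\Phi(x,y;\mathcal{B}_x)$, exploiting the mutual independence of the minibatches $\mathcal{B}_f,\mathcal{B}_g,\mathcal{B}_1,\ldots,\mathcal{B}_Q$ and the Neumann-series identity $\eta\sum_{i=0}^{\infty}(I-\eta H)^i = H^{-1}$ for $H := \nabla_{y^2}^2 g(x,y)$, which converges thanks to Assumption~\ref{assumption:function} and the step-size condition $\eta < 1/L$.

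For part (a), I would first take expectation of the estimator term by term. Independence of the sampled Hessians lets me commute expectation through the matrix product, yielding
\begin{equation*}
\mathbb{E}[\Phi(x,y;\mathcal{B}_x)] = \nabla_x f(x,y) - \nabla_{xy}^2 g(x,y)\,\eta\!\sum_{q=0}^{Q}(I-\eta H)^{q}\,\nabla_y f(x,y).
\end{equation*}
Subtracting $\Phi(x,y)$ and using the closed-form tail of the Neumann series, the bias reduces to $\nabla_{xy}^2 g(x,y)\,H^{-1}(I-\eta H)^{Q+1}\nabla_y f(x,y)$. Since $g$ is $\mu$-strongly convex in $y$, $\|I-\eta H\|\le 1-\eta\mu$ and $\|H^{-1}\|\le 1/\mu$; combining with $\|\nabla_{xy}^2 g\|\le L$ and $\|\nabla_y f\|\le M$ from Assumptions~\ref{assumption:f_smoothness}--\ref{assumption:g_smoothness} and applying submultiplicativity yields $G_1 = (1-\eta\mu)^{Q+1} ML/\mu$.

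For part (b), I would bound $\mathbb{E}\|\Phi(x,y;\mathcal{B}_x)-\mathbb{E}\Phi(x,y;\mathcal{B}_x)\|^2$ by splitting the estimator into the standalone stochastic gradient $\nabla_x f(x,y;\mathcal{B}_f)$ and the Hessian-inverse surrogate block, using $\|a+b\|^2\le 2\|a\|^2+2\|b\|^2$. Assumption~\ref{assumption:noise_assumption} and the bounded-gradient condition contribute the $2M^2$ piece. For the second block, set $X_j := I-\eta\nabla_{y^2}^2 g(x,y;\mathcal{B}_j)$ and observe that $\eta<1/L$ forces $\|X_j\|\le 1$ almost surely. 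The key device is the telescoping identity
\begin{equation*}
\prod_{j}X_j - \prod_{j}\mathbb{E}[X_j] = \sum_{k}\Bigl(\prod_{j<k}X_j\Bigr)(X_k-\mathbb{E}[X_k])\Bigl(\prod_{j>k}\mathbb{E}[X_j]\Bigr),
\end{equation*}
which I would apply (together with analogous single-factor swaps that isolate the randomness in $\nabla_{xy}g(\cdot;\mathcal{B}_g)$ and $\nabla_y f(\cdot;\mathcal{B}_f)$) so each cross term contains exactly one mean-zero factor. Using $\mathbb{E}\|X_k-\mathbb{E}X_k\|^2\le \eta^2\sigma^2$ together with the outer sum over $q\in\{-1,\ldots,Q-1\}$, the combinatorial accounting produces the three contributions $2M^2$, $12M^2L^2\eta^2(Q+1)^2$, and $4M^2L^2(Q+2)(Q+1)^2\eta^4\sigma^2$ that define $G_2^2$.

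The main obstacle is the variance step: a naive triangle-inequality bound over-counts cross terms and loses the favorable dependence on $\eta$. The telescoping decomposition must be executed so that in every expanded summand either a factor $(X_k-\mathbb{E}X_k)$ gives independent variance of order $\eta^2\sigma^2$, or an $\|X_j\|\le 1$ bound applies deterministically; keeping track of which structure each term belongs to is what yields the specific combinatorial coefficients $(Q+1)^2$ and $(Q+2)(Q+1)^2$. Once this bookkeeping is done, the remaining steps are direct applications of Assumptions~\ref{assumption:f_smoothness}--\ref{assumption:noise_assumption}, exactly as carried out in Lemmas~4 and~7 of~\cite{yang2021provably}.
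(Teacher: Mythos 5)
The paper itself does not prove this proposition: it is imported directly from Lemmas 4 and 7 of~\cite{yang2021provably}, and your sketch reconstructs precisely the argument used there --- the bias as the truncated Neumann tail $(I-\eta H)^{Q+1}H^{-1}$ bounded via $\|I-\eta H\|\le 1-\eta\mu$, and the variance via a telescoping single-swap decomposition over the independent factors. Your outline is correct and takes essentially the same route as the cited source (the only implicit point worth making explicit is that $\|X_j\|\le 1$ almost surely requires the \emph{stochastic} Hessian to have spectrum in $[\mu,L]$, which the paper's remark that the smoothness assumptions hold for stochastic queries is meant to supply).
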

Furthermore, we have the following useful propositions based on the smoothness assumptions. Cases a), b), c) are rephrased from Lemma 2.2 in~\cite{ghadimi2018approximation}. Proof for Case d) is included in Appendix A.
\begin{proposition} \label{some smoothness}
Suppose Assumptions~\ref{assumption:f_smoothness} and ~\ref{assumption:g_smoothness} hold, the following statements hold:
\begin{itemize}
\item [a)] $\|\Phi(x; y) - \nabla h(x)\| \leq C \|y_x- y\|$,
where $C=L+LC_{g,xy}/\mu + M(L_{g,xy}/\mu + L_{g, y^2}C_{g, xy}/\mu^2)$.

\item [b)] $y_x$ is Lipschitz continuous in $x$ with constant $\rho = C_{g,xy}/\mu$.

\item [c)] $h(x)$ is Lipschitz continuous in $x$ with constant $L_h$ i.e., for any given $x_1, x_2 \in X$, we have
$\|\nabla h(x_2) - \nabla h(x_1)\| \le \bar{L} \|x_2 - x_1\|$
where $\bar{L} =(L+C)C_{g,xy}\mu+L+M(L_{g, xy}M\mu + L_{g,y^2}C_{g,xy}\mu^2)$.

\item [d)] $\|\Phi(x_1; y_1) - \Phi(x_2; y_2)\|^2 \leq \Gamma^2 (\|x_1- x_2\|^2 + \|y_1- y_2\|^2)$,
where $\Gamma = L + ML_{g,xy}/\mu + C_{g,xy}(L/\mu+ ML_{g,yy}/\mu^2)$.
\end{itemize}
We denote $L_h = max(\bar{L}, \Gamma, C)$ for convenience.
\end{proposition}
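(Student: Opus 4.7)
The plan is to establish all four bounds by exploiting the explicit algebraic form
\[
\Phi(x,y) = \nabla_x f(x,y) - \nabla_{xy}^2 g(x,y)\,[\nabla_{y^2}^2 g(x,y)]^{-1}\,\nabla_y f(x,y),
\]
and combining the uniform bounds and Lipschitz constants supplied by Assumptions~\ref{assumption:f_smoothness} and \ref{assumption:g_smoothness}. Parts (a), (b), (c) are standard in the bilevel literature, so I would reuse the Ghadimi--Wang argument verbatim. For orientation: (a) follows from the identity $\nabla h(x) = \Phi(x,y_x)$ by decomposing the product $A_1 B_1 C_1 - A_2 B_2 C_2$ (with $A = \nabla_{xy}^2 g$, $B = [\nabla_{y^2}^2 g]^{-1}$, $C = \nabla_y f$) via the three-term telescope $(A_1{-}A_2)B_1 C_1 + A_2(B_1{-}B_2)C_1 + A_2 B_2(C_1{-}C_2)$; (b) follows from strong convexity applied to $\nabla_y g(x_2, y_{x_1}) - \nabla_y g(x_2, y_{x_2})=0$ and Lipschitzness of $\nabla_y g$ in $x$ with constant $C_{g,xy}$; and (c) decomposes $\Phi(x_2, y_{x_2}) - \Phi(x_1, y_{x_1})$ through $\Phi(x_2, y_{x_1})$, using the Lipschitz-in-$y$ bound from (a) together with (b) to turn $\|y_{x_1}-y_{x_2}\|$ into $\|x_1-x_2\|$.

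The genuinely new piece is (d). My plan is to repeat the three-term telescope used for (a) but now with both $x$ and $y$ varying simultaneously. Writing $d = \sqrt{\|x_1-x_2\|^2 + \|y_1-y_2\|^2}$, I would bound
\[
\|\Phi(x_1,y_1) - \Phi(x_2,y_2)\| \le \|\nabla_x f(x_1,y_1) - \nabla_x f(x_2,y_2)\| + \|A_1 B_1 C_1 - A_2 B_2 C_2\|,
\]
control the first summand by $L d$ using joint Lipschitzness of $\nabla f$, and control each of the three pieces of the telescope by $(M L_{g,xy}/\mu)\,d$, $(M C_{g,xy} L_{g,y^2}/\mu^2)\,d$, and $(C_{g,xy} L/\mu)\,d$ respectively. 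Summing and squaring gives $\|\Phi(x_1,y_1) - \Phi(x_2,y_2)\|^2 \le \Gamma^2 d^2$ with the stated $\Gamma$.

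The main obstacle is not any deep estimate but careful bookkeeping: verifying that each of $A$, $B$, $C$ is jointly Lipschitz in $(x,y)$ with the right constant and that each other factor in the telescope is uniformly bounded. The only non-immediate point is the joint Lipschitz continuity of $B = [\nabla_{y^2}^2 g]^{-1}$, which I would obtain from the identity $B_1 - B_2 = B_1(\nabla_{y^2}^2 g(x_2,y_2) - \nabla_{y^2}^2 g(x_1,y_1)) B_2$ together with $\|B_i\|\le 1/\mu$ (from strong convexity) and the Lipschitz constant $L_{g,y^2}$ of $\nabla_{y^2}^2 g$ assumed in \ref{assumption:g_smoothness}, yielding $\|B_1 - B_2\| \le (L_{g,y^2}/\mu^2)\,d$. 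With this in hand, the rest of the proof is a routine sum of four product bounds.
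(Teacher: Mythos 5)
Your proposal is correct and follows essentially the same route as the paper: parts (a)--(c) are deferred to Lemma 2.2 of Ghadimi--Wang exactly as the paper does, and for (d) the paper uses the same three-term telescope of the product $\nabla_{xy}^2 g\,[\nabla_{y^2}^2 g]^{-1}\nabla_y f$ together with the bounds $\|[\nabla_{y^2}^2 g]^{-1}\|\le 1/\mu$, $\|\nabla_y f\|\le M$, $\|\nabla_{xy}^2 g\|\le C_{g,xy}$ and the resolvent identity for the inverse Hessian, arriving at the identical constant $\Gamma$.
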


\textbf{Federated Learning} A general FL problem studies the following problem:
\begin{equation}
\label{eq:fed-stoc}
\begin{split}
    \underset{x \in X }{\min}\ h(x) &\coloneqq \frac{1}{M}\sum_{m=1}^{M} \mathbb{E}_{\xi}[f^{(m)}(x; \xi)]
\end{split}
\end{equation}
Where there is one server and multiple workers. To protect user privacy, only models are transfer ed among workers and the server. A basic algorithm to solve this problem is the FedAvg~\cite{mcmahan2017communication} and we show its procedure in Algorithm~\ref{alg:fedavg}.
\begin{algorithm}[t]
\caption{FedAvg}
\label{alg:fedavg}
\begin{algorithmic}[1]
\STATE {\bfseries Input:}; Initial states $x_1$ learning rates $\{\eta_t\}$
\STATE Set $x^{(m)}_1 = x_1$;
\FOR{$t=1$ \textbf{to} $T$}
\STATE Randomly sample minibatches $\mathcal{B}_{x}$;
\STATE Compute $\nu_{t}^{(m)} = \nabla_x f^{(m)} (x^{(m)}_{t}, \mathcal{B}_{x})$
\STATE $\hat{x}^{(m)}_{t+1} = x^{(m)}_{t} - \eta_{t} \nu_{t}^{(m)}$;
\IF{$t + 1$ mod I $ = 0$}
\STATE $x^{(m)}_{t+1} = \bar{x}_{t+1} = 1/M \sum_{m=1}^{M}  \hat{x}^{(m)}_{t+1} $
\ELSE
\STATE $x^{(m)}_{t+1} = \hat{x}^{(m)}_{t+1}$;
\ENDIF
\ENDFOR
\end{algorithmic}
\end{algorithm}

\section{Federated Bilevel Optimization}
In this section, we discuss Federated Bilevel Optimization problems. Following standard FL setting, we assume there is one server and multiple clients. Specifically, the optimization problem solved by each client is a Bilevel Optimization Problem. More formally, a Federated Bilevel Optimization problem $h(x)$ has the following form:
\begin{equation}
\label{eq:fed-bi}
\begin{split}
    \underset{x \in X }{\min}\ h(x) &\coloneqq \frac{1}{M}\sum_{m=1}^{M} f^{(m)}(x , y_{x}^{(m)}) \emph{s.t.}\ y_{x}^{(m)} = \underset{y\in \mathbb{R}^{d}}{\arg\min}\ g^{(m)}(x,y)
\end{split}
\end{equation}
where $M$ is the number of clients and $f^{(m)}(x,y)$ and $g^{(m)}(x,y)$ are the upper and lower problem over the client $m$ respectively. $h(x)$ denotes the overall objective, and for ease of discussion, we also denote $h^{(m)} (x) = f^{(m)}(x , y_{x}^{(m)})$, while $\nabla h^{(m)} (x)$ denotes the gradient \emph{w.r.t} $x$. Note that it is possible that both $f^{(m)}(x,y) \neq f^{(k)}(x,y)$ and $g^{(m)} (x,y) \neq g^{(k)}(x,y)$ for $m \neq k, m,k \in [M]$. In other words, we consider the heterogeneous case. In machine learning, we often consider the stochastic case of Eq.~\eqref{eq:fed-bi} as follows:
\begin{equation}
\label{eq:fed-bi-stoc}
\begin{split}
    \underset{x \in X }{\min}\ h(x) &\coloneqq \frac{1}{M}\sum_{m=1}^{M} \mathbb{E}[f^{(m)}(x , y_{x}^{(m)}; \mathcal{B}_f)]\ \emph{s.t.}\ y_{x}^{(m)} = \underset{y\in \mathbb{R}^{d}}{\arg\min}\ \mathbb{E} [g^{(m)}(x,y; \mathcal{B}_g)]
\end{split}
\end{equation}
Federated Bilevel Optimization problems are more complicated than general Federated Learning problems. As shown in Algorithm~\ref{alg:fedavg}, in FedAvg, we perform local gradient descent and then average local models every a few iterations. For the deterministic case, each client evaluates the exact gradient $\nabla f^{(m)}$ in Eq.~\eqref{eq:fed-stoc} in each local iteration. However, in Federated Bilevel Optimization, according to Eq.~\eqref{eq:outer_grad} the hypergradient $\nabla h^{(m)}(x)$ has the following form:
\begin{align*}
    \nabla h^{(m)}(x) = &\nabla_x f^{(m)}(x, y_x) -  \nabla_{xy}^2 g^{(m)}(x, y_x)\times [\nabla_{y^2}^2 g^{(m)}(x, y_x)]^{-1} \nabla_y f^{(m)}(x, y_x)
\end{align*}
where $y_x$ is the minimizer of the lower objective which is defined in Eq~\eqref{eq:fed-bi}. Note $y_x$ is a function of the variable $x$, which means we need to solve the inner problem for each new state of $x$, \emph{i.e.} each local iteration, this is very computational expensive and is infeasible in practice. In other words, we can not evaluate the exact hypergradient, as a result, the FedAvg algorithm is not suitable for solving Federated Bilevel Optimization problems.

\begin{algorithm}[t]
\caption{FedBiO}
\label{alg:FedBiO}
\begin{algorithmic}[1]
\STATE {\bfseries Input:}; Initial states $x_1$ and $y_1$; learning rates $\{\gamma_t\}$ and $\{\eta_t\}$
\STATE Set $x^{(m)}_1 = x_1$ and $y^{(m)}_1 = y_1$;
\FOR{$t=1$ \textbf{to} $T$}
\STATE Compute $\omega_{t}^{(m)} = \nabla_y g^{(m)} (x^{(m)}_{t}, y^{(m)}_{t})$ and compute $\nu_{t}^{(m)} = \Phi^{(m)} (x^{(m)}_{t}, y^{(m)}_{t})$ with Eq.~(\ref{eq:outer_grad_other});
\STATE $y^{(m)}_{t+1} = y^{(m)}_{t} - \gamma_{t}  \omega_{t}^{(m)}$;
\STATE $\hat{x}^{(m)}_{t+1} = x^{(m)}_{t} - \eta_{t} \nu_{t}^{(m)}$;
\IF{$t + 1$ mod I $ = 0$}
\STATE $x^{(m)}_{t+1} = \bar{x}_{t+1} = 1/M\sum_{m=1}^{M} \hat{x}^{(m)}_{t+1} $
\ELSE
\STATE $x^{(m)}_{t+1} = \hat{x}^{(m)}_{t+1}$;
\ENDIF
\ENDFOR
\end{algorithmic}
\end{algorithm}

Following the recent progress in Bilevel Optimization~\cite{ji2020provably}, we know that it is not necessary to get the exact $y_x$, instead we could update the inner variable and outer variable alternatively. As a result, we propose our first algorithm named FedBiO whose procedure is shown in Algorithm~\ref{alg:FedBiO}. In the algorithm, we start from two random states $x^{(m)}_1$ and $y^{(m)}_1$. For each local iteration, we update $x^{(m)}_t$ and $y^{(m)}_t$ with gradient-like step where the gradients are defined in Line 5 of Algorithm~\ref{alg:FedBiO}. For every $I$ iterations, we average the $x$ states over clients. Note that we do not average over the $y$ state, this is due to the definition Eq~\eqref{eq:fed-bi} where $y^{(m)}_x$ only depends on the state $x$ and $g^{(m)}(x,y)$. 

The design of FedBiO (Algorithm~\ref{alg:FedBiO}) is natural in the sense that local updates and global average follows the classic idea of FedAvg, and the alternative updates of local variables (local inner variables and outer variables) is used in non-distributed Bilevel Optimization. However, these two designs together will bring extra complexity for the convergence analysis. More specifically, suppose we denote  the virtual average $\bar{x}_t = \frac{1}{M}\sum_{m=1}^M \hat{x}^{(m)}_t$, and we use $\|\nabla h(\bar{x}_t) \|^2$ as the convergence measure, there are two sources of errors. The first one is the outer variable consensus error defined as $\frac{1}{M } \sum_{m=1}^M \|\hat{x}_t^{(m)} - \bar{x}_t \|^2$, and the other one is the inner variable estimation error $\frac{1}{M} \sum_{m=1}^M \| y^{(m)}_t - y^{(m)}_{x^{(m)}_{t}} \|^2$. Note that outer variable consensus error is often seen in the analysis in FedAvg-type algorithms and is the main error due to local updates, as for the inner variable estimation error, it measures the imperfection of inner variable. In FedBiO, these two types of errors are entangled with each other. To see that, for $\bar{t}_s$ which satisfies $\bar{t}_s + 1 = s\times I $, we have:
\begin{align*}
     \|y^{(m)}_{\bar{t}_{s}} - y^{(m)}_{x^{(m)}_{\bar{t}_{s}}} \|^2 &= \|y^{(m)}_{\bar{t}_{s}} - y^{(m)}_{\bar{x}_{\bar{t}_{s}}} \|^2 \leq 2\|y^{(m)}_{\bar{t}_{s}} - y^{(m)}_{\hat{x}^{(m)}_{\bar{t}_{s}}} \|^2 + 2 \|y^{(m)}_{\hat{x}^{(m)}_{\bar{t}_{s}}} - y^{(m)}_{\bar{x}_{\bar{t}_{s}}}\|^2 \\
     & \leq 2\|y^{(m)}_{\bar{t}_{s}} - y^{(m)}_{\hat{x}^{(m)}_{\bar{t}_{s}}} \|^2 + 2\rho^2\|\hat{x}^{(m)}_{\bar{t}_{s}} - \bar{x}_{\bar{t}_{s}}\|^2
\end{align*}
The equality is because that we average the state $x^{(m)}$ at the step $\bar{t}_s$, the first inequality follows the triangle inequality and the second inequality follows Proposition~\ref{some smoothness}. The inequality shows that the inner variable estimation error can be decomposed to two parts: estimation error to $y^{(m)}_{\hat{x}^{(m)}_t}$ (denoted by local variable $\hat{x}^{(m)}_t$) and $\|y^{(m)}_{\hat{x}^{(m)}_{\bar{t}_{s}}} - y^{(m)}_{\bar{x}_{\bar{t}_{s}}}\|^2$ which is related to outer variable consensus error. The first error can be bounded following standard argument of gradient descent step (Line 5 in Algorithm~\ref{alg:FedBiO}). While for the outer variable consensus error, we have the following Lemma:
\begin{lemma}
\label{lemma:consensus_error}
With Assumption~\ref{assumption:function},~\ref{assumption:f_smoothness},~\ref{assumption:g_smoothness} hold, and for $t\in [\bar{t}_{s-1}+1, \bar{t}_s]$, we have:
\begin{align*}
&\| \hat{x}_t^{(m)}-  \bar{x}_t \|^2 \leq 2L_h^2 I \eta^2\sum_{\ell  = \bar{t}_{s-1}}^{t-1} \mathbb{E} \bigg\| y^{(m)}_\ell - y^{(m)}_{x^{(m)}_{\ell}} \bigg\|^2 + 12L_h^2 I \eta^2\sum_{\ell = \bar{t}_{s-1}}^{t-1}      \mathbb{E}\big\|\hat{x}_\ell^{(m)}  -  \bar{x}_\ell \big\|^2  +  6I^2\eta^2  \zeta^2, \nonumber\\
\end{align*}
where $\zeta$ is a constant that measures the heterogeneity among clients.
\end{lemma}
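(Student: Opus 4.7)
The plan is to unroll the local updates within a single communication round and then decompose the per-step direction $\nu_\ell^{(m)}-\bar\nu_\ell$ into an inner-variable error, an outer-variable drift, and a client-heterogeneity piece, matching the three summands on the right-hand side of the claim.

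First I would exploit the fact that averaging just happened at step $\bar t_{s-1}+1$, so $x^{(m)}_{\bar t_{s-1}+1}=\bar x_{\bar t_{s-1}+1}$, and then for $t$ in the interval the algorithm performs purely local gradient steps $\hat x^{(m)}_{t+1}=x^{(m)}_t-\eta_t\nu^{(m)}_t$. Unrolling gives
\[
\hat x_t^{(m)}-\bar x_t \;=\; -\eta\sum_{\ell=\bar t_{s-1}+1}^{t-1}\bigl(\nu^{(m)}_\ell-\bar\nu_\ell\bigr),
\]
and since the sum contains at most $I$ terms, Jensen's inequality yields
\[
\|\hat x_t^{(m)}-\bar x_t\|^2 \;\le\; I\eta^2\sum_{\ell=\bar t_{s-1}+1}^{t-1}\|\nu^{(m)}_\ell-\bar\nu_\ell\|^2.
\]

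Next I would bound $\|\nu^{(m)}_\ell-\bar\nu_\ell\|^2$ by inserting the two anchor gradients $\nabla h^{(m)}(\bar x_\ell)$ and $\nabla h(\bar x_\ell)$. Writing
\[
\nu^{(m)}_\ell-\bar\nu_\ell \;=\; \underbrace{\bigl(\Phi^{(m)}(x^{(m)}_\ell,y^{(m)}_\ell)-\nabla h^{(m)}(\bar x_\ell)\bigr)}_{\text{local bias}} \;-\; \underbrace{\bigl(\bar\nu_\ell-\tfrac1M\sum_k\nabla h^{(k)}(\bar x_\ell)\bigr)}_{\text{averaged local bias}} \;+\; \bigl(\nabla h^{(m)}(\bar x_\ell)-\nabla h(\bar x_\ell)\bigr),
\]
I would split each local bias further as $\Phi^{(m)}(x^{(m)}_\ell,y^{(m)}_\ell)-\nabla h^{(m)}(x^{(m)}_\ell)+\nabla h^{(m)}(x^{(m)}_\ell)-\nabla h^{(m)}(\bar x_\ell)$. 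Proposition~\ref{some smoothness}(a) controls the first piece by $C\|y^{(m)}_\ell-y^{(m)}_{x^{(m)}_\ell}\|$ and Proposition~\ref{some smoothness}(c) controls the second by $\bar L\|x^{(m)}_\ell-\bar x_\ell\|$; both constants are dominated by $L_h$. The heterogeneity piece is bounded by $\zeta^2$ by the assumed bounded client dissimilarity. A Young's-inequality split with appropriately chosen weights (together with $\tfrac1M\sum_k\|\cdot\|^2\ge\|\tfrac1M\sum_k\cdot\|^2$ to handle the average of the $k$ biases) produces a bound of the form
\[
\|\nu^{(m)}_\ell-\bar\nu_\ell\|^2 \;\le\; 2L_h^2\|y^{(m)}_\ell-y^{(m)}_{x^{(m)}_\ell}\|^2 + 12 L_h^2\|\hat x^{(m)}_\ell-\bar x_\ell\|^2 + 6\zeta^2,
\]
so that summing over $\ell$ and multiplying by $I\eta^2$ gives precisely the three terms in the lemma (the $6I^2\eta^2\zeta^2$ coming from $I$ copies of $6\zeta^2$).

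The main obstacle is bookkeeping: getting the numerical constants $2$, $12$, and $6$ right requires choosing the Young's-inequality weights consistently across the three nested decompositions (local bias versus averaged local bias, plus the further split into $y$-error and $x$-drift), and being careful to keep the $x$-consensus error expressed in terms of $\hat x^{(m)}_\ell-\bar x_\ell$ (the same quantity we are trying to bound) rather than in terms of $x^{(m)}_\ell-\bar x_\ell$, which coincide inside the round but differ in indexing at the sync step. Everything else reduces to Proposition~\ref{some smoothness} and the heterogeneity assumption; no new structural argument is required.
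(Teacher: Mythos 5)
Your proposal is correct and follows essentially the same route as the paper's own proof (which appears inside Lemma~\ref{lem: ErrorAccumulation_Iterates_FedAvg}): unroll the local steps since the last synchronization, pull out the factor $I\eta^2$ by Cauchy--Schwarz, and split $\nu_\ell^{(m)}-\bar\nu_\ell$ via the anchors $\nabla h^{(m)}(x_\ell^{(m)})$, $\nabla h^{(m)}(\bar x_\ell)$, $\nabla h(\bar x_\ell)$, controlling the pieces with Proposition~\ref{some smoothness} and the heterogeneity bound; the only cosmetic difference is that the paper first separates the bias sum from the gradient-heterogeneity sum and then applies Cauchy--Schwarz to each, which yields the identical constants $2$, $12$, $6$. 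One caveat you correctly anticipate but should make explicit: the step $\|b^{(m)}-\bar b\|^2\le\|b^{(m)}\|^2$ is false for a fixed client, so the argument (yours and the paper's alike) really establishes the bound only after averaging over $m$ --- which is the form the appendix lemma actually states and the form used downstream.
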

Lemma~\ref{lemma:consensus_error} is an intermediate result of the Lemma~\ref{lem: ErrorAccumulation_Iterates_FedAvg}. As shown in Lemma~\ref{lemma:consensus_error}, the last two terms are commonly seen in FedAvg type of methods (the accumulation of past consensus error and client heterogeneity), while the first term is the accumulation of inner variable estimation error. Due to the imperfect inner variable estimation, the consensus error is increased further. Although the two types of errors increase the analysis complexity with entanglement, we show in the convergence analysis section that our FedBiO converges with rate $O(\epsilon^{-2})$ by carefully balancing the two types of errors.

\begin{algorithm}[t]
\caption{FedBiOAcc}
\label{alg:FedBiOAcc}
\begin{algorithmic}[1]
\STATE {\bfseries Input:} constants $c_{\omega}$, $c_{\nu}$, $\gamma$, $\eta$, $\delta$, u, $\sigma$, initial state ($x_1$, $y_1$);
\STATE Set $y^{(m)}_{1} = y_{1}$, $x^{(m)}_{1} = x_{1}$ for $m \in [M]$
\FOR{$t=1$ \textbf{to} $T$}
\STATE Randomly sample minibatches $\mathcal{B}_{y}$ and $\mathcal{B}_{x}$
\IF{$t = 1$}
\STATE $\omega_{t}^{(m)} = \nabla_y g^{(m)} (x^{(m)}_{t}, y^{(m)}_{t}, \mathcal{B}_{y})$
\STATE $\hat{\nu}_{t}^{(m)} = \Phi^{(m)} (x^{(m)}_{t}, y^{(m)}_{t}; \mathcal{B}_{x})$
\ELSE
\STATE $\omega_{t}^{(m)} = \nabla_y g^{(m)} (x^{(m)}_{t}, y^{(m)}_{t}, \mathcal{B}_{y}) + (1 - c_{\omega}\alpha_{t-1}^2) (\omega_{t-1}^{(m)} - \nabla_y g^{(m)} (x^{(m)}_{t-1}, y^{(m)}_{t-1}, \mathcal{B}_{y}))$
\STATE $\mu_{t}^{(m)} = \Phi^{(m)} (x^{(m)}_{t}, y^{(m)}_{t}; \mathcal{B}_{x})$
\STATE $\mu_{t-1}^{(m)} = \Phi^{(m)} (x^{(m)}_{t-1}, y^{(m)}_{t-1}; \mathcal{B}_{x})$
\STATE $\hat{\nu}_{t}^{(m)} = \mu_{t}^{(m)} + (1 - c_{\nu}\alpha_{t-1}^2) (\nu_{t-1}^{(m)} - \mu_{t-1}^{(m)})$
\ENDIF
\STATE Evaluate $\alpha_t = \frac{\delta}{(u +\sigma^2\times t)^{1/3}}$
\STATE $y^{(m)}_{t+1} = y^{(m)}_{t} - \gamma\alpha_{t}  \omega_{t}^{(m)}$, $\hat{x}^{(m)}_{t+1} = x^{(m)}_{t} - \eta\alpha_{t} \hat{\nu}_{t}^{(m)}$
\IF{$t + 1$ mod I $ = 0$}
\STATE $x^{(m)}_{t} = \bar{x}_{t} = \frac{1}{M}\sum_{j=1}^{M} x^{(j)}_{t} $, $\nu^{(m)}_{t} = \bar{\nu}_{t} = \frac{1}{M}\sum_{j=1}^{M} \hat{\nu}^{(j)}_{t} $,\  $x^{(m)}_{t+1} = \bar{x}_{t+1} = \frac{1}{M}\sum_{j=1}^{M} \hat{x}^{(j)}_{t+1} $
\ELSE
\STATE  $\nu^{(m)}_{t} = \hat{\nu}^{(m)}_{t}$, $x^{(m)}_{t+1} = \hat{x}^{(m)}_{t+1}$;
\ENDIF
\ENDFOR
\end{algorithmic}
\end{algorithm}

Next we consider the Federated Stochastic Bilevel Optimization as defined in Eq.~\eqref{eq:fed-bi-stoc}. To control the stochastic noise, we apply the idea of momentum-based variance reduction~\cite{cutkosky2019momentum}. The procedure of the algorithm is summarized in Algorithm~\ref{alg:FedBiOAcc}. The main step of the algorithm is as follows:
\begin{align*}
    \omega_{t}^{(m)} &= \nabla_y g^{(m)} (x^{(m)}_{t}, y^{(m)}_{t}, \mathcal{B}_{y}) + (1 - c_{\omega}\alpha_{t-1}^2) (\omega_{t-1}^{(m)} - \nabla_y g^{(m)} (x^{(m)}_{t-1}, y^{(m)}_{t-1}, \mathcal{B}_{y})) \nonumber\\
    \hat{\nu}_{t}^{(m)} & = \Phi^{(m)} (x^{(m)}_{t}, y^{(m)}_{t}; \mathcal{B}_{x}) + (1 - c_{\nu}\alpha_{t-1}^2) (\nu_{t-1}^{(m)} - \Phi^{(m)} (x^{(m)}_{t-1}, y^{(m)}_{t-1}; \mathcal{B}_{x}))
\end{align*}
where $\Phi^{(m)}$ follows the definition in Eq~\eqref{eq:outer_grad_est} by replacing $f$ and $g$ with $f^{(m)}$ and $g^{(m)}$, respectively. If $t\ \text{mod}\ I = 0$ we average $x^{(m)}_t$, $x^{(m)}_{t-1}$ and the momentum state $\hat{\nu}^{(m)}_t$ as in Line 17 of Algorithm~\ref{alg:FedBiOAcc}. The analysis of FedBiOACC is more complicated than that of FedBiO. There are several types of errors we need bound to get the convergence, which includes the entangled inner variable estimation error and the outer variable consensus error as in FedBiO, but also the biases from the momentum terms, \emph{i.e.} the outer momentum bias $\| \nu^{(m)}_{t} -  \nabla h(x^{(m)}_t) \|^2$ and the inner momentum bias $\|\omega^{(m)}_t - \nabla_y g^{(m)}(x^{(m)}_{t}, y^{(m)}_{t} ) \|^2$. However, we still see the favorable $O(\epsilon^{-1.5})$ convergence rate of FedBiOACC by balancing different sources of errors. In fact, For both types of momentum biases, we can derive similar recursive equations as its non-distributed counterpart~\cite{yang2021provably} but with additional terms related to the outer variable consensus error, and for the consensus error, we can bound it by carefully choosing the related hyper-parameters in Algorithm~\ref{alg:FedBiOAcc}.

\section{Convergence Analysis}
In this section, we provide formal analysis to the convergence of our two algorithms, \emph{i.e.} FedBiO and FedBiOAcc. 

\subsection{Additional Assumptions}
We first state some mild assumptions needed in our analysis. We assume $f^{(m)} (x,y)$ and $g^{(m)} (x, y)$ for $m \in [m]$ satisfy Assumption~\ref{assumption:function},Assumption~\ref{assumption:f_smoothness}, Assumption~\ref{assumption:g_smoothness} and Assumption~\ref{assumption:noise_assumption} as Defined in Section~3. Next we also need to bound the differences among clients to get convergence results. More precisely, we assume Assumption~\ref{assumption:hetero} holds. Similar assumptions have been used in previous Federated Learning literature~\cite{khanduri2021near, woodworth2021minimax}.
\begin{assumption}
For any $m, j \in [M]$ and $x$, we have:
    \begin{itemize}
        \item [a)] $ \| \nabla_x f^{(m)} (x, y) -  \nabla_x f^{(j)} f (x, y) \| \leq \zeta_f$
        \item [b)] $ \| \nabla_y f^{(m)} (x, y) -  \nabla_y f^{(j)} f (x, y) \| \leq \zeta_f$
        \item [c)] $ \| \nabla_{xy} g^{(m)} (x, y) -  \nabla_{xy} g^{(j)} (x, y) \| \leq \zeta_{g,xy}$
        \item [d)] $ \| \nabla_{y^2} g^{(m)} (x, y) -  \nabla_{y^2} g^{(j)} (x, y) \| \leq \zeta_{g,yy}$
        \item [e)] $ \| y^{(m)}_x -  y^{(j)}_x\| \leq \zeta_{g^{\ast}}$
    \end{itemize}
    where $\zeta_f$, $\zeta_{g,xy}$, $\zeta_{g,yy}$, $\zeta_{g^{\ast}}$ are constants.
	\label{assumption:hetero}
\end{assumption}

Based on the above Assumption, we have the following Proposition to bound the overall heterogeneity of the function $h^{(m)}(x)$, $m \in [M]$:
\begin{proposition}\label{prop:heterogeneity}
With Assumption~\ref{assumption:function},~\ref{assumption:f_smoothness},~\ref{assumption:g_smoothness} and Assumption~\ref{assumption:hetero} hold, we have:
\begin{align*}
    \|  \nabla h^{(m)}(x) - \nabla h^{(j)}(x)  \|  \leq \zeta
\end{align*}
where $\zeta = (1 + C_{g,xy}/\mu)\zeta_{f} + \mu\zeta_{g,xy}/M + MC_{g,xy}\zeta_{g, yy}/\mu^2 + (L + L_{g, xy}\mu/M + C_{g,xy}L/\mu + MC_{g,xy}L_{g,y^2}/\mu^2)\zeta_{g^{\ast}}$.
\end{proposition}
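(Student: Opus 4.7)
The plan is to expand $\nabla h^{(m)}(x) - \nabla h^{(j)}(x)$ using the hyper-gradient formula from Eq.~\eqref{eq:outer_grad}, then control each piece by a triangle-inequality / add-and-subtract argument using the heterogeneity bounds of Assumption~\ref{assumption:hetero} together with the Lipschitz and boundedness facts of Assumptions~\ref{assumption:f_smoothness}, \ref{assumption:g_smoothness} and Proposition~\ref{some smoothness}. Throughout, abbreviate $A_k = \nabla_{xy}^2 g^{(k)}(x,y^{(k)}_x)$, $H_k = \nabla_{y^2}^2 g^{(k)}(x,y^{(k)}_x)$, $B_k = H_k^{-1}$, $C_k = \nabla_y f^{(k)}(x,y^{(k)}_x)$, so that $\nabla h^{(k)}(x) = \nabla_x f^{(k)}(x,y^{(k)}_x) - A_k B_k C_k$.

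First I would bound the first-order term. Add and subtract $\nabla_x f^{(j)}(x,y^{(m)}_x)$:
\begin{align*}
\|\nabla_x f^{(m)}(x,y^{(m)}_x) - \nabla_x f^{(j)}(x,y^{(j)}_x)\| \le \zeta_f + L\,\|y^{(m)}_x - y^{(j)}_x\| \le \zeta_f + L\,\zeta_{g^\ast},
\end{align*}
where the first term uses Assumption~\ref{assumption:hetero}(a) and the second uses $L$-Lipschitzness of $\nabla f$ from Assumption~\ref{assumption:f_smoothness} together with Assumption~\ref{assumption:hetero}(e).

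Next I would split $A_m B_m C_m - A_j B_j C_j$ by the standard telescoping identity
\[
A_m B_m C_m - A_j B_j C_j = (A_m - A_j)B_m C_m + A_j(B_m - B_j)C_m + A_j B_j (C_m - C_j),
\]
and bound each factor separately. The uniform factor bounds $\|A_k\|\le C_{g,xy}$, $\|B_k\|\le 1/\mu$ (from $\mu$-strong convexity), and $\|C_k\|\le M$ (bounded-gradient part of Assumption~\ref{assumption:f_smoothness}) come directly from the hypotheses. For the differences I add-and-subtract at $y^{(m)}_x$ and use Assumption~\ref{assumption:hetero}(c)-(d) together with the Lipschitz continuity of $\nabla_{xy}^2 g$ and $\nabla_{y^2}^2 g$: this gives $\|A_m - A_j\| \le \zeta_{g,xy} + L_{g,xy}\zeta_{g^\ast}$ and $\|H_m - H_j\| \le \zeta_{g,yy} + L_{g,y^2}\zeta_{g^\ast}$. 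The Hessian-inverse difference is then handled by the resolvent identity $B_m - B_j = B_m(H_j - H_m)B_j$, yielding $\|B_m - B_j\| \le \|H_m - H_j\|/\mu^2$. Finally, $\|C_m - C_j\|$ is bounded exactly as in the first-order term, giving $\zeta_f + L\zeta_{g^\ast}$.

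Combining the two parts through the triangle inequality produces a bound of the form $\zeta_f(1 + C_{g,xy}/\mu) + \text{(Hessian heterogeneity terms)} + \text{(multiples of }\zeta_{g^\ast}\text{)}$, which matches the stated $\zeta$ up to how the authors group the constant factors. I do not expect a genuine obstacle here: the whole argument is a careful but routine decomposition, and the only subtlety is bookkeeping the coefficients to recover the exact closed form of $\zeta$ that the statement records. The resolvent identity for $B_m - B_j$ is the one non-mechanical step worth flagging, since without it one would have to appeal directly to Lipschitz continuity of the map $H \mapsto H^{-1}$ on the set $\{H : H \succeq \mu I\}$, which is essentially the same computation.
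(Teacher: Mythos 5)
Your proposal is correct and follows essentially the same route as the paper: a triangle-inequality/telescoping decomposition of the three-factor product $A_kB_kC_k$, the resolvent identity $B_m-B_j=B_m(H_j-H_m)B_j$ for the Hessian-inverse difference, and add-and-subtract at $y^{(m)}_x$ combined with Assumption~\ref{assumption:hetero} and the Lipschitz/boundedness hypotheses for each factor. The only discrepancy is in the paper's own constants (it writes $\mu\zeta_{g,xy}/M$ and $L_{g,xy}\mu/M$ where the factor bounds $\|H_k^{-1}\|\le 1/\mu$ and $\|\nabla_y f\|\le M$ actually yield $M/\mu$), so your bookkeeping is, if anything, the more consistent one.
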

Next, in addition to the bounded noise Assumption~\ref{assumption:noise_assumption}. We make the following assumption:
\begin{assumption}
The bias and variance of the stochastic hyper-gradient is bounded, \emph{i.e.} $\mathbb{E}[\|\mu^{(m)}_t - \mathbb{E}[\mu^{(m)}_t]\|^2] \leq \sigma^2$ and $\mathbb{E}[\|\mathbb{E}[\mu^{(m)}_t] - \Phi(x^{(m)}_t, y^{(m)}_t)\|^2] \leq G^2$ for $m \in [M]$ and $t \in [T]$, where $\mu^{(m)}_t$ is the stochastic hyper-gradient denoted in Line 10 of Algorithm~\ref{alg:FedBiOAcc}.
\label{assumption:outer_noise}
\end{assumption}
The assumption is reasonable due to Proposition~\ref{prop:hg_var}, and we can choose $\sigma = G_1$ and $G = G_2$. 

\subsection{Convergence Analysis for FedBiO and FedBiOAcc}
In this subsection, we provide the convergence result for our FedBiO algorithm~\ref{alg:FedBiO} and FedBiOAcc algorithm~\ref{alg:FedBiOAcc}. Firstly, for FedBiO, we have the following Theorem:
\begin{theorem}
\label{theorem:fedbio}
Suppose Assumption~\ref{assumption:function}-~\ref{assumption:g_smoothness},~\ref{assumption:hetero} hold$, \delta < min\bigg(\frac{\sqrt{(1-q)(1-q_1q^I)}}{2\Gamma\rho I\sqrt{\bar{q}_1q_1q^I }}, \frac{1}{12L_hI}, \frac{\mu\gamma}{2}, 1\bigg)$, $\gamma < \frac{1}{L}$ and $\eta = \frac{\delta}{\sqrt{T}}$, we have:
\begin{align*}
  \frac{1}{T}  \sum_{t = 1}^T \|\nabla h(\bar{x}_t)\|^2  & \leq \frac{2(h(\bar{x}_t) - h^\ast)}{\delta\sqrt{T}} + \frac{L_h^2 B_{\bar{t}_{0}}}{(1 - q)T} + \frac{2L_h^2 B_{\bar{t}_{0}}}{(1 - q)(1 -q_1q^I)T} +  \frac{M^{'}\delta^2}{T}
\end{align*}
where $B_{\bar{t}_0} = \frac{1}{M}\sum_{m=1}^M \|y^{(m)}_{1} - y^{(m)}_{x^{(m)}_{1}} \|^2$, $q = (1 - \frac{\mu\gamma}{2})$, $q_1 = 1 + \frac{\mu\gamma}{4}$ and $\bar{q}_1 = 1 + \frac{4}{\mu\gamma}$, $h^{\ast}$ the optimal value, $m^{'}$ is some constant.
\end{theorem}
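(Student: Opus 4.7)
The starting point is the $L_h$-smoothness of $h$ from Proposition~\ref{some smoothness}(c): applied at $\bar{x}_t$ and $\bar{x}_{t+1} = \bar{x}_t - \eta \bar{\nu}_t$ with $\bar{\nu}_t = \frac{1}{M}\sum_m \Phi^{(m)}(x_t^{(m)}, y_t^{(m)})$, it gives the standard descent inequality
\begin{equation*}
 h(\bar{x}_{t+1}) \le h(\bar{x}_t) - \eta \langle \nabla h(\bar{x}_t), \bar{\nu}_t\rangle + \tfrac{L_h \eta^2}{2}\|\bar{\nu}_t\|^2 .
\end{equation*}
Using $2\langle a,b\rangle = \|a\|^2 + \|b\|^2 - \|a-b\|^2$ on the inner product and then decomposing
$\nabla h(\bar{x}_t) - \bar{\nu}_t = \frac{1}{M}\sum_m\big(\Phi^{(m)}(\bar{x}_t,y^{(m)}_{\bar{x}_t}) - \Phi^{(m)}(x_t^{(m)}, y_t^{(m)})\big)$
and invoking Proposition~\ref{some smoothness}(d) together with Proposition~\ref{some smoothness}(b) (to replace $y^{(m)}_{\bar x_t}$ with $y^{(m)}_{x_t^{(m)}}$), I would obtain
\begin{equation*}
 \|\nabla h(\bar{x}_t) - \bar{\nu}_t\|^2 \le 2\Gamma^2\rho^2 \tfrac{1}{M}\sum_m\|\hat x_t^{(m)}-\bar x_t\|^2 + 2\Gamma^2\tfrac{1}{M}\sum_m\|y_t^{(m)} - y^{(m)}_{x_t^{(m)}}\|^2,
\end{equation*}
so that after rearrangement and summation over $t$ the problem reduces to controlling two accumulated error sequences: the consensus error $C_t := \frac{1}{M}\sum_m\|\hat x_t^{(m)}-\bar x_t\|^2$ and the inner-variable estimation error $E_t := \frac{1}{M}\sum_m\|y_t^{(m)} - y^{(m)}_{x_t^{(m)}}\|^2$.

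The next step is a one-step recursion for $E_t$. Since $g^{(m)}$ is $\mu$-strongly convex with $L$-smoothness, a gradient step $y_{t+1}^{(m)} = y_t^{(m)} - \gamma\,\omega_t^{(m)}$ with $\gamma<1/L$ gives the standard contraction $\|y_{t+1}^{(m)}-y^{(m)}_{x_t^{(m)}}\|^2 \le (1-\mu\gamma)\|y_t^{(m)}-y^{(m)}_{x_t^{(m)}}\|^2$. The subtlety is that $y^{(m)}_{x_{t+1}^{(m)}}$, not $y^{(m)}_{x_t^{(m)}}$, is the target at time $t+1$; adding and subtracting $y^{(m)}_{x_t^{(m)}}$ and applying Young's inequality with parameters chosen to produce the exact factors $q_1 = 1+\mu\gamma/4$ and $\bar{q}_1 = 1+4/(\mu\gamma)$, plus Lipschitzness of $y^{(m)}_x$ in $x$ (Proposition~\ref{some smoothness}(b), constant $\rho$), yields
\begin{equation*}
 E_{t+1} \le q\, E_t + \bar{q}_1\rho^2 \cdot \tfrac{1}{M}\sum_m \|\hat x_{t+1}^{(m)} - x_t^{(m)}\|^2 ,
\end{equation*}
where $q=1-\mu\gamma/2$. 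Since $\|\hat x_{t+1}^{(m)}-x_t^{(m)}\|^2 = \eta^2\|\nu_t^{(m)}\|^2$ is bounded by a constant (through $M$), iterating this recursion within a communication round and then across communication resets (where $x$ averages but $y$ does not, giving the factor $1-q_1 q^I$ in the denominator after the geometric sum) produces a bound of the form $E_t \lesssim q^{t-1} B_{\bar t_0} + O(\eta^2)$.

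Finally, the consensus error $C_t$ is controlled by Lemma~\ref{lemma:consensus_error}, which bounds it by accumulated $E_\ell$, accumulated $C_\ell$, and an $O(I^2\eta^2\zeta^2)$ heterogeneity term. Substituting the $E_t$ bound just derived into Lemma~\ref{lemma:consensus_error} and absorbing the self-referential $C_\ell$ term on the left (this absorption is exactly where the condition $\delta < \sqrt{(1-q)(1-q_1q^I)}/(2\Gamma\rho I\sqrt{\bar{q}_1 q_1 q^I})$ is used, ensuring the coefficient in front of $\sum C_\ell$ is strictly less than one) yields a clean bound $C_t \lesssim \eta^2 q^{t-1} B_{\bar t_0}/[(1-q)(1-q_1q^I)] + O(\eta^2 I^2\zeta^2)$. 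Summing the descent inequality from $t=1$ to $T$, dividing by $T\eta$, and plugging $\eta = \delta/\sqrt{T}$ turns the $\sum q^{t-1}B_{\bar t_0}$ contributions into the two $B_{\bar t_0}/T$ terms stated in the theorem, the $O(\eta^2 I^2\zeta^2)$ constants into the $M'\delta^2/T$ term, and the $\eta^{-1}(h(\bar x_1)-h^*)$ lead term into $2(h(\bar x_1)-h^*)/(\delta\sqrt T)$.

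\textbf{Main obstacle.} The genuinely delicate part is the coupled fixed-point argument between $C_t$ and $E_t$: the consensus bound uses $E_\ell$ and the $E_t$ bound uses $C_\ell$. Getting the contraction condition on $\delta$ right requires tracking how many factors of $\bar{q}_1$, $q_1$, and $q^I$ appear after one full loop of substitution through a communication round of length $I$, which is exactly what the stated upper bound on $\delta$ encodes. Everything else (the descent step, the strong-convexity one-step contraction, and the final telescoping) is routine once that coupling is resolved.
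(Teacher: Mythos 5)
Your proposal follows essentially the same route as the paper: the $L_h$-smoothness descent step, the hypergradient-bias decomposition into consensus error plus inner-variable estimation error via Proposition~\ref{some smoothness}(b)(d) (the paper's Lemma~\ref{lemma:hg_bound}), the strongly-convex one-step contraction with the $q$, $q_1$, $\bar{q}_1$ Young's-inequality split distinguishing local steps from averaging steps (Lemma~\ref{lemma: inner_drift}), the coupled absorption of the self-referential consensus term under the stated bound on $\delta$ (Lemma~\ref{lem: ErrorAccumulation_Iterates_FedAvg}), and the final telescoping with $\eta=\delta/\sqrt{T}$. You correctly identify the $E_t$--$C_t$ coupling as the delicate point, which is exactly where the paper spends its effort; the argument is sound.
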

We omit the exact form of some constants in Theorem~\ref{theorem:fedbio} and the full version can be found in Theorem~\ref{theorem:FedBiO}. As shown by the Theorem, our FedBiO converge with rate $O(\epsilon^{-2})$. Next we provide the convergence result for the FedBiOAcc algorithm. To prove the convergence of FedBiOAcc, we denote the  potential function $\mathcal{G}_t$ as follows:
\begin{align*}
    \mathcal{G}_t &= h(\bar{x}_{t}) + \frac{\eta}{320L_h^2\alpha_{t}}\Big\| \bar{\nu}_{t} - \frac{1}{M} \sum_{m=1}^M  \nabla h(x^{(m)}_t)  \Big\|^2  + \frac{1}{M} \sum_{m=1}^M \bigg\|y^{(m)}_t - y^{(m)}_{x^{(m)}_{t}} \bigg\|^2 \nonumber \\
    & \qquad  + \frac{\gamma}{32L^2\alpha_{t}} \sum_{m=1}^M \bigg\|\omega^{(m)}_t - \nabla_y g^{(m)}(x^{(m)}_{t}, y^{(m)}_{t} ) \bigg\|^2
\end{align*}
Then we have the following result for FdBiOAcc:
\begin{theorem}
\label{theorem:fedbioacc}
Suppose Assumption~\ref{assumption:function}-~\ref{assumption:noise_assumption},~\ref{assumption:hetero},~\ref{assumption:outer_noise} hold and the hyper-parameter $c_{\nu}$, $c_{\omega}$, $\eta$, $\gamma$, $\delta$ and $u$ are chosen according to Theorem~\ref{theorem:FedBiOAcc} and the learning rate $\alpha_t$ is chosen as in Algorithm~\ref{alg:FedBiOAcc}, then we have:
\begin{align*}
    \frac{1}{T}\sum_{t = 1}^{T-1} \mathbb{E} \bigg[ \|\nabla h(\bar{x}_t) \|^2 \bigg]  & \leq M^{'}\bigg(\frac{u^{1/3}}{\delta T} + \frac{\sigma^{2/3}}{\delta T^{2/3}}\bigg)  \nonumber\\
\end{align*}
where $M^{'}$ is some constant and the expectation is w.r.t the stochasticity of the algorithm.
\end{theorem}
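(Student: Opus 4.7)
The plan is to run a Lyapunov-style analysis on the potential function $\mathcal{G}_t$ defined immediately before the theorem, which bundles the function value with the three principal non-trivial error sources (the inner-variable estimation error, the inner-momentum bias, and the outer-momentum bias). The weights $\eta/(320L_h^2\alpha_t)$ and $\gamma/(32L^2\alpha_t)$ are chosen precisely so that, when the four descent inequalities are added, the positive contributions that would otherwise involve $\|\bar{\nu}_t\|^2$, $\|x_t^{(m)}-x_{t-1}^{(m)}\|^2$, and $\|y_t^{(m)}-y_{t-1}^{(m)}\|^2$ cancel with the negative contributions from the preceding step, leaving a clean recursion of the form $\mathbb{E}[\mathcal{G}_{t+1}-\mathcal{G}_t]\le -\Theta(\eta\alpha_t)\mathbb{E}\|\nabla h(\bar{x}_t)\|^2 + O(\alpha_t^3\sigma^2) + O(\eta^2\alpha_t^2\zeta^2)$ modulo negligible boundary terms at the communication rounds.

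First I would establish the four constituent recursions. For the $h(\bar{x}_t)$ term, apply the $L_h$-smoothness descent lemma to $\bar{x}_{t+1}=\bar{x}_t-\eta\alpha_t\bar{\nu}_t$ and split the cross product $-\langle\nabla h(\bar{x}_t),\bar{\nu}_t\rangle$ using $2\langle a,b\rangle = \|a\|^2+\|b\|^2-\|a-b\|^2$; the $\|\bar{\nu}_t - \nabla h(\bar{x}_t)\|^2$ term then decomposes into the outer momentum bias $\|\bar{\nu}_t-\tfrac{1}{M}\sum_m\nabla h^{(m)}(x_t^{(m)})\|^2$ (absorbed by the second term of $\mathcal{G}_t$) plus a consensus-error term $\tfrac{1}{M}\sum_m\|x_t^{(m)}-\bar{x}_t\|^2$ via Lipschitzness of $\nabla h$. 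For the $y$-error, strong convexity of $g^{(m)}$ in $y$ makes $\gamma\alpha_t$-gradient steps contract $\|y_t^{(m)}-y_{x_t^{(m)}}^{(m)}\|^2$ by $(1-\mu\gamma\alpha_t/2)$, plus an $O(\gamma\alpha_t)$ multiple of the inner momentum bias and a $\rho^2\|x_{t+1}^{(m)}-x_t^{(m)}\|^2$ piece coming from Proposition~\ref{some smoothness}(b). For each momentum bias I would exploit the STORM identity $\hat{\nu}_t^{(m)}-\Phi^{(m)}(x_t^{(m)},y_t^{(m)}) = (1-c_\nu\alpha_{t-1}^2)(\nu_{t-1}^{(m)}-\Phi^{(m)}(x_{t-1}^{(m)},y_{t-1}^{(m)})) + (1-c_\nu\alpha_{t-1}^2)(\Phi^{(m)}(x_{t-1}^{(m)},y_{t-1}^{(m)};\mathcal{B}_x)-\Phi^{(m)}(x_t^{(m)},y_t^{(m)};\mathcal{B}_x)) + (\text{mean-zero term})$, square, take conditional expectation so the martingale cross-term vanishes, and use Proposition~\ref{some smoothness}(d) to bound the displacement by $\Gamma^2(\|x_t^{(m)}-x_{t-1}^{(m)}\|^2+\|y_t^{(m)}-y_{t-1}^{(m)}\|^2)$; the analogous recursion for $\omega_t^{(m)}$ uses $L$-smoothness of $\nabla_y g^{(m)}$. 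Finally, the stochastic version of Lemma~\ref{lemma:consensus_error} controls the consensus error in terms of past errors and the heterogeneity $\zeta^2$ from Proposition~\ref{prop:heterogeneity}; the averaging step of $\hat{\nu}_t^{(m)}$ every $I$ iterations only decreases $\|\bar{\nu}_t-\tfrac{1}{M}\sum_m\nabla h^{(m)}(x_t^{(m)})\|^2$ by Jensen, so it integrates cleanly into the potential.

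Summing over $t$ and telescoping yields $\sum_t\alpha_t\mathbb{E}\|\nabla h(\bar{x}_t)\|^2 \lesssim \mathcal{G}_1 + \sigma^2\sum_t\alpha_t^3 + \text{lower order}$. Substituting $\alpha_t=\delta/(u+\sigma^2 t)^{1/3}$ gives the elementary estimates $\sum_{t=1}^T\alpha_t \ge \Omega\bigl(\delta T^{2/3}/\sigma^{2/3}\bigr)$ and $\sum_{t=1}^T\alpha_t^3 = O\bigl(\delta^3 \log T/\sigma^2\bigr)$, and dividing through recovers the stated bound $\tfrac{u^{1/3}}{\delta T}+\tfrac{\sigma^{2/3}}{\delta T^{2/3}}$.

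The main obstacle is the four-way coupling: the outer consensus error appears in every other recursion (in the descent of $h$ through Lipschitzness of $\nabla h$, in the $y$-error through Proposition~\ref{some smoothness}(b), and in both momentum recursions through the $\|x_t^{(m)}-x_{t-1}^{(m)}\|^2$ displacement). Choosing $c_\nu,c_\omega$ large enough relative to $\Gamma^2,L^2$, then $\gamma\ll 1/L$ so the strong-convexity contraction on $y$ absorbs its incoming error, then $\eta\ll\gamma$ and $\delta$ sufficiently small relative to $L_h I$ so that the consensus-error factor $I\eta\delta$ in Lemma~\ref{lemma:consensus_error} is dominated — paralleling the single-machine STORM-bilevel bookkeeping of~\cite{yang2021provably} but with the added $I$-step averaging correction — is the delicate part. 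The rest is routine but tedious algebra on the telescoped potential.
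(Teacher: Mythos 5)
Your plan follows essentially the same route as the paper's proof: a descent analysis of the same potential function $\mathcal{G}_t$, with the same four constituent recursions (smoothness descent for $h$ split via $2\langle a,b\rangle=\|a\|^2+\|b\|^2-\|a-b\|^2$, strong-convexity contraction for the inner estimation error, and STORM-type recursions for both momentum biases), closed by telescoping and the same estimates $\sum_t\alpha_t^3=O(\delta^3\log T/\sigma^2)$ and $1/(\alpha_T T)\le u^{1/3}/(\delta T)+\sigma^{2/3}/(\delta T^{2/3})$. The only point you leave implicit is that the consensus error here is routed through an additional recursion for the client disagreement of the momentum estimators, $\tfrac{1}{M}\sum_{m}\|\nu_t^{(m)}-\bar{\nu}_t\|^2$ (which resets to zero at each averaging round and is where the heterogeneity $\zeta^2$, the variance $\sigma^2$, and the bias $G^2$ enter), rather than a direct stochastic analogue of the FedBiO consensus lemma --- but this is bookkeeping within the same strategy.
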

The full version of Theorem~\ref{theorem:fedbioacc} is in shown in Theorem~\ref{theorem:FedBiOAcc}.

\begin{remark}
Recall that $T$ is the total number of running steps, so our FedBiOAcc has convergence rate of $O(\epsilon^{-1.5})$, but note that the above theorem does not show the linear speedup \emph{w.r.t} the number of clients $M$ as in the standard Federated Learning. In fact, there are three sources of errors. Firstly, we use the oracle $\Phi(x,y; \mathcal{B}_{x})$ to estimate the hyper-gradient, it has both bias and variance. Its variance does enjoy the linear speedup \emph{w.r.t} $M$, but the bias does not. Then for the inner gradient, we use the oracle $\nabla_y g^{(m)} (x^{(m)}_{t-1}, y^{(m)}_{t-1}, \mathcal{B}_{y})$, which is an unbiased estimator. But because that the inner problem is minimized locally and not averaged over all clients, the stochastic noise related to it does not decrease linearly \emph{w.r.t} $M$ neither.
\end{remark}

\section{Fair Federated Bilevel Learning}
In this section, we apply FedBiO and FedBiOAcc to solve the Fair Federated Learning tasks. The code of all experiments is written in Pytorch and the Federated Learning environment is simulated via Pytorch.Distributed Package. We use servers with AMD EPYC 7763 64-Core CPU.

\subsection{Group Fair Federated Learning}

\begin{table*}[ht]
\centering
\small
\setlength\extrarowheight{0.1pt}
\setlength{\tabcolsep}{0.05pt}
\caption{Performance comparison of FedBiO, FedBiOACC and other baselines}
\begin{tabular}{c|c|c|c|c|c|c|c}
\toprule
\multirow{8}{*}{\textbf{Adult}} & \textbf{Distribution} & \multicolumn{3}{c}{\textbf{I.I.D}} & \multicolumn{3}{c}{\textbf{Non-I.I.D}}\\ \cline{2-8}
&\textbf{Metrics}  & \textbf{Test Acc.}    & \textbf{Train EqOpp.}   & \textbf{Test EqOpp.}    & \textbf{Test Acc.}  & \textbf{Train EqOpp.}    & \textbf{Test EqOpp.} \\ \clineB{2-8}{2.5}
&FedAvg   & .8239$\pm$.0167       & .0391$\pm$ .0061      & .0420$\pm$.0034  & .8283$\pm$.0080     & .0261$\pm$.0022       & .0507$\pm$.0011        \\ \cline{2-8}
&FedReg    & .8240$\pm$.0159       & .0361$\pm$.0047       & .0425$\pm$.0029        & .8271$\pm$.0077  & \textbf{.0244$\pm$.0013}        & .0498$\pm$.0010       \\ \cline{2-8}
&FedMinMax    & .8228$\pm$.0163       & .0220 $\pm$.0057      & .0366$\pm$.0049        & .8272$\pm$.0077  & .0274$\pm$.0029        & .0363$\pm$.0013       \\ \cline{2-8}
&FCFL     & .8238$\pm$.0159       & .0356$\pm$.0029       & .0452$\pm$.0012        & .8273$\pm$.0074  & .0249$\pm$.0032        & .0501$\pm$.0014       \\ \clineB{2-8}{2.5}
&\textbf{FedBiO} & .8228$\pm$.0163       & .0238$\pm$.0058      & .0337$\pm$.0012        & \textbf{.8331$\pm$.0019}  & .0263$\pm$.0010        & \textbf{.0338$\pm$.0008}       \\ \cline{2-8}
&\textbf{FedBiOAcc} & \textbf{.8391$\pm$.0163}       & \textbf{.0222$\pm$.0064}      & \textbf{.0335$\pm$.0006}        & .8204$\pm$.0013  & .0289$\pm$.0005        & .0356$\pm$.0055      \\ \midrule\midrule
\multirow{8}{*}{\textbf{Credit}}  &FedAvg&  .6873$\pm$.0314     &  .0788$\pm$.0136    &  .0599$\pm$.0122       &  \textbf{.7386$\pm$.0011}   & .0832$\pm$.0248        & .1354$\pm$.0128 \\ \cline{2-8}
&FedReg & .6870$\pm$.0374      &  .0836$\pm$.0015    &  .0575$\pm$.0114      &   .7303$\pm$.0097  & .0735$\pm$.0216        & .1341$\pm$.0088 \\ \cline{2-8}
&FedMinMax &  .6759$\pm$.0757     & .0857$\pm$.0042      &  .0722$\pm$.0013     & .6966$\pm$.0104   & \textbf{.0477$\pm$.0155}        & .1222$\pm$.0024 \\\cline{2-8}
&FCFL &  .6864$\pm$.0237     & .0727$\pm$.0073      &  \textbf{.0375$\pm$.0028}      &  .7266$\pm$.0026  & .0777$\pm$.0162        & .1463$\pm$.0014 \\\clineB{2-8}{2.5}
&\textbf{FedBiO}  & .7015$\pm$.0169       & \textbf{.0548$\pm$.0072}       & .0513$\pm$.0059        & .7339$\pm$.0033 & .0782$\pm$.0116        & .1260$\pm$.0013  \\ \cline{2-8}
&\textbf{FedBiOAcc}  & \textbf{.7067$\pm$.0121}       & .0665$\pm$.0034       & .0501$\pm$.0051        & .7312$\pm$.0023 & .0799$\pm$.0152        & \textbf{.1021$\pm$.0011} \\ \bottomrule
\end{tabular}
\label{tb:1}
\end{table*}



In this task, we investigate the group fairness in Federated Learning from the Bilevel Optimization's perspective. Suppose $D^{(m)}_{t} = \{x^{(m,t)}_{i}, y^{(m,t)}_{i}, a^{(m,t)}_{i}\}$ is the training set at the $m_{th}$ client, and $D^{(m)}_{v} = \{x^{(m,v)}_{i}, y^{(m,v)}_{i}, a^{(m,v)}_{i}\}$ denotes the validation set at $m_{th}$ client. $x \in \mathbb{R}^d$, $y \in [I]$, $a \in [K]$ are input attributes, predictive attributes (we use classification as an demonstration) and sensitive attributes, respectively. Furthermore, we denote $n^{(m,v)}_{y,a}$ as the number of samples which has label $y$ and sensitive attribute label $a$ over $m_{th}$ client. Similarly, we denote $n^{(m,v)}_{y,*}$ as the number of samples with label $y$ and $n^{(m,v)}_{*,a}$ as the number of samples with sensitive attributes $a$. We can define similar notations for the training set.

We assume the validation sets have the follow properties: $n^{(m,v)}_{*,a_1} = n^{(m,v)}_{*,a_2}$, in other words, we assume the validation sets are group balanced. Then we optimize the following objective to learn a group fair model:
\begin{align*}
    \underset{\omega \in \Omega}{\min}\ & \frac{1}{M} \sum_{m=1}^{M} \frac{1}{n^{(m,v)}} \sum_{i=1}^{n^{(m,v)}} f(\theta^{(m)}_{\omega}; x^{(m,v)}_{i}, y^{(m,v)}_{i})
    \emph{s.t.}\ \theta^{(m)}_{\omega} = \underset{\theta \in \mathbb{R}^{d}}{\arg\min} \frac{1}{n^{(m,t)}} \sum_{i=1}^{n^{(m,t)}} \omega_{a}f(\theta; x^{(m,t)}_{i}, y^{(m,t)}_{i})
\end{align*}
where $f$ denotes the model to fit and $\omega = \{\omega_a\}, a \in [K]$ are weights for sensitive groups. Intuitively, the group weights are tuned such that the learned model $\theta^{(m)}_{\omega}$ performs well over the validation set $D^{(m)}_t$. Since $D^{(m)}_t$ have balanced samples from all sensitive groups, the model $\theta^{(m)}_{\omega}$ has to perform equally well for all different groups to get low loss over the validation set. One advantage of this formulation is that it does not rely on a specific group fairness metric such as Equal Opportunity (EqOpp)~\cite{hardt2016equality} or Equalized Odds (EqOdds)~\cite{hardt2016equality}. Furthermore, it also does not need access to the global statistics of groups which is hard to acquire in the Federated Learning setting.

We then solve the above bilevel problem with our FedBiO and FedBiOAcc algorithms.  We also compare with the following baselines: FedAvg~\cite{mcmahan2017communication}, FedReg and two recent works FedMinMax~\cite{papadaki2021federating}, FCFL~\cite{cui2021addressing}. Methods proposed in~\cite{zhang2021unified} are similar to FedMinMax, we do not include it in the results.  Our focus is group fairness, so we do not include client fairness (robustness) focused models such as AFL~\cite{mohri2019agnostic} and q-FedAvg~\cite{li2019fair}. The FedReg baseline is to add a regularization term over the FedAvg objective, and the regularization term could be any fairness metrics such as EqOpp. Note that FedReg evaluates the metric with local statistics only.

We test over real-world benchmark datasets 
Credit~\cite{Asuncion+Newman:2007} and Adult~\cite{kohavi1996scaling}. We pre-process the datasets with code provided by~\cite{diana2021minimax}. For each dataset, we first split it into train and test splits with ratio 7:3, and we keep the group distribution the same for the train and test splits. Then for the train set, we consider both I.I.D and Non-I.I.D cases. For the I.I.D case, we uniformly randomly split the train-set into three subsets and distribute each subset to a client. For the Non-I.I.D case, we split the train-set by sensitive attributes and for each attribute, we split its data into three shares with ratio $2:2:6$ and then randomly distribute each share to one client. Finally, for our FedBiO and FedBiOAcc, we select a small subset of the local train set to create the group-fair validation set. We fit a logistic regression model over the benchmark datasets. For our methods,
we perform a two stage training procedure: we first estimate optimal group weights with the bilevel formulation, then we use the learned weight to fit a weighted logistic regression model with FedAvg. For FedReg and FCFL, we choose its regularization term as the EqOpp metric. The definition of EqOpp metric is included in the Appendix~\ref{more-experiments}. Finally, we perform grid search for the hyper-parameters of all methods and hyper-parameter choices are introduced in the Appendix~\ref{more-experiments}.

We summarize results in Table~\ref{tb:1}, where we use the Test accuracy and EqOpp as metrics, we run 10 runs for each case and report the mean and standard deviation in the table. The best result for each metric is highlighted. As shown by the table,  either FedBiO or FedBiOAcc gets the best result for most cases. FedReg/FCFL are based on local group statistics to achieve fairness, and they tend to perform worse in the Non-I.I.D case, \emph{e.g.} for the Adult dataset, FCFL gets a much lower Train EqOpp in the Non-I.I.D case compared to the I.I.D one, but its Test EqOpp. is worse. FedMinMax is a strong baseline and can get good performance under both settings. However, our algorithms have two advantages compared to FedMinMax. Firstly, we don't  query global statistics, furthermore, our algorithms communicate every $I$ iterations, while FedMinMax collects the model states from clients at every iteration. 



\section{Conclusion}
In this paper, we studied a class of novel Federated Bilevel Optimization problems, and proposed two efficient algorithms, \emph{i.e.}, FedBiO and FedBiOAcc, to solve these problems. Moreover, we provided a rigorous convergence analysis framework for our proposed methods. Specifically, we proved that our FedBiO converges with $O(\epsilon^{-2})$ and our FedBiOAcc converges with $O(\epsilon^{-1.5})$. Meanwhile, we apply our new algorithms to solve the important Fair Federated Learning problem with using a new bilevel optimization formulation. The experimental results validate the efficacy of our algorithms.




\small
\bibliography{BiO}

\begin{thebibliography}{10}

\bibitem{abay2020mitigating}
A.~Abay, Y.~Zhou, N.~Baracaldo, S.~Rajamoni, E.~Chuba, and H.~Ludwig.
\newblock Mitigating bias in federated learning.
\newblock {\em arXiv preprint arXiv:2012.02447}, 2020.

\bibitem{Asuncion+Newman:2007}
A.~Asuncion and D.~Newman.
\newblock {UCI} machine learning repository, 2007.

\bibitem{bayoumi2020tighter}
A.~K.~R. Bayoumi, K.~Mishchenko, and P.~Richtarik.
\newblock Tighter theory for local sgd on identical and heterogeneous data.
\newblock In {\em International Conference on Artificial Intelligence and
  Statistics}, pages 4519--4529, 2020.

\bibitem{bengio2000gradient}
Y.~Bengio.
\newblock Gradient-based optimization of hyperparameters.
\newblock {\em Neural computation}, 12(8):1889--1900, 2000.

\bibitem{chen1999optimal}
D.~Chen and M.~T. Hagan.
\newblock Optimal use of regularization and cross-validation in neural network
  modeling.
\newblock In {\em IJCNN'99. International Joint Conference on Neural Networks.
  Proceedings (Cat. No. 99CH36339)}, volume~2, pages 1275--1280. IEEE, 1999.

\bibitem{chen2021single}
T.~Chen, Y.~Sun, and W.~Yin.
\newblock A single-timescale stochastic bilevel optimization method.
\newblock {\em arXiv preprint arXiv:2102.04671}, 2021.

\bibitem{cui2021addressing}
S.~Cui, W.~Pan, J.~Liang, C.~Zhang, and F.~Wang.
\newblock Addressing algorithmic disparity and performance inconsistency in
  federated learning.
\newblock 2021.

\bibitem{cutkosky2019momentum}
A.~Cutkosky and F.~Orabona.
\newblock Momentum-based variance reduction in non-convex sgd.
\newblock In {\em Advances in Neural Information Processing Systems}, pages
  15236--15245, 2019.

\bibitem{deng2020distributionally}
Y.~Deng, M.~M. Kamani, and M.~Mahdavi.
\newblock Distributionally robust federated averaging.
\newblock {\em Advances in Neural Information Processing Systems}, 33, 2020.

\bibitem{diana2021minimax}
E.~Diana, W.~Gill, M.~Kearns, K.~Kenthapadi, and A.~Roth.
\newblock Minimax group fairness: Algorithms and experiments.
\newblock In {\em Proceedings of the 2021 AAAI/ACM Conference on AI, Ethics,
  and Society}, pages 66--76, 2021.

\bibitem{do2007efficient}
C.~B. Do, C.-S. Foo, and A.~Y. Ng.
\newblock Efficient multiple hyperparameter learning for log-linear models.
\newblock In {\em NIPS}, volume 2007, pages 377--384. Citeseer, 2007.

\bibitem{domke2012generic}
J.~Domke.
\newblock Generic methods for optimization-based modeling.
\newblock In {\em Artificial Intelligence and Statistics}, pages 318--326.
  PMLR, 2012.

\bibitem{du2021fairness}
W.~Du, D.~Xu, X.~Wu, and H.~Tong.
\newblock Fairness-aware agnostic federated learning.
\newblock In {\em Proceedings of the 2021 SIAM International Conference on Data
  Mining (SDM)}, pages 181--189. SIAM, 2021.

\bibitem{ezzeldin2021fairfed}
Y.~H. Ezzeldin, S.~Yan, C.~He, E.~Ferrara, and S.~Avestimehr.
\newblock Fairfed: Enabling group fairness in federated learning.
\newblock {\em arXiv preprint arXiv:2110.00857}, 2021.

\bibitem{ferris1991finite}
M.~C. Ferris and O.~L. Mangasarian.
\newblock Finite perturbation of convex programs.
\newblock {\em Applied Mathematics and Optimization}, 23(1):263--273, 1991.

\bibitem{franceschi2017forward}
L.~Franceschi, M.~Donini, P.~Frasconi, and M.~Pontil.
\newblock Forward and reverse gradient-based hyperparameter optimization.
\newblock In {\em Proceedings of the 34th International Conference on Machine
  Learning-Volume 70}, pages 1165--1173. JMLR. org, 2017.

\bibitem{franceschi2018bilevel}
L.~Franceschi, P.~Frasconi, S.~Salzo, R.~Grazzi, and M.~Pontil.
\newblock Bilevel programming for hyperparameter optimization and
  meta-learning.
\newblock {\em arXiv preprint arXiv:1806.04910}, 2018.

\bibitem{gao2020adversarialnas}
C.~Gao, Y.~Chen, S.~Liu, Z.~Tan, and S.~Yan.
\newblock Adversarialnas: Adversarial neural architecture search for gans.
\newblock In {\em Proceedings of the IEEE/CVF Conference on Computer Vision and
  Pattern Recognition}, pages 5680--5689, 2020.

\bibitem{ghadimi2018approximation}
S.~Ghadimi and M.~Wang.
\newblock Approximation methods for bilevel programming.
\newblock {\em arXiv preprint arXiv:1802.02246}, 2018.

\bibitem{haddadpour2019convergence}
F.~Haddadpour and M.~Mahdavi.
\newblock On the convergence of local descent methods in federated learning.
\newblock {\em arXiv preprint arXiv:1910.14425}, 2019.

\bibitem{hardt2016equality}
M.~Hardt, E.~Price, and N.~Srebro.
\newblock Equality of opportunity in supervised learning.
\newblock {\em Advances in neural information processing systems},
  29:3315--3323, 2016.

\bibitem{hong2020two}
M.~Hong, H.-T. Wai, Z.~Wang, and Z.~Yang.
\newblock A two-timescale framework for bilevel optimization: Complexity
  analysis and application to actor-critic.
\newblock {\em arXiv preprint arXiv:2007.05170}, 2020.

\bibitem{hsu2019measuring}
T.-M.~H. Hsu, H.~Qi, and M.~Brown.
\newblock Measuring the effects of non-identical data distribution for
  federated visual classification.
\newblock {\em arXiv preprint arXiv:1909.06335}, 2019.

\bibitem{huang2021enhanced}
F.~Huang and H.~Huang.
\newblock Enhanced bilevel optimization via bregman distance.
\newblock {\em arXiv preprint arXiv:2107.12301}, 2021.

\bibitem{ji2021lower}
K.~Ji and Y.~Liang.
\newblock Lower bounds and accelerated algorithms for bilevel optimization.
\newblock {\em arXiv preprint arXiv:2102.03926}, 2021.

\bibitem{ji2020provably}
K.~Ji, J.~Yang, and Y.~Liang.
\newblock Provably faster algorithms for bilevel optimization and applications
  to meta-learning.
\newblock {\em arXiv preprint arXiv:2010.07962}, 2020.

\bibitem{karimireddy2019scaffold}
S.~P. Karimireddy, S.~Kale, M.~Mohri, S.~J. Reddi, S.~U. Stich, and A.~T.
  Suresh.
\newblock Scaffold: Stochastic controlled averaging for on-device federated
  learning.
\newblock {\em arXiv preprint arXiv:1910.06378}, 2019.

\bibitem{khanduri2021near}
P.~Khanduri, S.~Zeng, M.~Hong, H.-T. Wai, Z.~Wang, and Z.~Yang.
\newblock A near-optimal algorithm for stochastic bilevel optimization via
  double-momentum.
\newblock {\em arXiv preprint arXiv:2102.07367}, 2021.

\bibitem{kohavi1996scaling}
R.~Kohavi et~al.
\newblock Scaling up the accuracy of naive-bayes classifiers: A decision-tree
  hybrid.
\newblock In {\em Kdd}, volume~96, pages 202--207, 1996.

\bibitem{larsen1996design}
J.~Larsen, L.~K. Hansen, C.~Svarer, and M.~Ohlsson.
\newblock Design and regularization of neural networks: the optimal use of a
  validation set.
\newblock In {\em Neural Networks for Signal Processing VI. Proceedings of the
  1996 IEEE Signal Processing Society Workshop}, pages 62--71. IEEE, 1996.

\bibitem{li2020improved}
J.~Li, B.~Gu, and H.~Huang.
\newblock Improved bilevel model: Fast and optimal algorithm with theoretical
  guarantee.
\newblock {\em arXiv preprint arXiv:2009.00690}, 2020.

\bibitem{li2021fully}
J.~Li, B.~Gu, and H.~Huang.
\newblock A fully single loop algorithm for bilevel optimization without
  hessian inverse.
\newblock {\em arXiv preprint arXiv:2112.04660}, 2021.

\bibitem{li2021ditto}
T.~Li, S.~Hu, A.~Beirami, and V.~Smith.
\newblock Ditto: Fair and robust federated learning through personalization.
\newblock In {\em International Conference on Machine Learning}, pages
  6357--6368. PMLR, 2021.

\bibitem{li2018federated}
T.~Li, A.~K. Sahu, M.~Zaheer, M.~Sanjabi, A.~Talwalkar, and V.~Smith.
\newblock Federated optimization in heterogeneous networks.
\newblock {\em arXiv preprint arXiv:1812.06127}, 2018.

\bibitem{li2019fair}
T.~Li, M.~Sanjabi, A.~Beirami, and V.~Smith.
\newblock Fair resource allocation in federated learning.
\newblock {\em arXiv preprint arXiv:1905.10497}, 2019.

\bibitem{li2019convergence}
X.~Li, K.~Huang, W.~Yang, S.~Wang, and Z.~Zhang.
\newblock On the convergence of fedavg on non-iid data.
\newblock {\em arXiv preprint arXiv:1907.02189}, 2019.

\bibitem{liu2018darts}
H.~Liu, K.~Simonyan, and Y.~Yang.
\newblock Darts: Differentiable architecture search.
\newblock {\em arXiv preprint arXiv:1806.09055}, 2018.

\bibitem{liu2021investigating}
R.~Liu, J.~Gao, J.~Zhang, D.~Meng, and Z.~Lin.
\newblock Investigating bi-level optimization for learning and vision from a
  unified perspective: A survey and beyond.
\newblock {\em arXiv preprint arXiv:2101.11517}, 2021.

\bibitem{lorraine2018stochastic}
J.~Lorraine and D.~Duvenaud.
\newblock Stochastic hyperparameter optimization through hypernetworks.
\newblock {\em arXiv preprint arXiv:1802.09419}, 2018.

\bibitem{maclaurin2015gradient}
D.~Maclaurin, D.~Duvenaud, and R.~Adams.
\newblock Gradient-based hyperparameter optimization through reversible
  learning.
\newblock In {\em International Conference on Machine Learning}, pages
  2113--2122, 2015.

\bibitem{mcmahan2017communication}
B.~McMahan, E.~Moore, D.~Ramage, S.~Hampson, and B.~A. y~Arcas.
\newblock Communication-efficient learning of deep networks from decentralized
  data.
\newblock In {\em Artificial Intelligence and Statistics}, pages 1273--1282.
  PMLR, 2017.

\bibitem{mehra2019penalty}
A.~Mehra and J.~Hamm.
\newblock Penalty method for inversion-free deep bilevel optimization.
\newblock {\em arXiv preprint arXiv:1911.03432}, 2019.

\bibitem{mohri2019agnostic}
M.~Mohri, G.~Sivek, and A.~T. Suresh.
\newblock Agnostic federated learning.
\newblock {\em arXiv preprint arXiv:1902.00146}, 2019.

\bibitem{okuno2018hyperparameter}
T.~Okuno, A.~Takeda, and A.~Kawana.
\newblock Hyperparameter learning via bilevel nonsmooth optimization.
\newblock {\em arXiv preprint arXiv:1806.01520}, 2018.

\bibitem{papadaki2021federating}
A.~Papadaki, N.~Martinez, M.~Bertran, G.~Sapiro, and M.~Rodrigues.
\newblock Federating for learning group fair models.
\newblock {\em arXiv preprint arXiv:2110.01999}, 2021.

\bibitem{pedregosa2016hyperparameter}
F.~Pedregosa.
\newblock Hyperparameter optimization with approximate gradient.
\newblock {\em arXiv preprint arXiv:1602.02355}, 2016.

\bibitem{sabach2017first}
S.~Sabach and S.~Shtern.
\newblock A first order method for solving convex bilevel optimization
  problems.
\newblock {\em SIAM Journal on Optimization}, 27(2):640--660, 2017.

\bibitem{sahu2018convergence}
A.~K. Sahu, T.~Li, M.~Sanjabi, M.~Zaheer, A.~Talwalkar, and V.~Smith.
\newblock On the convergence of federated optimization in heterogeneous
  networks.
\newblock {\em arXiv preprint arXiv:1812.06127}, 3, 2018.

\bibitem{shaban2018truncated}
A.~Shaban, C.-A. Cheng, N.~Hatch, and B.~Boots.
\newblock Truncated back-propagation for bilevel optimization.
\newblock {\em arXiv preprint arXiv:1810.10667}, 2018.

\bibitem{soh2020meta}
J.~W. Soh, S.~Cho, and N.~I. Cho.
\newblock Meta-transfer learning for zero-shot super-resolution.
\newblock In {\em Proceedings of the IEEE/CVF Conference on Computer Vision and
  Pattern Recognition}, pages 3516--3525, 2020.

\bibitem{solodov2007explicit}
M.~Solodov.
\newblock An explicit descent method for bilevel convex optimization.
\newblock {\em Journal of Convex Analysis}, 14(2):227, 2007.

\bibitem{song2019maml}
X.~Song, W.~Gao, Y.~Yang, K.~Choromanski, A.~Pacchiano, and Y.~Tang.
\newblock Es-maml: Simple hessian-free meta learning.
\newblock {\em arXiv preprint arXiv:1910.01215}, 2019.

\bibitem{tian2020alphagan}
Y.~Tian, L.~Shen, G.~Su, Z.~Li, and W.~Liu.
\newblock Alphagan: Fully differentiable architecture search for generative
  adversarial networks.
\newblock {\em arXiv preprint arXiv:2006.09134}, 2020.

\bibitem{tschiatschek2019learner}
S.~Tschiatschek, A.~Ghosh, L.~Haug, R.~Devidze, and A.~Singla.
\newblock Learner-aware teaching: Inverse reinforcement learning with
  preferences and constraints.
\newblock {\em arXiv preprint arXiv:1906.00429}, 2019.

\bibitem{wang2019slowmo}
J.~Wang, V.~Tantia, N.~Ballas, and M.~Rabbat.
\newblock Slowmo: Improving communication-efficient distributed sgd with slow
  momentum.
\newblock {\em arXiv preprint arXiv:1910.00643}, 2019.

\bibitem{wang2019adaptive}
S.~Wang, T.~Tuor, T.~Salonidis, K.~K. Leung, C.~Makaya, T.~He, and K.~Chan.
\newblock Adaptive federated learning in resource constrained edge computing
  systems.
\newblock {\em IEEE Journal on Selected Areas in Communications},
  37(6):1205--1221, 2019.

\bibitem{willoughby1979solutions}
R.~A. Willoughby.
\newblock Solutions of ill-posed problems (an tikhonov and vy arsenin).
\newblock {\em SIAM Review}, 21(2):266, 1979.

\bibitem{wong2018transfer}
C.~Wong, N.~Houlsby, Y.~Lu, and A.~Gesmundo.
\newblock Transfer learning with neural automl.
\newblock {\em arXiv preprint arXiv:1803.02780}, 2018.

\bibitem{woodworth2021minimax}
B.~Woodworth.
\newblock The minimax complexity of distributed optimization.
\newblock {\em arXiv preprint arXiv:2109.00534}, 2021.

\bibitem{xingbig}
P.~Xing, S.~Lu, L.~Wu, and H.~Yu.
\newblock Big-fed: Bilevel optimization enhanced graph-aided federated
  learning.

\bibitem{xu2019pc}
Y.~Xu, L.~Xie, X.~Zhang, X.~Chen, G.-J. Qi, Q.~Tian, and H.~Xiong.
\newblock Pc-darts: Partial channel connections for memory-efficient
  architecture search.
\newblock {\em arXiv preprint arXiv:1907.05737}, 2019.

\bibitem{yamada2011minimizing}
I.~Yamada, M.~Yukawa, and M.~Yamagishi.
\newblock Minimizing the moreau envelope of nonsmooth convex functions over the
  fixed point set of certain quasi-nonexpansive mappings.
\newblock In {\em Fixed-Point Algorithms for Inverse Problems in Science and
  Engineering}, pages 345--390. Springer, 2011.

\bibitem{yang2021provably}
J.~Yang, K.~Ji, and Y.~Liang.
\newblock Provably faster algorithms for bilevel optimization.
\newblock {\em arXiv preprint arXiv:2106.04692}, 2021.

\bibitem{yang2018convergent}
Z.~Yang, Z.~Fu, K.~Zhang, and Z.~Wang.
\newblock Convergent reinforcement learning with function approximation: A
  bilevel optimization perspective.
\newblock 2018.

\bibitem{yin2020meta}
H.~Yin, D.~Li, X.~Li, and P.~Li.
\newblock Meta-cotgan: A meta cooperative training paradigm for improving
  adversarial text generation.
\newblock In {\em Proceedings of the AAAI Conference on Artificial
  Intelligence}, volume~34, pages 9466--9473, 2020.

\bibitem{yu2019parallel}
H.~Yu, S.~Yang, and S.~Zhu.
\newblock Parallel restarted sgd with faster convergence and less
  communication: Demystifying why model averaging works for deep learning.
\newblock In {\em Proceedings of the AAAI Conference on Artificial
  Intelligence}, volume~33, pages 5693--5700, 2019.

\bibitem{zhang2021unified}
F.~Zhang, K.~Kuang, Y.~Liu, C.~Wu, F.~Wu, J.~Lu, Y.~Shao, and J.~Xiao.
\newblock Unified group fairness on federated learning.
\newblock {\em arXiv preprint arXiv:2111.04986}, 2021.

\bibitem{zhao2018federated}
Y.~Zhao, M.~Li, L.~Lai, N.~Suda, D.~Civin, and V.~Chandra.
\newblock Federated learning with non-iid data.
\newblock {\em arXiv preprint arXiv:1806.00582}, 2018.

\bibitem{zintgraf2019fast}
L.~Zintgraf, K.~Shiarli, V.~Kurin, K.~Hofmann, and S.~Whiteson.
\newblock Fast context adaptation via meta-learning.
\newblock In {\em International Conference on Machine Learning}, pages
  7693--7702. PMLR, 2019.

\end{thebibliography}

\newpage
\appendix
\onecolumn

\begin{onecolumn}

\begin{appendices}

\section{Preliminaries}

Before we start the proof, we first define some notations. We define $\bar{t}_s \coloneqq sI + 1$ with $s \in [S]$. Note at $\bar{t}_s$ iteration, we have $x_t^{(m)} = \bar{x}_t$ for $m \in [M]$.  For all proofs, we assume that Assumptions~\ref{assumption:function}-~\ref{assumption:g_smoothness},~\ref{assumption:hetero} hold, and there is stochasity, we assume Assumption~\ref{assumption:noise_assumption} and~\ref{assumption:outer_noise} hold.

Then we state some propositions useful in the proof:
\begin{proposition} (generalized triangle inequality)
\label{prop:generali_tri}
Let $\{x_k\}, k\in{K}$ be $K$ vectors. Then the following are true:
\begin{enumerate}
	\item $||x_i + x_j||^2 \le (1 + a)||x_i||^2 + (1 + \frac{1}{a})||x_j||^2$ for any $a > 0$, and
	\item  $||\sum_{k=1}^K x_k||^2 \le K\sum_{k=1}^{K} ||x_k||^2$
\end{enumerate}
\end{proposition}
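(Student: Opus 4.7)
The plan is to prove both inequalities by elementary expansions of the squared Euclidean norm, combined with Young's inequality for part (1) and Cauchy--Schwarz (equivalently, Jensen's inequality for the convex function $\|\cdot\|^2$) for part (2). Both statements are standard, so the work is essentially bookkeeping; the only thing that requires any care is choosing the scaling parameter in Young's inequality so that the coefficients come out to $(1+a)$ and $(1+1/a)$ exactly.

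For part (1), I would first expand $\|x_i + x_j\|^2 = \|x_i\|^2 + 2\langle x_i, x_j\rangle + \|x_j\|^2$. The cross term is handled by Cauchy--Schwarz, $\langle x_i,x_j\rangle \le \|x_i\|\|x_j\|$, followed by the weighted AM--GM inequality $2\|x_i\|\|x_j\| \le a\|x_i\|^2 + \tfrac{1}{a}\|x_j\|^2$ for any $a > 0$. Substituting this bound back into the expansion and collecting the coefficients of $\|x_i\|^2$ and $\|x_j\|^2$ gives $(1+a)\|x_i\|^2 + (1 + 1/a)\|x_j\|^2$, which is the claimed inequality.

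For part (2), I would apply Cauchy--Schwarz to the vectors $(1,\dots,1) \in \mathbb{R}^K$ and $(\|x_1\|,\dots,\|x_K\|)$ after first using the triangle inequality for $\|\cdot\|$. Concretely, $\|\sum_{k=1}^K x_k\| \le \sum_{k=1}^K \|x_k\| = \sum_{k=1}^K 1\cdot\|x_k\|$, and squaring and applying Cauchy--Schwarz yields $(\sum_k \|x_k\|)^2 \le K \sum_k \|x_k\|^2$. An equivalent, cleaner derivation is to observe that $\|\cdot\|^2$ is convex and apply Jensen's inequality to the uniform average $\tfrac{1}{K}\sum_{k=1}^K x_k$, which gives $\|\tfrac{1}{K}\sum_k x_k\|^2 \le \tfrac{1}{K}\sum_k \|x_k\|^2$, and multiplying both sides by $K^2$ produces exactly the stated bound.

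There is no real obstacle in this proposition; it is a routine lemma collected here for later reference in the convergence analysis. The only stylistic choice is whether to present part (1) via the $(a\|x_i\|^2 + \tfrac{1}{a}\|x_j\|^2)$ form of Young's inequality or via the substitution $a = \tfrac{1-\lambda}{\lambda}$ into the convex combination inequality $\|x_i+x_j\|^2 \le \tfrac{1}{\lambda}\|x_i\|^2 + \tfrac{1}{1-\lambda}\|x_j\|^2$ for $\lambda \in (0,1)$, but both routes produce the same two-line argument.
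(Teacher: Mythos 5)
Your proof is correct: part (1) follows from expanding the square, Cauchy--Schwarz, and the weighted AM--GM bound $2\|x_i\|\|x_j\| \le a\|x_i\|^2 + \tfrac{1}{a}\|x_j\|^2$, and part (2) from the triangle inequality plus Cauchy--Schwarz (or Jensen applied to $\|\cdot\|^2$), exactly as you describe. The paper itself does not supply a proof of this proposition --- it only defers to the cited references --- and your two-line elementary argument is precisely the standard one those references use, so there is nothing to reconcile.
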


\begin{proposition}
\label{prop: Sum_Mean_Kron}
For a finite sequence $x^{(k)} \in \mathbb{R}^d$ for $k \in [K]$ define $\bar{x} \coloneqq \frac{1}{K} \sum_{k = 1}^K x^{(k)}$, we then have
\begin{align*}
\sum_{k=1}^K    \| x^{(k)} - \bar{x} \|^2 \leq \sum_{k=1}^K    \| x^{(k)} \|^2. 
\end{align*}
\end{proposition}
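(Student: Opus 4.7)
The plan is to use the standard bias–variance style expansion: each summand $\|x^{(k)}-\bar{x}\|^2$ decomposes as $\|x^{(k)}\|^2 - 2\langle x^{(k)}, \bar{x}\rangle + \|\bar{x}\|^2$, and after summing over $k$ the cross term collapses because $\sum_{k=1}^K x^{(k)} = K\bar{x}$, by the very definition of $\bar{x}$.

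Concretely, I would first expand
\[
\sum_{k=1}^K \|x^{(k)} - \bar{x}\|^2 = \sum_{k=1}^K \|x^{(k)}\|^2 - 2\Big\langle \sum_{k=1}^K x^{(k)},\, \bar{x} \Big\rangle + K\|\bar{x}\|^2.
\]
Then I would substitute $\sum_{k=1}^K x^{(k)} = K\bar{x}$ into the middle term, which yields $-2K\|\bar{x}\|^2$, and combine with the final $K\|\bar{x}\|^2$ to obtain
\[
\sum_{k=1}^K \|x^{(k)} - \bar{x}\|^2 = \sum_{k=1}^K \|x^{(k)}\|^2 - K\|\bar{x}\|^2.
\]
Since $K\|\bar{x}\|^2 \ge 0$, dropping this nonnegative term gives the claimed inequality.

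There is no real obstacle here; this is just the observation that the empirical mean minimizes the sum of squared distances from a collection of points, so replacing $\bar{x}$ by the origin can only increase the sum. The only thing worth being careful about is the sign tracking in the cross term — making sure the factor $K$ appears correctly so that the two $K\|\bar{x}\|^2$ contributions combine (rather than cancel in the wrong direction) to leave a nonpositive residual.
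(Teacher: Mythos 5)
Your proof is correct: the expansion of $\sum_{k=1}^K \|x^{(k)}-\bar{x}\|^2$ into $\sum_{k=1}^K \|x^{(k)}\|^2 - K\|\bar{x}\|^2$ is the standard argument, and dropping the nonnegative term $K\|\bar{x}\|^2$ gives the claim. The paper itself defers this proof to a cited reference rather than writing it out, and the argument given there is exactly this bias--variance decomposition, so your approach matches.
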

Proof for Proposition~\ref{prop:generali_tri} can be found in ~\cite{li2021fully} and the proof for Propostion~\cite{khanduri2021near}.

\begin{proposition}[\cite{cutkosky2019momentum}]
\label{Lem: AD_Sum_1overT}
Let $a_0 > 0$ and $a_1,a_2, \ldots, a_T \geq 0$. We have
$$\sum_{t=1}^T \frac{a_t}{a_0 + \sum_{i=t}^t a_i} \leq \ln \bigg(1 + \frac{\sum_{i=1}^t a_i}{a_0} \bigg).$$
\end{proposition}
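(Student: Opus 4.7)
The plan is to invoke the classical log-trick telescoping argument, which is the standard route for bounds of the form $\sum_t a_t/A_t$ where $A_t$ is a running cumulative sum. The key analytical fact is the elementary inequality $1 - u \le -\ln u$ for $u > 0$, equivalently $\ln(1/u) \ge 1 - u$, which follows from concavity of $\ln$.

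First I would introduce the cumulative quantity $A_t := a_0 + \sum_{i=1}^{t} a_i$ with $A_0 := a_0$, so that $A_t - A_{t-1} = a_t$ and $A_t > 0$ for every $t$ (since $a_0 > 0$ and the $a_i$ are non-negative). Under this notation the left-hand side of the claim becomes $\sum_{t=1}^{T} (A_t - A_{t-1})/A_t$, and the right-hand side becomes $\ln(A_T/A_0)$. This rewriting makes it apparent that every summand is of the form ``one minus a ratio of consecutive partial sums,'' which is exactly the shape to which the log-trick applies.

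Next comes the main step: apply $1 - u \le -\ln u$ with $u = A_{t-1}/A_t \in (0,1]$ to obtain the per-term bound
\begin{equation*}
\frac{a_t}{A_t} \;=\; 1 - \frac{A_{t-1}}{A_t} \;\le\; \ln\!\frac{A_t}{A_{t-1}}.
\end{equation*}
Summing from $t = 1$ to $T$ telescopes the right-hand side into $\ln(A_T/A_0) = \ln\!\bigl(1 + \sum_{i=1}^{T} a_i/a_0\bigr)$, which matches the claimed bound up to the indexing issue discussed below.

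There is no real obstacle here; this is a deterministic analytic lemma used later as a black-box tool (for controlling adaptive-stepsize momentum sums that arise in the FedBiOAcc analysis), and its proof collapses to a one-line telescoping argument once the correct per-term logarithmic bound is in place. The only caveat I would flag is that the statement as displayed appears to contain minor indexing typos: the denominator $a_0 + \sum_{i=t}^{t} a_i$ should read $a_0 + \sum_{i=1}^{t} a_i$ (otherwise it collapses to $a_0 + a_t$ and the bound would be trivial), and the upper index of the sum inside the logarithm on the right should be $T$ rather than $t$. The argument above proves the natural and standard form of this lemma, which is presumably what is intended.
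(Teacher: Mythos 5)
Your proof is correct and is exactly the standard telescoping argument (via $1-u\le -\ln u$ applied to $u=A_{t-1}/A_t$) used in the cited source \cite{cutkosky2019momentum}; the paper itself gives no proof, deferring entirely to that citation. You are also right that the displayed statement contains indexing typos ($\sum_{i=t}^{t}a_i$ should be $\sum_{i=1}^{t}a_i$ and the upper limit inside the logarithm should be $T$), and your argument proves the intended form.
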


\begin{proposition} (Restate of Propositon~\ref{some smoothness})
Suppose Assumptions~\ref{assumption:f_smoothness} and ~\ref{assumption:g_smoothness} hold, the following statements hold:
\begin{itemize}
\item [a)] $\|\Phi(x; y) - \nabla h(x)\| \leq C \|y_x- y\|$,
where $C=L+LC_{g,xy}/\mu + M(L_{g,xy}/\mu + L_{g, y^2}C_{g, xy}/\mu^2)$.

\item [b)] $y_x$ is Lipschitz continuous in $x$ with constant $\rho = C_{g,xy}/\mu$.

\item [c)] $h(x)$ is Lipschitz continuous in $x$ with constant $L_h$ i.e., for any given $x_1, x_2 \in X$, we have
$\|\nabla h(x_2) - \nabla h(x_1)\| \le \bar{L} \|x_2 - x_1\|$
where $\bar{L} =(L+C)C_{g,xy}\mu+L+M(L_{g, xy}M\mu + L_{g,y^2}C_{g,xy}\mu^2)$.

\item [d)] $\|\Phi(x_1; y_1) - \Phi(x_2; y_2)\|^2 \leq \Gamma^2 (\|x_1- x_2\|^2 + \|y_1- y_2\|^2)$,
where $\Gamma = L + ML_{g,xy}/\mu + C_{g,xy}(L/\mu+ ML_{g,yy}/\mu^2)$.
\end{itemize}
We denote $L_h = max(\bar{L}, \Gamma, C)$ for convenience.
\end{proposition}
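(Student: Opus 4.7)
Parts (a), (b), (c) of this proposition are, as the paper notes, just restatements of Lemma~2.2 in~\cite{ghadimi2018approximation}, so nothing new needs to be done there. My proposal therefore targets part (d), which is the only claim left to prove. The natural strategy is to write $\Phi(x,y)=\nabla_x f(x,y) - A(x,y)\,B(x,y)\,C(x,y)$ with $A=\nabla_{xy}^2 g$, $B=[\nabla_{y^2}^2 g]^{-1}$, $C=\nabla_y f$, and bound $\Phi(x_1,y_1)-\Phi(x_2,y_2)$ by splitting it into the first-gradient difference plus the difference of the triple product.

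The first difference $\nabla_x f(x_1,y_1)-\nabla_x f(x_2,y_2)$ contributes at most $L\sqrt{\|x_1-x_2\|^2+\|y_1-y_2\|^2}$ directly from the $L$-Lipschitz assumption on $f$ (Assumption~\ref{assumption:f_smoothness}, applied jointly in $(x,y)$). For the triple-product difference I will use the standard telescoping identity
\[
A_1B_1C_1-A_2B_2C_2=(A_1-A_2)B_1C_1+A_2(B_1-B_2)C_1+A_2B_2(C_1-C_2),
\]
evaluated at the two points $(x_i,y_i)$. Taking norms and applying submultiplicativity of the spectral norm then reduces everything to bounding the three factor-wise differences, for which I can plug in the constants already available: $\|A_2\|\le C_{g,xy}$, $\|B_2\|\le 1/\mu$ (from $\mu$-strong convexity, Assumption~\ref{assumption:function}), and $\|C_1\|\le M$ (bounded gradient of $f$).

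The only non-routine piece is the Lipschitz bound on $B=[\nabla_{y^2}^2 g]^{-1}$. For this I will use the resolvent identity $H_1^{-1}-H_2^{-1}=H_1^{-1}(H_2-H_1)H_2^{-1}$, combine $\|H_i^{-1}\|\le 1/\mu$ with the $L_{g,y^2}$-Lipschitz continuity of $\nabla_{y^2}^2 g$ from Assumption~\ref{assumption:g_smoothness}(b), and obtain $\|B_1-B_2\|\le (L_{g,y^2}/\mu^2)\sqrt{\|x_1-x_2\|^2+\|y_1-y_2\|^2}$. Similarly, the Lipschitz continuity of $\nabla_{xy}^2 g$ (also Assumption~\ref{assumption:g_smoothness}(b)) gives $\|A_1-A_2\|\le L_{g,xy}\sqrt{\|x_1-x_2\|^2+\|y_1-y_2\|^2}$, while $\|C_1-C_2\|\le L\sqrt{\|x_1-x_2\|^2+\|y_1-y_2\|^2}$ from Assumption~\ref{assumption:f_smoothness}. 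Summing the three products of bounds produces exactly $ML_{g,xy}/\mu + MC_{g,xy}L_{g,y^2}/\mu^2 + LC_{g,xy}/\mu$, which when added to the initial $L$ gives the claimed constant $\Gamma=L+ML_{g,xy}/\mu + C_{g,xy}(L/\mu+ML_{g,yy}/\mu^2)$. Finally, squaring the resulting linear bound yields $\|\Phi(x_1,y_1)-\Phi(x_2,y_2)\|^2 \le \Gamma^2(\|x_1-x_2\|^2+\|y_1-y_2\|^2)$.

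There is no real mathematical obstacle here; the proof is essentially a standard three-way telescoping plus the resolvent identity. The main care is bookkeeping: making sure every factor-wise Lipschitz bound is expressed in the joint norm $\sqrt{\|x_1-x_2\|^2+\|y_1-y_2\|^2}$ (so that the constants combine additively into $\Gamma$ rather than incurring an extra $\sqrt{2}$), and making sure that in the telescoping expansion we consistently evaluate $A$, $B$, $C$ at the ``held'' argument that matches the available uniform bound rather than at a varying one. Once that is done, the stated $\Gamma$ falls out directly, and the last display is obtained by squaring.
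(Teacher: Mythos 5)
Your proposal is correct and takes essentially the same route as the paper: the paper likewise proves only part (d) (deferring (a)--(c) to Lemma~2.2 of Ghadimi--Wang) and uses exactly the three-way telescoping of $\nabla_{xy}^2 g\,[\nabla_{y^2}^2 g]^{-1}\nabla_y f$ together with the resolvent identity for the Hessian inverse and the uniform bounds $\|\nabla_{xy}^2 g\|\le C_{g,xy}$, $\|[\nabla_{y^2}^2 g]^{-1}\|\le 1/\mu$, $\|\nabla_y f\|\le M$, yielding the same constant $\Gamma$. The only cosmetic difference is that the paper groups the last two telescoping terms into a single ``$BC$-product difference'' before splitting it again, whereas you expand all three terms at once.
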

Note if Case d) holds, it is straightforward to also get the stochastic version, \emph{i.e.} $\|\Phi(x_1; y_1; \mathcal{B}) - \Phi(x_2; y_2, \mathcal{B})\|^2 \leq \Gamma^2 (\|x_1- x_2\|^2 + \|y_1- y_2\|^2)$
\begin{proof}
We only prove the Case d here. Proof of other cases can be found in Lemma 2.2 of~\cite{ghadimi2018approximation}.
\begin{align*}
     & \|\Phi(x_1; y_1) - \Phi(x_2; y_2)\| \\
     &= \bigg\|\nabla_x f (x_1, y_1) - \nabla_{xy} g(x_1, y_1) \Big(\nabla_{yy} g(x_1, y_1)\Big)^{-1}\nabla_{y} f(x_1, y_1) \nonumber \\
     &\quad \quad \quad - \nabla_x f (x_2, y_2) - \nabla_{xy} g(x_2, y_2) \Big(\nabla_{yy} g(x_2, y_2)\Big)^{-1}\nabla_{y} f(x_2, y_2) \bigg\|  \nonumber \\
     & \leq  \bigg\|\nabla_x f (x_1, y_1) - \nabla_x f (x_2, y_2) \bigg\| \nonumber  + \bigg\| \nabla_{xy} g(x_1, y_1) \\
     & \quad - \nabla_{xy} g(x_2, y_2) \bigg\| \bigg\|\Big(\nabla_{yy} g(x_2, y_2)\Big)^{-1}\nabla_{y} f(x_1, y_1)\bigg\| \nonumber \\
     & \quad + \bigg\| \nabla_{xy} g(x_2, y_2) \bigg\|\bigg\| \Big(\nabla_{yy} g(x_1, y_1)\Big)^{-1}\nabla_{y} f(x_1, y_1)  - \Big(\nabla_{yy} g(x_2, y_2)\Big)^{-1}\nabla_{y} f(x_2, y_2) \bigg\| \nonumber \\
     & \leq  \bigg(L + \frac{M L_{g,xy}}{\mu} + C_{g.xy}\bigg(\frac{L}{\mu} + \frac{ML_{g,yy}}{\mu^2}\bigg) \bigg)\bigg( \bigg\|x_1 - x_2 \bigg\|^2 +  \bigg\| y_1 - y_2 \bigg\|^2\bigg)^{1/2}   \\
\end{align*}
which finishes the proof.
\end{proof}

\begin{proposition}
With Assumption~\ref{assumption:function},~\ref{assumption:f_smoothness},~\ref{assumption:g_smoothness} and Assumption~\ref{assumption:hetero} hold, we have:
\begin{align*}
    \|  \nabla h^{(m)}(x) - \nabla h^{(j)}(x)  \|  \leq \bigg(1 + \frac{C_{g,xy}}{\mu}\bigg)\zeta_{f} + \frac{\mu}{M}\zeta_{g,xy} + \frac{MC_{g,xy}\zeta_{g, yy}}{\mu^2} + \bigg(L + \frac{L_{g, xy}\mu}{M} + \frac{C_{g,xy}L}{\mu} + \frac{MC_{g,xy}L_{g,y^2}}{\mu^2}\bigg)\zeta_{g^{\ast}}
\end{align*}
\end{proposition}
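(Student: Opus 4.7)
The plan is to bound the hypergradient difference termwise by decomposing $\nabla h^{(m)}(x) - \nabla h^{(j)}(x)$ according to the closed form in Eq.~\eqref{eq:outer_grad}, namely
\begin{align*}
\nabla h^{(m)}(x) - \nabla h^{(j)}(x) &= \big[\nabla_x f^{(m)}(x,y_x^{(m)}) - \nabla_x f^{(j)}(x,y_x^{(j)})\big] \\
&\quad - \big[\nabla_{xy}^2 g^{(m)}(x,y_x^{(m)})\,[\nabla_{y^2}^2 g^{(m)}(x,y_x^{(m)})]^{-1}\nabla_y f^{(m)}(x,y_x^{(m)}) \\
&\qquad - \nabla_{xy}^2 g^{(j)}(x,y_x^{(j)})\,[\nabla_{y^2}^2 g^{(j)}(x,y_x^{(j)})]^{-1}\nabla_y f^{(j)}(x,y_x^{(j)})\big].
\end{align*}
For each piece I will swap one quantity at a time (either the client index $m \leftrightarrow j$, using Assumption~\ref{assumption:hetero}, or the inner argument $y_x^{(m)} \leftrightarrow y_x^{(j)}$, using the Lipschitz/boundedness results in Assumptions~\ref{assumption:f_smoothness}, \ref{assumption:g_smoothness}) and bound each telescoped increment.

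For the first-order part, add and subtract $\nabla_x f^{(j)}(x,y_x^{(m)})$; the first increment is controlled by $\zeta_f$ (part a of Assumption~\ref{assumption:hetero}) and the second by $L\,\|y_x^{(m)}-y_x^{(j)}\| \le L\zeta_{g^*}$ since $f$ is $L$-Lipschitz in $y$. For the matrix product $A B C$ with $A=\nabla_{xy}^2 g$, $B=[\nabla_{y^2}^2 g]^{-1}$, $C=\nabla_y f$, I will use the identity
\begin{align*}
A^{(m)}B^{(m)}C^{(m)} - A^{(j)}B^{(j)}C^{(j)} &= (A^{(m)}-A^{(j)})B^{(m)}C^{(m)} + A^{(j)}(B^{(m)}-B^{(j)})C^{(m)} \\
&\quad + A^{(j)}B^{(j)}(C^{(m)}-C^{(j)}),
\end{align*}
and within each factor, telescope again to separate the client-swap from the $y$-swap. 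The key facts I will invoke are the bound $\|B\|\le 1/\mu$ from $\mu$-strong convexity, the bound $\|A\|\le C_{g,xy}$, the bound $\|C\|\le M$ from Assumption~\ref{assumption:f_smoothness}, and the Lipschitz constants $L_{g,xy}, L_{g,y^2}$. For the inverse-Hessian difference I will use the standard identity $B_1^{-1}-B_2^{-1}=B_1^{-1}(B_2-B_1)B_2^{-1}$, which combined with the $1/\mu$ bound on each inverse and the decomposition of $\nabla_{y^2}^2 g^{(m)}(x,y_x^{(m)}) - \nabla_{y^2}^2 g^{(j)}(x,y_x^{(j)})$ into a client-swap part (bounded by $\zeta_{g,yy}$) and an argument-swap part (bounded by $L_{g,y^2}\zeta_{g^*}$) yields a clean bound.

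Collecting all pieces and grouping by heterogeneity constant gives a coefficient of $(1 + C_{g,xy}/\mu)$ on $\zeta_f$ (from the outer gradient and from $\nabla_y f$ sitting behind $A^{(j)}B^{(j)}$), a coefficient of $M/\mu$ on $\zeta_{g,xy}$ (from $B^{(m)}C^{(m)}$), a coefficient of $MC_{g,xy}/\mu^2$ on $\zeta_{g,yy}$ (through the inverse identity with two $1/\mu$ factors and a bound $M$ on $C$), and a coefficient on $\zeta_{g^*}$ collecting $L$ (from $\nabla_x f$ and $\nabla_y f$), $L_{g,xy}\mu/M$ (from $A$), and $MC_{g,xy}L_{g,y^2}/\mu^2$ (from the inverse identity applied to the $y$-shift in the Hessian), matching the stated $\zeta$. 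The main obstacle will be bookkeeping: keeping the many cross terms organized so that the final constants aggregate exactly to the coefficients in the proposition, rather than a looser upper bound. No single step is deep---each uses one of the stated assumptions---but the number of telescoped increments is large enough that care is required to avoid double-counting.
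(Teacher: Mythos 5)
Your plan follows essentially the same route as the paper's proof: the same triangle-inequality split of the hypergradient formula into the $\nabla_x f$ difference and the product term, the same telescoping of $\nabla_{xy}^2 g\,[\nabla_{y^2}^2 g]^{-1}\nabla_y f$ into client-swap and argument-swap increments, the same inverse-difference identity $B_1^{-1}-B_2^{-1}=B_1^{-1}(B_2-B_1)B_2^{-1}$, and the same operator-norm bounds $\|[\nabla_{y^2}^2 g]^{-1}\|\le 1/\mu$, $\|\nabla_{xy}^2 g\|\le C_{g,xy}$, $\|\nabla_y f\|\le M$. One bookkeeping remark: since $\|[\nabla_{y^2}^2 g]^{-1}\nabla_y f\|\le M/\mu$, the $\zeta_{g,xy}$ and $L_{g,xy}\zeta_{g^{\ast}}$ increments arise from the same factor and should both carry the prefactor $M/\mu$ --- your plan assigns $M/\mu$ to the former but $\mu/M$ to the latter, while the proposition's stated constants use $\mu/M$ in both places, which appears to be a typo inherited from the paper's own proof rather than a flaw in your derivation.
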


\begin{proof}
Follow the formulation shown in Eq.~\eqref{eq:outer_grad}, we have:
\begin{align*}
\|  \nabla h^{(m)}(x) - \nabla h^{(j)}(x)  \| &= \|\nabla_x f^{(m)}(x, y^{(m)}_x) -  \nabla_{xy}^2 g^{(m)}(x, y^{(m)}_x)[\nabla_{y^2}^2 g^{(m)}(x, y^{(m)}_x)]^{-1} \nabla_y f^{(m)}(x, y^{(m)}_x) \nonumber \\
& \qquad-  \left(\nabla_x f^{(j)}(x, y^{(j)}_x) -  \nabla_{xy}^2 g^{(j)}(x, y^{(j)}_x) [\nabla_{y^2}^2 g^{(j)}(x, y^{(j)}_x)]^{-1} \nabla_y f^{(j)}(x, y^{(j)}_x)\right)\| \nonumber \\
& \leq  \bigg\|\nabla_x f^{(m)} (x, y_x^{(m)}) - \nabla_x f^{(j)} (x, y^{(j)}_{x}) \bigg\|  + \bigg\| \nabla_{xy} g^{(m)}(x, y_x^{(m)}) \\
& \quad - \nabla_{xy} g^{(j)}(x, y^{(j)}_{x}) \bigg\| \bigg\|\Big(\nabla_{yy} g^{(m)}(x, y^{(m)}_{x})\Big)^{-1}\nabla_{y} f^{(m)}(x, y_x^{(m)})\bigg\| \nonumber \\
& \quad + \bigg\| \nabla_{xy} g^{(j)}(x, y^{(j)}_{x}) \bigg\|\bigg\| \Big(\nabla_{yy} g^{(m)}(x, y^{(m)}_{x})\Big)^{-1}\nabla_{y} f^{(m)}(x, y_x^{(m)}) \\ & \quad - \Big(\nabla_{yy} g^{(j)}(x, y^{(j)}_{x})\Big)^{-1}\nabla_{y} f^{(j)}(x, y^{(j)}_{x}) \bigg\| \nonumber \\
\end{align*}
where the inequality is due to the triangle inequality. Next we bound the three terms separately. For the first term:
\begin{align}
    \bigg\|\nabla_x f^{(m)} (x, y_x^{(m)}) - \nabla_x f^{(j)} (x, y^{(j)}_{x}) \bigg\| 
    & \leq \bigg\|\nabla_x f^{(m)} (x, y_x^{(m)}) - \nabla_x f^{(j)} (x, y^{(m)}_{x}) \bigg\| \nonumber \\
    & \qquad + \bigg\|\nabla_x f^{(j)} (x, y_x^{(m)}) - \nabla_x f^{(j)} (x, y^{(j)}_{x}) \bigg\| \nonumber \\
    & \leq \zeta_{f} + L\bigg\| y_x^{(m)} - y^{(j)}_{x} \bigg\| \leq \zeta_{f} + L\zeta_{g^{\ast}} \nonumber\\
\label{eq:diff1}
\end{align}
where the second inequality is due to the Assumption~\ref{assumption:hetero} and smoothness assumption the Assumption~\ref{assumption:f_smoothness}. The last inequality also follows the Assumption~\ref{assumption:hetero}. Next, for the second term, we have:
\begin{align*}
  &\bigg\| \nabla_{xy} g^{(m)}(x, y_x^{(m)}) - \nabla_{xy} g^{(j)}(x, y^{(j)}_{x}) \bigg\| \bigg\|\Big(\nabla_{yy} g^{(m)}(x, y^{(m)}_{x})\Big)^{-1}\nabla_{y} f^{(m)}(x, y_x^{(m)})\bigg\| \nonumber \\
  & \leq \frac{\mu}{M}\bigg\| \nabla_{xy} g^{(m)}(x, y_x^{(m)}) - \nabla_{xy} g^{(j)}(x, y^{(j)}_{x}) \bigg\| \nonumber \\
  & \leq \frac{\mu}{M}\bigg\| \nabla_{xy} g^{(m)}(x, y_x^{(m)}) - \nabla_{xy} g^{(j)}(x, y^{(m)}_{x}) \bigg\| + \frac{\mu}{M}\bigg\| \nabla_{xy} g^{(j)}(x, y_x^{(m)}) - \nabla_{xy} g^{(j)}(x, y^{(j)}_{x}) \bigg\| \nonumber \\ \nonumber \\
  & \leq \frac{\mu\zeta_{g,xy}}{M}+ \frac{L_{g, xy}\mu}{M}\bigg\| y_x^{(m)} -  y^{(j)}_{x}) \bigg\| \leq \frac{\mu\zeta_{g,xy}}{M}+ \frac{L_{g, xy}\mu\zeta_{g^\ast}}{M}  \nonumber \\
\end{align*}
where the first inequality follows from the Assumption~\ref{assumption:function},~\ref{assumption:f_smoothness}; the second inequality follows from triangle inequality; the third inequality follows from Assumption~\ref{assumption:hetero},~\ref{assumption:g_smoothness}, the last inequality follows from Assumption~\ref{assumption:hetero}.  Next, for the third term, we have:
\begin{align*}
   &\bigg\| \nabla_{xy} g^{(j)}(x, y^{(j)}_{x}) \bigg\|\bigg\| \Big(\nabla_{yy} g^{(m)}(x, y^{(m)}_{x})\Big)^{-1}\nabla_{y} f^{(m)}(x, y_x^{(m)}) - \Big(\nabla_{yy} g^{(j)}(x, y^{(j)}_{x})\Big)^{-1}\nabla_{y} f^{(j)}(x, y^{(j)}_{x}) \bigg\| \nonumber \\ 
   & \leq C_{g,xy}\bigg\| \Big(\nabla_{yy} g^{(m)}(x, y^{(m)}_{x})\Big)^{-1}\nabla_{y} f^{(m)}(x, y_x^{(m)}) - \Big(\nabla_{yy} g^{(j)}(x, y^{(j)}_{x})\Big)^{-1}\nabla_{y} f^{(j)}(x, y^{(j)}_{x}) \bigg\| \nonumber \\
   & \leq C_{g,xy}\bigg\|\Big(\nabla_{yy} g^{(m)}(x, y^{(m)}_{x})\Big)^{-1}\bigg\|\bigg\|\nabla_{y} f^{(m)}(x, y_x^{(m)}) - \nabla_{y} f^{(j)}(x, y^{(j)}_{x}) \bigg\| \nonumber \\
   & \qquad + C_{g,xy}\bigg\| \Big(\nabla_{yy} g^{(m)}(x, y^{(m)}_{x})\Big)^{-1} - \Big(\nabla_{yy} g^{(j)}(x, y^{(j)}_{x})\Big)^{-1} \bigg\|\bigg\|\nabla_{y} f^{(j)}(x, y^{(j)}_{x})\bigg\| \nonumber \\
   & \leq \frac{C_{g,xy}}{\mu}\bigg\|\nabla_{y} f^{(m)}(x, y_x^{(m)}) - \nabla_{y} f^{(j)}(x, y^{(j)}_{x}) \bigg\| \nonumber \\
   & \qquad + MC_{g,xy}\bigg\| \Big(\nabla_{yy} g^{(m)}(x, y^{(m)}_{x})\Big)^{-1} - \Big(\nabla_{yy} g^{(j)}(x, y^{(j)}_{x})\Big)^{-1} \bigg\|\nonumber \\
   & \leq \frac{C_{g,xy}(\zeta_{f} + L\zeta_{g^{\ast}})}{\mu} + MC_{g,xy}\bigg\|\Big(\nabla_{yy} g^{(m)}(x, y^{(m)}_{x})\Big)^{-1}\bigg\|\times \nonumber \\
   & \qquad \qquad \qquad \bigg\| \nabla_{yy} g^{(m)}(x, y^{(m)}_{x}) - \nabla_{yy} g^{(j)}(x, y^{(j)}_{x}) \bigg\|\bigg\|\Big(\nabla_{yy} g^{(j)}(x, y^{(j)}_{x})\Big)^{-1}\bigg\| \\
   & \leq \frac{C_{g,xy}(\zeta_{f} + L\zeta_{g^{\ast}})}{\mu} + \frac{MC_{g,xy}(\zeta_{g, yy} + L_{g,y^2}\zeta_{g^\ast})}{\mu^2}
\end{align*}
where the first inequality is by Assumption~\ref{assumption:g_smoothness}; the second inequality is by triangle inequality; the third inequality is by Assumption~\ref{assumption:g_smoothness},~\ref{assumption:f_smoothness}; the fourth inequality is by Cauchy Schwartz inequality; the last inequality is by Assumption~\ref{assumption:function},~\ref{assumption:g_smoothness} and the result in Eq.~\eqref{eq:diff1}. Combine everything together, we have:
\begin{align*}
    \bigg\|\nabla_x f^{(m)} (x, y_x^{(m)}) - \nabla_x f^{(j)} (x, y^{(j)}_{x}) \bigg\| & \leq \zeta_{f,x} + L\zeta_{g^{\ast}} + \frac{\mu\zeta_{g,xy}}{M}+ \frac{L_{g, xy}\mu\zeta_{g^\ast}}{M} + \frac{C_{g,xy}(\zeta_{f,x} + L\zeta_{g^{\ast}})}{\mu} \nonumber \\
    & \qquad \qquad+ \frac{MC_{g,xy}(\zeta_{g, yy} + L_{g,y^2}\zeta_{g^\ast})}{\mu^2}
\end{align*}
which completes the proof.
\end{proof}

\section{Proof for the FedBiO Algorithm}
In this section, we present the proofs for the FedBiO algorithm, we will focus on the deterministic case. 

\subsection{Hyper-Gradient Bias}

\begin{lemma}
\label{lemma:hg_bound}
For all $t \in [\bar{t}_{s-1}, \bar{t}_s - 1]$, the iterates generated satisfy:
\begin{align*}
 \Big\|  \nabla h(\bar{x}_{t})  - \bar{\nu}_{t}   \Big\|^2 \leq \frac{L_h^2}{M}\sum_{m=1}^M \bigg( \bigg(1 + 2\rho^2\bigg)\bigg\|x_t^{(m)} - \bar{x}_t \bigg\|^2 + \bigg\| y^{(m)}_t - y^{(m)}_{x^{(m)}_{t}} \bigg\|^2 \bigg)
\end{align*}
\end{lemma}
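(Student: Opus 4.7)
The plan is to express the quantity $\nabla h(\bar{x}_t) - \bar{\nu}_t$ as an average of client-level differences so that we can move the squared norm inside the sum, and then bound each client difference using the smoothness properties collected in Proposition~\ref{some smoothness}.

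First I would note that the hyper-gradient admits the representation $\nabla h^{(m)}(\bar{x}_t) = \Phi^{(m)}(\bar{x}_t,\, y^{(m)}_{\bar{x}_t})$, while on the algorithm side $\nu_t^{(m)} = \Phi^{(m)}(x_t^{(m)}, y_t^{(m)})$. Writing $\nabla h(\bar{x}_t) = \tfrac{1}{M}\sum_m \nabla h^{(m)}(\bar{x}_t)$ and applying Jensen's inequality (Proposition~\ref{prop:generali_tri}, item 2, in its averaged form) gives
\begin{align*}
\bigl\|\nabla h(\bar{x}_t) - \bar{\nu}_t\bigr\|^2 \le \frac{1}{M}\sum_{m=1}^M \bigl\|\Phi^{(m)}(\bar{x}_t,\, y^{(m)}_{\bar{x}_t}) - \Phi^{(m)}(x_t^{(m)}, y_t^{(m)})\bigr\|^2.
\end{align*}
At this point I would invoke Proposition~\ref{some smoothness} case d), which gives a joint Lipschitz bound on $\Phi^{(m)}$ with constant $\Gamma \le L_h$, yielding
\begin{align*}
\bigl\|\Phi^{(m)}(\bar{x}_t,\, y^{(m)}_{\bar{x}_t}) - \Phi^{(m)}(x_t^{(m)}, y_t^{(m)})\bigr\|^2 \le L_h^2\Bigl(\|\bar{x}_t - x_t^{(m)}\|^2 + \|y^{(m)}_{\bar{x}_t} - y_t^{(m)}\|^2\Bigr).
\end{align*}

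The remaining task is to re-express $\|y^{(m)}_{\bar{x}_t} - y_t^{(m)}\|^2$ in terms of the inner-variable estimation error $\|y_t^{(m)} - y^{(m)}_{x_t^{(m)}}\|^2$ at the \emph{local} iterate $x_t^{(m)}$, together with the outer consensus error. For this I would insert $\pm y^{(m)}_{x_t^{(m)}}$ and apply the triangle inequality, then use Proposition~\ref{some smoothness} case b) — the Lipschitz continuity of $x \mapsto y_x^{(m)}$ with constant $\rho$ — to obtain
\begin{align*}
\|y^{(m)}_{\bar{x}_t} - y_t^{(m)}\|^2 \le 2\rho^2 \|\bar{x}_t - x_t^{(m)}\|^2 + 2\|y^{(m)}_{x_t^{(m)}} - y_t^{(m)}\|^2.
\end{align*}
Substituting this back and collecting the consensus-error contributions under the same coefficient produces the claimed bound with $(1 + 2\rho^2)$ multiplying the outer consensus error and a constant multiple of the inner estimation error.

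There is no real obstacle here: the proof is a careful bookkeeping exercise that pairs Jensen's inequality with the two Lipschitz statements of Proposition~\ref{some smoothness} (parts b and d). The only point that deserves care is choosing the decomposition order so that the intermediate point $y^{(m)}_{x_t^{(m)}}$ — the exact inner minimizer evaluated at the local outer iterate — is inserted rather than $y^{(m)}_{\bar{x}_t}$; otherwise the inner estimation error that appears would be measured at $\bar{x}_t$ instead of at $x_t^{(m)}$, which is not the quantity controlled by the single-step inner gradient descent in Line~5 of Algorithm~\ref{alg:FedBiO}.
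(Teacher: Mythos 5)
Your proof is correct and follows essentially the same route as the paper's: Jensen's inequality to pass the squared norm inside the client average, Proposition~\ref{some smoothness}(d) to bound $\|\Phi^{(m)}(\bar{x}_t, y^{(m)}_{\bar{x}_t}) - \Phi^{(m)}(x_t^{(m)}, y_t^{(m)})\|$, and then the insertion of $y^{(m)}_{x_t^{(m)}}$ together with Proposition~\ref{some smoothness}(b) to split the inner error into the local estimation error plus a $\rho^2$-weighted consensus term. Note that both your derivation and the paper's actually yield a factor $2$ on $\|y_t^{(m)} - y^{(m)}_{x_t^{(m)}}\|^2$ (from the triangle inequality), whereas the lemma statement records coefficient $1$; this shared minor discrepancy is harmless for the downstream analysis but worth flagging.
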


\begin{proof}

\begin{align*}
      \Big\|  \nabla h(\bar{x}_{t})  - \bar{\nu}_{t}   \Big\|^2  &=   \bigg\| \frac{1}{M}\sum_{m=1}^M \big(\nu^{(m)}_t - \nabla h^{(m)}(\bar{x}_t) \big) \bigg\|^2 \overset{(a)}{\leq}  \frac{1}{M}\sum_{m=1}^M   \bigg\|\nu^{(m)}_t - \nabla h^{(m)}(\bar{x}_t) \bigg\|^2  \nonumber \\
     & \overset{(b)}{\leq} \frac{L_h^2}{M}\sum_{m=1}^M  \bigg( \bigg\|x_t^{(m)} - \bar{x}_t \bigg\|^2 +  \bigg\| y_t^{(m)} - y^{(m)}_{\bar{x}_t} \bigg\|^2\bigg)   \\
     & \leq \frac{L_h^2}{M}\sum_{m=1}^M  \bigg( \bigg\|x_t^{(m)} - \bar{x}_t \bigg\|^2 + \bigg\| y^{(m)}_t - y^{(m)}_{x^{(m)}_{t}} + y^{(m)}_{x^{(m)}_{t}} - y^{(m)}_{\bar{x}_{t}} \bigg\|^2 \bigg)  \\
     & \leq \frac{L_h^2}{M}\sum_{m=1}^M  \bigg( \bigg(1 + 2\rho^2\bigg)\bigg\|x_t^{(m)} - \bar{x}_t \bigg\|^2 + \bigg\| y^{(m)}_t - y^{(m)}_{x^{(m)}_{t}} \bigg\|^2 \bigg) 
\end{align*}

\end{proof}

where inequality (a) follows the generalized triangle inequality; inequality (b) follows the Proposition~\ref{some smoothness}.

\subsection{Inner Variable Drift Lemma}
\begin{lemma}
\label{lemma: inner_drift}
When $\gamma < \frac{1}{L}$, we have:
\begin{align*}
    \sum_{t  = 1}^{T} \frac{1}{M}\sum_{m=1}^M \bigg\|y^{(m)}_{\bar{t}_s} - y^{(m)}_{x^{(m)}_{\bar{t}_s}} \bigg\|^2
    & \leq \frac{B_{\bar{t}_{0}}}{1 - q} + \frac{B_{\bar{t}_{0}}}{(1 - q)(1 -q_1q^I)} + \frac{\rho^2\eta^2q_1\bar{q}M_h^2(S-1)}{(1-q)^2(1-q_1q^I)} \nonumber \\
    & \qquad \qquad + \frac{\bar{q}_1\rho^2q_1q^I}{(1 - q)(1 - q_1q^I)}\sum_{t=1}^{T}\hat{A}_{\bar{t}_j}  +  \frac{\bar{q}\rho^2\eta^2TM_h^2 }{1 - q}
\end{align*}
where $B_t = \frac{1}{M}\sum_{m=1}^M \bigg\|y^{(m)}_{\bar{t}_s} - y^{(m)}_{x^{(m)}_{\bar{t}_s}} \bigg\|^2$ and $\hat{A}_{\bar{t}_s} = \frac{1}{M}\sum_{m=1}^M  \bigg\|\hat{x}^{(m)}_{\bar{t}_{s}} - \bar{x}_{\bar{t}_{s}}\bigg\|^2$, $q = (1 - \frac{\mu\gamma}{2})$, $\bar{q} = (1 + \frac{2}{\mu\gamma})$, $q_1 = 1 + \frac{\mu\gamma}{4}$ and $\bar{q}_1 = 1 + \frac{4}{\mu\gamma}$, $M_h = \frac{M(\mu + C_{g,xy})}{\mu}$.
\end{lemma}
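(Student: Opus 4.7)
The plan is to derive a two-level recursion for $B^{(m)}_t := \|y^{(m)}_t - y^{(m)}_{x^{(m)}_t}\|^2$: a within-round one-step inequality that captures the local gradient-descent contraction on $g^{(m)}$ together with the drift from the $x$-update, and an across-round inequality that additionally absorbs the jump $\hat x^{(m)}_{\bar t_s}\to \bar x_{\bar t_s}$ incurred at synchronization. Unrolling both recursions and summing over $t\in[1,T]$ will then produce the four terms on the right-hand side.

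For the within-round step, the $\mu$-strong convexity of $g^{(m)}$ in $y$ together with $\gamma<1/L$ yields the standard one-step contraction $\|y^{(m)}_{t+1}-y^{(m)}_{x^{(m)}_t}\|^2 \le (1-\mu\gamma)\,\|y^{(m)}_t-y^{(m)}_{x^{(m)}_t}\|^2$. Since $y^{(m)}_x$ is $\rho$-Lipschitz in $x$ by Proposition~\ref{some smoothness}(b) and the outer iterate moves by at most $\eta\|\nu^{(m)}_t\|\le \eta M_h$, applying Proposition~\ref{prop:generali_tri} with parameter $a=\mu\gamma/2$ gives
\begin{equation*}
\|y^{(m)}_{t+1}-y^{(m)}_{x^{(m)}_{t+1}}\|^2 \le q\,\|y^{(m)}_t-y^{(m)}_{x^{(m)}_t}\|^2 + \bar{q}\,\rho^2\eta^2 M_h^2.
\end{equation*}
At a synchronization step $t+1=\bar t_s$ the target shifts from $y^{(m)}_{\hat x^{(m)}_{\bar t_s}}$ to $y^{(m)}_{\bar x_{\bar t_s}}$; splitting this jump with the same triangle-inequality trick but with $a=\mu\gamma/4$ gives
\begin{equation*}
\|y^{(m)}_{\bar t_s}-y^{(m)}_{\bar x_{\bar t_s}}\|^2 \le q_1\,\|y^{(m)}_{\bar t_s}-y^{(m)}_{\hat x^{(m)}_{\bar t_s}}\|^2 + \bar{q}_1\,\rho^2\,\|\hat x^{(m)}_{\bar t_s}-\bar x_{\bar t_s}\|^2.
\end{equation*}

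Telescoping the within-round recursion for $I$ steps, averaging over clients, and then composing with the synchronization inequality produces the across-round recursion
\begin{equation*}
B_{\bar t_s} \le q_1 q^I\, B_{\bar t_{s-1}} + q_1\bar{q}\,\rho^2\eta^2 M_h^2\,\tfrac{1-q^I}{1-q} + \bar{q}_1\rho^2\,\hat A_{\bar t_s}.
\end{equation*}
The restriction on $\delta$ in Theorem~\ref{theorem:fedbio} ensures $q_1 q^I<1$, so this is a strict contraction in $s$ and can be unrolled to express $B_{\bar t_s}$ as a geometric combination of $B_{\bar t_0}$, the deterministic noise, and past consensus errors $\hat A_{\bar t_j}$ with weights $(q_1 q^I)^{s-j}$.

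To recover the sum over all $t$, I would split $[1,T]$ into its $S$ rounds, telescope the one-step inequality within each round to bound $\sum_{t=\bar t_{s-1}}^{\bar t_s-1}B_t$ by $\frac{B_{\bar t_{s-1}}}{1-q}$ plus accumulated $\bar{q}\rho^2\eta^2 M_h^2$ noise, substitute the unrolled across-round formula for each $B_{\bar t_{s-1}}$, and swap the order of summation using $\sum_{s\ge j}(q_1 q^I)^{s-j}\le 1/(1-q_1 q^I)$. The $B_{\bar t_0}$ contributions collapse into the first two terms, the per-round deterministic noise into the third term (with the $(S-1)$ factor reflecting the number of completed rounds together with the crude bound $1-q^I\le 1$), the telescoped $\hat A$ contributions into the fourth term, and the intra-round noise summed over $T$ steps into the last. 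The main obstacle is purely bookkeeping: keeping two overlapping geometric summations—one in $q$ within a round and one in $q_1 q^I$ across rounds—straight so that after exchanging the order of summation the stated coefficients emerge cleanly; all the individual ingredients are routine applications of Propositions~\ref{prop:generali_tri} and~\ref{some smoothness} together with strong convexity of $g^{(m)}$.
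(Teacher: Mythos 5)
Your proposal follows essentially the same route as the paper's proof: the same one-step contraction $B^{(m)}_t \le q\,B^{(m)}_{t-1} + \bar q\rho^2\eta^2 M_h^2$ obtained from strong convexity plus the generalized triangle inequality with $a=\mu\gamma/2$, the same synchronization-step split with $a=\mu\gamma/4$ yielding the across-round recursion $B_{\bar t_s}\le q_1q^I B_{\bar t_{s-1}}+\bar q_1\rho^2\hat A_{\bar t_s}+O(\eta^2)$, and the same final bookkeeping of unrolling both geometric recursions and exchanging the order of summation via $\sum_{s\ge j}(q_1q^I)^{s-j}\le 1/(1-q_1q^I)$. The only cosmetic difference is your slightly tighter $\tfrac{1-q^I}{1-q}$ factor (the paper uses the crude bound $\tfrac{1}{1-q}$), and note that $q_1q^I<1$ already follows from $q_1q\le 1-\mu\gamma/4$ without invoking the restriction on $\delta$.
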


\begin{proof}

Note from Algorithm and the definition of $\bar{t}_s$ that at $t = \bar{t}_{s - 1}$ with $s \in [S]$, $x_{t}^{(m)} = \bar{x}_{t}$, for all $k$. For $t \in [\bar{t}_{s-1} + 1,  \bar{t}_s - 1]$, with $s \in [S]$, we have:

\begin{align*}
\quad\ \bigg\|y^{(m)}_t - y^{(m)}_{x^{(m)}_{t}} \bigg\|^2  &\leq (1 + \frac{\mu\gamma}{2})\bigg\|y^{(m)}_{t} - y^{(m)}_{x^{(m)}_{t-1}}\bigg\|^2 + (1 + \frac{2}{\mu\gamma})\bigg\|y^{(m)}_{x^{(m)}_{t}} - y^{(m)}_{x^{(m)}_{t-1}}\bigg\|^2  \nonumber \\
& \leq (1 + \frac{\mu\gamma}{2})(1 - \mu\gamma)\bigg\|y^{(m)}_{t-1} - y^{(m)}_{x^{(m)}_{t-1}}\bigg\|^2 +  (1 + \frac{2}{\mu\gamma}) \bigg\|y^{(m)}_{x^{(m)}_{t}} - y^{(m)}_{x^{(m)}_{t-1}}\bigg\|^2  \nonumber \\
& \leq (1 - \frac{\mu\gamma}{2})\bigg\|y^{(m)}_{t-1} - y^{(m)}_{x^{(m)}_{t-1}}\bigg\|^2 + \rho^2 (1 + \frac{2}{\mu\gamma})\bigg\|x^{(m)}_{t} - x^{(m)}_{t-1}\bigg\|^2 \nonumber \\ 
&\leq (1 - \frac{\mu\gamma}{2})\bigg\|y^{(m)}_{t-1} - y^{(m)}_{x^{(m)}_{t-1}}\bigg\|^2 + \rho^2\eta^2 (1 + \frac{2}{\mu\gamma}) \bigg\|\nu^{(m)}_{t-1}\bigg\|^2  \nonumber \\
& \leq (1 - \frac{\mu\gamma}{2})\bigg\|y^{(m)}_{t-1} - y^{(m)}_{x^{(m)}_{t-1}}\bigg\|^2 + \rho^2\eta^2 (1 + \frac{2}{\mu\gamma})M_h^2 
\end{align*}
where the second inequality is due to the property of gradient descent for strongly convex function when $\gamma < 1/L$. For the last inequality, we use the fact that:
\begin{align*}
    \|\nu^{(m)}_{t-1}\| = \|\Phi^{(m)} (x^{(m)}_{t}, y^{(m)}_{t})\| \leq  M + \frac{MC_{g,xy}}{\mu}
\end{align*}
and we denote $M_h = M + \frac{MC_{g,xy}}{\mu}$. We also denote $q = (1 - \frac{\mu\gamma}{2})$ and $\bar{q} = (1 + \frac{2}{\mu\gamma})$ for ease of notation. By telescoping two sides, we have:
\begin{align*}
    \bigg\|y^{(m)}_{t} - y^{(m)}_{x^{(m)}_{t}} \bigg\|^2
    & \leq q^{t - \bar{t}_{s-1}}\bigg\|y^{(m)}_{\bar{t}_{s-1}} - y^{(m)}_{x^{(m)}_{\bar{t}_{s-1}}}\bigg\|^2 + \rho^2\eta^2\bar{q}\sum_{l=\bar{t}_{s-1}}^{t - 1} q^{t-l-1}M_h^2 \nonumber \\
    & \overset{(b)}{\leq} q^{t - \bar{t}_{s-1}}\bigg\|y^{(m)}_{\bar{t}_{s-1}} - y^{(m)}_{x^{(m)}_{\bar{t}_{s-1}}}\bigg\|^2 + \frac{\rho^2\eta^2\bar{q}M_h^2}{1-q} \nonumber
\end{align*}
where in inequality $(b)$, we use the fact $q^{t- \bar{t}_{s-1}} < 1$ for any $t$. Then we average over all $M$ clients and have:
\begin{align}
\label{eq:inner_ave_err}
    \frac{1}{M}\sum_{j=1}^M \bigg\|y^{(m)}_{t} - y^{(m)}_{x^{(m)}_{t}} \bigg\|^2 \nonumber &\leq \frac{1}{M}\sum_{m=1}^M q^{t - \bar{t}_{s-1}}\bigg\|y^{(m)}_{\bar{t}_{s-1}} - y^{(m)}_{x^{(m)}_{\bar{t}_{s-1}}}\bigg\|^2 + \frac{\rho^2\eta^2\bar{q}M_h^2}{1-q} \\
\end{align}
As for $t = \bar{t}_{s}$, we average variable $x$ over the $m$ clients and $x^{(m)}_{\bar{t}_s} = \bar{x}_{\bar{t}_s}$, while the inner variable error is related to the $x$ variable before averaging, \emph{i.e.} $\hat{x}^{(m)}_{\bar{t}_s}$. By the generalized triangle inequality, we have:
\begin{align*}
     \bigg\|y^{(m)}_{\bar{t}_{s}} - y^{(m)}_{\bar{x}_{\bar{t}_{s}}} \bigg\|^2 &\leq (1 + \frac{\mu\gamma}{4})\bigg\|y^{(m)}_{\bar{t}_{s}} - y^{(m)}_{\hat{x}^{(m)}_{\bar{t}_{s}}} \bigg\|^2  + (1 + \frac{4}{\mu\gamma}) \bigg\|y^{(m)}_{\hat{x}^{(m)}_{\bar{t}_{s}}} - y^{(m)}_{\bar{x}_{\bar{t}_{s}}}\bigg\|^2 \nonumber \\
     & \leq (1 + \frac{\mu\gamma}{4})\bigg\|y^{(m)}_{\bar{t}_{s}} - y^{(m)}_{\hat{x}^{(m)}_{\bar{t}_{s}}} \bigg\|^2 + \rho^2(1 + \frac{4}{\mu\gamma}) \bigg\|\hat{x}^{(m)}_{\bar{t}_{s}} - \bar{x}_{\bar{t}_{s}}\bigg\|^2 \\
\end{align*}
We denote $q_1 = 1 + \frac{\mu\gamma}{4}$ and $\bar{q}_1 = 1 + \frac{4}{\mu\gamma}$. By averaging over $M$ clients, we have:
\begin{align*}
\frac{1}{M}\sum_{m=1}^M  \bigg\|y^{(m)}_{\bar{t}_{s}} - y^{(m)}_{x^{(m)}_{\bar{t}_{s}}} \bigg\|^2
& \leq \frac{q_1}{M}\sum_{m=1}^M \bigg\|y^{(m)}_{\bar{t}_{s}} - y^{(m)}_{\hat{x}^{(m)}_{\bar{t}_{s}}} \bigg\|^2 + \frac{\rho^2\bar{q}_1}{M}\sum_{m=1}^M  \bigg\|\hat{x}^{(m)}_{\bar{t}_{s}} - \bar{x}_{\bar{t}_{s}}\bigg\|^2 
\end{align*}
The first term satisfies Eq.~(\ref{eq:inner_ave_err}) by setting $t = \bar{t}_s$. So we have:
\begin{align*}
    \frac{1}{M}\sum_{m=1}^M \bigg\|y^{(m)}_{\bar{t}_s} - y^{(m)}_{x^{(m)}_{\bar{t}_s}} \bigg\|^2  & \leq \frac{q_1q^{I}}{M}\sum_{m=1}^M \bigg\|y^{(m)}_{\bar{t}_{s-1}} - y^{(m)}_{x^{(m)}_{\bar{t}_{s-1}}}\bigg\|^2 + \frac{\bar{q}_1\rho^2}{M}\sum_{m=1}^M  \bigg\|\hat{x}^{(m)}_{\bar{t}_{s}} - \bar{x}_{\bar{t}_{s}}\bigg\|^2  + \frac{\rho^2\eta^2q_1\bar{q}M_h^2}{1-q} \nonumber \\
\end{align*}
For ease of notation, we denote $B_t = \frac{1}{M}\sum_{m=1}^M \bigg\|y^{(m)}_{\bar{t}_s} - y^{(m)}_{x^{(m)}_{\bar{t}_s}} \bigg\|^2$ and $\hat{A}_{\bar{t}_s} = \frac{1}{M}\sum_{m=1}^M  \bigg\|\hat{x}^{(m)}_{\bar{t}_{s}} - \bar{x}_{\bar{t}_{s}}\bigg\|^2  $ . Then the above equations can be written as:
\begin{align}
    B_t \nonumber &\leq q^{t - \bar{t}_{s-1}}B_{\bar{t}_{s-1}} + \frac{\rho^2\eta^2\bar{q}M_h^2}{1-q},  t \in [\bar{t}_{s-1} + 1,  \bar{t}_s - 1]\\
\label{eq:B_recur}
\end{align}
and:
\begin{align*}
    B_{\bar{t}_{s}}& \leq q_1q^{I}B_{\bar{t}_{s-1}} + \bar{q}_1\rho^2\hat{A}_{\bar{t}_s} + \frac{\rho^2\eta^2q_1\bar{q}M_h^2}{1-q}, t = \bar{t}_{s} \nonumber \\
\end{align*}
By telescoping, for $s\ge 1$ we have:
\begin{align}
     B_{\bar{t}_{s}} & \leq q_1^sq^{sI}B_{\bar{t}_{0}} + \frac{\rho^2\eta^2q_1\bar{q}M_h^2}{1 - q}\sum_{j=0}^{s-1}q_1^jq^{jI} + \bar{q}_1\rho^2\sum_{j=1}^{s}q_1^{s-j}q^{(s-j)I}\hat{A}_{\bar{t}_j} \nonumber \\
     & \leq q_1^sq^{sI}B_{\bar{t}_{0}} + \frac{\rho^2q_1\bar{q}\eta^2M_h^2}{(1 - q)(1 - q_1q^I)} + \bar{q}_1\rho^2\sum_{j=1}^{s}q_1^{(s-j)}q^{(s-j)I}\hat{A}_{\bar{t}_j} \nonumber \\
\label{eq:Bs}
\end{align}
Then by summing Eq.~(\ref{eq:B_recur}) from $\bar{t}_{s-1}$ to $t$, we have:
\begin{align*}
    \sum_{l  = \bar{t}_{s-1}}^{t} B_{l} & \leq \sum_{l  = \bar{t}_{s-1}}^{t} q^{l - \bar{t}_{s-1}}B_{\bar{t}_{s-1}} + \frac{\rho^2\eta^2\bar{q}(t - \bar{t}_{s-1} - 1)}{1 - q}M_h^2 \leq \frac{B_{\bar{t}_{s-1}} + \rho^2\eta^2\bar{q}(t - \bar{t}_{s-1} - 1)M_h^2 }{1 - q} \nonumber \\
\end{align*}
Combine the above inequality with Eq.~(\ref{eq:Bs}) and for $S\ge 2$, we have:
\begin{align}
    \sum_{t^{'}  = \bar{t}_{s-1}}^{t} B_{t^{'}} & \leq \frac{q_1^{s-1}q^{(s-1)I}B_{\bar{t}_{0}}}{1 - q} + \frac{\rho^2\eta^2q_1\bar{q}M_h^2}{(1 - q)^2(1 - q_1q^I)}  + \bar{q}_1\rho^2\sum_{j=1}^{s-1}\frac{q_1^{s-1-j}q^{(s-1-j)I}\hat{A}_{\bar{t}_j}}{1 - q}  +  \frac{\bar{q}\rho^2\eta^2(I- 1)M_h^2 }{1 - q} \nonumber \\
\label{eq:b_sum}
\end{align}
and for $s=1$, we have:
\begin{align}
     \sum_{t^{'}  = \bar{t}_{0}}^{\bar{t}_{1} - 1} B_{t^{'}} & \leq \frac{B_{\bar{t}_{0}}}{1 - q} + \frac{\bar{q}\rho^2\eta^2(I- 1)M_h^2 }{1 - q} \nonumber \\
\label{eq:case1}
\end{align}
Finally, we sum $t$ from $1 \to T$ and have:
\begin{align*}
    \sum_{t  = 1}^{T} B_{t} & \leq \frac{B_{\bar{t}_{0}}}{1 - q} + \sum_{s  = 2}^{S}\frac{q_1^{s-1}q^{(s-1)I}B_{\bar{t}_{0}}}{1 - q} + \frac{\rho^2\eta^2q_1\bar{q}M_h^2(S-1)}{(1-q)^2(1-q_1q^I)}  + \bar{q}_1\rho^2\sum_{s  = 2}^{S}\sum_{j=1}^{s-1}\frac{q_1^{s-1-j}q^{(s-1-j)I}\hat{A}_{\bar{t}_j}}{1 - q}  +  \frac{\bar{q}\rho^2\eta^2S(I- 1)M_h^2 }{1 - q} \nonumber \\
    & \overset{(a)}{\leq} \frac{B_{\bar{t}_{0}}}{1 - q} + \sum_{s  = 2}^{S}\frac{q_1^{s-1}q^{(s-1)I}B_{\bar{t}_{0}}}{1 - q} + \frac{\rho^2\eta^2q_1\bar{q}M_h^2(S-1)}{(1-q)^2(1-q_1q^I)}  + \bar{q}_1\rho^2\sum_{j=1}^{S-1}\sum_{s=1}^{S-j}\frac{q_1^{s}q^{sI}\hat{A}_{\bar{t}_j}}{1 - q}  +  \frac{\bar{q}\rho^2\eta^2S(I- 1)M_h^2 }{1 - q}  \nonumber \\
    & \leq \frac{B_{\bar{t}_{0}}}{1 - q} + \frac{B_{\bar{t}_{0}}}{(1 - q)(1 -q_1q^I)} + \frac{\rho^2\eta^2q_1\bar{q}M_h^2(S-1)}{(1-q)^2(1-q_1q^I)}  + \bar{q}_1\rho^2\sum_{j=1}^{S-1}\frac{q_1q^I\hat{A}_{\bar{t}_j}}{(1 - q)(1 - q_1q^I)}  +  \frac{\bar{q}\rho^2\eta^2TM_h^2 }{1 - q} \nonumber \\
    & \leq \frac{B_{\bar{t}_{0}}}{1 - q} + \frac{B_{\bar{t}_{0}}}{(1 - q)(1 -q_1q^I)} + \frac{\rho^2\eta^2q_1\bar{q}M_h^2(S-1)}{(1-q)^2(1-q_1q^I)}  + \frac{\bar{q}_1\rho^2q_1q^I}{(1 - q)(1 - q_1q^I)}\sum_{j=1}^{S-1}\hat{A}_{\bar{t}_j}  +  \frac{\bar{q}\rho^2\eta^2TM_h^2 }{1 - q} \nonumber \\
    & \leq \frac{B_{\bar{t}_{0}}}{1 - q} + \frac{B_{\bar{t}_{0}}}{(1 - q)(1 -q_1q^I)} + \frac{\rho^2\eta^2q_1\bar{q}M_h^2(S-1)}{(1-q)^2(1-q_1q^I)}  + \frac{\bar{q}_1\rho^2q_1q^I}{(1 - q)(1 - q_1q^I)}\sum_{t=1}^{T}\hat{A}_{\bar{t}_j}  +  \frac{\bar{q}\rho^2\eta^2TM_h^2 }{1 - q} \nonumber \\
\end{align*}
where inequality (a) rearranges the terms in the fourth sum term. This completes the proof.


\end{proof}

\subsection{Bound for Client Drift}
\begin{lemma}
\label{lem: ErrorAccumulation_Iterates_FedAvg}
For $\eta < min\bigg(\frac{\sqrt{(1-q)(1-q_1q^I)}}{2\Gamma\rho I\sqrt{\bar{q}_1q_1q^I }}, \frac{1}{12L_hI}, \frac{\mu\gamma}{2}, 1\bigg)$ and $\gamma < \frac{1}{L}$, then we have:
\begin{align*}
    \sum_{t=1}^{T}  \hat{A}_t \leq \frac{6SL_h^2 I^2\eta^2B_{\bar{t}_{0}}}{1 - q} + \frac{18(S-1)L_h^2 I^2\rho^2\eta^2M_h^2}{(1 - q)^2(1 - q_1q^I)}  +  \frac{12L_h^2 TI\rho^2\eta^2(I- 1)M_h^2 }{1 - q}   +   18TI^2  \zeta^2\eta^2 
\end{align*}
where $\hat{A}_t$, $q$, $q_1$, $\bar{q}_1$, $M_h$ are defined as in Lemma~\ref{lemma: inner_drift}.
\end{lemma}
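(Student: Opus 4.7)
The plan is to iterate Lemma~\ref{lemma:consensus_error} within each communication round, sum over all rounds, and then splice in Lemma~\ref{lemma: inner_drift} to eliminate the inner-variable drift term $\sum_t B_t$. Two self-recursions must be absorbed along the way: a within-round recursion on $\hat{A}_t$ coming from Lemma~\ref{lemma:consensus_error}, and a cross-round recursion on $\hat{A}_{\bar{t}_s}$ coming from Lemma~\ref{lemma: inner_drift}. The two conditions on $\eta$ in the hypothesis are precisely what is needed to make these absorptions go through.

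For the first step, I would sum the per-iteration bound from Lemma~\ref{lemma:consensus_error} over $t\in[\bar{t}_{s-1}+1,\bar{t}_s]$. Each summand has a cumulative sum of at most $I$ previous $\hat{A}_\ell$ and $B_\ell$ terms, and using $\hat{A}_{\bar{t}_{s-1}}=0$ (the post-averaging state), this yields a bound of the form
\begin{align*}
\sum_{t=\bar{t}_{s-1}+1}^{\bar{t}_s}\hat{A}_t \;\le\; 2L_h^2 I^2\eta^2 \sum_{\ell=\bar{t}_{s-1}}^{\bar{t}_s-1} B_\ell \;+\; 12L_h^2 I^2\eta^2 \sum_{t=\bar{t}_{s-1}+1}^{\bar{t}_s}\hat{A}_t \;+\; 6I^3\eta^2\zeta^2.
\end{align*}
The choice $\eta\le 1/(12L_hI)$ makes $12L_h^2I^2\eta^2\le 1/12$, so the self-term can be moved to the left. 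After summing over $s=1,\dots,S$ (with $SI\le T$), I obtain a bound of the shape $\sum_t\hat{A}_t \le c_1 L_h^2 I^2\eta^2\sum_t B_t + c_2 TI^2\eta^2\zeta^2$ for explicit absolute constants $c_1,c_2$.

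Next, I would substitute Lemma~\ref{lemma: inner_drift} into the right-hand side. Because that lemma's bound contains the term $\frac{\bar q_1\rho^2 q_1 q^I}{(1-q)(1-q_1q^I)}\sum_t\hat{A}_{\bar{t}_j}$, and $\{\hat{A}_{\bar{t}_j}\}_{j}\subset\{\hat{A}_t\}_t$, this substitution creates a second self-recursion of the form
\begin{align*}
\sum_t\hat{A}_t \;\le\; c_1 L_h^2 I^2\eta^2 \cdot \frac{\bar q_1\rho^2 q_1 q^I}{(1-q)(1-q_1q^I)}\sum_t\hat{A}_t \;+\; (\text{explicit terms}).
\end{align*}
Using $L_h\ge\Gamma$ from Proposition~\ref{some smoothness}, the condition $\eta<\frac{\sqrt{(1-q)(1-q_1q^I)}}{2\Gamma\rho I\sqrt{\bar q_1 q_1 q^I}}$ is exactly what makes this coefficient at most $1/2$, so the cross-round $\hat{A}$ term is absorbable into the left-hand side.

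After this second absorption, I would collect the remaining explicit terms; each term of Lemma~\ref{lemma: inner_drift} picks up a factor $3L_h^2 I^2\eta^2$ from the coefficient on $\sum_t B_t$ and at most a factor $2$ from the absorption, producing the four terms claimed in the lemma with the constants $6$, $18$, $12$, $18$. The main obstacle is precisely the simultaneous control of both recursions: the within-round one is a bookkeeping matter, but the cross-round one forces the sharper $\eta$ bound, and the proof must keep the geometric-series factors $(1-q)^{-1}$ and $(1-q_1q^I)^{-1}$ separated so that they appear cleanly in the final bound rather than compounding.
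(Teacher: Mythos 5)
Your proposal follows essentially the same route as the paper: sum the per-iteration consensus bound of Lemma~\ref{lemma:consensus_error} over each communication round, substitute the inner-drift bound of Lemma~\ref{lemma: inner_drift} for $\sum_t B_t$, and absorb the within-round and cross-round self-recursions on $\hat{A}_t$ using exactly the two conditions $\eta \le \frac{1}{12L_hI}$ and $\eta < \frac{\sqrt{(1-q)(1-q_1q^I)}}{2\Gamma\rho I\sqrt{\bar{q}_1q_1q^I}}$. The only difference is bookkeeping order — the paper performs both absorptions in a single rearrangement with combined coefficient $1-\tfrac{1}{2}-\tfrac{1}{12}>\tfrac{1}{3}$, whereas you absorb them sequentially — which changes nothing substantive and still lands within the stated constants.
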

\begin{proof}
Note from Algorithm and the definition of $\bar{t}_s$ that at $t = \bar{t}_{s - 1}$ with $s \in [S]$, $x_{t}^{(m)} = \bar{x}_{t}$, for all $k$. 
For $t \in [\bar{t}_{s-1} + 1,  \bar{t}_s]$, with $s \in [S]$, we have: $\hat{x}_{t}^{(m)} = \hat{x}_{t-1}^{(m)} - \eta  \nu_{t-1}^{(m)}$, this implies that:
\begin{align*}
    \hat{x}_t^{(m)} = x_{\bar{t}_{s-1}}^{(m)} - \sum_{\ell = \bar{t}_{s-1}}^{t-1} \eta  \nu_\ell^{(m)} \quad \text{and} \quad \bar{x}_{t}  = \bar{x}_{\bar{t}_{s-1}}  - \sum_{\ell = \bar{t}_{s-1}}^{t-1} \eta  \bar{\nu}_\ell.
\end{align*}
So for $t \in [\bar{t}_{s-1} + 1,  \bar{t}_s - 1]$, with $s \in [S]$ we have:
\begin{align}
\label{Eq: ConsensusError_FedAvg}
\frac{1}{M} \sum_{m = 1}^M  \| \hat{x}_t^{(m)}-  \bar{x}_t \|^2 & = \frac{1}{M} \sum_{m = 1}^M \Big\| x_{\bar{t}_{s-1}}^{(m)} - \bar{x}_{\bar{t}_{s-1}}  - \Big( \sum_{\ell = \bar{t}_{s-1}}^{t-1} \eta  \nu_\ell^{(m)} -   \sum_{\ell =  \bar{t}_{s-1}}^{t-1} \eta  \bar{\nu}_\ell  \Big) \Big\|^2 \nonumber \overset{(a)}{=} \frac{1}{M} \sum_{m = 1}^M \Big\|  \sum_{\ell = \bar{t}_{s-1}}^{t-1} \eta\big(  \nu_\ell^{(m)} -      \bar{\nu}_\ell  \big) \Big\|^2  \nonumber\\
& \overset{(b)}{\leq} \frac{2}{M}  \sum_{m = 1}^M \bigg\|   \sum_{\ell = \bar{t}_{s-1}}^{t-1}  \eta \bigg(\nu_\ell^{(m)} - \nabla h^{(m)} (x_\ell^{(m)})  \bigg) \nonumber - \frac{1}{M} \sum_{j = 1}^M \bigg(  \nu_\ell^{(j)} - \nabla h^{(j)} (x_\ell^{(j)})\bigg)\bigg\|^2 \nonumber\\
&\qquad \qquad \qquad \qquad  + \frac{2}{M} \sum_{m = 1}^M \bigg\| \sum_{\ell = \bar{t}_{s-1}}^{t-1}  \eta\bigg( \nabla h^{(m)} (x_\ell^{(m)})  - \frac{1}{M} \sum_{j = 1}^M \nabla h^{(j)} (x_\ell^{(j)}) \bigg) \bigg\|^2  \nonumber\\
& \overset{(c)}{\leq} \frac{2}{M}  \sum_{m = 1}^M \bigg\|   \sum_{\ell = \bar{t}_{s-1}}^{t-1}  \eta \bigg(\nu_\ell^{(m)} - \nabla h^{(m)} (x_\ell^{(m)})  \bigg) \nonumber \bigg\|^2 \nonumber\\
&\qquad \qquad \qquad \qquad  + \frac{2}{M} \sum_{m = 1}^M \bigg\| \sum_{\ell = \bar{t}_{s-1}}^{t-1}  \eta\bigg( \nabla h^{(m)} (x_\ell^{(m)})  - \frac{1}{M} \sum_{j = 1}^M \nabla h^{(j)} (x_\ell^{(j)}) \bigg) \bigg\|^2  \nonumber\\
\end{align}
where the equality $(a)$ follows from the fact that $x_{\bar{t}_{s-1}}^{(m)} = \bar{x}_{\bar{t}_{s-1}}$ for $t = \bar{t}_{s - 1}$; $(b)$ uses triangle inequality and $(c)$ follows from the application of Proposition~\ref{prop: Sum_Mean_Kron}. Then for the first term of \eqref{Eq: ConsensusError_FedAvg}, we have:
\begin{align}
     \bigg\|\sum_{\ell = \bar{t}_{s-1}}^{t-1}  \eta \bigg(\nu_\ell^{(m)} - \nabla h^{(m)} (x_\ell^{(m)})  \bigg) \nonumber \bigg\|^2 & \leq  I\eta^2\sum_{\ell  = \bar{t}_{s-1}}^{t-1}  \bigg\|  \bigg(\nu_\ell^{(m)} - \nabla h^{(m)} (x_\ell^{(m)})  \bigg) \nonumber     \bigg\|^2 \nonumber \\
  & \overset{(a)}{\leq} L_h^2 I \eta^2\sum_{\ell  = \bar{t}_{s-1}}^{t-1}  \bigg\| y^{(m)}_\ell - y^{(m)}_{x^{(m)}_{\ell}} \bigg\|^2 \nonumber \\
\label{Eq: GradVar_FedAvg}
\end{align}
where $(a)$ Follows Proposition~\ref{some smoothness}. Next, for the second term of \eqref{Eq: ConsensusError_FedAvg} we have:
\begin{align}
&  \sum_{m = 1}^M  \bigg\| \sum_{\ell = \bar{t}_{s-1}}^{t-1}  \eta\bigg( \nabla h^{(m)} (x_\ell^{(m)})  - \frac{1}{M} \sum_{j = 1}^M \nabla h^{(j)} (x_\ell^{(j)}) \bigg) \bigg\|^2   \nonumber\\
& \overset{(a)}{\leq} I\sum_{\ell = \bar{t}_{s-1}}^{t-1} \sum_{m = 1}^M  \bigg\| \eta\bigg( \nabla h^{(m)} (x_\ell^{(m)})  - \frac{1}{M} \sum_{j = 1}^M \nabla h^{(j)} (x_\ell^{(j)}) \bigg) \bigg\|^2  \nonumber\\
& \overset{(b)}{\leq} I\sum_{\ell = \bar{t}_{s-1}}^{t-1} \eta^2     \bigg[3\sum_{m = 1}^M       \big\|    \nabla h^{(m)} (x_\ell^{(m)})  -  \nabla h^{(m)}(\bar{x}_\ell)   \big\|^2 + 3\sum_{m = 1}^M  \bigg\|    \nabla h (\bar{x}_\ell)  - \frac{1}{M} \sum_{j = 1}^M \nabla h^{(j)} (x_\ell^{(j)})   \bigg\|^2 \nonumber\\
&   \quad \qquad \qquad \qquad \quad +  3\sum_{m = 1}^M  \big\|    \nabla h^{(m)} (\bar{x}_\ell)  -   \nabla h (\bar{x}_\ell)   \big\|^2 \bigg]  \nonumber\\
&  \overset{(c)}{\leq} I \sum_{\ell = \bar{t}_{s-1}}^{t-1} \eta^2 \bigg[ 6L_h^2 \sum_{m = 1}^M       \big\|x_\ell^{(m)}  -  \bar{x}_\ell \big\|^2  +  3\sum_{m = 1}^M  \bigg\|    \nabla h^{(m)} (\bar{x}_\ell)  -   \frac{1}{M} \sum_{j = 1}^M \nabla h^{(j)} (\bar{x}_\ell)   \bigg\|^2 \bigg]  \nonumber\\
& \overset{(d)}{\leq} 6L_h^2 I \eta^2\sum_{\ell = \bar{t}_{s-1}}^{t-1}     \sum_{m = 1}^M      \big\|x_\ell^{(m)}  -  \bar{x}_\ell \big\|^2  +  3MI^2\eta^2  \zeta^2,
\label{Eq: InterNodeVar_FedAvg}
\end{align}
where $(a)$ utilizes the fact that $t - \bar{t}_{s - 1} \leq I$ for $t \in [\bar{t}_{s- 1} + 1, \bar{t}_s]$ and the generalized triangle inequality; $(b)$ follows the generalized triangle inequality; $(c)$ follows from the $L_h$ lipschitzness of $h$; and $(d)$ utilizes the heterogeneity Assumption~\ref{assumption:hetero} and also the fact that $t - \bar{t}_{s - 1}  \leq I$ for $t \in [\bar{t}_{s- 1} + 1 , \bar{t}_s]$. 

Substituting \eqref{Eq: GradVar_FedAvg} and  \eqref{Eq: InterNodeVar_FedAvg} in \eqref{Eq: ConsensusError_FedAvg} we get:
\begin{align*}
     \frac{1}{M} \sum_{m = 1}^M  \| \hat{x}_t^{(m)}-  \bar{x}_t \|^2 & \leq  \frac{2L_h^2 I\eta^2}{M}  \sum_{m = 1}^M \sum_{\ell  = \bar{t}_{s-1}}^{t-1}  \bigg\| y^{(m)}_\ell - y^{(m)}_{x^{(m)}_{\ell}} \bigg\|^2  + 12L_h^2 I\eta^2  \sum_{\ell = \bar{t}_{s-1}}^{t - 1} \frac{1}{M} \sum_{m = 1}^M  \| x_\ell^{(m)} - \bar{x}_\ell \|^2 +   6I^2  \zeta^2  \eta^2.
\end{align*}
Next we use $\hat{A}_t = \frac{1}{M} \sum_{m = 1}^M  \| \hat{x}_t^{(m)}-  \bar{x}_t \|^2$ and $B_t = \frac{1}{M}\sum_{m=1}^M \bigg\|y^{(m)}_{\bar{t}_s} - y^{(m)}_{x^{(m)}_{\bar{t}_s}} \bigg\|^2$ as in Lemma~\ref{lemma: inner_drift}, then the above inequality can be simplified to:
\begin{align*}
     \hat{A}_t & \leq  2L_h^2 I\eta^2\sum_{\ell  = \bar{t}_{s-1}}^{t-1} B_l    + 12 L_h^2 I\eta^2  \sum_{\ell = \bar{t}_{s-1}}^{t - 1} \hat{A}_l + 6I^2\zeta^2\eta^2.
\end{align*}
For $s\ge 2$, we substitute Eq.~(\ref{eq:b_sum}) to get:
\begin{align*}
     \hat{A}_t & \leq  2L_h^2 I\eta^2\sum_{t^{'}  = \bar{t}_{s-1}}^{t-1} B_{t^{'}}  + 12 L^2I \eta^2 \sum_{\ell = \bar{t}_{s-1}}^{t - 1} \hat{A}_l +   6I^2  \zeta^2\eta^2 \\
     & \leq \frac{2L_h^2 Iq_1^{s-1}q^{(s-1)I}B_{\bar{t}_{0}}\eta^2}{1 - q} + \frac{2L_h^2 I\rho^2\eta^4q_1\bar{q}M_h^2}{(1 - q)^2(1 - q_1q^I)}  + 2L_h^2 I\bar{q}_1\rho^2\eta^2\sum_{j=1}^{s-1}\frac{q_1^{s-1-j}q^{(s-1-j)I}\hat{A}_{\bar{t}_j}}{1 - q}  +  \frac{2L_h^2 I\bar{q}\rho^2\eta^4(I- 1)M_h^2 }{1 - q} \nonumber \\
     & \quad + 6I^2  \zeta^2\eta^2 + 12 L_h^2I \eta^2 \sum_{\ell = \bar{t}_{s-1}}^{t - 1} \hat{A}_l \\
\end{align*}

Summing both sides from $t = \bar{t}_{s - 1} + 1$ to $\bar{t}_{s}$, we get:
\begin{align*}
    \sum_{t = \bar{t}_{s-1} + 1}^{\bar{t}_s} \hat{A}_t & \leq  \frac{2L_h^2 I^2q_1^{s-1}q^{(s-1)I}B_{\bar{t}_{0}}\eta^2}{1 - q} + \frac{2L_h^2 I^2\rho^2\eta^4q_1\bar{q}M_h^2}{(1 - q)^2(1 - q_1q^I)}  +  \frac{2L_h^2 I^2\bar{q}\rho^2\eta^4(I- 1)M_h^2 }{1 - q}   +   6I^3  \zeta^2\eta^2\\
    & \qquad + 2L_h^2 I\bar{q}_1\rho^2\eta^2\sum_{j=1}^{s-1}\frac{q_1^{s-1-j}q^{(s-1-j)I}\hat{A}_{\bar{t}_j}}{1 - q} +  12 L_h^2 I \eta^2 \sum_{t = \bar{t}_{s-1} + 1}^{\bar{t}_s}\sum_{\ell = \bar{t}_{s-1}}^{t - 1} \hat{A}_l \\
    & \leq   \frac{2L_h^2 I^2q_1^{s-1}q^{(s-1)I}B_{\bar{t}_{0}}\eta^2}{1 - q} + \frac{2L_h^2 I^2\rho^2\eta^4q_1\bar{q}M_h^2}{(1 - q)^2(1 - q_1q^I)}  +  \frac{2L_h^2 I^2\bar{q}\rho^2\eta^4(I- 1)M_h^2 }{1 - q}   +   6I^3  \zeta^2\eta^2\\
    & \qquad + 2L_h^2 I\bar{q}_1\rho^2\eta^2\sum_{j=1}^{s-1}\frac{q_1^{s-1-j}q^{(s-1-j)I}\hat{A}_{\bar{t}_j}}{1 - q} +  12 L_h^2 I^2 \eta^2 \sum_{\ell = \bar{t}_{s-1}}^{\bar{t}_s} \hat{A}_l \\
\end{align*}
For ease of notation, we denote $C_s = \sum_{t = \bar{t}_{s-1} + 1}^{\bar{t}_s} \hat{A}_t$, then we have:
\begin{align}
    C_s  & \leq  \frac{2L_h^2 I^2q_1^{s-1}q^{(s-1)I}B_{\bar{t}_{0}}\eta^2}{1 - q} + \frac{2L_h^2 I^2\rho^2\eta^4q_1\bar{q}M_h^2}{(1 - q)^2(1 - q_1q^I)}  +  \frac{2L_h^2 I^2\bar{q}\rho^2\eta^4(I- 1)M_h^2 }{1 - q}   +   6I^3  \zeta^2\eta^2\nonumber\\
    & \qquad + 2L_h^2 I\bar{q}_1\rho^2\eta^2\sum_{j=1}^{s-1}\frac{q_1^{s-1-j}q^{(s-1-j)I}C_j}{1 - q} +  12 L_h^2 I^2 \eta^2 C_s \nonumber\\
    & \leq\frac{2L_h^2 I^2B_{\bar{t}_{0}}\eta^2}{1 - q} + \frac{2L_h^2 I^2\rho^2\eta^4q_1\bar{q}M_h^2}{(1 - q)^2(1 - q_1q^I)}  +  \frac{2L_h^2 I^2\bar{q}\rho^2\eta^4(I- 1)M_h^2 }{1 - q}   +   6I^3  \zeta^2\eta^2\nonumber\\
    & \qquad + 2L_h^2 I\bar{q}_1\rho^2\eta^2\sum_{j=1}^{s-1}\frac{q_1^{s-1-j}q^{(s-1-j)I}C_j}{1 - q} +  12 L_h^2 I^2 \eta^2 C_s \nonumber\\
\label{eq:c_sum}
\end{align}
The second inequality follows that $q_1q = (1 + \mu\gamma/4)(1 - \mu\gamma/2) \leq 1 - \mu\gamma/4 < 1$.
Next for $s=1$, substitute Eq.~\ref{eq:case1}, we have:
\begin{align*}
     \hat{A}_t & \leq  2L_h^2 I\eta^2\sum_{\ell  = \bar{t}_{0}}^{t-1} B_l    + 12 L_h^2 I\eta^2  \sum_{\ell = \bar{t}_{s-1}}^{t - 1} \hat{A}_l + 6I^2\zeta^2\eta^2 \nonumber \\
     & \leq \frac{2L_h^2 I\eta^2B_{\bar{t}_{0}}}{1 - q} + \frac{2\bar{q}L_h^2 I\eta^2\rho^2\eta^2(I- 1)M_h^2 }{1 - q} + 12 L_h^2\Gamma I\eta^2  \sum_{\ell = \bar{t}_{s-1}}^{t - 1} \hat{A}_l + 6I^2\zeta^2\eta^2 \nonumber \\
\end{align*}
Summing both sides from $t = \bar{t}_{0} + 1$ to $\bar{t}_{s}$, we get:
\begin{align}
    C_1 = \sum_{t = \bar{t}_{0} + 1}^{\bar{t}_s} \hat{A}_t & \leq \frac{2L_h^2 I^2\eta^2B_{\bar{t}_{0}}}{1 - q} + \frac{2\bar{q}L_h^2 I^2\rho^2\eta^4(I- 1)M_h^2 }{1 - q} +  6I^3  \zeta^2\eta^2  +   12L_h^2I^2 \eta^2 C_s \nonumber\\
    \label{eq:c_sum1}
\end{align}
Then we combine \eqref{eq:c_sum} and \eqref{eq:c_sum1} to have:
\begin{align*}
    \sum_{s=1}^{S}  C_s & \leq  \frac{2SL_h^2 I^2\eta^2B_{\bar{t}_{0}}}{1 - q} + \frac{2(S-1)L_h^2 I^2\rho^2\eta^4q_1\bar{q}M_h^2}{(1 - q)^2(1 - q_1q^I)}  +  \frac{2SL_h^2 I^2\bar{q}\rho^2\eta^4(I- 1)M_h^2 }{1 - q}   +   6SI^3  \zeta^2\eta^2\\
    & \qquad + 2\bar{q}_1L_h^2 I^2 \eta^2\rho^2\sum_{s=2}^{S}\sum_{j=1}^{s-1}\frac{(q_1q^{I})^{s-1 -j}C_j}{1 - q} +  12 L_h^2 I^2 \eta^2 \sum_{s=1}^{S}C_s \\  
    & \leq  \frac{2SL_h^2 I^2\eta^2B_{\bar{t}_{0}}}{1 - q} + \frac{2(S-1)L_h^2 I^2\rho^2\eta^4q_1\bar{q}M_h^2}{(1 - q)^2(1 - q_1q^I)}  +  \frac{2L_h^2 TI\bar{q}\rho^2\eta^4(I- 1)M_h^2 }{1 - q}   +   6SI^3  \zeta^2\eta^2\\
    & \qquad +  \frac{2\bar{q}_1L_h^2 I^2\eta^2\rho^2q_1q^I}{(1 - q)(1 - q_1q^I)}\sum_{s=1}^{S}C_s +  12 L_h^2 I^2 \eta^2 \sum_{s=1}^{S}C_s \\  
\end{align*}
and by rearranging the terms, we have:
\begin{align*}
     \bigg(1 -  \frac{2\bar{q}_1L_h^2 I^2\eta^2\rho^2q_1q^I}{(1 - q)(1 - q_1q^I)}-  12 L_h^2I^2 \eta^2 \bigg)\sum_{t=1}^{T}  \hat{A}_t & \leq \frac{2SL_h^2 I^2\eta^2B_{\bar{t}_{0}}}{1 - q} + \frac{2(S-1)L_h^2 I^2\rho^2\eta^4q_1\bar{q}M_h^2}{(1 - q)^2(1 - q_1q^I)} \nonumber\\
     & \qquad \qquad +  \frac{2L_h^2 TI\bar{q}\rho^2\eta^4(I- 1)M_h^2 }{1 - q}   +   6SI^3  \zeta^2\eta^2 \\
\end{align*}
Suppose $\eta < min\bigg(\frac{\sqrt{(1-q)(1-q_1q^I)}}{2\Gamma\rho I\sqrt{\bar{q}_1q_1q^I }}, \frac{1}{12L_hI}\bigg)$, then we have 
\begin{align*}
    1 -  \frac{2\bar{q}_1L_h^2 I^2\eta^2\rho^2q_1q^I}{(1 - q)(1 - q_1q^I)}-  12 L_h^2I^2 \eta^2 \ge 1 - \frac{1}{2} - \frac{1}{12} > \frac{1}{3}
\end{align*}
So we have:
\begin{align*}
    \sum_{t=1}^{T}  \hat{A}_t \leq \frac{6SL_h^2 I^2\eta^2B_{\bar{t}_{0}}}{1 - q} + \frac{6(S-1)L_h^2 I^2\rho^2\eta^4q_1\bar{q}M_h^2}{(1 - q)^2(1 - q_1q^I)}  +  \frac{6L_h^2 TI\bar{q}\rho^2\eta^4(I- 1)M_h^2 }{1 - q}   +   18SI^3  \zeta^2\eta^2 
\end{align*}
Note that we have:
\begin{align*}
    q_1\bar{q} = (1 + \frac{\mu\gamma}{4})(1 + \frac{2}{\mu\gamma}) = \frac{3}{2} + \frac{\mu\gamma}{4} + \frac{2}{\mu\gamma} < 2 + \frac{2}{\mu\gamma}
\end{align*}
and by the assumption that $\eta < min(1, \frac{\mu\gamma}{2})$, we simplify the above inequality as:
\begin{align*}
    \sum_{t=1}^{T}  \hat{A}_t \leq \frac{6SL_h^2 I^2\eta^2B_{\bar{t}_{0}}}{1 - q} + \frac{18(S-1)L_h^2 I^2\rho^2\eta^2M_h^2}{(1 - q)^2(1 - q_1q^I)}  +  \frac{12L_h^2 TI\rho^2\eta^2(I- 1)M_h^2 }{1 - q}   +   18TI^2  \zeta^2\eta^2 
\end{align*}
Therefore, the lemma is proved. 
\end{proof}

\subsection{Descent Lemma}
\begin{lemma}[Descent Lemma]
\label{lemma:desent}
For all $t \in [\bar{t}_{s-1}, \bar{t}_s - 1]$ and $s \in [S]$, the iterates generated satisfy:
\begin{align*}
     h(\bar{x}_{t + 1}) & \leq     h(\bar{x}_{t }) - \frac{\eta}{2}  \|\nabla h(\bar{x}_t) \|^2  + \frac{L_h^2(1 + 2\rho^2)\eta}{2M}\sum_{m=1}^M\|x_t^{(m)} - \bar{x}_t \|^2 + \frac{L_h^2\eta}{2M}\sum_{m=1}^M\| y^{(m)}_t - y^{(m)}_{x^{(m)}_{t}} \|^2
\end{align*}
where the expectation is w.r.t the stochasticity of the algorithm.
\end{lemma}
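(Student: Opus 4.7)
The plan is to apply the standard smoothness-based descent argument to the averaged iterate $\bar{x}_t$, and then invoke the hyper-gradient bias bound from Lemma~\ref{lemma:hg_bound} to control the discrepancy between the true hyper-gradient and the averaged local estimate. This is the bilevel analogue of the usual FedAvg descent lemma, with the key new feature being that the ``gradient noise'' splits cleanly into a consensus-error term and an inner-variable estimation-error term.

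First I would use the fact that $h$ is $L_h$-smooth (Proposition~\ref{some smoothness}, case c) together with the identity $\bar{x}_{t+1} - \bar{x}_t = -\eta\, \bar{\nu}_t$, where $\bar{\nu}_t = \frac{1}{M}\sum_{m=1}^M \nu_t^{(m)}$, to write
\begin{align*}
h(\bar{x}_{t+1}) \le h(\bar{x}_t) - \eta \langle \nabla h(\bar{x}_t), \bar{\nu}_t\rangle + \frac{L_h\eta^2}{2}\|\bar{\nu}_t\|^2.
\end{align*}
Then I apply the polarization identity $-\langle a,b\rangle = \tfrac{1}{2}(\|a-b\|^2 - \|a\|^2 - \|b\|^2)$ with $a = \nabla h(\bar{x}_t)$ and $b = \bar{\nu}_t$, which yields
\begin{align*}
h(\bar{x}_{t+1}) \le h(\bar{x}_t) - \frac{\eta}{2}\|\nabla h(\bar{x}_t)\|^2 + \frac{\eta}{2}\|\nabla h(\bar{x}_t) - \bar{\nu}_t\|^2 - \Big(\frac{\eta}{2} - \frac{L_h\eta^2}{2}\Big)\|\bar{\nu}_t\|^2.
\end{align*}
Under the hypothesis $\eta \le 1/L_h$ (which follows from the parameter choice in Theorem~\ref{theorem:fedbio} since $\delta \le 1/(12 L_h I)$), the last term is non-positive and can be dropped.

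Finally, I substitute the bound from Lemma~\ref{lemma:hg_bound} into $\|\nabla h(\bar{x}_t) - \bar{\nu}_t\|^2$ to get exactly the two error terms stated in the lemma: the scaled consensus error $\frac{L_h^2(1+2\rho^2)\eta}{2M}\sum_{m}\|x_t^{(m)} - \bar{x}_t\|^2$ and the scaled inner-variable estimation error $\frac{L_h^2 \eta}{2M}\sum_{m}\|y_t^{(m)} - y_{x_t^{(m)}}^{(m)}\|^2$. There is no serious obstacle here; the only delicate point is verifying that the step-size condition from Theorem~\ref{theorem:fedbio} indeed implies $\eta \le 1/L_h$ so that the quadratic term in $\|\bar{\nu}_t\|^2$ can be discarded without producing an extra residual in the bound. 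Since the lemma as stated is deterministic (no stochastic gradients appear in FedBiO), no expectation manipulations are needed, despite the concluding remark in the statement about expectations.
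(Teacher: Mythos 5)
Your proposal matches the paper's own proof essentially step for step: the $L_h$-smoothness expansion at $\bar{x}_t$, the substitution $\bar{x}_{t+1}-\bar{x}_t=-\eta\bar{\nu}_t$, the polarization identity $\langle a,b\rangle=\tfrac{1}{2}[\|a\|^2+\|b\|^2-\|a-b\|^2]$, discarding the $\|\bar{\nu}_t\|^2$ term via $\eta<1/L_h$, and finally invoking Lemma~\ref{lemma:hg_bound} to split the bias into the consensus and inner-estimation terms. The argument is correct as written, and your observation that the concluding remark about expectations is vestigial in the deterministic setting is also accurate.
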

\begin{proof}
Using the smoothness of $f$ we have:
\begin{align*}
    h(\bar{x}_{t + 1}) 
    & \leq   h(\bar{x}_{t }) + \langle \nabla h(\bar{x}_{t}),  \bar{x}_{t + 1} - \bar{x}_{t}\rangle + \frac{L_h}{2} \| \bar{x}_{t + 1} - \bar{x}_{t } \|^2  \nonumber\\
    &  \overset{(a)}{=}  h(\bar{x}_{t}) - \eta \langle \nabla h(\bar{x}_{t}),  \bar{\nu}_t \rangle + \frac{\eta^2 L_h}{2} \| \bar{\nu}_{t}  \|^2  \nonumber\\
    & \overset{(b)}{=}      h(\bar{x}_{t}) - \frac{\eta}{2}  \Big\| \bar{\nu}_{t}  \Big\|^2  - \frac{\eta}{2} \| \nabla h(\bar{x}_{t}) \|^2   + \frac{\eta}{2} \Big\|  \nabla h(\bar{x}_{t})  - \bar{\nu}_{t}   \Big\|^2 + \frac{\eta_{t}^2 L_h}{2} \Big\| \bar{\nu}_{t} \Big\|^2   \nonumber \\  
    & =      h(\bar{x}_{t }) -  \left( \frac{\eta}{2} -  \frac{\eta^2 L_h}{2}  \right)  \Big\| \bar{\nu}_t  \Big\|^2 - \frac{\eta}{2} \|\nabla h(\bar{x}_t) \|^2  + \frac{\eta}{2} \Big\|  \nabla h(\bar{x}_{t})  - \bar{\nu}_{t}   \Big\|^2  \nonumber \\
    & \overset{(c)}{\leq}      h(\bar{x}_{t }) - \frac{\eta}{2} \|\nabla h(\bar{x}_t) \|^2  + \frac{\eta}{2} \Big\|  \nabla h(\bar{x}_{t})  - \bar{\nu}_{t}   \Big\|^2  \nonumber \\
    & \overset{(d)}{\leq}      h(\bar{x}_{t }) - \frac{\eta}{2} \|\nabla h(\bar{x}_t) \|^2  + \frac{L_h^2\eta}{2M}\sum_{m=1}^M \bigg( \bigg(1 + 2\rho^2\bigg)\bigg\|x_t^{(m)} - \bar{x}_t \bigg\|^2 + \bigg\| y^{(m)}_t - y^{(m)}_{x^{(m)}_{t}} \bigg\|^2 \bigg)  \nonumber \\
\end{align*}
 where equality $(a)$ follows from the iterate update given in Step 6 of Algorithm~\ref{alg:FedBiO}; $(b)$ uses $\langle a , b \rangle = \frac{1}{2} [\|a\|^2 + \|b\|^2 - \|a - b \|^2]$;  (c) follows the assumption that  $\eta < 1/L_h$; (d) follows lemma~\ref{lemma:hg_bound}
Hence, the lemma is proved.
\end{proof}

\subsection{Proof of Convergence Theorem}
\label{appendix:fedbio}
\begin{theorem}
\label{theorem:FedBiO}
For $\delta < min\bigg(\frac{\sqrt{(1-q)(1-q_1q^I)}}{2\Gamma\rho I\sqrt{\bar{q}_1q_1q^I }}, \frac{1}{12L_hI}, \frac{\mu\gamma}{2}, 1\bigg)$, $\gamma < \frac{1}{L}$ and $\eta = \frac{\delta}{\sqrt{T}}$, we have:
\begin{align*}
  \frac{1}{T}  \sum_{t = 1}^T \|\nabla h(\bar{x}_t)\|^2  & \leq \frac{2(h(\bar{x}_t) - h^\ast)}{\delta\sqrt{T}} + \frac{L_h^2 B_{\bar{t}_{0}}}{(1 - q)T} + \frac{2L_h^2 B_{\bar{t}_{0}}}{(1 - q)(1 -q_1q^I)T} + \frac{\delta^2L_h^2 \rho^2q_1\bar{q}M_h^2}{(1-q)^2(1-q^I)IT} + \frac{\delta^2L_h^2\bar{q}\rho^2M_h^2 }{(1 - q)T}\\  & \qquad + \bigg(\frac{\bar{q}_1L_h^2\rho^2q_1q^I}{(1 - q)(1 - q_1q^I)} + L_h^2(1 + 2\rho^2)\bigg)\bigg(\frac{6SL_h^2 I^2B_{\bar{t}_{0}}}{1 - q} + \frac{18(S-1)L_h^2 I^2\rho^2M_h^2}{(1 - q)^2(1 - q_1q^I)}\nonumber \\
   & \qquad +  \frac{12L_h^2 TI\rho^2(I- 1)M_h^2 }{1 - q}   +  18TI^2\zeta^2 \bigg)\frac{\delta^2}{T}
\end{align*}
where $B_{\bar{t}_0} = \frac{1}{M}\sum_{m=1}^M \|y^{(m)}_{1} - y^{(m)}_{x^{(m)}_{1}} \|^2$, $q = (1 - \frac{\mu\gamma}{2})$, $\bar{q} = (1 + \frac{2}{\mu\gamma})$, $q_1 = 1 + \frac{\mu\gamma}{4}$ and $\bar{q}_1 = 1 + \frac{4}{\mu\gamma}$
\end{theorem}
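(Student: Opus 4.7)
The plan is to assemble the three building blocks already established in the appendix: the descent lemma (Lemma~\ref{lemma:desent}), the inner variable drift lemma (Lemma~\ref{lemma: inner_drift}), and the consensus error lemma (Lemma~\ref{lem: ErrorAccumulation_Iterates_FedAvg}). The descent lemma produces a per-iteration inequality of the form
\begin{equation*}
 h(\bar{x}_{t+1}) \le h(\bar{x}_t) - \frac{\eta}{2}\|\nabla h(\bar{x}_t)\|^2 + \frac{L_h^2(1+2\rho^2)\eta}{2M}\sum_m \|x_t^{(m)} - \bar{x}_t\|^2 + \frac{L_h^2\eta}{2M}\sum_m \|y_t^{(m)} - y_{x_t^{(m)}}^{(m)}\|^2,
\end{equation*}
so after rearranging and telescoping from $t=1$ to $T$ I obtain
\begin{equation*}
\frac{1}{T}\sum_{t=1}^T \|\nabla h(\bar{x}_t)\|^2 \le \frac{2(h(\bar{x}_1) - h^{\ast})}{\eta T} + \frac{L_h^2(1+2\rho^2)}{T}\sum_{t=1}^T \hat{A}_t + \frac{L_h^2}{T}\sum_{t=1}^T B_t,
\end{equation*}
where $B_t$ and $\hat{A}_t$ are the notations introduced in Lemma~\ref{lemma: inner_drift}.

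Next I would substitute in the bounds for $\sum_{t=1}^T B_t$ and $\sum_{t=1}^T \hat{A}_t$. Lemma~\ref{lemma: inner_drift} bounds $\sum_t B_t$ by a geometrically-decaying initial error term, an $O(\eta^2)$ noise-from-drift term, and a piece proportional to $\sum_{t} \hat{A}_t$; Lemma~\ref{lem: ErrorAccumulation_Iterates_FedAvg} bounds $\sum_t \hat{A}_t$ by an $O(\eta^2)$ expression (using the step-size constraint $\eta \le \min(\cdot)$ given in the theorem). Plugging the $\hat{A}_t$-bound into the $B_t$-bound closes the entanglement between the two error sources. After this substitution, every remaining term involving $\sum_t \hat{A}_t$ or $\sum_t B_t$ is at most $O(\eta^2 T)$ plus a $O(1)$ burn-in from $B_{\bar{t}_0}$.

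Finally I set $\eta = \delta/\sqrt{T}$. The leading $\frac{2(h(\bar{x}_1)-h^\ast)}{\eta T} = \frac{2(h(\bar{x}_1)-h^\ast)}{\delta\sqrt{T}}$ term dictates the $O(1/\sqrt{T})$ rate, while the $O(\eta^2 T)$ contributions coming from the client heterogeneity $\zeta^2$, from the drift term $M_h^2$, and from the cross terms in Lemmas~\ref{lemma: inner_drift} and~\ref{lem: ErrorAccumulation_Iterates_FedAvg} all collapse to $O(\delta^2/T)$ or are absorbed into the $1/\sqrt{T}$ leading order. The burn-in pieces $B_{\bar{t}_0}/((1-q)T)$ and $B_{\bar{t}_0}/((1-q)(1-q_1 q^I) T)$ appear explicitly in the statement because they are multiplied by $1/T$ rather than $\eta$.

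The main obstacle is verifying that the step-size condition $\delta \le \frac{\sqrt{(1-q)(1-q_1 q^I)}}{2\Gamma\rho I\sqrt{\bar q_1 q_1 q^I}}$ combined with $\delta \le 1/(12 L_h I)$ is exactly what is needed to make the coefficient $1 - \frac{2\bar q_1 L_h^2 I^2 \eta^2 \rho^2 q_1 q^I}{(1-q)(1-q_1 q^I)} - 12 L_h^2 I^2 \eta^2$ (appearing when solving the implicit recursion for $\sum_t \hat{A}_t$ in Lemma~\ref{lem: ErrorAccumulation_Iterates_FedAvg}) be bounded below by $1/3$, and, similarly, to keep the geometric factor $q_1 q^I < 1$ in Lemma~\ref{lemma: inner_drift} so that the telescoping of $B_t$ converges. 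Once this bookkeeping is done carefully and the coefficients are collected, the inequality stated in Theorem~\ref{theorem:FedBiO} drops out; matching the exact form of the constants $M'$ in the short version is purely a matter of grouping terms.
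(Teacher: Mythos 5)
Your proposal is correct and follows essentially the same route as the paper's proof: telescope the descent lemma (Lemma~\ref{lemma:desent}), bound $\sum_t B_t$ via Lemma~\ref{lemma: inner_drift} and $\sum_t \hat{A}_t$ via Lemma~\ref{lem: ErrorAccumulation_Iterates_FedAvg}, and then set $\eta = \delta/\sqrt{T}$, with the step-size conditions used exactly where you say they are (to keep the recursion coefficient above $1/3$ and the geometric factor $q_1 q^I$ below one). No gaps; the remaining work is only the bookkeeping of constants you already identify.
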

\begin{proof}
From the result of Lemma~\ref{lemma:desent}, we su
for $t = [T]$ and multiply both sides by $2/\eta T$ we get
\begin{align*}
   \frac{1}{T}  \sum_{t = 1}^T \|\nabla h(\bar{x}_t)\|^2  & \leq \sum_{t = 1}^T \frac{2 (h(\bar{x}_t) - h(\bar{x}_{t+1}))}{\eta T} + \frac{L_h^2(1 + 2\rho^2)}{MT}\sum_{t = 1}^T\sum_{m=1}^M\|x_t^{(m)} - \bar{x}_t \|^2 + \frac{L_h^2}{MT}\sum_{t = 1}^T\sum_{m=1}^M\| y^{(m)}_t - y^{(m)}_{x^{(m)}_{t}} \|^2 \\
   & \leq  \frac{2 (h(\bar{x}_t) - h^\ast)}{\eta T} + \frac{L_h^2(1 + 2\rho^2)}{MT}\sum_{t = 1}^T\sum_{m=1}^M\|\hat{x}_t^{(m)} - \bar{x}_t \|^2 + \frac{L_h^2}{MT}\sum_{t = 1}^T\sum_{m=1}^M\| y^{(m)}_t - y^{(m)}_{x^{(m)}_{t}} \|^2 \\
   & \leq \frac{2 (h(\bar{x}_t) - h^\ast)}{\eta T} + \frac{L_h^2B_{\bar{t}_{0}}}{(1 - q)T} + \frac{L_h^2B_{\bar{t}_{0}}}{(1 - q)(1 -q_1q^I)T} + \frac{L_h^2\rho^2\eta^2q_1\bar{q}M_h^2(S-1)}{(1-q)^2(1-q_1q^I)T} \nonumber \\
    & \qquad + \frac{L_h^2\bar{q}_1\rho^2q_1q^I}{MT(1 - q)(1 - q_1q^I)}\sum_{t=1}^{T}\sum_{m=1}^M\|\hat{x}_t^{(m)} - \bar{x}_t \|^2  +  \frac{L_h^2\bar{q}\rho^2\eta^2M_h^2 }{1 - q} + \frac{L_h^2(1 + 2\rho^2)}{MT}\sum_{t = 1}^T\sum_{m=1}^M\|\hat{x}_t^{(m)} - \bar{x}_t \|^2 \\
   & \leq \frac{2 (h(\bar{x}_t) - h^\ast)}{\eta T} + \frac{L_h^2 B_{\bar{t}_{0}}}{(1 - q)T} + \frac{L_h^2 B_{\bar{t}_{0}}}{(1 - q)(1 -q_1q^I)T} + \frac{L_h^2 \rho^2\eta^2q_1\bar{q}M_h^2(S-1)}{(1-q)^2(1-q^I)T} + \frac{L_h^2\bar{q}\rho^2\eta^2M_h^2 }{1 - q} \\  & \qquad + \bigg(\frac{\bar{q}_1L_h^2\rho^2q_1q^I}{(1 - q)(1 - q_1q^I)} + L_h^2(1 + 2\rho^2)\bigg)\bigg(\frac{6SL_h^2 I^2B_{\bar{t}_{0}}}{1 - q} + \frac{18(S-1)L_h^2 I^2\rho^2M_h^2}{(1 - q)^2(1 - q_1q^I)}\nonumber \\
   & \qquad +  \frac{12L_h^2 TI\rho^2(I- 1)M_h^2 }{1 - q}   +  18TI^2\zeta^2 \bigg)\eta^2 \nonumber \\
\end{align*}
where the second inequality uses $f(\bar{x}_{t - 1}) \geq f^\ast$ and the fact $\sum_{m=1}^M\|x_t^{(m)} - \bar{x}_t \|^2 \leq \sum_{m=1}^M\|\hat{x}_t^{(m)} - \bar{x}_t \|^2$ for all $t$. The third inequality uses Lemma~\ref{lemma: inner_drift} and the fourth inequality uses \ref{lem: ErrorAccumulation_Iterates_FedAvg}. Finally, choice of $\eta = \frac{\delta}{\sqrt{T}}$, $\delta$ is a constant such that $\delta < min\bigg(\frac{\sqrt{(1-q)(1-q_1q^I)}}{2\Gamma\rho I\sqrt{\bar{q}_1q_1q^I }}, \frac{1}{12L_hI}, \frac{\mu\gamma}{2}, 1\bigg)$ we get
\begin{align*}
  \frac{1}{T}  \sum_{t = 1}^T \|\nabla h(\bar{x}_t)\|^2  & \leq \frac{2 (h(\bar{x}_t) - h^\ast)}{\delta\sqrt{T}} + \frac{L_h^2 B_{\bar{t}_{0}}}{(1 - q)T} + \frac{2L_h^2 B_{\bar{t}_{0}}}{(1 - q)(1 -q_1q^I)T} + \frac{\delta^2L_h^2 \rho^2q_1\bar{q}M_h^2}{(1-q)^2(1-q^I)IT} + \frac{\delta^2L_h^2\bar{q}\rho^2M_h^2 }{(1 - q)T}\\  & \qquad + \bigg(\frac{\bar{q}_1L_h^2\rho^2q_1q^I}{(1 - q)(1 - q_1q^I)} + L_h^2(1 + 2\rho^2)\bigg)\bigg(\frac{6SL_h^2 I^2B_{\bar{t}_{0}}}{1 - q} + \frac{18(S-1)L_h^2 I^2\rho^2M_h^2}{(1 - q)^2(1 - q_1q^I)}\nonumber \\
   & \qquad +  \frac{12L_h^2 TI\rho^2(I- 1)M_h^2 }{1 - q}   +  18TI^2\zeta^2 \bigg)\frac{\delta^2}{T}
\end{align*}
Therefore, we have the theorem. 
\end{proof}

\section{Proof for the FedBiOAcc Algorithm}
In this section, we prove the convergence of the FedBiOAcc Algorithm
\subsection{Bound for Hyper-Gradient Bias}
\begin{lemma}
\label{lemma:hg_bound_storm}
With all assumptions hold and $c_\nu\alpha_t^2 < 1$, then for all $t \in [\bar{t}_{s-1}, \bar{t}_s - 1]$, we have:
\begin{align*}
  &\mathbb{E} \bigg[ \Big\| \bar{\nu}_{t} - \frac{1}{M} \sum_{m=1}^M  \nabla h(x^{(m)}_t)  \Big\|^2\bigg] \leq  ( 1 - c_{\nu}\alpha_{t-1}^2)^2\mathbb{E} \bigg[ \bigg\| \bar{\nu}_{t-1} - \frac{1}{M}\sum_{m=1}^M \nabla h^{(m)}(\bar{x}_{t-1}) \bigg\|^2 \bigg] \nonumber \\
  & \qquad + \frac{4(c_{\nu}\alpha_{t-1}^2)^2\sigma^2}{M} + 8(c_{\nu}\alpha_{t-1}^2)^2G^2  +  \frac{8L_h^2(c_{\nu}\alpha_{t-1}^2)^2}{M}\sum_{m=1}^M\mathbb{E} \bigg[\bigg\| y_{t-1}^{(m)} - y^{(m)}_{x^{(m)}_{t-1}} \bigg\|^2  \bigg] \nonumber \\
  & \qquad +  \frac{40L_h^2\eta^2\alpha_{t-1}^2}{M}\sum_{m=1}^M\mathbb{E}\bigg[\bigg\|\nu^{(m)}_{t-1} - \bar{\nu}_{t-1}\bigg\|^2 \bigg] + \frac{40L_h^2\eta^2\alpha_{t-1}^2}{M}\sum_{m=1}^M\mathbb{E}\bigg[\bigg\|\bar{\nu}_{t-1}\bigg\|^2 \bigg] + \frac{12L_h^2\gamma^2\alpha_{t-1}^2}{M}\sum_{m=1}^M\mathbb{E}\bigg[ \bigg\|\omega^{(m)}_{t-1}\bigg\|^2\bigg] 
\end{align*}
where the expectation is w.r.t the stochasticity of the algorithm.
\end{lemma}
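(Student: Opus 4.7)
My plan is to apply the STORM-style variance-reduction decomposition to the client-averaged estimator $\bar\nu_t = \tfrac{1}{M}\sum_m \hat\nu^{(m)}_t$. Averaging Line 12 of Algorithm~\ref{alg:FedBiOAcc} over $m$ gives the recursion $\bar\nu_t = \bar\mu_t + (1-c_\nu\alpha_{t-1}^2)(\bar\nu_{t-1} - \bar\mu_{t-1})$, where $\bar\mu_t$ and $\bar\mu_{t-1}$ are evaluated at two consecutive iterates but share the \emph{same} minibatch $\mathcal{B}_x$ drawn at step $t$. Writing $\bar h_t := \tfrac{1}{M}\sum_m \nabla h^{(m)}(x^{(m)}_t)$ and performing an algebraic rearrangement produces
\begin{align*}
\bar\nu_t - \bar h_t = (1-c_\nu\alpha_{t-1}^2)(\bar\nu_{t-1} - \bar h_{t-1}) &+ c_\nu\alpha_{t-1}^2(\bar\mu_{t-1} - \bar h_{t-1}) \\
&+ \bigl[(\bar\mu_t - \bar\mu_{t-1}) - (\bar h_t - \bar h_{t-1})\bigr].
\end{align*}
This particular splitting places the bias and inner-variable error at time $t-1$ (matching the indexing of the stated bound), while the STORM difference in the last bracket would be conditionally mean-zero for an unbiased oracle. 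Taking squared norms and applying the generalized triangle inequality with weights proportional to $c_\nu\alpha_{t-1}^2$ and its complement around the first two terms yields the clean $(1-c_\nu\alpha_{t-1}^2)^2$ factor on the past error.

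For the bias block $c_\nu\alpha_{t-1}^2(\bar\mu_{t-1} - \bar h_{t-1})$, I further decompose
\[
\bar\mu_{t-1} - \bar h_{t-1} = (\bar\mu_{t-1} - \bar{\tilde\mu}_{t-1}) + (\bar{\tilde\mu}_{t-1} - \bar\Phi_{t-1}) + (\bar\Phi_{t-1} - \bar h_{t-1}),
\]
where $\bar{\tilde\mu}_{t-1} := \mathbb{E}[\bar\mu_{t-1} \mid \mathcal{F}_{t-1}]$ and $\bar\Phi_{t-1} := \tfrac{1}{M}\sum_m \Phi^{(m)}(x^{(m)}_{t-1},y^{(m)}_{t-1})$ is the deterministic quantity from Eq.~\eqref{eq:outer_grad_other} averaged over clients. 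The first piece is the variance of an average of independent client queries and contributes at most $\sigma^2/M$ by Assumption~\ref{assumption:outer_noise}; the second is the Neumann-truncation bias, bounded by $G^2$ without any $1/M$ improvement (the bias is systematic across clients); the third is controlled by $\tfrac{L_h^2}{M}\sum_m \|y^{(m)}_{t-1} - y^{(m)}_{x^{(m)}_{t-1}}\|^2$ via Proposition~\ref{some smoothness}(a). Together they account exactly for the $(c_\nu\alpha_{t-1}^2)^2\sigma^2/M$, $(c_\nu\alpha_{t-1}^2)^2 G^2$, and $(c_\nu\alpha_{t-1}^2)^2 L_h^2\|y^{(m)}_{t-1} - y^{(m)}_{x^{(m)}_{t-1}}\|^2$ contributions in the statement.

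For the STORM difference $(\bar\mu_t - \bar\mu_{t-1}) - (\bar h_t - \bar h_{t-1})$, I apply the stochastic Lipschitz estimate of Proposition~\ref{some smoothness}(d) client-wise to $\bar\mu_t - \bar\mu_{t-1}$ (both evaluations use the same minibatch) and the deterministic Lipschitz bound of Proposition~\ref{some smoothness}(c) to $\bar h_t - \bar h_{t-1}$; together these yield a bound of the form $\tfrac{L_h^2}{M}\sum_m(\|x^{(m)}_t - x^{(m)}_{t-1}\|^2 + \|y^{(m)}_t - y^{(m)}_{t-1}\|^2)$ up to absolute constants. Substituting the local updates $x^{(m)}_t - x^{(m)}_{t-1} = -\eta\alpha_{t-1}\hat\nu^{(m)}_{t-1}$ and $y^{(m)}_t - y^{(m)}_{t-1} = -\gamma\alpha_{t-1}\omega^{(m)}_{t-1}$, and then expanding $\hat\nu^{(m)}_{t-1} = (\nu^{(m)}_{t-1} - \bar\nu_{t-1}) + \bar\nu_{t-1}$ via the generalized triangle inequality, recovers the $\eta^2\alpha_{t-1}^2$ terms in $\|\nu^{(m)}_{t-1} - \bar\nu_{t-1}\|^2$ and $\|\bar\nu_{t-1}\|^2$, together with the $\gamma^2\alpha_{t-1}^2\|\omega^{(m)}_{t-1}\|^2$ term.

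The main obstacle is that, unlike unbiased STORM, the Neumann-truncation bias in $\bar\mu_{t-1}$ does not vanish under the conditional expectation given $\mathcal{F}_{t-1}$, so orthogonality against the $\mathcal{F}_{t-1}$-measurable past error cannot be invoked directly on the cross terms; instead a Young-type split with weights tied to $c_\nu\alpha_{t-1}^2$ is required in order to keep the past-error coefficient sharp at $(1-c_\nu\alpha_{t-1}^2)^2$ while absorbing the bias into the $(c_\nu\alpha_{t-1}^2)^2$ block. One also has to be careful to distinguish quantities that benefit from the $1/M$ variance reduction under client averaging (the pure stochastic noise) from those that do not (the systematic bias and client-wise inner errors), and to handle the synchronization step where $x^{(m)}_t$ may differ from $\hat x^{(m)}_t$ (absorbed through the $\|\nu^{(m)}_{t-1} - \bar\nu_{t-1}\|^2$ term). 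Once these bookkeeping concerns are addressed, collecting coefficients yields the stated recursion.
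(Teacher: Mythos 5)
Your proposal follows essentially the same route as the paper's proof: the identical algebraic rearrangement of the averaged STORM recursion, the same three-way split of the correction into minibatch variance (which averages to $\sigma^2/M$ across independent clients), systematic Neumann-truncation bias ($G^2$, no $1/M$ gain), and inner-variable estimation error via Proposition~\ref{some smoothness}, and the same Lipschitz-plus-update substitution for the increment followed by expanding $\hat\nu^{(m)}_{t-1} = (\nu^{(m)}_{t-1}-\bar\nu_{t-1})+\bar\nu_{t-1}$. The only divergence is that the paper simply asserts the cross term against the past error vanishes in expectation, whereas you correctly observe that the bias breaks exact orthogonality and propose a weighted split; note that such a split yields a coefficient $(1-c_\nu\alpha_{t-1}^2)$ rather than the stated $(1-c_\nu\alpha_{t-1}^2)^2$ on the past error, which is harmless for the downstream potential-function argument.
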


\begin{proof}

\begin{align}
    \mathbb{E} \bigg[ \Big\| \bar{\nu}_{t} - \frac{1}{M} \sum_{m=1}^M  \nabla h(x^{(m)}_t)  \Big\|^2 \bigg]
    &= \mathbb{E}  \bigg[\bigg\| \frac{1}{M}\sum_{m=1}^M \big(\hat{\nu}^{(m)}_t - \nabla h^{(m)}(x^{(m)}_t) \big) \bigg\|^2\bigg] \nonumber \\
    & \leq  \mathbb{E} \bigg[ \bigg\|\frac{1}{M}\sum_{m=1}^M \bigg(\mu_{t}^{(m)} + ( 1 -  c_{\nu}\alpha_{t-1}^2) (\nu_{t-1}^{(m)} - \mu_{t-1}^{(m)}) - \nabla h^{(m)}(x^{(m)}_t)\bigg) \bigg\|^2 \bigg] \nonumber \\
    & = \mathbb{E} \bigg[ \bigg\| ( 1 - c_{\nu}\alpha_{t-1}^2) \bigg(\bar{\nu}_{t-1} - \frac{1}{M}\sum_{m=1}^M \nabla h^{(m)}(x^{(m)}_{t-1})\bigg) \nonumber \\
    & \qquad + \frac{1}{M}\sum_{m=1}^M \bigg(\mu_{t}^{(m)} - \nabla h^{(m)}(x^{(m)}_t) \nonumber +  (1 - c_{\nu}\alpha_{t-1}^2)(\nabla h^{(m)}(x^{(m)}_{t-1}) - \mu_{t-1}^{(m)}) \bigg) \bigg\|^2  \bigg] \nonumber \\
    & \overset{(a)}{\leq} ( 1 - c_{\nu}\alpha_{t-1}^2)^2\mathbb{E} \bigg[ \bigg\| \bar{\nu}_{t-1} - \frac{1}{M}\sum_{m=1}^M \nabla h^{(m)}(\bar{x}_{t-1}) \bigg\|^2 \bigg] \nonumber \\
    & \qquad + \mathbb{E} \bigg[ \bigg\| \frac{1}{M}\sum_{m=1}^M \bigg(\mu_{t}^{(m)} - \nabla h^{(m)}(x^{(m)}_t) \nonumber +  (1 - c_{\nu}\alpha_{t-1}^2)(\nabla h^{(m)}(x^{(m)}_{t-1}) - \mu_{t-1}^{(m)}) \bigg)  \bigg\|^2 \bigg] \nonumber \\
\label{eq:niu_diff_storm}
\end{align}
where inequality $(a)$ uses the fact that the cross product is zero in expectation; Next for the second term of the above equation. Now suppose we denote $\tilde{\mu}_{t}^{(m)} = \mathbb{E}[\mu_{t}^{(m)}]$, then by the triangle inequality, we have:
\begin{align*}
    &\mathbb{E} \bigg[ \bigg\| \frac{1}{M}\sum_{m=1}^M \bigg(\mu_{t}^{(m)} - \nabla h^{(m)}(x^{(m)}_t) \nonumber +  (1 - c_{\nu}\alpha_{t-1}^2)(\nabla h^{(m)}(x^{(m)}_{t-1}) - \mu_{t-1}^{(m)}) \bigg)  \bigg\|^2 \bigg] \nonumber \\
    & \leq 2\mathbb{E} \bigg[ \bigg\| \frac{1}{M}\sum_{m=1}^M \bigg(\mu_{t}^{(m)} - \tilde{\mu}_{t}^{(m)}  +  (1 - c_{\nu}\alpha_{t-1}^2)(\tilde{\mu}_{t-1}^{(m)} - \mu_{t-1}^{(m)}) \bigg)\bigg\|^2 \bigg] \nonumber \\
    & \qquad \qquad + 2\mathbb{E} \bigg[ \bigg\| \frac{1}{M}\sum_{m=1}^M\bigg( \tilde{\mu}_{t}^{(m)} - \nabla h^{(m)}(x^{(m)}_t) \nonumber  +  (1 - c_{\nu}\alpha_{t-1}^2)(\nabla h^{(m)}(x^{(m)}_{t-1}) - \tilde{\mu}_{t-1}^{(m)}) \bigg) \bigg\|^2 \bigg] \nonumber \\
    & \leq \frac{2}{M^2}\sum_{m=1}^M\mathbb{E} \bigg[ \bigg\| \bigg(\mu_{t}^{(m)} - \tilde{\mu}_{t}^{(m)}  +  (1 - c_{\nu}\alpha_{t-1}^2)(\tilde{\mu}_{t-1}^{(m)} - \mu_{t-1}^{(m)}) \bigg)\bigg\|^2 \bigg] \nonumber \\
    & \qquad \qquad + \frac{2}{M}\sum_{m=1}^M\mathbb{E} \bigg[ \bigg\| \bigg( \tilde{\mu}_{t}^{(m)} - \nabla h^{(m)}(x^{(m)}_t) \nonumber  +  (1 - c_{\nu}\alpha_{t-1}^2)(\nabla h^{(m)}(x^{(m)}_{t-1}) - \tilde{\mu}_{t-1}^{(m)}) \bigg) \bigg\|^2 \bigg] \nonumber \\
\end{align*}
The last inequality is by the generalized triangle inequality for the second term, the first term uses the fact that the cross product is zero in expectation. We bound the two terms in the above inequality separately. For the first term, we have:
\begin{align*}
   & 2\mathbb{E} \bigg[ \bigg\| \mu_{t}^{(m)} - \tilde{\mu}_{t}^{(m)}  +  (1 - c_{\nu}\alpha_{t-1}^2)(\tilde{\mu}_{t-1}^{(m)} - \mu_{t-1}^{(m)}) \bigg\|^2 \bigg]\\
   & \overset{(a)}{\leq} 4(c_{\nu}\alpha_{t-1}^2)^2\mathbb{E} \bigg[ \bigg\| \mu_{t}^{(m)} - \tilde{\mu}_{t}^{(m)} \bigg\|^2 \bigg]  +  4(1 - c_{\nu}\alpha_{t-1}^2)^2 \mathbb{E} \bigg[ \bigg\|\mu_{t}^{(m)} - \mu_{t-1}^{(m)} - \tilde{\mu}_{t}^{(m)} + \tilde{\mu}_{t-1}^{(m)}) \bigg\|^2 \bigg] \nonumber \\
   & \overset{(b)}{\leq} 4(c_{\nu}\alpha_{t-1}^2)^2\mathbb{E} \bigg[ \bigg\| \mu_{t}^{(m)} - \tilde{\mu}_{t}^{(m)} \bigg\|^2 \bigg]  +  4(1 - c_{\nu}\alpha_{t-1}^2)^2 \mathbb{E} \bigg[ \bigg\|\mu_{t}^{(m)} - \mu_{t-1}^{(m)} \bigg\|^2 \bigg] \nonumber \\
   & \overset{(c)}{\leq} 4(c_{\nu}\alpha_{t-1}^2)^2\sigma^2  +  4L_h^2(1 - c_{\nu}\alpha_{t-1}^2)^2\mathbb{E}\bigg[\bigg\|x^{(m)}_{t} - x^{(m)}_{t-1}\bigg\|^2 + \bigg\|y^{(m)}_{t} - y^{(m)}_{t-1}\bigg\|^2\bigg] \nonumber \\
   & \leq 4(c_{\nu}\alpha_{t-1}^2)^2\sigma^2  +  4L_h^2(1 - c_{\nu}\alpha_{t-1}^2)^2\mathbb{E}\bigg[\bigg\|\eta\alpha_{t-1}\nu^{(m)}_{t-1}\bigg\|^2 + \bigg\|\gamma\alpha_{t-1}\omega^{(m)}_{t-1}\bigg\|^2\bigg] \nonumber \\
    & \overset{(d)}{\leq} 4(c_{\nu}\alpha_{t-1}^2)^2\sigma^2  +  4L_h^2\mathbb{E}\bigg[\bigg\|\eta\alpha_{t-1}\nu^{(m)}_{t-1}\bigg\|^2 + \bigg\|\gamma\alpha_{t-1}\omega^{(m)}_{t-1}\bigg\|^2\bigg] \nonumber \\
\end{align*}
where inequality (a) follows the triangle inequality Proposition~\ref{prop:generali_tri}; (b) follows Propostion~\ref{prop: Sum_Mean_Kron} due to the definition of $\tilde{\mu}^{(m)}_t$; (c) follows the smoothness property of $L_h$ and the bounded variance assumption~\ref{assumption:outer_noise}; (d) follows the fact that $c_\nu\alpha_t^2 < 1$. Next for the second term, we have:
\begin{align*}
    &2\mathbb{E} \bigg[ \bigg\| \tilde{\mu}_{t}^{(m)} - \nabla h^{(m)}(x^{(m)}_{t}) \nonumber  +  (1 - c_{\nu}\alpha_{t-1}^2)(\nabla h^{(m)}(x^{(m)}_{t-1}) - \tilde{\mu}_{t-1}^{(m)}) \bigg\|^2 \bigg]\\
    & \overset{(a)}{\leq} 4(c_{\nu}\alpha_{t-1}^2)^2\mathbb{E} \bigg[ \bigg\| \tilde{\mu}_{t-1}^{(m)} - \nabla h^{(m)}(x^{(m)}_{t-1}) \bigg \| \bigg] + 8\mathbb{E} \bigg[ \bigg\|\tilde{\mu}_{t}^{(m)} - \tilde{\mu}_{t-1}^{(m)}\bigg\|^2 \bigg] + 8\mathbb{E} \bigg[ \bigg\|\nabla h^{(m)}(x^{(m)}_{t}) -  \nabla h^{(m)}(x^{(m)}_{t-1})\bigg\|^2 \bigg] \nonumber \\
    & \overset{(b)}{\leq} 8(c_{\nu}\alpha_{t-1}^2)^2\mathbb{E} \bigg[ \bigg\| \tilde{\mu}_{t-1}^{(m)} - \Phi^{(m)}(x^{(m)}_{t-1}, y^{(m)}_{t-1}) \bigg \| \bigg] + 8(c_{\nu}\alpha_{t-1}^2)^2\mathbb{E} \bigg[ \bigg\| \Phi^{(m)}(x^{(m)}_{t-1}, y^{(m)}_{t-1}) - \nabla h^{(m)}(x^{(m)}_{t-1}) \bigg \| \bigg]\nonumber \\ 
    & \qquad  +  8\mathbb{E} \bigg[ \bigg\|\tilde{\mu}_{t}^{(m)} - \tilde{\mu}_{t-1}^{(m)}\bigg\|^2 \bigg] + 8\mathbb{E} \bigg[ \bigg\|\nabla h^{(m)}(x^{(m)}_{t}) -  \nabla h^{(m)}(x^{(m)}_{t-1})\bigg\|^2 \bigg] \nonumber \\
    & \overset{(c)}{\leq} 8(c_{\nu}\alpha_{t-1}^2)^2G^2 + 8L_h^2(c_{\nu}\alpha_{t-1}^2)^2 \mathbb{E} \bigg[\bigg(\bigg\| y_{t-1}^{(m)} - y^{(m)}_{x^{(m)}_{t-1}} \bigg\|^2\bigg)  \bigg] \nonumber \\
    & \qquad + 8L_h^2\mathbb{E}\bigg[\bigg\|\eta\alpha_{t-1}\nu^{(m)}_{t-1}\bigg\|^2 + \bigg\|\gamma\alpha_{t-1}\omega^{(m)}_{t-1}\bigg\|^2\bigg] + 8L_h^2 \mathbb{E}\bigg[\bigg\|\eta\alpha_{t-1}\nu^{(m)}_{t-1}\bigg\|^2 \bigg]
\end{align*}
where inequality (a) and (b) follows the generalized triangle inequality; (c) follows the smoothness of $h(x)$ and the bounded bias assumption~\ref{assumption:outer_noise}. Combine everything together, we have:
\begin{align*}
  &\mathbb{E} \bigg[ \Big\| \bar{\nu}_{t} - \frac{1}{M} \sum_{m=1}^M  \nabla h(x^{(m)}_t)  \Big\|^2\bigg] \\ & \leq ( 1 - c_{\nu}\alpha_{t-1}^2)^2\mathbb{E} \bigg[ \bigg\| \bar{\nu}_{t-1} - \frac{1}{M}\sum_{m=1}^M \nabla h^{(m)}(\bar{x}_{t-1}) \bigg\|^2 \bigg] + \frac{4(c_{\nu}\alpha_{t-1}^2)^2\sigma^2}{M} + 8(c_{\nu}\alpha_{t-1}^2)^2G^2 \nonumber \\ & \qquad + \frac{8L_h^2(c_{\nu}\alpha_{t-1}^2)^2}{M}\sum_{m=1}^M\mathbb{E} \bigg[\bigg\| y_{t-1}^{(m)} - y^{(m)}_{x^{(m)}_{t-1}} \bigg\|^2  \bigg] +  \frac{20L_h^2\eta^2\alpha_{t-1}^2}{M}\sum_{m=1}^M\mathbb{E}\bigg[\bigg\|\nu^{(m)}_{t-1}\bigg\|^2 \bigg] \nonumber  + \frac{12L_h^2\gamma^2\alpha_{t-1}^2}{M}\sum_{m=1}^M\mathbb{E}\bigg[ \bigg\|\omega^{(m)}_{t-1}\bigg\|^2\bigg] \\
  & \overset{(a)}{\leq} ( 1 - c_{\nu}\alpha_{t-1}^2)^2\mathbb{E} \bigg[ \bigg\| \bar{\nu}_{t-1} - \frac{1}{M}\sum_{m=1}^M \nabla h^{(m)}(\bar{x}_{t-1}) \bigg\|^2 \bigg] + \frac{4(c_{\nu}\alpha_{t-1}^2)^2\sigma^2}{M} + 8(c_{\nu}\alpha_{t-1}^2)^2G^2 \\
  & \quad +  \frac{8L_h^2(c_{\nu}\alpha_{t-1}^2)^2}{M}\sum_{m=1}^M\mathbb{E} \bigg[\bigg\| y_{t-1}^{(m)} - y^{(m)}_{x^{(m)}_{t-1}} \bigg\|^2  \bigg] \nonumber \\
  & \qquad +  \frac{40L_h^2\eta^2\alpha_{t-1}^2}{M}\sum_{m=1}^M\mathbb{E}\bigg[\bigg\|\nu^{(m)}_{t-1} - \bar{\nu}_{t-1}\bigg\|^2 \bigg] + \frac{40L_h^2\eta^2\alpha_{t-1}^2}{M}\sum_{m=1}^M\mathbb{E}\bigg[\bigg\|\bar{\nu}_{t-1}\bigg\|^2 \bigg] + \frac{12L_h^2\gamma^2\alpha_{t-1}^2}{M}\sum_{m=1}^M\mathbb{E}\bigg[ \bigg\|\omega^{(m)}_{t-1}\bigg\|^2\bigg] 
\end{align*}
In  inequality (a) we use the generalized triangle inequality ~\ref{prop:generali_tri}. This completes the proof.
\end{proof}

\subsection{Bound for Inner Variable Drift}
\begin{lemma}
\label{lemma: inner_est_error_storm}
Suppose $c_{\omega}\alpha_{t-1}^2 < 1$, then for $t \in [\bar{t}_{s-1} + 1,  \bar{t}_s]$, with $s \in [S]$, we have:
\begin{align*}
    &\frac{1}{M}\sum_{m=1}^M\mathbb{E} \bigg[ \bigg\|\omega^{(m)}_t - \nabla_y g^{(m)}(x^{(m)}_{t}, y^{(m)}_{t} ) \bigg\|^2 \bigg] \leq ( 1 - c_{\omega}\alpha_{t-1}^2)^2\frac{1}{M}\sum_{m=1}^M\mathbb{E} \bigg[ \bigg\| \omega_{t-1}^{(m)} - \nabla_y g^{(m)}(x^{(m)}_{t-1}, y^{(m)}_{t-1})  \bigg\|^2 \bigg] + 2(c_{\omega}\alpha_{t-1}^2)^2\sigma^2 \nonumber\\
    & \qquad + 2(1 - c_{\omega}\alpha_{t-1}^2)^2L^2\frac{1}{M}\sum_{m=1}^M\mathbb{E}\bigg[2\eta^2\alpha_{t-1}^2\bigg(\bigg\|\nu^{(m)}_{t-1} -\bar{\nu}_{t-1}\bigg\|^2 +  \bigg\|\bar{\nu}_{t-1}\bigg\|^2\bigg) +  \gamma^2\alpha_{t-1}^2\bigg\|\omega^{(m)}_{t-1}\bigg\|^2\bigg] \nonumber \\
\end{align*}
where the expectation is w.r.t the stochasticity of the algorithm.
\end{lemma}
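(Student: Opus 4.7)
\textbf{Proof proposal for Lemma~\ref{lemma: inner_est_error_storm}.}

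The plan is to mimic the standard STORM-type variance-reduction recursion, but with the additional bookkeeping required by the periodic averaging of the outer iterates. Using the update in Line~9 of Algorithm~\ref{alg:FedBiOAcc}, I would decompose the error $e_t^{(m)} \coloneqq \omega_t^{(m)} - \nabla_y g^{(m)}(x^{(m)}_t, y^{(m)}_t)$ as
\begin{align*}
e_t^{(m)} &= (1-c_\omega\alpha_{t-1}^2)\,e_{t-1}^{(m)} + A_t^{(m)} + B_t^{(m)},\\
A_t^{(m)} &= (1-c_\omega\alpha_{t-1}^2)\Bigl[\bigl(\nabla_y g^{(m)}(x^{(m)}_t,y^{(m)}_t;\mathcal{B}_y) - \nabla_y g^{(m)}(x^{(m)}_{t-1},y^{(m)}_{t-1};\mathcal{B}_y)\bigr) - \bigl(\nabla_y g^{(m)}(x^{(m)}_t,y^{(m)}_t) - \nabla_y g^{(m)}(x^{(m)}_{t-1},y^{(m)}_{t-1})\bigr)\Bigr],\\
B_t^{(m)} &= c_\omega\alpha_{t-1}^2\bigl[\nabla_y g^{(m)}(x^{(m)}_t,y^{(m)}_t;\mathcal{B}_y) - \nabla_y g^{(m)}(x^{(m)}_t,y^{(m)}_t)\bigr].
\end{align*}
Conditional on the filtration up to time $t-1$, both $A_t^{(m)}$ and $B_t^{(m)}$ have zero mean (they are centered versions of quantities depending on the fresh sample $\mathcal{B}_y$), while $(1-c_\omega\alpha_{t-1}^2)\,e_{t-1}^{(m)}$ is measurable. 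Taking expectation of $\|e_t^{(m)}\|^2$ therefore annihilates the cross term, leaving $\mathbb{E}\|e_t^{(m)}\|^2 = (1-c_\omega\alpha_{t-1}^2)^2\mathbb{E}\|e_{t-1}^{(m)}\|^2 + \mathbb{E}\|A_t^{(m)}+B_t^{(m)}\|^2$.

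Next I would apply the triangle inequality $\|A_t^{(m)}+B_t^{(m)}\|^2 \le 2\|A_t^{(m)}\|^2 + 2\|B_t^{(m)}\|^2$ and bound the two pieces separately. For $B_t^{(m)}$, the noise assumption (Assumption~\ref{assumption:noise_assumption}) immediately gives $\mathbb{E}\|B_t^{(m)}\|^2 \le (c_\omega\alpha_{t-1}^2)^2\sigma^2$. For $A_t^{(m)}$, I would use the fact that subtracting the conditional expectation only reduces the second moment, then apply the Lipschitz property of $\nabla_y g^{(m)}(\cdot;\mathcal{B}_y)$ from Assumption~\ref{assumption:g_smoothness} (which we assume holds sample-wise) to obtain
\begin{equation*}
\mathbb{E}\|A_t^{(m)}\|^2 \le (1-c_\omega\alpha_{t-1}^2)^2 L^2 \,\mathbb{E}\!\left[\|x^{(m)}_t - x^{(m)}_{t-1}\|^2 + \|y^{(m)}_t - y^{(m)}_{t-1}\|^2\right].
\end{equation*}

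The main subtlety is controlling the increments $\|x^{(m)}_t - x^{(m)}_{t-1}\|^2$ uniformly in $t$, because at averaging steps the iterate jumps to the mean rather than performing a local gradient step. The $y$-increment is clean: $\|y^{(m)}_t - y^{(m)}_{t-1}\|^2 = \gamma^2\alpha_{t-1}^2\|\omega^{(m)}_{t-1}\|^2$ since $y$ is never averaged. For $x$, I would separate two cases: when $t$ is not an averaging step, $x^{(m)}_t - x^{(m)}_{t-1} = -\eta\alpha_{t-1}\nu^{(m)}_{t-1}$; when $t$ is an averaging step (i.e., $t = \bar{t}_s$), both $x^{(m)}_t$ and $x^{(m)}_{t-1}$ equal their cross-client mean, so $x^{(m)}_t - x^{(m)}_{t-1} = -\eta\alpha_{t-1}\bar{\nu}_{t-1}$. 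In either case, applying $\|a\|^2 \le 2\|a - \bar{a}\|^2 + 2\|\bar{a}\|^2$ yields the uniform bound
$\|x^{(m)}_t - x^{(m)}_{t-1}\|^2 \le 2\eta^2\alpha_{t-1}^2\bigl(\|\nu^{(m)}_{t-1}-\bar{\nu}_{t-1}\|^2 + \|\bar{\nu}_{t-1}\|^2\bigr)$, which is exactly the form appearing on the right-hand side of the lemma.

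Finally, I would combine these pieces and average over the $M$ clients. The constraint $c_\omega\alpha_{t-1}^2 < 1$ is used to keep the coefficient $(1-c_\omega\alpha_{t-1}^2)^2$ nonnegative. I expect the hardest step to be confirming the uniform $x$-increment bound across the averaging boundary; the rest is essentially the single-machine STORM variance bound applied coordinatewise per client.
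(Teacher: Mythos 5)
Your proposal is correct and follows essentially the same route as the paper's proof: the decomposition into the contraction term plus the centered increment-difference $A_t^{(m)}$ and the scaled fresh-noise term $B_t^{(m)}$ is exactly the paper's rearrangement before its triangle-inequality step, the "centering reduces the second moment" plus sample-wise Lipschitzness of $\nabla_y g^{(m)}$ reproduces its steps (c)--(d), and your treatment of the averaging boundary (where $x^{(m)}_{\bar t_s}-x^{(m)}_{\bar t_s-1}=-\eta\alpha_{\bar t_s-1}\bar\nu_{\bar t_s-1}$ because both endpoints have been replaced by cross-client means) matches the paper's separate case for $t=\bar t_s$, with the final bound $\|\nu^{(m)}_{t-1}\|^2\le 2\|\nu^{(m)}_{t-1}-\bar\nu_{t-1}\|^2+2\|\bar\nu_{t-1}\|^2$ applied identically. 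No gaps.
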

\begin{proof}
For $t \in [\bar{t}_{s-1} + 1,  \bar{t}_s - 1]$, with $s \in [S]$, we follow similar derivation as in Eq.~(\ref{eq:niu_diff_storm}) and get:
\begin{align}
    &\mathbb{E} \bigg[ \bigg\|\omega^{(m)}_t - \nabla_y g^{(m)}(x^{(m)}_{t}, y^{(m)}_{t} ) \bigg\|^2 \bigg] \nonumber \\
    & =  \mathbb{E} \bigg[ \bigg\|\nabla_y g^{(m)} (x^{(m)}_{t}, y^{(m)}_{t} ,\mathcal{B}_{y}) + ( 1 -  c_{\omega}\alpha_{t-1}^2) (\omega_{t-1}^{(m)} - \nabla_y g^{(m)} (x^{(m)}_{t-1}, y^{(m)}_{t-1} ,\mathcal{B}_{y})) - \nabla_y g^{(m)}(x^{(m)}_{t}, y^{(m)}_{t} ) \bigg\|^2 \bigg] \nonumber \\
    & = \mathbb{E} \bigg[ \bigg\| ( 1 - c_{\omega}\alpha_{t-1}^2) (\omega_{t-1}^{(m)} - \nabla_y g^{(m)}(x^{(m)}_{t-1}, y^{(m)}_{t-1}))  + \nabla g^{(m)} (x^{(m)}_{t}, y^{(m)}_{t} ,\mathcal{B}_{y}) - \nabla g^{(m)}(x^{(m)}_{t}, y^{(m)}_{t} ) \nonumber \\
    & \qquad \qquad +  (1 - c_{\omega}\alpha_{t-1}^2)(\nabla g^{(m)}(x^{(m)}_{t-1}, y^{(m)}_{t-1}) - \nabla_y g^{(m)} (x^{(m)}_{t-1}, y^{(m)}_{t-1} ,\mathcal{B}_{y})) \bigg\|^2 \bigg] \nonumber \\
    & \overset{(a)}{\leq} ( 1 - c_{\omega}\alpha_{t-1}^2)^2\mathbb{E} \bigg[ \bigg\| \omega_{t-1}^{(m)} - \nabla_y g^{(m)}(x^{(m)}_{t-1}, y^{(m)}_{t-1})  \bigg\|^2 \bigg] \nonumber \\
    & \qquad + \mathbb{E} \bigg[ \bigg\| \nabla g^{(m)} (x^{(m)}_{t}, y^{(m)}_{t} ,\mathcal{B}_{y}) - \nabla g^{(m)}(x^{(m)}_{t}, y^{(m)}_{t} ) \nonumber  +  (1 - c_{\omega}\alpha_{t-1}^2)(\nabla g^{(m)}(x^{(m)}_{t-1}, y^{(m)}_{t-1}) - \nabla_y g^{(m)} (x^{(m)}_{t-1}, y^{(m)}_{t-1} ,\mathcal{B}_{y})) \bigg\|^2 \bigg] \nonumber \\
    & \overset{(b)}{\leq} ( 1 - c_{\omega}\alpha_{t-1}^2)^2\mathbb{E} \bigg[ \bigg\| \omega_{t-1}^{(m)} - \nabla_y g^{(m)}(x^{(m)}_{t-1}, y^{(m)}_{t-1})  \bigg\|^2 \bigg] \nonumber \\
    & \qquad + 2(c_{\omega}\alpha_{t-1}^2)^2\mathbb{E} \bigg[ \bigg\| \nabla g^{(m)} (x^{(m)}_{t}, y^{(m)}_{t} ,\mathcal{B}_{y}) - \nabla g^{(m)}(x^{(m)}_{t}, y^{(m)}_{t} ) \bigg\|^2\bigg] \nonumber \\
    & \qquad +  2(1 - c_{\omega}\alpha_{t-1}^2)^2\mathbb{E} \bigg[\bigg\| -\nabla g^{(m)}(x^{(m)}_{t}, y^{(m)}_{t}) + \nabla_y g^{(m)} (x^{(m)}_{t}, y^{(m)}_{t} ,\mathcal{B}_{y}) +  \nabla g^{(m)}(x^{(m)}_{t-1}, y^{(m)}_{t-1}) - \nabla_y g^{(m)} (x^{(m)}_{t-1}, y^{(m)}_{t-1} ,\mathcal{B}_{y}) \bigg\|^2 \bigg] \nonumber \\
    & \overset{(c)}{\leq} ( 1 - c_{\omega}\alpha_{t-1}^2)^2\mathbb{E} \bigg[ \bigg\| \omega_{t-1}^{(m)} - \nabla_y g^{(m)}(x^{(m)}_{t-1}, y^{(m)}_{t-1})  \bigg\|^2 \bigg] \nonumber \\
    & \qquad + 2(c_{\omega}\alpha_{t-1}^2)^2\sigma^2 + 2(1 - c_{\omega}\alpha_{t-1}^2)^2\mathbb{E} \bigg[ \bigg\| \nabla g^{(m)} (x^{(m)}_{t}, y^{(m)}_{t} ,\mathcal{B}_{y}) - \nabla_y g^{(m)} (x^{(m)}_{t-1}, y^{(m)}_{t-1} ,\mathcal{B}_{y}) \bigg\|^2 \bigg] \nonumber \\
    & \overset{(d)}{\leq} ( 1 - c_{\omega}\alpha_{t-1}^2)^2\mathbb{E} \bigg[ \bigg\| \omega_{t-1}^{(m)} - \nabla_y g^{(m)}(x^{(m)}_{t-1}, y^{(m)}_{t-1})  \bigg\|^2 \bigg] \nonumber \\
    & \qquad + 2(c_{\omega}\alpha_{t-1}^2)^2\sigma^2 + 2(1 - c_{\omega}\alpha_{t-1}^2)^2L^2\mathbb{E}\bigg[\bigg\|\eta\alpha_{t-1}\nu^{(m)}_{t-1}\bigg\|^2 + \bigg\|\gamma\alpha_{t-1}\omega^{(m)}_{t-1}\bigg\|^2\bigg] \nonumber \\
    & \overset{(e)}{\leq} ( 1 - c_{\omega}\alpha_{t-1}^2)^2\mathbb{E} \bigg[ \bigg\| \omega_{t-1}^{(m)} - \nabla_y g^{(m)}(x^{(m)}_{t-1}, y^{(m)}_{t-1})  \bigg\|^2 \bigg] + 2(c_{\omega}\alpha_{t-1}^2)^2\sigma^2 \nonumber\\
    & \qquad + 2(1 - c_{\omega}\alpha_{t-1}^2)^2L^2\mathbb{E}\bigg[2\eta^2\alpha_{t-1}^2\bigg(\bigg\|\nu^{(m)}_{t-1} -\bar{\nu}_{t-1}\bigg\|^2 +  \bigg\|\bar{\nu}_{t-1}\bigg\|^2\bigg) +  \gamma^2\alpha_{t-1}^2\bigg\|\omega^{(m)}_{t-1}\bigg\|^2\bigg] \nonumber \\
\label{eq:omega_diff_storm}
\end{align}
where inequality (a) uses the fact that the cross product term is zero in expectation; inequality (b) uses the generalized triangle inequality; inequality (c) follows the bounded variance assumption~\ref{assumption:noise_assumption} and Proposition~\ref{prop: Sum_Mean_Kron}; inequality (d) uses the smoothness assumption~\ref{assumption:g_smoothness}; inequality (e) uses the generalized triangle inequality.

When $ t = \bar{t}_s$, the only difference is that we use $\bar{x}_{\bar{t}_s - 1}$ in Line 9 of the algorithm~\ref{alg:FedBiOAcc} to evaluate $\omega^{(m)}_{\bar{t}_s}$ instead of $x^{(m)}_{\bar{t}_s - 1}$ when $t < \bar{t}_s$. We follow similar derivation as in Eq~\eqref{eq:omega_diff_storm} and get:
\begin{align}
    &\mathbb{E} \bigg[ \bigg\|\omega^{(m)}_{\bar{t}_s} - \nabla_y g^{(m)}(x^{(m)}_{\bar{t}_s}, y^{(m)}_{\bar{t}_s} ) \bigg\|^2\\
    & \leq ( 1 - c_{\omega}\alpha_{\bar{t}_s - 1}^2)^2\mathbb{E} \bigg[ \bigg\| \omega_{\bar{t}_s-1}^{(m)} - \nabla_y g^{(m)}(x^{(m)}_{\bar{t}_s-1}, y^{(m)}_{\bar{t}_s-1})  \bigg\|^2 \bigg] \nonumber \\
    &\qquad  + 2(c_{\omega}\alpha_{\bar{t}_s - 1}^2)^2\sigma^2 + 2(1 - c_{\omega}\alpha_{\bar{t}_s - 1}^2)^2L^2\mathbb{E}\bigg[\bigg\|x^{(m)}_{\bar{t}_s} - \bar{x}_{\bar{t}_s-1}\bigg\|^2 + \bigg\|\gamma\alpha_{\bar{t}_s - 1}\omega^{(m)}_{\bar{t}_s-1}\bigg\|^2\bigg] \nonumber \\
    & \leq ( 1 - c_{\omega}\alpha_{\bar{t}_s - 1}^2)^2\mathbb{E} \bigg[ \bigg\| \omega_{\bar{t}_s-1}^{(m)} - \nabla_y g^{(m)}(x^{(m)}_{\bar{t}_s-1}, y^{(m)}_{\bar{t}_s-1})  \bigg\|^2 \bigg] \nonumber \\
    &\qquad  + 2(c_{\omega}\alpha_{\bar{t}_s - 1}^2)^2\sigma^2 + 2(1 - c_{\omega}\alpha_{\bar{t}_s - 1}^2)^2L^2\mathbb{E}\bigg[\bigg\|\bar{x}_{\bar{t}_s} - \bar{x}_{\bar{t}_s-1}\bigg\|^2 + \bigg\|\gamma\alpha_{\bar{t}_s - 1}\omega^{(m)}_{\bar{t}_s-1}\bigg\|^2\bigg] \nonumber \\
    & \leq ( 1 - c_{\omega}\alpha_{\bar{t}_s - 1}^2)^2\mathbb{E} \bigg[ \bigg\| \omega_{\bar{t}_s-1}^{(m)} - \nabla_y g^{(m)}(x^{(m)}_{\bar{t}_s-1}, y^{(m)}_{\bar{t}_s-1})  \bigg\|^2 \bigg] \nonumber \\
    &\qquad  + 2(c_{\omega}\alpha_{\bar{t}_s - 1}^2)^2\sigma^2 + 2(1 - c_{\omega}\alpha_{\bar{t}_s - 1}^2)^2L^2\mathbb{E}\bigg[\frac{1}{M}\sum_{j=1}^M\bigg\|\eta\alpha_{\bar{t}_s - 1}\nu^{(j)}_{\bar{t}_{s}-1}\bigg\|^2 + \bigg\|\gamma\alpha_{\bar{t}_s - 1}\omega^{(m)}_{\bar{t}_s-1}\bigg\|^2\bigg] \nonumber \\
\label{eq:omega_diff_storm1}
\end{align}
The second inequality follows the fact that $x^{(m)}_{\bar{t}_s} = \bar{x}_{\bar{t}_s}$; the last inequality follows the generalized triangle inequality. Finally, combine Eq.~\eqref{eq:omega_diff_storm} and~\eqref{eq:omega_diff_storm1} and average over all M clients finish the proof.
\end{proof}

\begin{lemma}
\label{lemma: inner_drift_storm}
For $\gamma < \frac{1}{15L}$ and $0 < \alpha_t < 1$, we have for $t \in [\bar{t}_{s-1} + 1,  \bar{t}_s-1]$:
\begin{align*}
\mathbb{E}\bigg[\bigg\|y^{(m)}_t - y^{(m)}_{x^{(m)}_{t}} \bigg\|^2\bigg] & \leq (1 - \frac{\mu\gamma\alpha_{t-1}}{4})\mathbb{E}\bigg[\bigg\|y^{(m)}_{t-1} - y^{(m)}_{x^{(m)}_{t-1}}\bigg\|^2\bigg] - \frac{3\gamma^2\alpha_{t-1}}{4}\mathbb{E}\bigg[\bigg\|\omega^{(m)}_{t-1}\bigg\|^2\bigg]\nonumber \\
& \qquad + \frac{25\gamma\alpha_{t-1}}{6\mu}\mathbb{E}\bigg[\bigg\|\omega^{(m)}_{t-1} - \nabla_y g^{(m)}(x^{(m)}_{t-1}, y^{(m)}_{t-1} )\bigg\|^2\bigg] + \frac{25L^2\eta^2\alpha_{t-1}}{6\mu^3\gamma}\mathbb{E}\bigg[\bigg\|\nu^{(m)}_{t-1}\bigg\|^2\bigg] \nonumber \\
\end{align*}
and when $t = \bar{t}_s$, we have:
\begin{align*}
  \mathbb{E} \bigg[\bigg\|y^{(m)}_{t} - y^{(m)}_{x^{(m)}_{t}}\bigg\|^2\bigg] & \leq (1 - \frac{\mu\gamma\alpha_{t - 1}}{8} )\mathbb{E}\bigg[\bigg\|y^{(m)}_{{t}-1} - y^{(m)}_{x^{(m)}_{{t}-1}}\bigg\|^2\bigg] - \frac{3\gamma^2\alpha_{t-1}}{4}\mathbb{E}\bigg[\bigg\|\omega^{(m)}_{{t}-1}\bigg\|^2\bigg]\nonumber \\
& \qquad + \frac{5\gamma\alpha_{t - 1}}{\mu} \mathbb{E}\bigg[\bigg\|\omega^{(m)}_{{t}-1} - \nabla_y g^{(m)}(x^{(m)}_{{t}-1}, y^{(m)}_{{t}-1} )\bigg\|^2\bigg] \nonumber \\
& \qquad + \frac{5L^2\eta^2\alpha_{{t}-1}}{\mu^3\gamma}\mathbb{E}\bigg[\bigg\|\nu^{(m)}_{{t}-1}\bigg\|^2\bigg] + (1+\frac{8}{\mu\gamma\alpha_{t - 1}})\rho^2\mathbb{E} \bigg[\bigg\|\hat{x}^{(m)}_{t} - \bar{x}_{t}\bigg\|^2\bigg]\nonumber \\
\end{align*}
\end{lemma}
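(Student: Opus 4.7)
The plan is to control the local inner-variable error $\|y^{(m)}_t - y^{(m)}_{x^{(m)}_t}\|^2$ by a one-step recursion, handling the two cases separately. The common primitive is a Young/triangle-inequality split that separates the ``descent step'' contribution from the ``minimizer drift'' contribution, followed by a standard strongly-convex contraction analysis on the descent step using strong convexity together with co-coercivity of $\nabla_y g^{(m)}$ (both implied by Assumption~\ref{assumption:function} and Assumption~\ref{assumption:g_smoothness}).

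For Case~1, I would first apply the generalized triangle inequality (Proposition~\ref{prop:generali_tri}) with a free parameter $c>0$ to split $\|y^{(m)}_t - y^{(m)}_{x^{(m)}_t}\|^2 \le (1+c)\|y^{(m)}_t - y^{(m)}_{x^{(m)}_{t-1}}\|^2 + (1+1/c)\|y^{(m)}_{x^{(m)}_{t-1}} - y^{(m)}_{x^{(m)}_t}\|^2$. The second piece is handled by the Lipschitzness of $y_x$ (Proposition~\ref{some smoothness}(b)), yielding $\rho^2\eta^2\alpha_{t-1}^2\|\nu^{(m)}_{t-1}\|^2$ since $x^{(m)}_t - x^{(m)}_{t-1} = -\eta\alpha_{t-1}\nu^{(m)}_{t-1}$ away from synchronization. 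For the first piece, substitute $y^{(m)}_t = y^{(m)}_{t-1} - \gamma\alpha_{t-1}\omega^{(m)}_{t-1}$, expand the square, and split $\omega^{(m)}_{t-1} = \nabla_y g^{(m)}(x^{(m)}_{t-1},y^{(m)}_{t-1}) + \big(\omega^{(m)}_{t-1} - \nabla_y g^{(m)}\big)$. Strong convexity produces the contraction $(1 - \mu\gamma\alpha_{t-1}/2)$ plus a negative $-\frac{\gamma\alpha_{t-1}}{L}\|\nabla_y g^{(m)}\|^2$ term (from co-coercivity), while Young's inequality on the stochastic cross-term pays $\frac{2\gamma\alpha_{t-1}}{\mu}\|\omega^{(m)}_{t-1} - \nabla_y g^{(m)}\|^2$. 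I would then convert the negative gradient-norm into a negative $\|\omega\|^2$ via $\|\nabla g\|^2 \ge \tfrac{1}{2}\|\omega\|^2 - \|\omega - \nabla g\|^2$, and use the hypothesis $\gamma<1/(15L)$ to weaken $-\frac{\gamma\alpha_{t-1}}{4L}\|\omega^{(m)}_{t-1}\|^2$ into the stated $-\frac{3\gamma^2\alpha_{t-1}}{4}\|\omega^{(m)}_{t-1}\|^2$. Choosing $c = \Theta(\mu\gamma\alpha_{t-1})$ so that $(1+c)(1-\mu\gamma\alpha_{t-1}/2) \le 1 - \mu\gamma\alpha_{t-1}/4$ and $1+1/c = O\big(1/(\mu\gamma\alpha_{t-1})\big)$, combined with $\rho \le L/\mu$, matches the constants $25/(6\mu)$ and $25L^2/(6\mu^3\gamma)$ appearing in the first inequality.

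For Case~2 ($t=\bar{t}_s$), averaging has just occurred so $x^{(m)}_t = \bar{x}_t$, while the inner descent remains purely local. I would insert one additional triangle split with parameter $c' = \mu\gamma\alpha_{t-1}/8$: $\|y^{(m)}_t - y^{(m)}_{\bar{x}_t}\|^2 \le (1+c')\|y^{(m)}_t - y^{(m)}_{\hat{x}^{(m)}_t}\|^2 + (1+1/c')\rho^2\|\hat{x}^{(m)}_t - \bar{x}_t\|^2$, since $\hat{x}^{(m)}_t = x^{(m)}_{t-1} - \eta\alpha_{t-1}\nu^{(m)}_{t-1}$ is exactly the local un-averaged outer step that Case~1 already handles. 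Bounding the first factor by the Case~1 inequality and multiplying through by $(1+c')$, the choice of $c'$ gives $(1+c')(1-\mu\gamma\alpha_{t-1}/4) \le 1 - \mu\gamma\alpha_{t-1}/8$, producing the new contraction factor, while $1 + 1/c' = 1 + 8/(\mu\gamma\alpha_{t-1})$ supplies the coefficient on the consensus-error term $\|\hat{x}^{(m)}_t - \bar{x}_t\|^2$. The other error constants inflate by at most the factor $(1+c') \le 9/8$, which is absorbed in the slight loosening from $25/6$ to $5$; the $-\tfrac{3\gamma^2\alpha_{t-1}}{4}\|\omega\|^2$ term is left alone since a negative quantity may only be weakened.

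The main obstacle is the careful coordination of the step-size restriction $\gamma<1/(15L)$ with several competing slack parameters: $c$ must be small enough that $1+c$ absorbs the contraction gap but large enough that $1+1/c$ does not blow up the consensus-error coefficient, and $\gamma\alpha_{t-1}$ must be small enough that the positive $\gamma^2\alpha_{t-1}^2\|\omega\|^2$ term from squaring is dominated by the negative gradient-norm term produced by co-coercivity, so that the final $\|\omega\|^2$ coefficient can be made strictly negative. All remaining manipulations reduce to routine gradient-descent analysis on a strongly convex problem with biased and noisy oracle.
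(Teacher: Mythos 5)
Your proposal is correct and takes essentially the same route as the paper: Case~2 is handled identically (a generalized triangle-inequality split with parameter $\mu\gamma\alpha_{t-1}/8$ applied to $\|y^{(m)}_{\bar t_s}-y^{(m)}_{\bar x_{\bar t_s}}\|^2$ on top of the local recursion for $\|y^{(m)}_{\bar t_s}-y^{(m)}_{\hat x^{(m)}_{\bar t_s}}\|^2$, followed by the same coefficient checks using $\gamma<\tfrac{1}{15L}$ and $\alpha_t<1$). The only difference is cosmetic: for Case~1 the paper simply invokes Lemma~9 of~\cite{yang2021provably}, whereas you spell out the underlying strongly-convex contraction argument (Young split, Lipschitzness of $y_x$, co-coercivity, and the conversion of the negative gradient-norm term into $-\tfrac{3\gamma^2\alpha_{t-1}}{4}\|\omega^{(m)}_{t-1}\|^2$), which is exactly the content of that cited lemma.
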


\begin{proof}
For $t \in [\bar{t}_{s-1} + 1,  \bar{t}_s - 1]$, with $s \in [S]$, following Lemma 9 in~\cite{yang2021provably}, for $\gamma \leq \frac{1}{6L}$, we have:
\begin{align}
\mathbb{E}\bigg[\bigg\|y^{(m)}_t - y^{(m)}_{x^{(m)}_{t}} \bigg\|^2\bigg] & \leq (1 - \frac{\mu\gamma\alpha_{t-1}}{4})\mathbb{E}\bigg[\bigg\|y^{(m)}_{t-1} - y^{(m)}_{x^{(m)}_{t-1}}\bigg\|^2\bigg] - \frac{3\gamma^2\alpha_{t-1}}{4}\mathbb{E}\bigg[\bigg\|\omega^{(m)}_{t-1}\bigg\|^2\bigg]\nonumber \\
& \qquad + \frac{25\gamma\alpha_{t-1}}{6\mu}\mathbb{E}\bigg[\bigg\|\omega^{(m)}_{t-1} - \nabla_y g^{(m)}(x^{(m)}_{t-1}, y^{(m)}_{t-1} )\bigg\|^2\bigg] + \frac{25L^2\eta^2\alpha_{t-1}}{6\mu^3\gamma}\mathbb{E}\bigg[\bigg\|\nu^{(m)}_{t-1}\bigg\|^2\bigg] \nonumber \\
\label{eq:3}
\end{align}
When $t = \bar{t}_{s}$, we average variable $x$ over the $m$ clients, \emph{i.e.} $x^{(m)}_{\bar{t}_s} = \bar{x}_{\bar{t}_s}$. For $\bigg\|y^{(m)}_{\bar{t}_s} - y^{(m)}_{\hat{x}^{(m)}_{\bar{t}_s}} \bigg\|^2$, we can get similar recursive relation as:

\begin{align}
\mathbb{E}\bigg[\bigg\|y^{(m)}_{\bar{t}_s} - y^{(m)}_{\hat{x}^{(m)}_{\bar{t}_s}} \bigg\|^2\bigg] & \leq (1 - \frac{\mu\gamma\alpha_{{\bar{t}_s}-1}}{4})\mathbb{E}\bigg[\bigg\|y^{(m)}_{{\bar{t}_s}-1} - y^{(m)}_{x^{(m)}_{{\bar{t}_s}-1}}\bigg\|^2\bigg] - \frac{3\gamma^2\alpha_{t-1}}{4}\mathbb{E}\bigg[\bigg\|\omega^{(m)}_{{\bar{t}_s}-1}\bigg\|^2\bigg]\nonumber \\
& \qquad + \frac{25\gamma\alpha_{{\bar{t}_s}-1}}{6\mu}\mathbb{E}\bigg[\bigg\|\omega^{(m)}_{{\bar{t}_s}-1} - \nabla_y g^{(m)}(x^{(m)}_{{\bar{t}_s}-1}, y^{(m)}_{{\bar{t}_s}-1} )\bigg\|^2\bigg] + \frac{25L^2\eta^2\alpha_{{\bar{t}_s}-1}}{6\mu^3\gamma}\mathbb{E}\bigg[\bigg\|\nu^{(m)}_{{\bar{t}_s}-1}\bigg\|^2\bigg]
\label{eq:1}
\end{align}
while for $\bigg\|y^{(m)}_{\bar{t}_s} - y^{(m)}_{x^{(m)}_{\bar{t}_s}} \bigg\|^2 = \bigg\|y^{(m)}_{\bar{t}_{s}} - y^{(m)}_{\bar{x}_{\bar{t}_{s}}}\bigg\|^2$, by generalized triangle inequality, we have:
\begin{align}
  \mathbb{E} \bigg[\bigg\|y^{(m)}_{\bar{t}_{s}} - y^{(m)}_{\bar{x}_{\bar{t}_{s}}}\bigg\|^2\bigg] & \leq (1+\frac{\mu\gamma\alpha_{\bar{t}_s - 1}}{8})\mathbb{E} \bigg[\bigg\|y^{(m)}_{\bar{t}_{s}} - y^{(m)}_{\hat{x}^{(m)}_{\bar{t}_{s}}} \bigg\|^2\bigg]  + (1+\frac{8}{\mu\gamma\alpha_{\bar{t}_s - 1}})\mathbb{E} \bigg[\bigg\|y^{(m)}_{\hat{x}^{(m)}_{\bar{t}_{s}}}  - y^{(m)}_{\bar{x}_{\bar{t}_{s}}}\bigg\|^2\bigg] \nonumber \\
  & \leq (1+\frac{\mu\gamma\alpha_{\bar{t}_s - 1}}{8})\mathbb{E} \bigg[\bigg\|y^{(m)}_{\bar{t}_{s}} - y^{(m)}_{\hat{x}^{(m)}_{\bar{t}_{s}}} \bigg\|^2\bigg]  + (1+\frac{8}{\mu\gamma\alpha_{\bar{t}_s - 1}})\rho^2\mathbb{E} \bigg[\bigg\|\hat{x}^{(m)}_{\bar{t}_{s}} - \bar{x}_{\bar{t}_{s}}\bigg\|^2\bigg]\nonumber \\
\label{eq:2}
\end{align}
Combine Eq.~\eqref{eq:1} and Eq.~\eqref{eq:2} together, we have:
\begin{align*}
  \mathbb{E} \bigg[\bigg\|y^{(m)}_{\bar{t}_{s}} - y^{(m)}_{\bar{x}_{\bar{t}_{s}}}\bigg\|^2\bigg] & \leq (1+\frac{\mu\gamma\alpha_{\bar{t}_s - 1}}{8})(1 - \frac{\mu\gamma\alpha_{{\bar{t}_s}-1}}{4})\mathbb{E}\bigg[\bigg\|y^{(m)}_{{\bar{t}_s}-1} - y^{(m)}_{x^{(m)}_{{\bar{t}_s}-1}}\bigg\|^2\bigg] - (1+\frac{\mu\gamma\alpha_{\bar{t}_s - 1}}{8})\frac{3\gamma^2\alpha_{\bar{t}_s-1}}{4}\mathbb{E}\bigg[\bigg\|\omega^{(m)}_{{\bar{t}_s}-1}\bigg\|^2\bigg]\nonumber \\
& \qquad + (1+\frac{\mu\gamma\alpha_{\bar{t}_s - 1}}{8})\frac{25\gamma\alpha_{{\bar{t}_s}-1}}{6\mu}\mathbb{E}\bigg[\bigg\|\omega^{(m)}_{{\bar{t}_s}-1} - \nabla_y g^{(m)}(x^{(m)}_{{\bar{t}_s}-1}, y^{(m)}_{{\bar{t}_s}-1} )\bigg\|^2\bigg] \nonumber \\
& \qquad + (1+\frac{\mu\gamma\alpha_{\bar{t}_s - 1}}{8})\frac{25L^2\eta^2\alpha_{{\bar{t}_s}-1}}{6\mu^3\gamma}\mathbb{E}\bigg[\bigg\|\nu^{(m)}_{{\bar{t}_s}-1}\bigg\|^2\bigg] + (1+\frac{8}{\mu\gamma\alpha_{\bar{t}_s - 1}})\rho^2\mathbb{E} \bigg[\bigg\|\hat{x}^{(m)}_{\bar{t}_{s}} - \bar{x}_{\bar{t}_{s}}\bigg\|^2\bigg]\nonumber \\
\end{align*}
For the coefficients, since we set $\gamma < \frac{1}{15L} < \frac{1}{15\mu}$ and $ 0 < \alpha_t < 1$, it is straightforward to verify the following inequalities hold: 
\begin{align*}
(1+\frac{\mu\gamma\alpha_{\bar{t}_s - 1}}{8})(1 - \frac{\mu\gamma\alpha_{{\bar{t}_s}-1}}{4}) &= 1 - \frac{\mu\gamma\alpha_{\bar{t}_s - 1}}{8} - \frac{\mu^2\gamma^2\alpha_{\bar{t}_s - 1}^2}{32}\leq 1 - \frac{\mu\gamma\alpha_{\bar{t}_s - 1}}{8} \\
-(1+\frac{\mu\gamma\alpha_{\bar{t}_s - 1}}{8})\frac{3\gamma^2\alpha_{\bar{t}_s-1}}{4} & \leq -\frac{3\gamma^2\alpha_{\bar{t}_s-1}}{4}\\
(1+\frac{\mu\gamma\alpha_{\bar{t}_s - 1}}{8})\frac{25\gamma\alpha_{{\bar{t}_s}-1}}{6\mu} &\leq \frac{605\gamma\alpha_{\bar{t}_s - 1}}{144\mu} \leq \frac{5\gamma\alpha_{\bar{t}_s - 1}}{\mu} \\
(1+\frac{\mu\gamma\alpha_{\bar{t}_s - 1}}{8})\frac{25L^2\eta^2\alpha_{{\bar{t}_s}-1}}{6\mu^3\gamma} & \leq \frac{605L^2\eta^2\alpha_{{\bar{t}_s}-1}}{144\mu^3\gamma} \leq \frac{5L^2\eta^2\alpha_{{\bar{t}_s}-1}}{\mu^3\gamma}\nonumber \\
\end{align*}
So we have for $ t = \bar{t}_s$:
\begin{align*}
  \mathbb{E} \bigg[\bigg\|y^{(m)}_{\bar{t}_{s}} - y^{(m)}_{\bar{x}_{\bar{t}_{s}}}\bigg\|^2\bigg] & \leq (1 - \frac{\mu\gamma\alpha_{\bar{t}_s - 1}}{8} )\mathbb{E}\bigg[\bigg\|y^{(m)}_{{\bar{t}_s}-1} - y^{(m)}_{x^{(m)}_{{\bar{t}_s}-1}}\bigg\|^2\bigg] - \frac{3\gamma^2\alpha_{t-1}}{4}\mathbb{E}\bigg[\bigg\|\omega^{(m)}_{{\bar{t}_s}-1}\bigg\|^2\bigg]\nonumber \\
& \qquad + \frac{5\gamma\alpha_{\bar{t}_s - 1}}{\mu} \mathbb{E}\bigg[\bigg\|\omega^{(m)}_{{\bar{t}_s}-1} - \nabla_y g^{(m)}(x^{(m)}_{{\bar{t}_s}-1}, y^{(m)}_{{\bar{t}_s}-1} )\bigg\|^2\bigg] \nonumber \\
& \qquad + \frac{5L^2\eta^2\alpha_{{\bar{t}_s}-1}}{\mu^3\gamma}\mathbb{E}\bigg[\bigg\|\nu^{(m)}_{{\bar{t}_s}-1}\bigg\|^2\bigg] + (1+\frac{8}{\mu\gamma\alpha_{\bar{t}_s - 1}})\rho^2\mathbb{E} \bigg[\bigg\|\hat{x}^{(m)}_{\bar{t}_{s}} - \bar{x}_{\bar{t}_{s}}\bigg\|^2\bigg]\nonumber \\
\end{align*}
Combine with cases when $t \in [\bar{t}_{s-1} + 1,  \bar{t}_s - 1]$ in Eq.~\eqref{eq:3} completes the proof.
\end{proof}

\subsection{Bound for Outer Variable Drift}
\begin{lemma}
For $\alpha < \frac{1}{16IL_h}$ and $0 < \eta < 1$, we have for $t \in [\bar{t}_{s-1} + 1,  \bar{t}_s-1]$:
\label{lem: ErrorAccumulation_Iterates_FedAvg_storm}
\begin{align*}
    \sum_{m=1}^M \mathbb{E} \| \nu_{t}^{(m)} - \bar{\nu}_{t} \|^2 
    & \leq \bigg(1 + \frac{33}{32I}\bigg) \sum_{m=1}^M \mathbb{E} \|  \nu_{t-1}^{(m)}  - \bar{\nu}_{t-1} \|^2  + 4 I L_h^2\alpha_{t-1}^2 \sum_{m=1}^M \mathbb{E}\bigg[2\| \eta\bar{\nu}_{t-1} \|^2 + \| \gamma \omega^{(m)}_{t-1} \|^2 \bigg] \nonumber \\
    & \qquad +  \frac{8I M (c_{\nu}\alpha_{t-1}^2)^2G^2}{2L_h} +  \frac{M c_{\nu}^2\alpha_{t-1}^3\zeta^2}{L_h}  +  \frac{M c_{\nu}^2\alpha_{t-1}^3\zeta^2}{L_h} \nonumber \\ & \qquad \qquad \qquad + \frac{\eta^2 c_{\nu}^2\alpha_{t-1}^2(1 + \rho^2)}{2} \sum_{\ell = \bar{t}_{s-1}}^{t-1} \alpha_l^2 \sum_{m = 1}^M \Big\| \big(  \nu_\ell^{(m)} -  \bar{\nu}_\ell  \big) \Big\|^2  \nonumber\\
\end{align*}
where the expectation is w.r.t the stochasticity of the algorithm.
\end{lemma}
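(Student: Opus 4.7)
The plan is to bound the one-step evolution of the outer-variable consensus error $\sum_m \mathbb{E}\|\nu_t^{(m)}-\bar{\nu}_t\|^2$ during the local phase, i.e.\ for $t \in [\bar{t}_{s-1}+1,\bar{t}_s-1]$. Since $t$ is not an averaging step, $\nu_t^{(m)}=\hat{\nu}_t^{(m)}$ and $\bar{\nu}_t=\tfrac{1}{M}\sum_m\hat{\nu}_t^{(m)}$, so I can subtract the mean of the momentum update to obtain the deviation recursion
\begin{align*}
\nu_t^{(m)} - \bar{\nu}_t = (1 - c_\nu\alpha_{t-1}^2)\,(\nu_{t-1}^{(m)} - \bar{\nu}_{t-1}) + D_t^{(m)},\qquad D_t^{(m)} := (\mu_t^{(m)} - \bar{\mu}_t) - (1-c_\nu\alpha_{t-1}^2)(\mu_{t-1}^{(m)} - \bar{\mu}_{t-1}).
\end{align*}
Applying the Young-type inequality in Proposition~\ref{prop:generali_tri} with parameter $a = 1/(32I)$ to split the recursion gives $(1+1/(32I))(1-c_\nu\alpha_{t-1}^2)^2 \leq 1 + 1/(32I)$ on the contractive piece, and a factor $(1+32I)$ multiplying $\|D_t^{(m)}\|^2$. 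The extra $1/I$ that turns the stated factor into $1+33/(32I)$ will come from one more Young step when I re-collect the innovation below into a pure drift-of-gradients contribution.

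Next I would further decompose $D_t^{(m)}$ into two physically meaningful pieces,
\begin{align*}
D_t^{(m)} = (1-c_\nu\alpha_{t-1}^2)\bigl[(\mu_t^{(m)}-\mu_{t-1}^{(m)})-(\bar{\mu}_t-\bar{\mu}_{t-1})\bigr] + c_\nu\alpha_{t-1}^2\,(\mu_t^{(m)}-\bar{\mu}_t).
\end{align*}
For the first piece I would apply Proposition~\ref{prop: Sum_Mean_Kron} to drop the mean, then invoke the Lipschitz property of $\Phi^{(m)}$ from Proposition~\ref{some smoothness}(d) to bound the summed increment by $L_h^2\sum_m(\|x_t^{(m)}-x_{t-1}^{(m)}\|^2 + \|y_t^{(m)}-y_{t-1}^{(m)}\|^2)$. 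Substituting the algorithm updates $x_t^{(m)}-x_{t-1}^{(m)} = -\eta\alpha_{t-1}\nu_{t-1}^{(m)}$ and $y_t^{(m)}-y_{t-1}^{(m)} = -\gamma\alpha_{t-1}\omega_{t-1}^{(m)}$, and splitting $\|\nu_{t-1}^{(m)}\|^2 \leq 2\|\nu_{t-1}^{(m)}-\bar{\nu}_{t-1}\|^2 + 2\|\bar{\nu}_{t-1}\|^2$, produces precisely the term $4IL_h^2\alpha_{t-1}^2\sum_m(2\|\eta\bar{\nu}_{t-1}\|^2+\|\gamma\omega_{t-1}^{(m)}\|^2)$ that appears in the statement, while the residual $\|\nu_{t-1}^{(m)}-\bar{\nu}_{t-1}\|^2$ contribution is tame enough (thanks to the $\eta^2\alpha^2$ prefactor and the hypothesis $\alpha<1/(16IL_h)$) to be absorbed into the $33/(32I)$ slack on the self-term.

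For the second piece $c_\nu\alpha_{t-1}^2(\mu_t^{(m)}-\bar{\mu}_t)$ I would insert the decomposition
\begin{align*}
\mu_t^{(m)}-\bar{\mu}_t = (\mu_t^{(m)}-\tilde{\mu}_t^{(m)}) + (\tilde{\mu}_t^{(m)}-\Phi^{(m)}(x_t^{(m)},y_t^{(m)})) + (\Phi^{(m)}(x_t^{(m)},y_t^{(m)})-\tfrac{1}{M}\sum_j\Phi^{(j)}(x_t^{(m)},y_t^{(m)})) + (\text{avg of the above over clients}),
\end{align*}
where $\tilde{\mu}_t^{(m)}=\mathbb{E}[\mu_t^{(m)}]$. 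Taking expectation, the first term yields a variance contribution controlled by $\sigma^2$; the second yields the bias contribution bounded by $G^2$ via Assumption~\ref{assumption:outer_noise} (which appears as $\tfrac{8IM(c_\nu\alpha_{t-1}^2)^2G^2}{2L_h}$ after absorbing the $(1+32I)$ Young factor with the assumption $\alpha<1/(16IL_h)$); the third yields two copies of $\tfrac{Mc_\nu^2\alpha_{t-1}^3\zeta^2}{L_h}$ via Proposition~\ref{prop:heterogeneity}, where one of the $\alpha_{t-1}$ factors is absorbed by the step-size restriction. The heterogeneity bound is evaluated at the local iterate $x_t^{(m)}$ rather than at a common point, so comparing the gradients through Proposition~\ref{some smoothness}(b,d) produces the $(1+\rho^2)$ dependence and, after unrolling one step of $\|x_\ell^{(m)}-\bar{x}_\ell\|^2$ along $\ell\in[\bar{t}_{s-1},t-1]$ using the identity $\hat{x}_t^{(m)}-\bar{x}_t = -\sum_\ell \eta\alpha_\ell(\nu_\ell^{(m)}-\bar{\nu}_\ell)$, gives the accumulated $\tfrac{\eta^2 c_\nu^2\alpha_{t-1}^2(1+\rho^2)}{2}\sum_\ell \alpha_\ell^2\sum_m\|\nu_\ell^{(m)}-\bar{\nu}_\ell\|^2$ tail.

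The main obstacle is the combinatorial coefficient hunt: matching the displayed factor $1+33/(32I)$ exactly requires picking the Young parameter $a=1/(32I)$ in the top-level split and then trading another $1/(32I)$ from the second Young inequality used when absorbing the $\|\nu_{t-1}^{(m)}-\bar{\nu}_{t-1}\|^2$ residual produced by the smoothness bound; similarly, recovering the unusual $\alpha_{t-1}^3$ scaling on the $\zeta^2$ terms and the $1/L_h$ prefactors relies on repeatedly applying $\alpha_t<1/(16IL_h)$ and $c_\nu\alpha_{t-1}^2<1$ to demote $(c_\nu\alpha_{t-1}^2)^2$ into $c_\nu^2\alpha_{t-1}^3/L_h$. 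The rest of the argument is algebraically routine once the decomposition above is set up.
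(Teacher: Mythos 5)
Your proposal mirrors the paper's proof step for step: the same deviation recursion obtained by subtracting the client average of the momentum update, the same Young-type split into a contractive self-term and an innovation, the same decomposition of the innovation into a $\Phi$-increment piece (bounded via the $L_h$-Lipschitzness of $\Phi^{(m)}$ and the updates $x_t^{(m)}-x_{t-1}^{(m)}=-\eta\alpha_{t-1}\nu_{t-1}^{(m)}$, $y_t^{(m)}-y_{t-1}^{(m)}=-\gamma\alpha_{t-1}\omega_{t-1}^{(m)}$) plus a $c_{\nu}\alpha_{t-1}^2(\mu^{(m)}-\bar{\mu})$ piece that is further split into variance, bias, heterogeneity, and a consensus-drift term unrolled via $\hat{x}_t^{(m)}-\bar{x}_t=-\sum_{\ell}\eta\alpha_\ell(\nu_\ell^{(m)}-\bar{\nu}_\ell)$. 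The only bookkeeping discrepancy is the Young parameter: the paper takes $\beta=1/I$ (so the innovation carries a factor $1+I\le 2I$, which is what produces the stated $4IL_h^2\alpha_{t-1}^2$ coefficient, with the remaining $1/(32I)$ of the $33/(32I)$ slack coming from absorbing the $8IL_h^2\eta^2\alpha_{t-1}^2$ residual via $\alpha_{t-1}<\tfrac{1}{16IL_h}$), whereas your choice of $1/(32I)$ would inflate the non-self terms by roughly a factor of $16$; note also that the duplicated $\zeta^2$ term in the statement is a typo for the $\sigma^2$ variance term that your decomposition correctly produces.
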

\begin{proof}
For $t \in [\bar{t}_{s-1} + 1,  \bar{t}_s]$, with $s \in [S]$, we have: $\hat{x}_{t}^{(m)} = \hat{x}_{t-1}^{(m)} - \eta\alpha_{t-1}  \nu_{t-1}^{(m)}$, this implies that:
\begin{align*}
    \hat{x}_t^{(m)} = x_{\bar{t}_{s-1}}^{(m)} - \sum_{\ell = \bar{t}_{s-1}}^{t-1} \eta  \nu_\ell^{(m)} \quad \text{and} \quad \bar{x}_{t}  = \bar{x}_{\bar{t}_{s-1}}  - \sum_{\ell = \bar{t}_{s-1}}^{t-1} \eta  \bar{\nu}_\ell.
\end{align*}
So for $t \in [\bar{t}_{s-1} + 1,  \bar{t}_s]$, with $s \in [S]$ we have:
\begin{align}
\label{Eq: ConsensusError_FedAvg_storm}
\frac{1}{M} \sum_{m = 1}^M  \| \hat{x}_t^{(m)}-  \bar{x}_t \|^2 & = \frac{1}{M} \sum_{m = 1}^M \Big\| x_{\bar{t}_{s-1}}^{(m)} - \bar{x}_{\bar{t}_{s-1}}  - \Big( \sum_{\ell = \bar{t}_{s-1}}^{t-1} \eta\alpha_\ell  \nu_\ell^{(m)} -   \sum_{\ell =  \bar{t}_{s-1}}^{t-1} \eta \alpha_\ell  \bar{\nu}_\ell  \Big) \Big\|^2 \nonumber \\ & \overset{(a)}{=} \frac{1}{M} \sum_{m = 1}^M \Big\|  \sum_{\ell = \bar{t}_{s-1}}^{t-1} \eta\alpha_\ell\big(  \nu_\ell^{(m)} -      \bar{\nu}_\ell  \big) \Big\|^2
\overset{(b)}{\leq} \sum_{\ell = \bar{t}_{s-1}}^{t-1} \frac{I\eta^2\alpha_l^2}{M} \sum_{m = 1}^M \Big\| \big(  \nu_\ell^{(m)} -  \bar{\nu}_\ell  \big) \Big\|^2  \nonumber\\
\end{align}
where the equality $(a)$ follows from the fact that $x_{\bar{t}_{s-1}}^{(m)} = \bar{x}_{\bar{t}_{s-1}}$ for $t = \bar{t}_{s - 1}$; inequality (b) is due to $t - \bar{t}_{s-1} \leq I$ and the generalized triangle inequality.  

Next, we bound the term $\| \nu_{t}^{(m)} - \bar{\nu}_{t} \|^2$, for $t \in [\bar{t}_{s-1} + 1,  \bar{t}_s - 1]$, with $s \in [S]$:
\begin{align*}
\sum_{m=1}^M \mathbb{E} \| \nu_{t}^{(m)} - \bar{\nu}_{t} \|^2
& = \sum_{m=1}^M \mathbb{E} \bigg\| \mu^{(m)}_t+ (1 - c_{\nu}\alpha_{t-1}^2) \big( \nu_{t-1}^{(m)} -   \mu^{(m)}_{t-1}\big) - \bigg( \frac{1}{M} \sum_{j=1}^M  \mu^{(j)}_{t} + (1 - c_{\nu}\alpha_{t-1}^2) \big( \bar{\nu}_{t-1} - \frac{1}{M} \sum_{j=1}^M  \mu^{(j)}_{t-1}\big) \bigg)   \bigg\|^2 \nonumber \\
& =  \sum_{m=1}^M \mathbb{E} \bigg\| (1 - c_{\nu}\alpha_{t-1}^2) \big( \nu_{t-1}^{(m)}  - \bar{\nu}_{t-1} \big) +  \mu^{(m)}_t- \frac{1}{M} \sum_{j=1}^M  \mu^{(j)}_{t} - (1- c_{\nu}\alpha_{t-1}^2) \bigg(  \mu^{(m)}_{t-1} - \frac{1}{M} \sum_{j=1}^M  \mu^{(j)}_{t-1} \bigg)  \bigg\|^2 \nonumber \\
& \overset{(a)}{\leq} (1 + \beta) (1 - c_{\nu}\alpha_{t-1}^2)^2 \sum_{m=1}^M \mathbb{E} \|  \nu_{t-1}^{(m)}  - \bar{\nu}_{t-1} \|^2 \nonumber \\
& \qquad \qquad + \Big( 1 + \frac{1}{\beta} \bigg) \sum_{m=1}^M \mathbb{E} \bigg\|  \mu^{(m)}_t- \frac{1}{M} \sum_{j=1}^M  \mu^{(j)}_{t} - (1- c_{\nu}\alpha_{t-1}^2) \bigg(  \mu^{(m)}_{t-1} - \frac{1}{M} \sum_{j=1}^M  \mu^{(j)}_{t-1} \bigg)  \bigg\|^2
\end{align*}
where $(a)$ follows from the the generalized triangle inequality for some $\beta>0$. Next we bound the second term:
\begin{align}
   & \sum_{m=1}^M \mathbb{E}\bigg\|  \mu^{(m)}_t - \frac{1}{M} \sum_{j=1}^M  \mu^{(j)}_{t}  - (1- c_{\nu}\alpha_{t-1}^2) \bigg(  \mu^{(m)}_{t-1} - \frac{1}{M} \sum_{j=1}^M  \mu^{(j)}_{t-1} \bigg)  \bigg\|^2 \nonumber\\
    & = \sum_{m=1}^M \mathbb{E}\bigg\| \mu^{(m)}_t - \frac{1}{M} \sum_{j=1}^M  \mu^{(j)}_{t} - \bigg( \mu^{(m)}_{t-1} - \frac{1}{M} \sum_{j=1}^M  \mu^{(j)}_{t-1} \bigg) + c_{\nu}\alpha_{t-1}^2 \bigg( \mu^{(m)}_{t-1} - \frac{1}{M} \sum_{j=1}^M  \mu^{(j)}_{t-1} \bigg) \bigg\|^2 \nonumber\\
    & \overset{(a)}{\leq}  2 \sum_{m=1}^M \mathbb{E}\bigg\| \mu^{(m)}_t - \frac{1}{M} \sum_{j=1}^M  \mu^{(j)}_{t} -  \bigg( \mu^{(m)}_{t-1} - \frac{1}{M} \sum_{j=1}^M  \mu^{(j)}_{t-1} \bigg)\bigg\|^2 + 2 (c_{\nu}\alpha_{t-1}^2)^2 \sum_{m=1}^M \mathbb{E} \bigg\|  \mu^{(m)}_{t-1} - \frac{1}{M} \sum_{j=1}^M  \mu^{(j)}_{t-1}  \bigg\|^2 \nonumber \\
   & \overset{(b)}{\leq}  2 \sum_{m=1}^M \mathbb{E} \bigg\| \mu^{(m)}_t - \mu^{(m)}_{t-1} \bigg\|^2 + 2 (c_{\nu}\alpha_{t-1}^2)^2 \sum_{m=1}^M \mathbb{E} \bigg\|  \mu^{(m)}_{t-1} - \frac{1}{M} \sum_{j=1}^M  \mu^{(j)}_{t-1}  \bigg\|^2 \nonumber \\
   & \overset{(c)}{\leq}  2 L_h^2 \sum_{m=1}^M \mathbb{E}\bigg[ \| x_t^{(m)} - x_{t-1}^{(m)} \|^2 + \| y_t^{(m)} - y_{t-1}^{(m)} \|^2 \bigg]  + 2 (c_{\nu}\alpha_{t-1}^2)^2 \sum_{m=1}^M \mathbb{E} \bigg\|  \mu^{(m)}_{t-1} - \frac{1}{M} \sum_{j=1}^M  \mu^{(j)}_{t-1}  \bigg\|^2 \nonumber \\
   & \leq  2 L_h^2\alpha_{t-1}^2 \sum_{m=1}^M \mathbb{E}\bigg[ \| \eta\nu^{(m)}_{t-1} \|^2 + \| \gamma \omega^{(m)}_{t-1} \|^2 \bigg]  + 2 (c_{\nu}\alpha_{t-1}^2)^2 \sum_{m=1}^M \mathbb{E} \bigg\|  \mu^{(m)}_{t-1} - \frac{1}{M} \sum_{j=1}^M  \mu^{(j)}_{t-1}  \bigg\|^2 
\end{align}
where inequality $(a)$ is from the triangle inequality, $(b)$ follows Proposition~\ref{prop: Sum_Mean_Kron}; $(c)$ follows from the Lipschitz-smoothness of the $h$. Next for the second term of the above equation:
\begin{align}
\sum_{m=1}^M \mathbb{E} \bigg\|  \mu^{(m)}_{t-1} - \frac{1}{M} \sum_{j=1}^M  \mu^{(j)}_{t-1}  \bigg\|^2  & = \sum_{m=1}^M \mathbb{E}\bigg\| \mu^{(m)}_{t-1} - \tilde{\mu}^{(m)}_{t-1}  - \frac{1}{M} \sum_{j=1}^M \big(  \mu^{(j)}_{t-1} - \tilde{\mu}^{(j)}_{t-1}  \big)   + \tilde{\mu}^{(m)}_{t-1} -  \frac{1}{M} \sum_{j=1}^M \tilde{\mu}^{(j)}_{t-1} \bigg\|^2 
\nonumber \\
& \overset{(a)}{\leq}  2 \sum_{m=1}^M \mathbb{E} \bigg\|  \mu^{(m)}_{t-1} - \tilde{\mu}^{(m)}_{t-1}  - \frac{1}{M} \sum_{j=1}^M \big(  \mu^{(j)}_{t-1} - \tilde{\mu}^{(j)}_{t-1}  \big) \bigg\|^2 + 2 \sum_{k=1}^K \mathbb{E} \bigg\| \tilde{\mu}^{(m)}_{t-1} -  \frac{1}{M} \sum_{j=1}^M \tilde{\mu}^{(j)}_{t-1}   \bigg\|^2 \nonumber \\
& \overset{(b)}{\leq}   2 \sum_{m=1}^M \mathbb{E} \bigg\|  \mu^{(m)}_{t-1} - \tilde{\mu}^{(m)}_{t-1} \bigg\|^2 + 2 \sum_{k=1}^K \mathbb{E} \bigg\| \tilde{\mu}^{(m)}_{t-1} -  \frac{1}{M} \sum_{j=1}^M \tilde{\mu}^{(j)}_{t-1}   \bigg\|^2 \nonumber \\
& \overset{(c)}{\leq}  2M\sigma^2 + 4 \sum_{m=1}^M \mathbb{E}\big\|\nabla h^{(m)}(\bar{x}_{t-1}) - \nabla h(\bar{x}_{t-1}) \big\|^2 + 8 \sum_{m=1}^M \mathbb{E} \big\| \tilde{\mu}^{(m)}_{t-1} - \nabla h^{(m)}(\bar{x}_{t-1})   \big\|^2 \nonumber \\
& \qquad \qquad + 8  \sum_{m=1}^M \mathbb{E} \bigg\| \nabla h(\bar{x}_{t-1}) -  \frac{1}{M} \sum_{j=1}^M \tilde{\mu}^{(j)}_{t-1}   \bigg\|^2  \nonumber\\
& \overset{(d)}{\leq}  2M\sigma^2 + 4 \sum_{m=1}^M \mathbb{E}\big\|\nabla h^{(m)}(\bar{x}_{t-1}) - \nabla h(\bar{x}_{t-1}) \big\|^2 + 16 \sum_{m=1}^M \mathbb{E} \big\| \tilde{\mu}^{(m)}_{t-1} - \nabla h^{(m)}(\bar{x}_{t-1})   \big\|^2 \nonumber \\
& \overset{(e)}{\leq}  2M\sigma^2 + 4 \sum_{m=1}^M \mathbb{E}\big\|\nabla h^{(m)}(\bar{x}_{t-1}) - \nabla h(\bar{x}_{t-1}) \big\|^2 + 32 \sum_{m=1}^M \mathbb{E} \big\| \tilde{\mu}^{(m)}_{t-1} - \Phi^{(m)}(x^{(m)}_{t-1}, y^{(m)}_{t-1})   \big\|^2 \nonumber \\
& \qquad \qquad + 32 \sum_{m=1}^M \mathbb{E} \big\| \Phi^{(m)}(x^{(m)}_{t-1}, y^{(m)}_{t-1})  - \nabla h^{(m)}(\bar{x}_{t-1})   \big\|^2  \nonumber\\
& \overset{(f)}{\leq} 2M\sigma^2 + 32MG^2  + 4 \sum_{m=1}^M \frac{1}{M} \sum_{j=1}^M  \mathbb{E}  \|  \nabla h^{(m)}(\bar{x}_{t-1}) - \nabla h^{(j)}(\bar{x}_{t-1})  \|^2 \nonumber \\
& \qquad \qquad +  32 L_h^2 \sum_{m = 1}^M \mathbb{E}\bigg[\| x_{t - 1}^{(m)} - \bar{x}_{t-1}\|^2 + \| y^{(m)}_{x_{t - 1}^{(m)}} - y^{(m)}_{\bar{x}_{t-1}}\|^2 \bigg] \nonumber\\
& \overset{(g)}{\leq} 2M\sigma^2 + 32MG^2 + 4 M \zeta^2   +  32 L_h^2 (1 + \rho^2) \sum_{m = 1}^M \mathbb{E}\bigg[\| x_{t - 1}^{(m)} - \bar{x}_{t-1}\|^2\bigg] \nonumber\\
\end{align}
inequality (a) uses triangle inequality; inequality (b) follows Proposition~\ref{prop: Sum_Mean_Kron}; inequality (c) follow Assumption~\ref{assumption:outer_noise} and generalized triangle inequality; inequality (d) and (e) follows the generalized inequality; inequality (f) follows the Assumption~\ref{assumption:outer_noise}; inequality (g) utilizes intra-node heterogeneity assumption and Proposition~\ref{prop:hg_var}. 

Finally, combine everything together, we have:
\begin{align*}
    & \sum_{m=1}^M \mathbb{E} \| \nu_{t}^{(m)} - \bar{\nu}_{t} \|^2 \nonumber \\
    & \leq (1 + \beta) (1 - c_{\nu}\alpha_{t-1}^2)^2 \sum_{m=1}^M \mathbb{E} \|  \nu_{t-1}^{(m)}  - \bar{\nu}_{t-1} \|^2  + 2 L_h^2\bigg( 1 + \frac{1}{\beta}\bigg)\alpha_{t-1}^2 \sum_{m=1}^M \mathbb{E}\bigg[ \| \eta\nu^{(m)}_{t-1} \|^2 + \| \gamma \omega^{(m)}_{t-1} \|^2 \bigg] \nonumber \\
    & \qquad + 4M\bigg( 1 + \frac{1}{\beta}\bigg) (c_{\nu}\alpha_{t-1}^2)^2\sigma^2 + 64M\bigg( 1 + \frac{1}{\beta}\bigg) (c_{\nu}\alpha_{t-1}^2)^2G^2 + 8M\bigg( 1 + \frac{1}{\beta}\bigg) (c_{\nu}\alpha_{t-1}^2)^2\zeta^2 \nonumber \\& \qquad  +  64\bigg( 1 + \frac{1}{\beta}\bigg) (c_{\nu}\alpha_{t-1}^2)^2L_h^2(1 + \rho^2) \sum_{m = 1}^M \mathbb{E}\bigg[\| x_{t - 1}^{(m)} - \bar{x}_{t-1}\|^2\bigg] \nonumber\\
    & \overset{(a)}{\leq} (1 + \beta) (1 - c_{\nu}\alpha_{t-1}^2)^2 \sum_{m=1}^M \mathbb{E} \|  \nu_{t-1}^{(m)}  - \bar{\nu}_{t-1} \|^2  + 2 L_h^2\bigg( 1 + \frac{1}{\beta}\bigg)\alpha_{t-1}^2 \sum_{m=1}^M \mathbb{E}\bigg[ \| \eta\nu^{(m)}_{t-1} \|^2 + \| \gamma \omega^{(m)}_{t-1} \|^2 \bigg] \nonumber \\
    & \qquad + 4M\bigg( 1 + \frac{1}{\beta}\bigg) (c_{\nu}\alpha_{t-1}^2)^2\sigma^2 + 64M\bigg( 1 + \frac{1}{\beta}\bigg) (c_{\nu}\alpha_{t-1}^2)^2G^2 + 8M\bigg( 1 + \frac{1}{\beta}\bigg) (c_{\nu}\alpha_{t-1}^2)^2\zeta^2 \\
    & \qquad \qquad \qquad +  64\bigg( 1 + \frac{1}{\beta}\bigg) (c_{\nu}\alpha_{t-1}^2)^2L_h^2(1 + \rho^2) \sum_{\ell = \bar{t}_{s-1}}^{t-1} I\eta^2\alpha_l^2 \sum_{m = 1}^M \Big\| \big(  \nu_\ell^{(m)} -  \bar{\nu}_\ell  \big) \Big\|^2  \nonumber\\
    & \overset{(b)}{\leq} \bigg(1 + \frac{1}{I}\bigg) \sum_{m=1}^M \mathbb{E} \|  \nu_{t-1}^{(m)}  - \bar{\nu}_{t-1} \|^2  + 4 I L_h^2\alpha_{t-1}^2 \sum_{m=1}^M \mathbb{E}\bigg[ \| \eta\nu^{(m)}_{t-1} \|^2 + \| \gamma \omega^{(m)}_{t-1} \|^2 \bigg] \nonumber \\
    & \qquad + 8I M (c_{\nu}\alpha_{t-1}^2)^2\sigma^2 + 128I M (c_{\nu}\alpha_{t-1}^2)^2G^2 + 16I M (c_{\nu}\alpha_{t-1}^2)^2\zeta^2 \nonumber\\ & \qquad \qquad \qquad + 128I^2\eta^2 (c_{\nu}\alpha_{t-1}^2)^2L_h^2(1 + \rho^2) \sum_{\ell = \bar{t}_{s-1}}^{t-1} \alpha_l^2 \sum_{m = 1}^M \Big\| \big(  \nu_\ell^{(m)} -  \bar{\nu}_\ell  \big) \Big\|^2  \nonumber\\
    & \overset{(c)}{\leq} \bigg(1 + \frac{1}{I}\bigg) \sum_{m=1}^M \mathbb{E} \|  \nu_{t-1}^{(m)}  - \bar{\nu}_{t-1} \|^2  + 4 I L_h^2\alpha_{t-1}^2 \sum_{m=1}^M \mathbb{E}\bigg[ 2\| \eta\nu^{(m)}_{t-1} - \eta\bar{\nu}_{t-1}\|^2 + 2\| \eta\bar{\nu}_{t-1} \|^2 + \| \gamma \omega^{(m)}_{t-1} \|^2 \bigg] \nonumber \\
    & \qquad + 8I M (c_{\nu}\alpha_{t-1}^2)^2\sigma^2 + 128I M (c_{\nu}\alpha_{t-1}^2)^2G^2 + 16I M (c_{\nu}\alpha_{t-1}^2)^2\zeta^2\nonumber \\ &  \qquad \qquad \qquad + 128I^2\eta^2 (c_{\nu}\alpha_{t-1}^2)^2L_h^2(1 + \rho^2) \sum_{\ell = \bar{t}_{s-1}}^{t-1} \alpha_l^2 \sum_{m = 1}^M \Big\| \big(  \nu_\ell^{(m)} -  \bar{\nu}_\ell  \big) \Big\|^2 \nonumber\\
    & \leq \bigg(1 + \frac{1}{I} + 8 I L_h^2\eta^2\alpha_{t-1}^2\bigg) \sum_{m=1}^M \mathbb{E} \|  \nu_{t-1}^{(m)}  - \bar{\nu}_{t-1} \|^2  + 4 I L_h^2\alpha_{t-1}^2 \sum_{m=1}^M \mathbb{E}\bigg[2\| \eta\bar{\nu}_{t-1} \|^2 + \| \gamma \omega^{(m)}_{t-1} \|^2 \bigg] \nonumber \\
    & \qquad + 8I M (c_{\nu}\alpha_{t-1}^2)^2\sigma^2 + 128I M (c_{\nu}\alpha_{t-1}^2)^2G^2 + 16I M (c_{\nu}\alpha_{t-1}^2)^2\zeta^2 \nonumber \\ & \qquad \qquad \qquad + 128I^2\eta^2 (c_{\nu}\alpha_{t-1}^2)^2L_h^2(1 + \rho^2) \sum_{\ell = \bar{t}_{s-1}}^{t-1} \alpha_l^2 \sum_{m = 1}^M \Big\| \big(  \nu_\ell^{(m)} -  \bar{\nu}_\ell  \big) \Big\|^2  \nonumber\\
    & \overset{(d)}{\leq} \bigg(1 + \frac{33}{32I}\bigg) \sum_{m=1}^M \mathbb{E} \|  \nu_{t-1}^{(m)}  - \bar{\nu}_{t-1} \|^2  + 4 I L_h^2\alpha_{t-1}^2 \sum_{m=1}^M \mathbb{E}\bigg[2\| \eta\bar{\nu}_{t-1} \|^2 + \| \gamma \omega^{(m)}_{t-1} \|^2 \bigg] \nonumber \\
    & \qquad +  \frac{I M (c_{\nu}\alpha_{t-1}^2)^2\sigma^2}{L_h} + \frac{8I M (c_{\nu}\alpha_{t-1}^2)^2G^2}{2L_h} +  \frac{M c_{\nu}^2\alpha_{t-1}^3\zeta^2}{L_h} + \frac{\eta^2 c_{\nu}^2\alpha_{t-1}^2(1 + \rho^2)}{2} \sum_{\ell = \bar{t}_{s-1}}^{t-1} \alpha_l^2 \sum_{m = 1}^M \Big\| \big(  \nu_\ell^{(m)} -  \bar{\nu}_\ell  \big) \Big\|^2  \nonumber\\
\end{align*}
where inequality (a) follows Eq.~\eqref{Eq: ConsensusError_FedAvg_storm}; in inequality $(b)$, we set $\beta = 1/I$ and use $I\ge 1$; Inequality $(c)$ uses the generalized triangle inequality; Inequality $(d)$, we use $\alpha_t < \frac{1}{16L_h I}$ and $\eta < 1$
Therefore, the lemma is proved. 
\end{proof}

\begin{lemma}
\label{lemma:d_bound}
For $\alpha_t < \frac{1}{16L_hI}$, we have:
\begin{align*}
    \bigg(1 - \frac{3\eta^2 c_{\nu}^2(1 + \rho^2)}{16^3*32IL_h^4}\bigg)\sum_{t = \bar{t}_{s-1}}^{\bar{t}_s-1} \alpha_{t} D_{t} \leq \frac{3\eta^2}{32}\sum_{t=\bar{t}_{s-1}}^{\bar{t}_s-1}\alpha_{t}E_{t} + \frac{3\gamma^2}{64}\sum_{t=\bar{t}_{s-1}}^{\bar{t}_s-1}\alpha_{t}F_{t} + \bigg(\frac{27c_{\nu}^2\sigma^2}{16L_h^2} +  \frac{ 3c_{\nu}^2\zeta^2}{16L_h^2}\bigg)\sum_{t=\bar{t}_{s-1}}^{\bar{t}_s-1}\alpha_{t}^3 \\
\end{align*}
where the terms $D_t$, $E_t$ and $F_t$ are denoted below.
\end{lemma}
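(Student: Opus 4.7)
The terms $D_t$, $E_t$, $F_t$ are not defined in the excerpt, but given the context and the preceding Lemma~\ref{lem: ErrorAccumulation_Iterates_FedAvg_storm}, they must be $D_t = \sum_{m=1}^M \mathbb{E}\|\nu_t^{(m)} - \bar{\nu}_t\|^2$ (outer-variable consensus error), $E_t = \sum_{m=1}^M \mathbb{E}\|\bar{\nu}_t\|^2$, and $F_t = \sum_{m=1}^M \mathbb{E}\|\omega_t^{(m)}\|^2$. The goal of the lemma is to turn the per-step recursion from Lemma~\ref{lem: ErrorAccumulation_Iterates_FedAvg_storm} into a summed bound on $\sum_t \alpha_t D_t$ over one communication interval $[\bar{t}_{s-1}, \bar{t}_s - 1]$, with only $E_t$, $F_t$, and additive noise/heterogeneity terms surviving on the right.

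The plan is to start from the recursion of Lemma~\ref{lem: ErrorAccumulation_Iterates_FedAvg_storm}, which has the schematic form $D_t \le (1 + \tfrac{33}{32I}) D_{t-1} + 4IL_h^2\alpha_{t-1}^2(2\eta^2 E_{t-1} + \gamma^2 F_{t-1}) + R_{t-1} + \tfrac{\eta^2 c_\nu^2 \alpha_{t-1}^2(1+\rho^2)}{2}\sum_{\ell=\bar{t}_{s-1}}^{t-1}\alpha_\ell^2 D_\ell$, where $R_{t-1}$ collects the $\sigma^2$, $G^2$ and $\zeta^2$ contributions. I would first iterate the recursion from $t=\bar{t}_{s-1}$ to the given $t$, using the crucial boundary condition $D_{\bar{t}_{s-1}} = 0$ which holds because line~17 of Algorithm~\ref{alg:FedBiOAcc} averages the $\hat{\nu}^{(m)}$ states at every communication round. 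The geometric accumulation factor over at most $I$ steps satisfies $(1 + \tfrac{33}{32I})^{I} \le e^{33/32}$, a bounded constant; this absorbs cleanly into the prefactors.

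Next I would multiply the unrolled bound by $\alpha_t$ and sum over $t \in [\bar{t}_{s-1}, \bar{t}_s - 1]$. After swapping the order of summation on the weighted $E$-, $F$-, and residual terms (each of which is summed with weights $\alpha_t \alpha_{t-1}^2$ or $\alpha_t \alpha_{t-1}^4$), I would invoke $\alpha_t \le \tfrac{1}{16L_h I}$ to convert a factor $\alpha_t$ into $\tfrac{1}{16L_hI}$ so that the desired prefactors $\tfrac{3\eta^2}{32}$, $\tfrac{3\gamma^2}{64}$, and the $\sigma^2,\zeta^2$ coefficients emerge after multiplying by the $4IL_h^2$ and $e^{33/32}$ constants. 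The self-coupling term $\sum_t \alpha_t \alpha_{t-1}^2 \sum_\ell \alpha_\ell^2 D_\ell$ requires swapping the inner and outer sums and again using $\alpha_t^2 \le \alpha_t/(16L_hI)$ repeatedly; this produces a term of the form $\tfrac{3\eta^2 c_\nu^2(1+\rho^2)}{16^3 \cdot 32 I L_h^4}\sum_t \alpha_t D_t$, which I move to the left-hand side to obtain the claimed $(1 - \cdots)$ factor.

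The main obstacle will be bookkeeping the double-sum coupling cleanly: the inequality on the right contains $\alpha_{t-1}^2 \sum_\ell \alpha_\ell^2 D_\ell$, so after outer summation one gets a triple sum whose correct bound requires carefully swapping sums and applying $\alpha_t \le 1/(16L_hI)$ twice (once to convert $\alpha^2\to\alpha$, once to absorb the extra $I$ factor introduced by the accumulation). Tracking that these combined reductions exactly yield the constant $\tfrac{3}{16^3 \cdot 32 I L_h^4}$ on the self-coupling coefficient, and the specific coefficients $\tfrac{3}{32}$, $\tfrac{3}{64}$, $\tfrac{27}{16}$, $\tfrac{3}{16}$ on the remaining terms, is the delicate part; the rest is routine algebra using the generalized triangle inequality and the smoothness already invoked in Lemma~\ref{lem: ErrorAccumulation_Iterates_FedAvg_storm}.
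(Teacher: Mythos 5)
Your proposal follows essentially the same route as the paper's proof: unroll the recursion from Lemma~\ref{lem: ErrorAccumulation_Iterates_FedAvg_storm} using $D_{\bar{t}_{s-1}}=0$ (from the averaging of $\hat{\nu}^{(m)}$ at communication rounds), bound the geometric factor $(1+\tfrac{33}{32I})^{t-\ell}$ by $e^{33/32}\le 3$, multiply by $\alpha_t$, sum over the interval, swap summation orders, repeatedly apply $\alpha_t < \tfrac{1}{16L_hI}$ to absorb the extra $\alpha$ and $I$ factors, and move the self-coupling term to the left-hand side. The only cosmetic discrepancy is that the paper normalizes $D_t$ and $F_t$ by $\tfrac{1}{M}$, which does not affect the argument.
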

\begin{proof}
To simplify the notation, we denote $A_t = \mathbb{E} \bigg[\Big\| \bar{\nu}_{t} - \frac{1}{M} \sum_{m=1}^M  \nabla h(x^{(m)}_t)  \Big\|^2\bigg]$, $B_t = \mathbb{E} \bigg[\frac{1}{M} \sum_{m=1}^M \bigg\|y^{(m)}_t - y^{(m)}_{x^{(m)}_{t}} \bigg\|^2 \bigg]$, $C_t = \mathbb{E} \bigg [\frac{1}{M} \sum_{m=1}^M \bigg\|\omega^{(m)}_t - \nabla_y g^{(m)}(x^{(m)}_{t}, y^{(m)}_{t} ) \bigg\|^2 \bigg]$, $D_t = \frac{1}{M}\sum_{m=1}^M\mathbb{E}\bigg[\bigg\|\nu^{(m)}_{t} - \bar{\nu}_{t}\bigg\|^2 \bigg]$, $E_t = \mathbb{E}\bigg[\bigg\|\bar{\nu}_{t}\bigg\|^2 \bigg]$, $F_t = \frac{1}{M}\sum_{m=1}^M\mathbb{E}\bigg[ \bigg\|\omega^{(m)}_{t}\bigg\|^2\bigg] $. Then we rewrite Lemma~\ref{lem: ErrorAccumulation_Iterates_FedAvg_storm} with our new notation as follows:
\begin{align*}
    D_t \nonumber
    & \leq \bigg(1 + \frac{33}{32I}\bigg) D_{t-1} + 8 I L_h^2\alpha_{t-1}^2\eta^2E_{t-1} + 4 I L_h^2\alpha_{t-1}^2\gamma^2F_{t-1} \nonumber \\
    & \qquad + \frac{c_{\nu}^2\alpha_{t-1}^3\sigma^2}{2L_h} + \frac{8c_{\nu}^2\alpha_{t-1}^3G^2}{L_h} + \frac{ c_{\nu}^2\alpha_{t-1}^3\zeta^2}{L_h} + \frac{\eta^2 c_{\nu}^2\alpha_{t-1}^2(1 + \rho^2)}{2}\sum_{\ell = \bar{t}_{s-1}}^{t-1} \alpha_l^2 D_l
\end{align*}
Next apply the above equation recursively from $\bar{t}_{s-1} + 1$ to $t$. Note that $D_{\bar{t}_{s-1}} = 1/M\sum_{m=1}^M \mathbb{E} \| \nu_{\bar{t}_{s-1}}^{(m)} - \bar{\nu}_{\bar{t}_{s-1}} \|^2 = 0$, so we have:
\begin{align*}
    D_t & \leq 8 I L_h^2\eta^2\sum_{\ell=\bar{t}_{s-1}}^{t-1}\bigg(1 + \frac{33}{32I}\bigg)^{t - \ell}\alpha_{\ell}^2E_{\ell} + 4 I L_h^2\gamma^2\sum_{\ell=\bar{t}_{s-1}}^{t-1}\bigg(1 + \frac{33}{32I}\bigg)^{t - \ell}\alpha_{\ell}^2F_{\ell} \nonumber \\
    & \qquad + \bigg(\frac{c_{\nu}^2\sigma^2}{2L_h} + \frac{8c_{\nu}^2G^2}{L_h} +  \frac{ c_{\nu}^2\zeta^2}{L_h}\bigg)\sum_{\ell=\bar{t}_{s-1}}^{t-1}\bigg(1 + \frac{33}{32I}\bigg)^{t - \ell}\alpha_{\ell}^3 + \frac{\eta^2 c_{\nu}^2(1 + \rho^2)}{2}\sum_{\ell=\bar{t}_{s-1}}^{t-1}\bigg(1 + \frac{33}{32I}\bigg)^{t - \ell}\alpha_{\ell}^2\sum_{\bar{\ell} = \bar{t}_{s-1}}^{\ell} \alpha_{\bar{\ell}}^2 D_{\bar{\ell}} \nonumber \\
    & \leq  24I L_h^2\eta^2\sum_{\ell=\bar{t}_{s-1}}^{t-1}\alpha_{\ell}^2E_{\ell} + 12 I L_h^2\gamma^2\sum_{\ell=\bar{t}_{s-1}}^{t-1}\alpha_{\ell}^2F_{\ell}  + \bigg(\frac{3c_{\nu}^2\sigma^2}{2L_h} + \frac{24c_{\nu}^2G^2}{L_h} +  \frac{ 3c_{\nu}^2\zeta^2}{L_h}\bigg)\sum_{\ell=\bar{t}_{s-1}}^{t-1}\alpha_{\ell}^3 \nonumber \\ & \qquad \qquad + \frac{3\eta^2 c_{\nu}^2(1 + \rho^2)}{2}\sum_{\ell=\bar{t}_{s-1}}^{t-1}\alpha_{\ell}^2\sum_{\bar{\ell} = \bar{t}_{s-1}}^{\ell} \alpha_{\bar{\ell}}^2 D_{\bar{\ell}} \nonumber \\
\end{align*}
The second inequality uses the fact that $t -l \le I$ and the inequality $log(1+ a/x) \leq a/x$ for $x > -a$, so we have $(1+a/x)^x \leq e^{a/x}$, Then we choose $a = 33/32$ and $x = I$. Finally, we use the fact that $e^{33/(32I)} \leq e^{33/32} \leq 3$. 

Next we multiply $\alpha_t$ over both sides and take sum from $\bar{t}_{s-1} + 1$ to $\bar{t}_{s}$, we have:
\begin{align*}
    \sum_{t=\bar{t}_{s-1}+1}^{\bar{t}_s} \alpha_tD_t  & \leq
    24I L_h^2\eta^2\sum_{t=\bar{t}_{s-1}}^{\bar{t}_s - 1} \alpha_t\sum_{\ell=\bar{t}_{s-1}}^{t-1}\alpha_{\ell}^2E_{\ell} + 12 I L_h^2\gamma^2\sum_{t=\bar{t}_{s-1}}^{\bar{t}_s - 1} \alpha_t\sum_{\ell=\bar{t}_{s-1}}^{t-1}\alpha_{\ell}^2F_{\ell} \noindent \\
    & \qquad + \bigg(\frac{3c_{\nu}^2\sigma^2}{2L_h} + \frac{24c_{\nu}^2G^2}{L_h} +  \frac{ 3c_{\nu}^2\zeta^2}{L_h}\bigg)\sum_{t=\bar{t}_{s-1}}^{\bar{t}_s - 1} \alpha_t\sum_{\ell=\bar{t}_{s-1}}^{t-1}\alpha_{\ell}^3 + \frac{3\eta^2 c_{\nu}^2(1 + \rho^2)}{2}\sum_{t=\bar{t}_{s-1}}^{\bar{t}_s - 1} \alpha_t\sum_{\ell=\bar{t}_{s-1}}^{t-1}\alpha_{\ell}^2\sum_{\bar{\ell} = \bar{t}_{s-1}}^{\ell} \alpha_{\bar{\ell}}^2 D_{\bar{\ell}} \nonumber \\
    & \leq 24I L_h^2\eta^2\bigg(\sum_{t=\bar{t}_{s-1}}^{\bar{t}_s - 1} \alpha_t\bigg)\sum_{t=\bar{t}_{s-1}}^{\bar{t}_s-1}\alpha_{t}^2E_{t} + 12 I L_h^2\gamma^2\bigg(\sum_{t=\bar{t}_{s-1}}^{\bar{t}_s - 1} \alpha_t\bigg)\sum_{t=\bar{t}_{s-1}}^{\bar{t}_s-1}\alpha_{t}^2F_{t} \noindent \\
    & \qquad + \bigg(\frac{3c_{\nu}^2\sigma^2}{2L_h} + \frac{24c_{\nu}^2G^2}{L_h} +  \frac{ 3c_{\nu}^2\zeta^2}{L_h}\bigg)\bigg(\sum_{t=\bar{t}_{s-1}}^{\bar{t}_s - 1} \alpha_t\bigg)\sum_{t=\bar{t}_{s-1}}^{\bar{t}_s-1}\alpha_{t}^3 + \frac{3\eta^2 c_{\nu}^2(1 + \rho^2)}{2}\bigg(\sum_{t=\bar{t}_{s-1}}^{\bar{t}_s - 1} \alpha_t\bigg)\sum_{t=\bar{t}_{s-1}}^{\bar{t}_s-1}\alpha_{t}^2\sum_{\bar{\ell} = \bar{t}_{s-1}}^{t} \alpha_{\bar{\ell}}^2 D_{\bar{\ell}} \nonumber \\
    & \overset{(a)}{\leq} \frac{3I L_h\eta^2}{2}\sum_{t=\bar{t}_{s-1}}^{\bar{t}_s-1}\alpha_{t}^2E_{t} + \frac{3 I L_h\gamma^2}{4}\sum_{t=\bar{t}_{s-1}}^{\bar{t}_s-1}\alpha_{t}^2F_{t} \noindent \\
    & \qquad + \bigg(\frac{3c_{\nu}^2\sigma^2}{32L_h} + \frac{3c_{\nu}^2G^2}{2L_h} +  \frac{ 3c_{\nu}^2\zeta^2}{16L_h}\bigg)\sum_{t=\bar{t}_{s-1}}^{\bar{t}_s-1}\alpha_{t}^3 + \frac{3\eta^2 c_{\nu}^2(1 + \rho^2)}{32L_h}\sum_{t=\bar{t}_{s-1}}^{\bar{t}_s-1}\alpha_{t}^2\sum_{\bar{\ell} = \bar{t}_{s-1}}^{t} \alpha_{\bar{\ell}}^2 D_{\bar{\ell}} \nonumber \\
    & \overset{(b)}{\leq} \frac{3\eta^2}{32}\sum_{t=\bar{t}_{s-1}}^{\bar{t}_s-1}\alpha_{t}E_{t} + \frac{3\gamma^2}{64}\sum_{t=\bar{t}_{s-1}}^{\bar{t}_s-1}\alpha_{t}F_{t} + \bigg(\frac{3c_{\nu}^2\sigma^2}{32L_h} + \frac{3c_{\nu}^2G^2}{2L_h} +  \frac{ 3c_{\nu}^2\zeta^2}{16L_h}\bigg)\sum_{t=\bar{t}_{s-1}}^{\bar{t}_s-1}\alpha_{t}^3 + \frac{3\eta^2 c_{\nu}^2(1 + \rho^2)}{16^3*32I^2L_h^4}\sum_{\bar{\ell} = \bar{t}_{s-1}}^{\bar{t}_s-1} \alpha_{\bar{\ell}} D_{\bar{\ell}} \nonumber \\
\end{align*}
In inequalities $(a)$ and $(b)$, we use $\alpha_t < \frac{1}{16L_hI}$ multiple times. next notice that $\sum_{t=\bar{t}_{s-1}+1}^{\bar{t}_s} \alpha_tD_t = \sum_{t = \bar{t}_{s-1}}^{\bar{t}_s-1} \alpha_{t} D_{t}$ as $D_{\bar{t}_s} = D_{\bar{t}_{s-1}} =0$, so we have:
\begin{align*}
    \bigg(1 - \frac{3\eta^2 c_{\nu}^2(1 + \rho^2)}{16^3*32IL_h^4}\bigg)\sum_{t = \bar{t}_{s-1}}^{\bar{t}_s-1} \alpha_{t} D_{t} \leq \frac{3\eta^2}{32}\sum_{t=\bar{t}_{s-1}}^{\bar{t}_s-1}\alpha_{t}E_{t} + \frac{3\gamma^2}{64}\sum_{t=\bar{t}_{s-1}}^{\bar{t}_s-1}\alpha_{t}F_{t} + \bigg(\frac{3c_{\nu}^2\sigma^2}{32L_h} + \frac{3c_{\nu}^2G^2}{2L_h} +  \frac{ 3c_{\nu}^2\zeta^2}{16L_h}\bigg)\sum_{t=\bar{t}_{s-1}}^{\bar{t}_s-1}\alpha_{t}^3 \\
\end{align*}
\end{proof}
\subsection{Descent Lemma}
\begin{lemma}[Descent Lemma]
\label{lemma:desent_storm}
For all $t \in [\bar{t}_{s-1}, \bar{t}_s - 1]$ and $s \in [S]$, the iterates generated satisfy:
\begin{align*}
\mathbb{E}\bigg[  h(\bar{x}_{t + 1}) \bigg] & \leq \mathbb{E} \bigg[    h(\bar{x}_{t }) \bigg]-  \left( \frac{\eta\alpha_t}{2} -  \frac{\eta^2\alpha_t^2 L}{2}  \right)  \mathbb{E} \bigg[\Big\| \bar{\nu}_t  \Big\|^2 \bigg] - \frac{\eta\alpha_t}{2} \mathbb{E} \bigg[\|\nabla h(\bar{x}_t) \|^2 \bigg] \\
& \qquad + \frac{L_h^2I\eta^3\alpha_t}{M} \sum_{\ell = \bar{t}_{s-1}}^{t-1}  \alpha_l^2\sum_{m = 1}^M \Big\| \big(  \nu_\ell^{(m)} -  \bar{\nu}_\ell  \big) \Big\|^2 + \eta\alpha_t\mathbb{E} \bigg[ \Big\| \frac{1}{M} \sum_{m=1}^M  \nabla h(x^{(m)}_t)  - \bar{\nu}_{t}   \Big\|^2 \bigg]
\end{align*}
where the expectation is w.r.t the stochasticity of the algorithm.
\end{lemma}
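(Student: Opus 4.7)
My plan is to follow the classical smoothness-based descent argument, adapted to account for the outer-variable averaging and the scaled momentum update. First I invoke the $L_h$-smoothness of $h$ (Proposition~\ref{some smoothness}(c)) applied to the virtual averaged iterate $\bar{x}_t$. Combined with the update $\bar{x}_{t+1} = \bar{x}_t - \eta\alpha_t \bar{\nu}_t$ (derivable from lines 15--17 of Algorithm~\ref{alg:FedBiOAcc} by averaging), this gives
\begin{align*}
    h(\bar{x}_{t+1}) \leq h(\bar{x}_t) - \eta\alpha_t \langle \nabla h(\bar{x}_t), \bar{\nu}_t \rangle + \tfrac{\eta^2\alpha_t^2 L_h}{2} \|\bar{\nu}_t\|^2.
\end{align*}
I then apply the polarization identity $-\langle a,b\rangle = \tfrac{1}{2}(\|a-b\|^2 - \|a\|^2 - \|b\|^2)$ to split the inner product. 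Collecting terms produces the three leading quantities of the desired bound, namely $-\tfrac{\eta\alpha_t}{2}\|\nabla h(\bar{x}_t)\|^2$, $-(\tfrac{\eta\alpha_t}{2} - \tfrac{\eta^2\alpha_t^2 L_h}{2})\|\bar{\nu}_t\|^2$, plus a residual gradient-bias term $\tfrac{\eta\alpha_t}{2}\|\nabla h(\bar{x}_t) - \bar{\nu}_t\|^2$ that must still be controlled.

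The remaining work is to bound this residual in a way that exposes the two sources of error present in the momentum estimate: the client consensus drift and the hyper-gradient bias relative to the average. Using the plain triangle-squared inequality with factor two, I split
\begin{align*}
    \tfrac{\eta\alpha_t}{2}\|\nabla h(\bar{x}_t) - \bar{\nu}_t\|^2 \leq \eta\alpha_t \Big\|\nabla h(\bar{x}_t) - \tfrac{1}{M}\sum_{m=1}^M \nabla h(x^{(m)}_t)\Big\|^2 + \eta\alpha_t \Big\|\tfrac{1}{M}\sum_{m=1}^M \nabla h(x^{(m)}_t) - \bar{\nu}_t\Big\|^2.
\end{align*}
The second summand is precisely the last term on the right-hand side of the statement (after passing to expectation). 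For the first summand I apply Jensen's inequality followed by $L_h$-Lipschitzness of $\nabla h$ to obtain a bound by $\tfrac{L_h^2}{M}\sum_m \|\bar{x}_t - x^{(m)}_t\|^2$. I then invoke the identity already established in Eq.~\eqref{Eq: ConsensusError_FedAvg_storm}, which shows that at any $t \in [\bar{t}_{s-1}+1, \bar{t}_s]$ the consensus error equals $\sum_{\ell=\bar{t}_{s-1}}^{t-1}\tfrac{I\eta^2\alpha_\ell^2}{M}\sum_m \|\nu_\ell^{(m)} - \bar{\nu}_\ell\|^2$. Multiplying through by $\eta\alpha_t L_h^2$ produces the accumulated drift term $\tfrac{L_h^2 I\eta^3\alpha_t}{M}\sum_\ell \alpha_\ell^2 \sum_m \|\nu_\ell^{(m)} - \bar{\nu}_\ell\|^2$ in the claim. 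Taking expectations throughout and assembling the pieces yields the lemma.

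The argument is largely mechanical; the main care-point is the client-drift substitution. The key observation enabling the clean telescoped form is that at a communication step $\ell = \bar{t}_{s-1}$ one has $x^{(m)}_{\bar{t}_{s-1}} = \bar{x}_{\bar{t}_{s-1}}$, so the offset $x^{(m)}_{\bar{t}_{s-1}} - \bar{x}_{\bar{t}_{s-1}}$ vanishes and only the running sum of local-gradient mismatches survives; otherwise an additional cross-term would pollute the identity and spoil the matching of constants. Everything else is routine book-keeping of the factors of $\tfrac{1}{2}$, $\eta$, and $\alpha_t$.
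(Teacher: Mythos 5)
Your proposal is correct and follows essentially the same route as the paper's own proof: smoothness plus the averaged update, the polarization identity to produce the $-\tfrac{\eta\alpha_t}{2}\|\nabla h(\bar{x}_t)\|^2$ and $\|\bar{\nu}_t\|^2$ terms, a factor-two split of the residual into the averaged-gradient bias and the consensus drift, and the substitution of Eq.~\eqref{Eq: ConsensusError_FedAvg_storm} for the latter. The constants work out identically, so there is nothing to add.
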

\begin{proof}
Using the smoothness of $h(x)$ we have:
\begin{align*}
    \mathbb{E}[  h(\bar{x}_{t + 1}) ] 
    & \leq \mathbb{E} \Big[ h(\bar{x}_{t }) + \langle \nabla h(\bar{x}_{t}),  \bar{x}_{t + 1} - \bar{x}_{t}\rangle + \frac{L_h}{2} \| \bar{x}_{t + 1} - \bar{x}_{t } \|^2 \Big] \nonumber\\
    &  \overset{(a)}{=}\mathbb{E} \Big[ h(\bar{x}_{t}) - \eta\alpha_t \langle \nabla h(\bar{x}_{t}),  \bar{\nu}_t \rangle + \frac{\eta^2\alpha_t^2 L_h}{2} \| \bar{\nu}_{t}  \|^2 \Big] \nonumber\\
    & \overset{(b)}{=}    \mathbb{E} \bigg[ h(\bar{x}_{t}) - \frac{\eta\alpha_t}{2}  \Big\| \bar{\nu}_{t}  \Big\|^2  - \frac{\eta\alpha_t}{2} \| \nabla h(\bar{x}_{t}) \|^2   + \frac{\eta\alpha_t}{2} \Big\|  \nabla h(\bar{x}_{t})  - \bar{\nu}_{t}   \Big\|^2 + \frac{\eta\alpha_t^2 L_h}{2} \Big\| \bar{\nu}_{t} \Big\|^2  \bigg] \nonumber \\  
    & = \mathbb{E} \bigg[    h(\bar{x}_{t }) -  \left( \frac{\eta\alpha_t}{2} -  \frac{\eta^2\alpha_t^2 L_h}{2}  \right)  \Big\| \bar{\nu}_t  \Big\|^2 - \frac{\eta\alpha_t}{2} \|\nabla h(\bar{x}_t) \|^2  + \frac{\eta\alpha_t}{2} \Big\|  \nabla h(\bar{x}_{t})  - \bar{\nu}_{t}   \Big\|^2 \bigg] \nonumber \\
\end{align*}
where equality $(a)$ follows from the iterate update given in Line 15 of Algorithm~\ref{alg:FedBiOAcc}; $(b)$ uses $\langle a , b \rangle = \frac{1}{2} [\|a\|^2 + \|b\|^2 - \|a - b \|^2]$; For the last term, we have:
\begin{align*}
    \mathbb{E} \bigg[ \Big\|  \nabla h(\bar{x}_{t})  - \bar{\nu}_{t}   \Big\|^2  \bigg] & \overset{(a)}{\leq} 2\mathbb{E} \bigg[ \Big\|  \nabla h(\bar{x}_{t})  - \frac{1}{M} \sum_{m=1}^M \nabla h(x^{(m)}_t)   \Big\|^2 \bigg] + 2\mathbb{E} \bigg[ \Big\| \frac{1}{M} \sum_{m=1}^M  \nabla h(x^{(m)}_t)  - \bar{\nu}_{t}   \Big\|^2 \bigg] \nonumber \\
    & \overset{(b)}{\leq} \frac{2}{M} \sum_{m=1}^M\mathbb{E} \bigg[ \Big\|  \nabla h(\bar{x}_{t})  -  \nabla h(x^{(m)}_t)   \Big\|^2 \bigg] + 2\mathbb{E} \bigg[ \Big\| \frac{1}{M} \sum_{m=1}^M  \nabla h(x^{(m)}_t)  - \bar{\nu}_{t}   \Big\|^2 \bigg] \nonumber \\
    & \overset{(c)}{\leq} \frac{2L_h^2}{M} \sum_{m=1}^M\mathbb{E} \bigg[ \Big\| \bar{x}_{t}  -  x^{(m)}_t  \Big\|^2 \bigg] + 2\mathbb{E} \bigg[ \Big\| \frac{1}{M} \sum_{m=1}^M  \nabla h(x^{(m)}_t)  - \bar{\nu}_{t}   \Big\|^2 \bigg] \nonumber \\
    & \overset{(d)}{\leq} \frac{2L_h^2I\eta^2}{M} \sum_{\ell = \bar{t}_{s-1}}^{t-1}  \alpha_l^2\sum_{m = 1}^M \Big\| \big(  \nu_\ell^{(m)} -  \bar{\nu}_\ell  \big) \Big\|^2 + 2\mathbb{E} \bigg[ \Big\| \frac{1}{M} \sum_{m=1}^M  \nabla h(x^{(m)}_t)  - \bar{\nu}_{t}   \Big\|^2 \bigg]
\end{align*} 
where inequality (a) uses triangle inequality, (b) uses the generalized triangle inequality, (c) uses the smoothness of $h(x)$, (d) uses Eq.~\ref{Eq: ConsensusError_FedAvg_storm}. Combine the above two equations together, we get:
\begin{align*}
\mathbb{E}\bigg[  h(\bar{x}_{t + 1}) \bigg] & \leq \mathbb{E} \bigg[    h(\bar{x}_{t }) \bigg]-  \left( \frac{\eta\alpha_t}{2} -  \frac{\eta^2\alpha_t^2 L_h}{2}  \right)  \mathbb{E} \bigg[\Big\| \bar{\nu}_t  \Big\|^2 \bigg] - \frac{\eta\alpha_t}{2} \mathbb{E} \bigg[\|\nabla h(\bar{x}_t) \|^2 \bigg] \\
& \qquad + \frac{L_h^2I\eta^3\alpha_t}{M} \sum_{\ell = \bar{t}_{s-1}}^{t-1}  \alpha_l^2\sum_{m = 1}^M \Big\| \big(  \nu_\ell^{(m)} -  \bar{\nu}_\ell  \big) \Big\|^2 + \eta\alpha_t\mathbb{E} \bigg[ \Big\| \frac{1}{M} \sum_{m=1}^M  \nabla h(x^{(m)}_t)  - \bar{\nu}_{t}   \Big\|^2 \bigg]
\end{align*}
Hence, the lemma is proved.
\end{proof}
\subsection{Descent in Potential Function}
We first denote the following potential function $\mathcal{G}(t)$:
\begin{align*}
    \mathcal{G}_t &= h(\bar{x}_{t}) + \frac{\eta}{320L_h^2\alpha_{t}}\Big\| \bar{\nu}_{t} - \frac{1}{M} \sum_{m=1}^M  \nabla h(x^{(m)}_t)  \Big\|^2 + \frac{1}{M} \sum_{m=1}^M \bigg\|y^{(m)}_t - y^{(m)}_{x^{(m)}_{t}} \bigg\|^2 \nonumber \\
    & \qquad \qquad + \frac{\gamma}{32L^2\alpha_{t}}\times\frac{1}{M} \sum_{m=1}^M \bigg\|\omega^{(m)}_t - \nabla_y g^{(m)}(x^{(m)}_{t}, y^{(m)}_{t} ) \bigg\|^2
\end{align*}
\begin{lemma}
Suppose $\frac{1}{\gamma} > max(15L, 1)$, $\frac{1}{\eta} > max(\frac{600L^3}{\mu^3}  + \frac{1}{15L} + 1, \frac{12(1 + \rho^2)}{I^2} + \frac{97}{256} + \frac{1}{120L} + \frac{16I^2}{\mu}, \frac{960L_h^2}{\mu\gamma}, \frac{1}{\gamma}, \frac{40L_h^2}{\mu^3\gamma})$, $c_{\nu} = 320L_h^2 + \frac{\sigma^2 }{24 \delta^3 L_hI}$, $c_{\omega} = \frac{160L^2}{\mu} + \frac{\sigma^2}{24 \delta^3 L_hI}$, $u = max(2\sigma^2, \delta^3, c_{\nu}^{3/2}\delta^3, c_{\omega}^{3/2}\delta^3, 16^3I^3)$, $\delta = \frac{\sigma^{2/3}}{16IL_h}$ then we have:
\begin{align*}
    \mathbb{E}[\mathcal{G}_{\bar{t}_s}] - \mathbb{E}[\mathcal{G}_{\bar{t}_{s - 1}}] 
    & \leq - \sum_{t = \bar{t}_{s-1}}^{\bar{t}_s-1}\frac{\eta\alpha_t}{2} \mathbb{E} \bigg[ \|\nabla h(\bar{x}_t) \|^2 \bigg] + \bigg(\frac{c_{\omega}^2\sigma^2}{16L^2} +  \frac{9c_{\nu}^2\sigma^2}{L_h^2} + \frac{ 3c_{\nu}^2\zeta^2}{16L_h^2}\bigg) \sum_{t=\bar{t}_{s-1}}^{\bar{t}_s-1}\alpha_{t}^3\nonumber\\
\end{align*}
where the expectation is w.r.t the stochasticity of the algorithm.
\label{lemma:potential_descent}
\end{lemma}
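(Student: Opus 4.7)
My plan is to telescope $\mathcal{G}_t$ over one communication round, i.e., to write
\[
\mathbb{E}[\mathcal{G}_{\bar t_s}] - \mathbb{E}[\mathcal{G}_{\bar t_{s-1}}] \;=\; \sum_{t=\bar t_{s-1}}^{\bar t_s-1}\bigl(\mathbb{E}[\mathcal{G}_{t+1}]-\mathbb{E}[\mathcal{G}_t]\bigr),
\]
and to control each of the four pieces of $\mathcal{G}_t$ using Lemmas \ref{lemma:desent_storm}, \ref{lemma:hg_bound_storm}, \ref{lemma: inner_drift_storm}, and \ref{lemma: inner_est_error_storm} respectively. Using the shorthand $A_t,B_t,C_t,D_t,E_t,F_t$ introduced in the proof of Lemma~\ref{lemma:d_bound}, each of these four lemmas produces a recursive inequality of the form ``new value $\le$ (contraction)$\times$old value $+$ (stochastic noise) $+$ (drift / momentum error) terms'', with leading contractions $1-2c_\nu\alpha_{t-1}^2$ on $A_t$, $1-\tfrac{\mu\gamma\alpha_{t-1}}{4}$ on $B_t$, and $1-2c_\omega\alpha_{t-1}^2$ on $C_t$.

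Next I would assemble the pieces: summing the descent inequality from Lemma~\ref{lemma:desent_storm} yields the principal negative term $-\tfrac{\eta\alpha_t}{2}\|\nabla h(\bar x_t)\|^2$ and an additional $-(\tfrac{\eta\alpha_t}{2}-\tfrac{\eta^2\alpha_t^2 L_h}{2})E_t$ term, plus a contribution in $A_t$ and a term proportional to $\sum_\ell \alpha_\ell^2 D_\ell$. The weighted $A_t$-, $B_t$-, and $C_t$-pieces contribute negative ``contraction'' terms $-\tfrac{\eta c_\nu\alpha_{t-1}}{160L_h^2}A_{t-1}$, $-\tfrac{\mu\gamma\alpha_{t-1}}{4}B_{t-1}$, and $-\tfrac{\gamma c_\omega\alpha_{t-1}}{16L^2}C_{t-1}$, together with positive multiples of $E_{t-1}$, $F_{t-1}$, and $D_{t-1}$ coming from the $\|\eta\alpha\nu^{(m)}\|^2$ and $\|\gamma\alpha\omega^{(m)}\|^2$ noise-transfer terms (with $\nu^{(m)}_{t-1}$ split as $\bar\nu_{t-1}+(\nu^{(m)}_{t-1}-\bar\nu_{t-1})$ to produce $E_{t-1}$ and $D_{t-1}$). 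Because $\alpha_t$ is monotonically decreasing, one has $\tfrac{1}{\alpha_t}-\tfrac{1}{\alpha_{t-1}}\ge 0$, which strengthens rather than weakens the contractive step when merged with the weight. The boundary terms at $t=\bar t_s$ involving $\hat x^{(m)}_{\bar t_s}-\bar x_{\bar t_s}$ from Lemma~\ref{lemma: inner_drift_storm} can be absorbed via Eq.~\eqref{Eq: ConsensusError_FedAvg_storm} into the $D_t$ series, at the cost of the factor $(1+\tfrac{8}{\mu\gamma\alpha_{\bar t_s-1}})\rho^2$.

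The $D_t$ and $\sum_\ell \alpha_\ell^2 D_\ell$ terms that survive after combining the four descent inequalities are then handled by invoking Lemma~\ref{lemma:d_bound}, which re-expresses $\sum_t\alpha_t D_t$ in terms of small multiples of $\sum\alpha_t E_t$, $\sum\alpha_t F_t$, and a pure noise tail $\sum\alpha_t^3$. Substituting this back, all surviving $E_t$ terms should cancel against $-(\tfrac{\eta\alpha_t}{2}-\tfrac{\eta^2\alpha_t^2L_h}{2})E_t$ from Lemma~\ref{lemma:desent_storm} and against $-\tfrac{\eta c_\nu\alpha_{t-1}}{160L_h^2}A_{t-1}$ coefficients (after using $\eta\alpha_t\ll 1$), and the surviving $F_t$ terms cancel against $-\tfrac{3\gamma^2\alpha_{t-1}}{4}F_{t-1}$ coming from Lemma~\ref{lemma: inner_drift_storm}. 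The hyperparameter choices stated in the lemma (in particular $c_\nu\ge 320L_h^2+O(\sigma^2/(\delta^3 L_h I))$, $c_\omega\ge 160L^2/\mu+O(\sigma^2/(\delta^3 L_h I))$, $\tfrac{1}{\gamma}\gg L$, $\tfrac{1}{\eta}\gg L_h^2/(\mu\gamma)$, and $\delta=\sigma^{2/3}/(16IL_h)$) are exactly the thresholds at which each of these cancellations becomes non-positive, leaving only $-\tfrac{\eta\alpha_t}{2}\|\nabla h(\bar x_t)\|^2$ and the $O(\alpha_t^3)$ stochastic residual on the right-hand side.

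The hard part will be the coefficient bookkeeping in the last step: one must verify, under a \emph{single} choice of $(c_\nu,c_\omega,\eta,\gamma,\delta,u)$, that (i) the contraction rate in each of $A_t,B_t,C_t$ dominates the noise-transfer that the other three descent inequalities inject into that quantity, (ii) the $-E_t$ and $-F_t$ slacks from $h(\bar x_t)$ and $B_t$ are sufficient to also absorb the $\tfrac{3\eta^2}{32}\sum\alpha_t E_t$ and $\tfrac{3\gamma^2}{64}\sum\alpha_t F_t$ produced by Lemma~\ref{lemma:d_bound}, and (iii) the $(1+\tfrac{8}{\mu\gamma\alpha_{\bar t_s-1}})\rho^2$ boundary blow-up on the consensus error $\sum\|\hat x^{(m)}-\bar x\|^2$ remains compatible with the $\tfrac{1}{16IL_h}$-type cap on $\alpha_t$. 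Once these three checks pass, collecting the remaining $O(\alpha_t^3)$ noise terms yields exactly the stated RHS with the coefficient $\bigl(\tfrac{c_\omega^2\sigma^2}{16L^2}+\tfrac{9c_\nu^2\sigma^2}{L_h^2}+\tfrac{3c_\nu^2\zeta^2}{16L_h^2}\bigr)$.
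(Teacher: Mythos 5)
Your overall architecture matches the paper's proof: telescope the four components of $\mathcal{G}_t$ over one communication block using Lemmas~\ref{lemma:desent_storm}, \ref{lemma:hg_bound_storm}, \ref{lemma: inner_drift_storm}, \ref{lemma: inner_est_error_storm}, absorb the boundary consensus term via Eq.~\eqref{Eq: ConsensusError_FedAvg_storm}, invoke Lemma~\ref{lemma:d_bound} to convert $\sum_t\alpha_t D_t$ into $E_t$, $F_t$ and $\alpha_t^3$ contributions, and then verify that the hyper-parameter choices make every non-gradient, non-noise coefficient non-positive. That is exactly how the paper proceeds, and your identification of the three bookkeeping obligations at the end is accurate.

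There is, however, one genuine conceptual error in your plan: you claim that because $\alpha_t$ is decreasing, $\tfrac{1}{\alpha_t}-\tfrac{1}{\alpha_{t-1}}\ge 0$ ``strengthens rather than weakens the contractive step when merged with the weight.'' It is the opposite. The potential weights $A_t$ and $C_t$ by $1/\alpha_t$, so the per-block difference contains the coefficient $\bigl(\tfrac{(1-c_\nu\alpha_{t-1}^2)^2}{\alpha_{t-1}}-\tfrac{1}{\alpha_{t-2}}\bigr)A_{t-1}$, and the positive gap $\alpha_{t-1}^{-1}-\alpha_{t-2}^{-1}>0$ \emph{fights} the contraction $(1-c_\nu\alpha_{t-1}^2)^2$; it does not help it. The paper must therefore bound $\alpha_{t}^{-1}-\alpha_{t-1}^{-1}\le \tfrac{\sigma^2}{24\delta^3 L_h I}\alpha_{t}$ (using concavity of $x^{1/3}$, $u\ge 2\sigma^2$, and $u\ge 16^3I^3$) and then set $c_\nu=\hat c_\nu+\tfrac{\sigma^2}{24\delta^3 L_h I}$ and $c_\omega=\tfrac{5\hat c_\omega}{\mu}+\tfrac{\sigma^2}{24\delta^3 L_h I}$ precisely so that this positive gap is eaten by the extra addend, leaving net contractions $-\hat c_\nu\alpha_{t-1}A_{t-1}$ and $-\tfrac{5\hat c_\omega}{\mu}\alpha_{t-1}C_{t-1}$. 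If you proceed believing the time-varying weight only helps, you cannot explain why $c_\nu$ and $c_\omega$ carry the $\sigma^2/(24\delta^3 L_h I)$ term or why $\delta=\sigma^{2/3}/(16IL_h)$ is needed (it guarantees $c_\nu\le 2\hat c_\nu$ so the injected $c_\nu^2\alpha_t^3$ noise stays controlled), and your bookkeeping would leave an unabsorbed positive multiple of $A_{t-1}$. Fixing this one point brings your plan into line with the paper's argument; the remaining items you defer to ``coefficient bookkeeping'' are handled in the paper exactly as you anticipate.
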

Take expectation for both sides of the potential function and we use the notation used in Lemma~\ref{lemma:d_bound}, the potential function has the following form:
\begin{align*}
    \mathbb{E}[\mathcal{G}_t] = \mathbb{E} [h(\bar{x}_{t})] + \frac{\eta A_t}{\hat{c}_{\nu}\alpha_{t}} + B_t + \frac{\gamma C_t}{\hat{c}_{\omega}\alpha_{t}}
\end{align*}
We first bound the term $A_t/\alpha_{t-1}  - A_{t-1}/\alpha_{t-2}$. For $t \in [\bar{t}_{s-1} + 1, \bar{t}_s]$. By the condition that $u \ge c_{\nu}^{3/2}\delta^3$, it is straightforward to verify that $c_{\nu}\alpha^2 < 1$. Then we rewrite Lemma~\ref{lemma:hg_bound_storm} as follows using our new notation:
\begin{align*}
   A_t & \leq ( 1 - c_{\nu}\alpha_{t-1}^2)^2 A_{t-1} + 4(c_{\nu}\alpha_{t-1}^2)^2\sigma^2/M + 8(c_{\nu}\alpha_{t-1}^2)^2G^2 + 8L_h^2(c_{\nu}\alpha_{t-1}^2)^2 B_{t-1} + 40L_h^2\eta^2\alpha_{t-1}^2D_{t-1} \\
  & \qquad + 40L_h^2\eta^2\alpha_{t-1}^2E_{t-1}\nonumber + 12L_h^2\gamma^2\alpha_{t-1}^2F_{t-1}
\end{align*}
Naturally, we get:
\begin{align*}
   \frac{A_t}{\alpha_{t-1}} - \frac{A_{t-1}}{\alpha_{t-2}} & \leq \bigg(\frac{( 1 - c_{\nu}\alpha_{t-1}^2)^2}{\alpha_{t-1}} - \frac{1}{\alpha_{t-2}}\bigg) A_{t-1} + 4c_{\nu}^2\alpha_{t-1}^3\sigma^2/M + 8c_{\nu}^2\alpha_{t-1}^3G^2 + 8L_h^2c_{\nu}^2\alpha_{t-1}^3 B_{t-1} \\
  & \qquad+ 40L_h^2\eta^2\alpha_{t-1}D_{t-1}  + 40L_h^2\alpha_{t-1}E_{t-1} + 12L_h^2\gamma^2\alpha_{t-1}F_{t-1} \\
  & \leq \bigg(\alpha_{t-1}^{-1} - \alpha_{t-2}^{-1} - c_{\nu}\alpha_{t-1}\bigg) A_{t-1} + 4c_{\nu}^2\alpha_{t-1}^3\sigma^2/M + 8c_{\nu}^2\alpha_{t-1}^3G^2 + 8L_h^2c_{\nu}^2\alpha_{t-1}^3 B_{t-1} \\
  & \qquad + 40L_h^2\eta^2\alpha_{t-1}D_{t-1}  + 40L_h^2\eta^2\alpha_{t-1}E_{t-1} + 12L_h^2(1 - c_{\nu}\alpha_{t-1}^2)^2\gamma^2\alpha_{t-1}F_{t-1} \\
\end{align*}
where the inequality is due to the fact that $(1 - c_{\nu}a_{t-1}^2)^2 \leq 1 - c_{\nu}a_{t-1}^2 \leq 1$ for all $t \in [T]$. Next for the term $\alpha_{t-1}^{-1} - \alpha_{t-2}^{-1}$ we have:
\begin{align*}
    \alpha_{t}^{-1} - \alpha_{t-1}^{-1} & =  \frac{(u + \sigma^2 t)^{1/3}}{\delta} -  \frac{(u + \sigma^2 (t-1))^{1/3}}{\delta} 
    & \overset{(a)}{\leq}  \frac{\sigma^2}{3 \delta (u + \sigma^2 (t-1))^{2/3}} \overset{(b)}{\leq} \frac{2^{2/3} \sigma^2 \delta^2}{3 \delta^3 (u + \sigma^2 t)^{2/3}} \overset{(c)}{=} \frac{2^{2/3} \sigma^2}{3 \delta^3 } \alpha_{t}^2 {\overset{(d)}{\leq} \frac{ \sigma^2 }{24 \delta^3 LI} \alpha_{t}}
\end{align*}
where inequality $(a)$ results from the concavity of $x^{1/3}$ as: $(x + y)^{1/3} - x^{1/3} \leq y/3x^{2/3}$, inequality $(b)$ used the fact that $u_t \geq 2\sigma^2$, inequality $(c)$ uses the definition of $\alpha_t$, inequality $(d)$ uses $u \ge 16^3I^3$, so that $\alpha_t \leq \frac{1}{16L_hI}$ for all $t \in [T]$. Since we have $c_{\nu} = \hat{c}_{\nu} + \frac{\sigma^2 }{24 \delta^3 L_hI}$, where $\hat{c}_{\nu} = 320L_h^2$ is some constant. It is straightforward to verify that if we set $\delta = \frac{\sigma^{2/3}}{L_h}$, we have $c_{\nu} \leq 2\hat{c}_{\nu}$. Next, we have:
\begin{align*}
   \frac{A_t}{\alpha_{t-1}} - \frac{A_{t-1}}{\alpha_{t-2}} & \leq  - \hat{c}_{\nu}\alpha_{t-1} A_{t-1} + 4c_{\nu}^2\alpha_{t-1}^3\sigma^2/M + 8c_{\nu}^2\alpha_{t-1}^3G^2 + 8L_h^2c_{\nu}^2\alpha_{t-1}^3 B_{t-1} + 40L_h^2\eta^2\alpha_{t-1}D_{t-1}\\
  & \qquad + 40L_h^2\eta^2\alpha_{t-1}E_{t-1} + 12L_h^2\gamma^2\alpha_{t-1}F_{t-1}
\end{align*}
Then We multiply $\eta/\hat{c}_{\nu}$ on both sides and have:
\begin{align*}
    \frac{\eta}{\hat{c}_{\nu}}\bigg(\frac{A_t}{\alpha_{t-1}} - \frac{A_{t-1}}{\alpha_{t-2}}\bigg) & \leq  - \eta\alpha_{t-1} A_{t-1} + 4c_{\nu}^2\alpha_{t-1}^3\sigma^2/(\hat{c}_{\nu}M) + 8c_{\nu}^2\eta\alpha_{t-1}^3G^2/\hat{c}_{\nu} + 8L_h^2c_{\nu}^2\eta\alpha_{t-1}^3 B_{t-1}/\hat{c}_{\nu} \nonumber \\ &+ 40L_h^2\eta^3\alpha_{t-1}D_{t-1}/\hat{c}_{\nu}  + 40L_h^2\eta^3\alpha_{t-1}E_{t-1}/\hat{c}_{\nu} + 12L_h^2\gamma^2\eta\alpha_{t-1}F_{t-1}/\hat{c}_{\nu} \nonumber \\
\end{align*}
By telescoping from $\bar{t}_{s-1} + 1$ to $\bar{t}_{s}$, we have:
\begin{align}
    \frac{\eta}{\hat{c}_{\nu}}\bigg(\frac{A_{\bar{t}_s}}{\alpha_{\bar{t}_{s}-1}} - \frac{A_{\bar{t}_{s-1}}}{\alpha_{\bar{t}_{s-1} - 1}}\bigg) & \leq  - \sum_{t=\bar{t}_{s-1}}^{\bar{t}_s-1}\eta\alpha_{t} A_{t} + 4\eta c_{\nu}^2\sigma^2/(\hat{c}_{\nu}M) \sum_{t=\bar{t}_{s-1}}^{\bar{t}_s-1}\alpha_{t}^3 + 8\eta c_{\nu}^2G^2/\hat{c}_{\nu} \sum_{t=\bar{t}_{s-1}}^{\bar{t}_s-1}\alpha_{t}^3 + 8L_h^2\eta c_{\nu}^2/\hat{c}_{\nu}\sum_{t=\bar{t}_{s-1}}^{\bar{t}_s-1}\alpha_{t}^3 B_{t}\nonumber\\
    & \qquad + 40L_h^2\eta^3/\hat{c}_{\nu} \sum_{t=\bar{t}_{s-1}}^{\bar{t}_s-1}\alpha_{t}D_{t} + 40L_h^2\eta^3/\hat{c}_{\nu}\sum_{t=\bar{t}_{s-1}}^{\bar{t}_s-1}\alpha_{t}E_{t} + 12L_h^2\eta\gamma^2/\hat{c}_{\nu}\sum_{t=\bar{t}_{s-1}}^{\bar{t}_s-1}\alpha_{t}F_{t} \nonumber \\
\label{eq:A_tele}
\end{align}
Similarly, by the condition $u \ge c_{\omega}^{3/2}\delta^3$, the condition of Lemma~\ref{lemma: inner_est_error_storm} satisfies. For $t \in [\bar{t}_{s-1} + 1, \bar{t}_s]$, we have:
\begin{align*}
    C_t & \leq ( 1 - c_{\omega}\alpha_{t-1}^2)^2C_{t-1} + 2(c_{\omega}\alpha_{t-1}^2)^2\sigma^2 + 4(1 - c_{\omega}\alpha_{t-1}^2)^2L^2\eta^2\alpha_{t-1}^2(D_{t-1} + E_{t-1}) + 2(1 - c_{\omega}\alpha_{t-1}^2)^2L^2\gamma^2\alpha_{t-1}^2F_{t-1}\nonumber \\
\end{align*}
We bound the term $C_t/\alpha_{t-1}  - C_{t-1}/\alpha_{t-2}$ and follow similar derivation as $A_t/\alpha_{t-1}  - A_{t-1}/\alpha_{t-2}$ and since we have  $c_{\omega} = \frac{5\hat{c}_{\omega}}{\mu} + \frac{\sigma^2}{24 \delta^3 L_hI}$, where $\hat{c}_{\omega} = 32L^2$ is some constant.  we get:
\begin{align*}
    \frac{C_t}{\alpha_{t-1}} - \frac{C_{t-1}}{\alpha_{t-2}} & \leq -\frac{5\hat{c}_{\omega}}{\mu}\alpha_{t-1}C_{t-1} + 2c_{\omega}^2\alpha_{t-1}^3\sigma^2 + 4(1 - c_{\omega}\alpha_{t-1}^2)^2L^2\eta^2\alpha_{t-1}(D_{t-1} + E_{t-1}) + 2(1 - c_{\omega}\alpha_{t-1}^2)^2L^2\gamma^2\alpha_{t-1}F_{t-1}\nonumber \\
    & \leq -\frac{5\hat{c}_{\omega}}{\mu}\alpha_{t-1}C_{t-1} + 2c_{\omega}^2\alpha_{t-1}^3\sigma^2 + 4L^2\eta^2\alpha_{t-1}(D_{t-1} + E_{t-1}) + 2L^2\gamma^2\alpha_{t-1}F_{t-1}\nonumber \\
\end{align*}
Divide $\gamma/\hat{c}_{\omega}$ for both sides and then telescope from $\bar{t}_{s-1} + 1$ to $\bar{t}_{s}$, we have:
\begin{align}
    \frac{\gamma}{\hat{c}_{\omega}}\bigg(\frac{C_{\bar{t}_s}}{\alpha_{\bar{t}_{s}-1}} - \frac{C_{\bar{t}_{s-1}}}{\alpha_{\bar{t}_{s-1} - 1}}\bigg)& \leq -\sum_{t=\bar{t}_{s-1}}^{\bar{t}_s-1}\frac{5\gamma}{\mu}\alpha_{t}C_{t} + 2\gamma c_{\omega}^2\sigma^2/\hat{c}_{\omega}\sum_{t=\bar{t}_{s-1}}^{\bar{t}_s-1}\alpha_{t}^3 + 4L^2\gamma\eta^2/\hat{c}_{\omega}\sum_{t=\bar{t}_{s-1}}^{\bar{t}_s-1}\alpha_{t}(D_{t} + E_{t}) + 2L^2\gamma^3/\hat{c}_{\omega}\sum_{t=\bar{t}_{s-1}}^{\bar{t}_s-1}\alpha_{t}F_{t}\nonumber \\
\label{eq:C_tele}
\end{align}
Next, since we have $u \ge \delta^3$, the condition of Lemma~\ref{lemma: inner_drift_storm} satisfies, we write it as follows, first for $t \in [\bar{t}_{s-1} + 1, \bar{t}_s - 1]$, we have:
\begin{align*}
    B_t  & \leq \bigg(1 - \frac{\mu\gamma\alpha_{t-1}}{8}\bigg)B_{t-1} - \frac{3\gamma^2\alpha_{t-1}F_{t-1}}{4}  + \frac{5\gamma\alpha_{t-1}C_{t-1}}{\mu} + \frac{10L^2\eta^2\alpha_{t-1}D_{t-1}}{\mu^3\gamma} + \frac{10L^2\eta^2\alpha_{t-1}E_{t-1}}{\mu^3\gamma}
\end{align*}
and when $t =  \bar{t}_s$, we have:
\begin{align*}
    B_t  & \leq \bigg(1 - \frac{\mu\gamma\alpha_{t-1}}{8}\bigg)B_{t-1} - \frac{3\gamma^2\alpha_{t-1}F_{t-1}}{4}  + \frac{5\gamma\alpha_{t-1}C_{t-1}}{\mu} + \frac{10L^2\eta^2\alpha_{t-1}D_{t-1}}{\mu^3\gamma} \\
    & \qquad + \frac{10L^2\eta^2\alpha_{t-1}E_{t-1}}{\mu^3\gamma} + (1+\frac{8}{\mu\gamma\alpha_{t - 1}})I\eta^2\sum_{\ell = \bar{t}_{s-1}}^{t-1} \alpha_l^2D_l 
\end{align*}
We telescope from $\bar{t}_{s-1} + 1$ to $\bar{t}_{s}$ and have:
\begin{align*}
    B_{\bar{t}_s} - B_{\bar{t}_{s-1}}  & \leq  - \frac{\mu\gamma}{8} \sum_{t=\bar{t}_{s-1}}^{\bar{t}_s-1}\alpha_{t}B_{t} - \frac{3\gamma^2}{4} \sum_{t=\bar{t}_{s-1}}^{\bar{t}_s-1}\alpha_{t}F_{t}  + \frac{5\gamma}{\mu} \sum_{t=\bar{t}_{s-1}}^{\bar{t}_s-1}\alpha_{t}C_{t} + \frac{10L^2\eta^2}{\mu^3\gamma} \sum_{t=\bar{t}_{s-1}}^{\bar{t}_s-1}\alpha_{t}D_{t} \\
    & \qquad + \frac{10L^2\eta^2}{\mu^3\gamma} \sum_{t=\bar{t}_{s-1}}^{\bar{t}_s-1} \alpha_{t}E_{t} + (1+\frac{8}{\mu\gamma\alpha_{\bar{t}_s - 1}})I\eta^2\sum_{t = \bar{t}_{s-1}}^{\bar{t}_s-1} \alpha_t^2D_t \\
    & \leq  - \frac{\mu\gamma}{8} \sum_{t=\bar{t}_{s-1}}^{\bar{t}_s-1}\alpha_{t}B_{t} - \frac{3\gamma^2}{4} \sum_{t=\bar{t}_{s-1}}^{\bar{t}_s-1}\alpha_{t}F_{t}  + \frac{5\gamma}{\mu} \sum_{t=\bar{t}_{s-1}}^{\bar{t}_s-1}\alpha_{t}C_{t} + \frac{10L^2\eta^2}{\mu^3\gamma} \sum_{t=\bar{t}_{s-1}}^{\bar{t}_s-1}\alpha_{t}D_{t} \\
    & \qquad + \frac{10L^2\eta^2}{\mu^3\gamma} \sum_{t=\bar{t}_{s-1}}^{\bar{t}_s-1} \alpha_{t}E_{t} + (1+\frac{8}{\mu\gamma\alpha_{\bar{t}_s - 1}})I\eta^2\sum_{t = \bar{t}_{s-1}}^{\bar{t}_s-1} \alpha_t^2D_t \\   
\end{align*}
Next for $\alpha_t/\alpha_{\bar{t}_s - 1}$, we have:
\begin{align*}
    \frac{\alpha_t}{\alpha_{\bar{t}_s - 1}} & = \frac{(u + \sigma^2 (\bar{t}_s - 1))^{1/3}}{(u + \sigma^2 t)^{1/3}} = \bigg(1 + \frac{u + \sigma^2 (\bar{t}_s - 1) - u - \sigma^2 t}{u + \sigma^2 t}\bigg)^{1/3} \\
    & \leq \bigg(1 + \frac{(I-1)\sigma^2}{u + \sigma^2 t}\bigg)^{1/3} \leq 1 + \frac{(I-1)}{3(t + 2)}  \leq I
\end{align*}
The third inequality is by the fact that  $0 < \bar{t}_s - 1 - t < I - 1$, the fourth inequality uses the concavity of $x^{1/3}$ as: $(x + y)^{1/3} - x^{1/3} \leq y/3x^{2/3}$. The second last inequality uses the fact that $u \ge 2\sigma^2$, while the last inequality holds for $I \ge 1$ and $t \ge 0 $. Next we have:
\begin{align*}
    (1+\frac{8}{\mu\gamma\alpha_{\bar{t}_s - 1}})\alpha_t^2 = \alpha_t^2 + \frac{8\alpha_t^2}{\mu\gamma\alpha_{\bar{t}_s - 1}} < \alpha_t + \frac{8I\alpha_t}{\mu\gamma} \leq \frac{16I\alpha_t}{\mu}
\end{align*}
The last inequality is because we have $\gamma < \frac{1}{15\mu} < \frac{8I}{\mu}$. Finally, we have:
\begin{align}
    B_{\bar{t}_s} - B_{\bar{t}_{s-1}} & \leq  - \frac{\mu\gamma}{8} \sum_{t=\bar{t}_{s-1}}^{\bar{t}_s-1}\alpha_{t}B_{t} - \frac{3\gamma^2}{4} \sum_{t=\bar{t}_{s-1}}^{\bar{t}_s-1}\alpha_{t}F_{t}  + \frac{5\gamma}{\mu} \sum_{t=\bar{t}_{s-1}}^{\bar{t}_s-1}\alpha_{t}C_{t} + \frac{10L^2\eta^2}{\mu^3\gamma} \sum_{t=\bar{t}_{s-1}}^{\bar{t}_s-1}\alpha_{t}D_{t} \nonumber\\
    & \qquad + \frac{10L^2\eta^2}{\mu^3\gamma} \sum_{t=\bar{t}_{s-1}}^{\bar{t}_s-1} \alpha_{t}E_{t} + \frac{16I^2\eta^2}{\mu}\sum_{t = \bar{t}_{s-1}}^{\bar{t}_s-1} \alpha_tD_t \nonumber\\
    & \leq  - \frac{\mu\gamma}{8} \sum_{t=\bar{t}_{s-1}}^{\bar{t}_s-1}\alpha_{t}B_{t} - \frac{3\gamma^2}{4} \sum_{t=\bar{t}_{s-1}}^{\bar{t}_s-1}\alpha_{t}F_{t}  + \frac{5\gamma}{\mu} \sum_{t=\bar{t}_{s-1}}^{\bar{t}_s-1}\alpha_{t}C_{t} \nonumber\\
    & \qquad + \frac{10L^2\eta^2}{\mu^3\gamma} \sum_{t=\bar{t}_{s-1}}^{\bar{t}_s-1} \alpha_{t}E_{t} + \bigg(\frac{16I^2\eta^2}{\mu}+ \frac{10L^2\eta^2}{\mu^3\gamma}\bigg)\sum_{t = \bar{t}_{s-1}}^{\bar{t}_s-1} \alpha_tD_t \nonumber\\ 
\label{eq:B_tele}
\end{align}
Next we rewrite Lemma~\ref{lemma:desent_storm} as follows:
\begin{align*}
    \mathbb{E} \bigg[  h(\bar{x}_{t + 1}) \bigg]  \leq \mathbb{E} \bigg[    h(\bar{x}_{t }) \bigg] -  \left( \frac{\eta\alpha_t}{2} -  \frac{\eta^2\alpha_t^2 L_h}{2}  \right)  E_t - \frac{\eta\alpha_t}{2} \mathbb{E} \bigg[ \|\nabla h(\bar{x}_t) \|^2 \bigg] + L_h^2I\eta^3\alpha_t \sum_{\ell = \bar{t}_{s-1}}^{t-1}  \alpha_l^2D_l + \eta\alpha_t A_t \nonumber\\
\end{align*}
We telescope from $\bar{t}_{s-1}$ to $\bar{t}_{s} - 1$ to have:
\begin{align}
    \mathbb{E} \bigg[  h(\bar{x}_{\bar{t}_{s}}) - h(\bar{x}_{\bar{t}_{s - 1} }) \bigg] & \leq - \sum_{t = \bar{t}_{s-1}}^{\bar{t}_s-1}\left( \frac{\eta\alpha_t}{2} - \frac{\eta^2\alpha_t^2 L}{2}  \right)  E_t -  \sum_{t = \bar{t}_{s-1}}^{\bar{t}_s-1}\frac{\eta\alpha_t}{2} \mathbb{E} \bigg[ \|\nabla h(\bar{x}_t) \|^2 \bigg] \nonumber\\
    & \qquad + L_h^2I\eta^3\sum_{t = \bar{t}_{s-1}}^{\bar{t}_s-1}\alpha_t \sum_{\ell = \bar{t}_{s-1}}^{t-1}  \alpha_l^2D_l + \sum_{t = \bar{t}_{s-1}}^{\bar{t}_s-1}\eta\alpha_t A_t \nonumber\\
    & \leq - \sum_{t = \bar{t}_{s-1}}^{\bar{t}_s-1}\left( \frac{\eta\alpha_t}{2} -  \frac{\eta^2\alpha_t^2 L}{2}  \right)  E_t - \sum_{t = \bar{t}_{s-1}}^{\bar{t}_s-1}\frac{\eta\alpha_t}{2} \mathbb{E} \bigg[ \|\nabla h(\bar{x}_t) \|^2 \bigg] \nonumber\\
    & \qquad + L_h^2I\eta^3\bigg(\sum_{t = \bar{t}_{s-1}}^{\bar{t}_s-1}\alpha_t \bigg)\sum_{t = \bar{t}_{s-1}}^{\bar{t}_s-1}  \alpha_t^2D_t + \sum_{t = \bar{t}_{s-1}}^{\bar{t}_s-1}\eta\alpha_t A_t \nonumber\\
    & \leq - \sum_{t = \bar{t}_{s-1}}^{\bar{t}_s-1}\left( \frac{\eta\alpha_t}{2} -  \frac{\eta^2\alpha_t^2 L}{2}  \right)  E_t - \sum_{t = \bar{t}_{s-1}}^{\bar{t}_s-1}\frac{\eta\alpha_t}{2} \mathbb{E} \bigg[ \|\nabla h(\bar{x}_t) \|^2 \bigg] \nonumber\\
    & \qquad + \frac{\eta^3}{256}\sum_{t = \bar{t}_{s-1}}^{\bar{t}_s-1}  \alpha_t D_t + \sum_{t = \bar{t}_{s-1}}^{\bar{t}_s-1}\eta\alpha_t A_t \nonumber\\
\label{eq:h_tele}
\end{align}
In the last inequality, we use the fact that $\bar{t}_s - \bar{t}_{s-1}  \leq I$  and $\alpha_t < \frac{1}{16L_hI}$.

Recall that Potential function is defined as:
\begin{align*}
    \mathbb{E}[\mathcal{G}_t] = \mathbb{E} [h(\bar{x}_{t})] + \frac{A_t}{\hat{c}_{\nu}\alpha_{t}} + B_t + \frac{C_t}{\hat{c}_{\omega}\alpha_{t}}
\end{align*}
Combine Eq.~(\ref{eq:A_tele}), Eq.~(\ref{eq:B_tele}), Eq.~(\ref{eq:C_tele}) and Eq.~(\ref{eq:h_tele}) and we have:
\begin{align*}
    \mathbb{E}[\mathcal{G}_{\bar{t}_s}] - \mathbb{E}[\mathcal{G}_{\bar{t}_{-1}s}] & \leq - \sum_{t = \bar{t}_{s-1}}^{\bar{t}_s-1}\frac{\eta\alpha_t}{2} \mathbb{E} \bigg[ \|\nabla h(\bar{x}_t) \|^2 \bigg] + \bigg(2c_{\omega}^2\gamma\sigma^2/\hat{c}_{\omega} +  4\eta c_{\nu}^2\sigma^2/(\hat{c}_{\nu}M) + 8\eta c_{\nu}^2G^2/\hat{c}_{\nu}\bigg) \sum_{t=\bar{t}_{s-1}}^{\bar{t}_s-1}\alpha_{t}^3 \nonumber\\
    & \qquad  -\bigg(\frac{3}{4}  - 2L^2\gamma/\hat{c}_{\omega}  - 12\eta L_h^2/\hat{c}_{\nu}\bigg)\gamma^2\sum_{t=\bar{t}_{s-1}}^{\bar{t}_s-1}\alpha_{t}F_{t} \nonumber \\
    & \qquad -\bigg(\frac{\eta}{2} -  \frac{\eta^2\alpha_t L_h}{2} - \frac{10L^2\eta^2}{\mu^3\gamma}  -  4L^2\gamma\eta^2/\hat{c}_{\omega} - 40L_h^2\eta^3/\hat{c}_{\nu} \bigg) \sum_{t=\bar{t}_{s-1}}^{\bar{t}_s-1}\alpha_{t}E_{t}\nonumber\\
    & \qquad   - \sum_{t=\bar{t}_{s-1}}^{\bar{t}_s-1} \bigg(\frac{\mu\gamma}{8} - 12L_h^2c_{\nu}^2\eta\alpha_{t}^2/\hat{c}_{\nu}\bigg) \alpha_{t}B_{t} \nonumber \\
    & \qquad +  \bigg(\frac{\eta}{256} + \frac{16I^2}{\mu} + \frac{10L^2}{\mu^3\gamma} + 4L^2\gamma/\hat{c}_{\omega} + 40\eta L_h^2/\hat{c}_{\nu} \bigg)\eta^2 \sum_{t=\bar{t}_{s-1}}^{\bar{t}_s-1}\alpha_{t}D_{t}
\end{align*}
Since we take $\hat{c}_{\omega} = 32L^2$, $\hat{c}_{\nu} = 320L_h^2$, $\alpha_t < \frac{1}{16L_hI}$, $\frac{1}{\gamma} > max(15L, 1)$, $\frac{1}{\eta} > max(\frac{600L^3}{\mu^3}  + \frac{1}{15L} + 1, \frac{12(1 + \rho^2)}{I^2} + \frac{97}{256} + \frac{1}{120L} + \frac{16I^2}{\mu}, \frac{960L_h^2}{\mu\gamma}, \frac{1}{\gamma}, \frac{40L_h^2}{\mu^3\gamma})$. then we have:
\begin{align}
    \mathbb{E}[\mathcal{G}_{\bar{t}_s}] - \mathbb{E}[\mathcal{G}_{\bar{t}_{s - 1}}] & \leq - \sum_{t = \bar{t}_{s-1}}^{\bar{t}_s-1}\frac{\eta\alpha_t}{2} \mathbb{E} \bigg[ \|\nabla h(\bar{x}_t) \|^2 \bigg] + \bigg(\frac{c_{\omega}^2\sigma^2}{16L^2} +  \frac{3c_{\nu}^2\sigma^2}{80L_h^2}\bigg) \sum_{t=\bar{t}_{s-1}}^{\bar{t}_s-1}\alpha_{t}^3 \nonumber\\
    & \qquad  -\frac{5}{8}\gamma^2\sum_{t=\bar{t}_{s-1}}^{\bar{t}_s-1}\alpha_{t}F_{t} -\frac{\eta}{4} \sum_{t=\bar{t}_{s-1}}^{\bar{t}_s-1}\alpha_{t}E_{t}   - \frac{\mu}{320L_h}\sum_{t=\bar{t}_{s-1}}^{\bar{t}_s-1} \alpha_{t}B_{t} \nonumber \\
    & \qquad +  \bigg(1 - \frac{6(1 + \rho^2)}{I^2}\eta^2\bigg) \sum_{t=\bar{t}_{s-1}}^{\bar{t}_s-1}\alpha_{t}D_{t} \nonumber\\
\label{eq:phi_bound}
\end{align}
For the term related to $F_t$, we have:
\begin{align*}
    \frac{3}{4}  - 2L^2\gamma/\hat{c}_{\omega}  - 12\eta L_h^2/\hat{c}_{\nu} = \frac{3}{4} - \frac{\gamma}{16} - \frac{3\eta}{80} \ge \frac{3}{4} -\frac{\gamma}{10} \ge \frac{1}{2}
\end{align*}
where the first inequality follows $\eta \leq \gamma$; the second inequality follows that $\gamma < 1$; Next for the term related to $E_t$, we have:
\begin{align*}
    &\frac{\eta}{2} -  \frac{\eta^2\alpha_t L_h}{2} - \frac{10L^2\eta^2}{\mu^3\gamma}  -  4L^2\gamma\eta^2/\hat{c}_{\omega} - 40L_h^2\eta^3/\hat{c}_{\nu} \ge \frac{\eta}{2} -  \frac{\eta^2}{32I} - \frac{\eta}{4}  -  \frac{(\gamma+\eta)\eta^2}{8} \\
    &\ge \frac{\eta}{4} -  \frac{\eta^2}{8}\bigg(\frac{1}{4I}  + \gamma + \eta\bigg) \ge  \frac{\eta}{4} - \frac{\eta^2}{8}\bigg(\frac{1}{4I}  + \gamma + \frac{\mu^3\gamma}{40L^2}\bigg) \ge \frac{\eta}{4} - \frac{\eta^2}{8}\bigg(\frac{1}{4I}  + \frac{1}{15L} + \frac{\mu^3}{600L^3}\bigg) \ge \frac{\eta}{8}
\end{align*}
where the first inequality is because $\hat{c}_{\omega} = 32L^2$, $\hat{c}_{\nu} = 320L_h^2$, $\alpha_t < \frac{1}{16L_hI}$, $\eta < \frac{\mu^3\gamma}{40L_h^2} < \frac{\mu^3\gamma}{40L^2}$ and the second last inequality is because $\gamma < \frac{1}{15L}$ and $\frac{1}{\eta} \ge \frac{600L^3}{\mu^3}  + \frac{1}{15L} + 1$. Next for the term related to $B_t$, we have:
\begin{align*}
    \frac{\mu\gamma}{8} - 12L_h^2c_{\nu}^2\eta\alpha_{t}^2/\hat{c}_{\nu} \ge  \frac{\mu\gamma}{8} - \frac{48L_h^2\eta\hat{c}_{\nu}}{16^2L_h^2I^2} = \frac{\mu\gamma}{8} - \frac{60L_h^2\eta}{I^2} \ge \frac{\mu\gamma}{8} - \frac{\mu\gamma}{16I^2} \ge \frac{\mu\gamma}{16}
\end{align*}
The first inequality is by $c_\nu \leq 2\hat{c}_{\nu}$ and $\alpha_t < \frac{1}{16LI}$; the second last inequality is by $\eta < \frac{\mu\gamma}{960L_h^2}$. Lastly, for the term related to $D_t$, we have:
\begin{align*}
    & \bigg(\frac{\eta^2}{256} + \frac{16I^2\eta}{\mu} + \frac{10L^2\eta}{\mu^3\gamma} + 4L^2\gamma\eta/\hat{c}_{\omega} + 40L_h^2\eta^2/\hat{c}_{\nu}\bigg)\eta \leq \bigg(\frac{\eta^2}{256} + \frac{16I^2}{\mu} + \frac{1}{4} + \frac{\gamma\eta}{8} + \frac{\eta^2}{8}\bigg)\eta \nonumber \\
    & \qquad \leq \bigg(\frac{97}{256} + \frac{1}{120L} + \frac{16I^2}{\mu} \bigg)\eta \leq  1 - \frac{12(1 + \rho^2)}{I^2}\eta
\end{align*}
The first inequality is by $\hat{c}_{\omega} = 32L^2$, $\hat{c}_{\nu} = 320L_h^2$ and $\eta < \frac{\mu^3\gamma}{40L^2}$; the second inequality is by $\eta < 1$ and $\gamma < \frac{1}{15L}$; The last inequality is by $\frac{1}{\eta} \ge \frac{12(1 + \rho^2)}{I^2} + \frac{97}{256} + \frac{1}{120L} + \frac{16I^2}{\mu}$. Next, by Lemma~\ref{lemma:d_bound}, we have:
\begin{align}
    \bigg(1 - \frac{3\eta^2 c_{\nu}^2(1 + \rho^2)}{16^3*32I^2L_h^4}\bigg)\sum_{t = \bar{t}_{s-1}}^{\bar{t}_s-1} \alpha_{t} D_{t} \leq \frac{3\eta^2}{32}\sum_{t=\bar{t}_{s-1}}^{\bar{t}_s-1}\alpha_{t}E_{t} + \frac{3\gamma^2}{64}\sum_{t=\bar{t}_{s-1}}^{\bar{t}_s-1}\alpha_{t}F_{t} + \bigg(\frac{3c_{\nu}^2\sigma^2}{32L_h} + \frac{3c_{\nu}^2G^2}{2L_h} +  \frac{ 3c_{\nu}^2\zeta^2}{16L_h}\bigg)\sum_{t=\bar{t}_{s-1}}^{\bar{t}_s-1}\alpha_{t}^3 \nonumber\\
\label{eq:d_bound}
\end{align}
since we have:
\begin{align*}
    \frac{c_{\nu}^2}{16^3*32L_h^4} \leq \frac{4\hat{c}_{\nu}^2}{16^3*32L_h^4} \leq \frac{4*320^2L_h^4}{16^3*32L_h^4} = 25/8 < 4
\end{align*}
The first inequality is by $c_{\nu} \leq  2\hat{c}_{\nu}$. So we have:
\begin{align*}
  \frac{3\eta^2 c_{\nu}^2(1 + \rho^2)}{16^3*64I^2L_h^4} \leq  \frac{12\eta^2(1 + \rho^2)}{I^2} \leq \frac{12\eta(1 + \rho^2)}{I^2}
\end{align*}
Combine Eq.~(\ref{eq:phi_bound}) and Eq.~(\ref{eq:d_bound}) and use $\gamma < 1$ and $\eta < 1$, we have:
\begin{align}
    \mathbb{E}[\mathcal{G}_{\bar{t}_s}] - \mathbb{E}[\mathcal{G}_{\bar{t}_{s - 1}}] & \leq - \sum_{t = \bar{t}_{s-1}}^{\bar{t}_s-1}\frac{\eta\alpha_t}{2} \mathbb{E} \bigg[ \|\nabla h(\bar{x}_t) \|^2 \bigg] + \bigg(\frac{c_{\omega}^2\sigma^2}{16L^2} +  \frac{c_{\nu}^2\sigma^2}{80L_h^2} + \frac{c_{\nu}^2G^2}{40L_h^2}\bigg) \sum_{t=\bar{t}_{s-1}}^{\bar{t}_s-1}\alpha_{t}^3 \nonumber \\ & \qquad +  \bigg(\frac{3c_{\nu}^2\sigma^2}{32L_h} + \frac{3c_{\nu}^2G^2}{2L_h} +  \frac{ 3c_{\nu}^2\zeta^2}{16L_h}\bigg)\sum_{t=\bar{t}_{s-1}}^{\bar{t}_s-1}\alpha_{t}^3  \nonumber\\
    & \qquad  -\frac{1}{4}\gamma^2\sum_{t=\bar{t}_{s-1}}^{\bar{t}_s-1}\alpha_{t}F_{t} -\frac{\eta}{32} \sum_{t=\bar{t}_{s-1}}^{\bar{t}_s-1}\alpha_{t}E_{t}   - \frac{\mu\gamma}{16}\sum_{t=\bar{t}_{s-1}}^{\bar{t}_s-1} \alpha_{t}B_{t}\nonumber \\
    & \leq - \sum_{t = \bar{t}_{s-1}}^{\bar{t}_s-1}\frac{\eta\alpha_t}{2} \mathbb{E} \bigg[ \|\nabla h(\bar{x}_t) \|^2 \bigg] + \bigg(\frac{c_{\omega}^2\sigma^2}{16L^2} +  \frac{c_{\nu}^2\sigma^2}{5L_h^2} + \frac{2c_{\nu}^2G^2}{L_h^2} + \frac{ 3c_{\nu}^2\zeta^2}{16L_h^2}\bigg) \sum_{t=\bar{t}_{s-1}}^{\bar{t}_s-1}\alpha_{t}^3\nonumber\\
\label{eq:phi_bound_overall}
\end{align}
which completes the proof.

\begin{theorem}
\label{theorem:FedBiOAcc}
Suppose $\frac{1}{\gamma} > max(15L, 1)$, $\frac{1}{\eta} > max(\frac{600L^3}{\mu^3}  + \frac{1}{15L} + 1, \frac{12(1 + \rho^2)}{I^2} + \frac{97}{256} + \frac{1}{120L} + \frac{16I^2}{\mu}, \frac{960L_h^2}{\mu\gamma}, \frac{1}{\gamma}, \frac{40L_h^2}{\mu^3\gamma})$, $c_{\nu} = 320L_h^2 + \frac{\sigma^2 }{24 \delta^3 L_hI}$, $c_{\omega} = \frac{160L^2}{\mu} + \frac{\sigma^2}{24 \delta^3 L_hI}$, $u = max(2\sigma^2, \delta^3, c_{\nu}^{3/2}\delta^3, c_{\omega}^{3/2}\delta^3, 16^3I^3)$, $\delta = \frac{\sigma^{2/3}}{16IL_h}$, then we have:
\begin{align*}
    \frac{1}{T}\sum_{t = 1}^{T-1} \mathbb{E} \bigg[ \|\nabla h(\bar{x}_t) \|^2 \bigg]  & \leq \bigg(\frac{2(h(\bar{x}_{1}) - h^{\ast})}{\eta} + \frac{2 \sigma^2u^{1/3}}{\hat{c}_{\nu}\delta} + \frac{2 \sigma^{8/3}}{\hat{c}_{\nu}\delta} + \frac{2C_{y,1}^2}{\eta} + \frac{2\gamma u^{1/3} \sigma^2}{\eta\hat{c}_{\omega}\delta} + \frac{2\gamma \sigma^{8/3}}{\eta\hat{c}_{\omega}\delta} \nonumber\\& \qquad + \bigg(\frac{c_{\omega}^2\sigma^2}{16L^2} +  \frac{c_{\nu}^2\sigma^2}{5L_h^2} + \frac{2c_{\nu}^2G^2}{L_h^2} + \frac{ 3c_{\nu}^2\zeta^2}{16L_h^2}\bigg)\frac{ 2\delta^3\ln(T) }{\eta\sigma^2}\bigg)\bigg(\frac{u^{1/3}}{\delta T} + \frac{\sigma^{2/3}}{\delta T^{2/3}}\bigg)  \nonumber\\
\end{align*}
where the expectation is w.r.t the stochasticity of the algorithm.
\end{theorem}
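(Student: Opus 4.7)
The plan is to derive Theorem~\ref{theorem:FedBiOAcc} from Lemma~\ref{lemma:potential_descent} by telescoping across all communication rounds and converting the resulting inequality into a statement about $\frac{1}{T}\sum_t \mathbb{E}\|\nabla h(\bar{x}_t)\|^2$. Specifically, I will sum the per-round descent
\[
    \mathbb{E}[\mathcal{G}_{\bar{t}_s}] - \mathbb{E}[\mathcal{G}_{\bar{t}_{s-1}}] \;\le\; -\sum_{t=\bar{t}_{s-1}}^{\bar{t}_s-1}\frac{\eta\alpha_t}{2}\mathbb{E}\|\nabla h(\bar{x}_t)\|^2 + K\!\!\sum_{t=\bar{t}_{s-1}}^{\bar{t}_s-1}\alpha_t^3,
\]
over $s = 1,\ldots,S$, where $K = \tfrac{c_\omega^2\sigma^2}{16L^2} + \tfrac{c_\nu^2\sigma^2}{5L_h^2} + \tfrac{2c_\nu^2 G^2}{L_h^2} + \tfrac{3c_\nu^2\zeta^2}{16 L_h^2}$. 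This yields
\[
    \sum_{t=1}^{T-1}\frac{\eta\alpha_t}{2}\mathbb{E}\|\nabla h(\bar{x}_t)\|^2 \;\le\; \mathbb{E}[\mathcal{G}_1] - \mathbb{E}[\mathcal{G}_T] + K\sum_{t=1}^{T-1}\alpha_t^3.
\]

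Next I would bound each piece. For the telescoped potential gap, use $h(\bar{x}_T) \ge h^\ast$ and the nonnegativity of the remaining three terms in $\mathcal{G}_T$ to drop them, and use the initialization ($\nu_1^{(m)}$ and $\omega_1^{(m)}$ are one-sample estimates, so their biases at $t=1$ are controlled by $\sigma^2+G^2$; and $\frac{1}{M}\sum_m\|y_1^{(m)}-y_{x_1^{(m)}}^{(m)}\|^2 \le C_{y,1}^2$) to upper-bound $\mathbb{E}[\mathcal{G}_1]$ by $h(\bar{x}_1) - h^\ast + \tfrac{\eta(\sigma^2+G^2)}{\hat c_\nu \alpha_1} + C_{y,1}^2 + \tfrac{\gamma\sigma^2}{\hat c_\omega \alpha_1}$, with $\alpha_1^{-1} \le (u+\sigma^2)^{1/3}/\delta \le (u^{1/3}+\sigma^{2/3})/\delta$. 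For the noise accumulation, use $\alpha_t^3 = \delta^3/(u+\sigma^2 t)$ together with Proposition~\ref{Lem: AD_Sum_1overT} to get $\sum_{t=1}^{T}\alpha_t^3 \le (\delta^3/\sigma^2)\ln(1+\sigma^2 T/u) \le (\delta^3/\sigma^2)\ln T$ (up to an absorbed constant).

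To convert a weighted sum bound into an unweighted average, I would use the uniform lower bound $\alpha_t \ge \alpha_T = \delta/(u+\sigma^2 T)^{1/3}$, so that
\[
\frac{1}{T}\sum_{t=1}^{T-1}\mathbb{E}\|\nabla h(\bar{x}_t)\|^2 \le \frac{2}{\eta T\alpha_T}\Big(\mathbb{E}[\mathcal{G}_1]-h^\ast + K\sum_{t=1}^{T-1}\alpha_t^3\Big),
\]
and then $1/\alpha_T = (u+\sigma^2 T)^{1/3}/\delta \le u^{1/3}/\delta + \sigma^{2/3}T^{1/3}/\delta$ by subadditivity of $x^{1/3}$. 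Plugging the bounds on $\mathbb{E}[\mathcal{G}_1]$ and $\sum\alpha_t^3$ into this inequality, collecting the $(u^{1/3}/\delta T + \sigma^{2/3}/\delta T^{2/3})$ prefactor, and folding the initialization/noise constants into the big parenthesis produces the stated rate, matching the claimed $O(\epsilon^{-1.5})$ complexity.

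The main obstacle is bookkeeping rather than insight: I have to verify that every hyperparameter condition in Lemma~\ref{lemma:potential_descent} (the constraints on $\gamma$, $\eta$, $c_\nu$, $c_\omega$, $u$, $\delta$) is compatible with the choice needed to kill the coupling terms between the four components of $\mathcal{G}_t$, and to ensure $c_\nu\alpha_t^2,\, c_\omega\alpha_t^2 < 1$ throughout. A subtler step is the treatment of the initial momentum bias: since the paper's Algorithm uses a single-sample start, the $t=1$ values of $A_1$ and $C_1$ contribute $O(\sigma^2/\alpha_1)$ and require the $u \ge \max(2\sigma^2, c_\nu^{3/2}\delta^3, c_\omega^{3/2}\delta^3,\ldots)$ floor so that the $u^{1/3}/(\delta T)$ term correctly absorbs them. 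Once these constraints are checked, the final inequality is obtained by routine algebra and the rate $O(T^{-2/3})$, i.e.\ $\epsilon^{-1.5}$ iteration complexity, follows immediately.
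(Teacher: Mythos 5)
Your proposal is correct and follows essentially the same route as the paper's proof: telescope the potential-descent inequality of Lemma~\ref{lemma:potential_descent} over all blocks, bound $\mathbb{E}[\mathcal{G}_1]$ via $A_1\le\sigma^2$, $B_1\le C_{y,1}^2$, $C_1\le\sigma^2$, control $\sum_t\alpha_t^3$ with Proposition~\ref{Lem: AD_Sum_1overT}, and convert the $\alpha_t$-weighted sum to an average using the monotonicity of $\alpha_t$ together with $(u+\sigma^2T)^{1/3}\le u^{1/3}+\sigma^{2/3}T^{1/3}$. The only cosmetic difference is that you include a $G^2$ contribution in the initial hypergradient bias where the paper bounds $A_1$ by $\sigma^2$ alone; this does not change the argument.
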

First, based on Lemma~\ref{lemma:potential_descent}, we have:
\begin{align*}
    \mathbb{E}[\mathcal{G}_{\bar{t}_s}] - \mathbb{E}[\mathcal{G}_{\bar{t}_{s - 1}}] 
    & \leq - \sum_{t = \bar{t}_{s-1}}^{\bar{t}_s-1}\frac{\eta\alpha_t}{2} \mathbb{E} \bigg[ \|\nabla h(\bar{x}_t) \|^2 \bigg] + \bigg(\frac{c_{\omega}^2\sigma^2}{16L^2} +  \frac{c_{\nu}^2\sigma^2}{5L_h^2} + \frac{2c_{\nu}^2G^2}{L_h^2} + \frac{ 3c_{\nu}^2\zeta^2}{16L_h^2}\bigg) \sum_{t=\bar{t}_{s-1}}^{\bar{t}_s-1}\alpha_{t}^3\nonumber\\
\end{align*}
Next we sum for all $s \in [S]$ and assume $T = SI + 1$, we have:
\begin{align*}
    \mathbb{E}[\mathcal{G}_{T}] - \mathbb{E}[\mathcal{G}_{1}] 
    & \leq - \sum_{t = 1}^{T-1}\frac{\eta\alpha_t}{2} \mathbb{E} \bigg[ \|\nabla h(\bar{x}_t) \|^2 \bigg] + \bigg(\frac{c_{\omega}^2\sigma^2}{16L^2} +  \frac{9c_{\nu}^2\sigma^2}{L_h^2} + \frac{ 3c_{\nu}^2\zeta^2}{16L_h^2}\bigg) \sum_{t=1}^{T-1}\alpha_{t}^3\nonumber\\
\end{align*}
So we have:
\begin{align*}
    \sum_{t = 1}^{T-1}\frac{\eta\alpha_t}{2} \mathbb{E} \bigg[ \|\nabla h(\bar{x}_t) \|^2 \bigg] 
    & \leq \mathbb{E}[\mathcal{G}_{1}] - \mathbb{E}[\mathcal{G}_{T}]  + \bigg(\frac{c_{\omega}^2\sigma^2}{16L^2} +  \frac{c_{\nu}^2\sigma^2}{5L_h^2} + \frac{2c_{\nu}^2G^2}{L_h^2} + \frac{ 3c_{\nu}^2\zeta^2}{16L_h^2}\bigg) \sum_{t=1}^{T-1}\alpha_{t}^3\nonumber\\
    & \leq h(\bar{x}_{1}) - h^{\ast} + \frac{\eta A_1}{\hat{c}_{\nu}\alpha_{1}} + B_1 + \frac{\gamma C_1}{\hat{c}_{\omega}\alpha_{1}} + \bigg(\frac{c_{\omega}^2\sigma^2}{16L^2} +  \frac{c_{\nu}^2\sigma^2}{5L_h^2} + \frac{2c_{\nu}^2G^2}{L_h^2} + \frac{ 3c_{\nu}^2\zeta^2}{16L_h^2}\bigg) \sum_{t=1}^{T-1}\alpha_{t}^3\nonumber\\
\end{align*}
where we use $\mathcal{G}_T \ge h^{\ast})$ and $h^{\ast}$ denotes the optimal value of $h$. Then for the last term:
\begin{align}
     \sum_{t=1}^T \alpha_t^3 & =    \sum_{t = 1}^{T} \frac{\delta^3 }{u + \sigma^2 t} \leq  \sum_{t = 1}^{T} \frac{\delta^3  }{\sigma^2 + \sigma^2 t} = \frac{ \delta^3}{\sigma^2}   \sum_{t = 1}^{T} \frac{1}{1 +   t} \leq \frac{  \delta^3 }{\sigma^2}   \ln(T+1).
     \label{Eq: Sum_OverT_LastTerm}
\end{align}
where the first inequality follows $u_t \ge 2\sigma^2 > \sigma^2$, the last inequality follows Proposition~\ref{Lem: AD_Sum_1overT}. Next use the fact that $\alpha_t$ is non-increasing, we have:
\begin{align*}
    \frac{\eta\alpha_T}{2}\sum_{t = 1}^{T-1} \mathbb{E} \bigg[ \|\nabla h(\bar{x}_t) \|^2 \bigg] & \leq h(\bar{x}_{1}) - h^{\ast} + \frac{\eta A_1}{\hat{c}_{\nu}\alpha_{1}} + B_1 + \frac{\gamma C_1}{\hat{c}_{\omega}\alpha_{1}} + \bigg(\frac{c_{\omega}^2\sigma^2}{16L^2} +  \frac{c_{\nu}^2\sigma^2}{5L_h^2} + \frac{2c_{\nu}^2G^2}{L_h^2} + \frac{ 3c_{\nu}^2\zeta^2}{16L_h^2}\bigg)\frac{  \delta^3 }{\sigma^2}   \ln(T)\nonumber\\
\end{align*}
Divide both sides by $2T/\eta\alpha_T$, we have:
\begin{align*}
    \frac{1}{T}\sum_{t = 1}^{T-1} \mathbb{E} \bigg[ \|\nabla h(\bar{x}_t) \|^2 \bigg] & \leq \frac{2(h(\bar{x}_{1}) - h^{\ast})}{\eta\alpha_T T} + \frac{2 A_1}{\hat{c}_{\nu}\alpha_{1}\alpha_T T} + \frac{2B_1}{\eta\alpha_T T} + \frac{2\gamma C_1}{\eta\hat{c}_{\omega}\alpha_{1}\alpha_T T} + \bigg(\frac{c_{\omega}^2\sigma^2}{16L^2} +  \frac{9c_{\nu}^2\sigma^2}{L_h^2} + \frac{ 3c_{\nu}^2\zeta^2}{16L_h^2}\bigg) \frac{ 2\delta^3\ln(T) }{\eta\sigma^2\alpha_T T}   \nonumber\\
\end{align*}
Next we have 
\begin{align*}
A_1 = \mathbb{E} \bigg[\Big\| \bar{\nu}_{1} - \frac{1}{M} \sum_{m=1}^M  \nabla h(x^{(m)}_1)  \Big\|^2\bigg] = \mathbb{E} \bigg[\Big\| \frac{1}{M} \sum_{m=1}^M \bigg(\mathcal{G}^{(m)} (x^{(m)}_{1}, y^{(m)}_{1}; \mathcal{B}_{x}) -  \nabla h(x^{(m)}_1) \bigg) \Big\|^2\bigg] \leq \sigma^2
\end{align*}
and $B_1 = \frac{1}{M} \sum_{m=1}^M \|y^{(m)}_1 - y^{(m)}_{x^{(m)}_{1}} \|^2 \leq C_{y,1}^2$,  $C_1 = \mathbb{E} [\frac{1}{M} \sum_{m=1}^M \|\omega^{(m)}_1 - \nabla_y g^{(m)}(x^{(m)}_{1}, y^{(m)}_{1} )\|^2] \leq \sigma^2$, 
Then we have:
\begin{align*}
    \frac{1}{T}\sum_{t = 1}^{T-1} \mathbb{E} \bigg[ \|\nabla h(\bar{x}_t) \|^2 \bigg] & \leq \frac{2(h(\bar{x}_{1}) - h^{\ast})}{\eta\alpha_T T} + \frac{2 \sigma^2}{\hat{c}_{\nu}\alpha_{1}\alpha_T T} + \frac{2C_{y,1}^2}{\eta\alpha_T T} + \frac{2\gamma \sigma^2}{\eta\hat{c}_{\omega}\alpha_{1}\alpha_T T} \nonumber \\
    & \qquad+ \bigg(\frac{c_{\omega}^2\sigma^2}{16L^2} +  \frac{c_{\nu}^2\sigma^2}{5L_h^2} + \frac{2c_{\nu}^2G^2}{L_h^2} + \frac{ 3c_{\nu}^2\zeta^2}{16L_h^2}\bigg)\frac{ 2\delta^3\ln(T) }{\eta\sigma^2\alpha_T T}   \nonumber\\
\end{align*}
Note that we have:
\begin{align*}
    \frac{1}{{\alpha_t t}} = \frac{(u + \sigma^2t)^{1/3}}{\delta t} \leq \frac{u^{1/3}}{\delta t} + \frac{\sigma^{2/3}}{\delta t^{2/3}}
\end{align*}
where the inequality uses the fact that $(x + y)^{1/3} \leq x^{1/3} + y^{1/3}$. 
Consider $t=1$ and $t=T$ and we have:
\begin{align*}
    \frac{1}{T}\sum_{t = 1}^{T-1} \mathbb{E} \bigg[ \|\nabla h(\bar{x}_t) \|^2 \bigg]  & \leq \bigg(\frac{2(h(\bar{x}_{1}) - h^{\ast})}{\eta} + \frac{2 \sigma^2u^{1/3}}{\hat{c}_{\nu}\delta} + \frac{2 \sigma^{8/3}}{\hat{c}_{\nu}\delta} + \frac{2C_{y,1}^2}{\eta} + \frac{2\gamma u^{1/3} \sigma^2}{\eta\hat{c}_{\omega}\delta} + \frac{2\gamma \sigma^{8/3}}{\eta\hat{c}_{\omega}\delta} \nonumber\\& \qquad + \bigg(\frac{c_{\omega}^2\sigma^2}{16L^2} +  \frac{c_{\nu}^2\sigma^2}{5L_h^2} + \frac{2c_{\nu}^2G^2}{L_h^2} + \frac{ 3c_{\nu}^2\zeta^2}{16L_h^2}\bigg)\frac{ 2\delta^3\ln(T) }{\eta\sigma^2}\bigg)\bigg(\frac{u^{1/3}}{\delta T} + \frac{\sigma^{2/3}}{\delta T^{2/3}}\bigg)  \nonumber\\
\end{align*}
which completes the proof.

\section{More Experimental Detials}
\label{more-experiments}
Firstly, we use the well known Equal Opportunity as our main fairness metrics in experiments. It is defined as the $max_{z\in[K]} \|\mathbb{P}(\hat{y}| y=1, a=z) - \mathbb{P}(\hat{y}| y=1, a=z)\|$, where $\hat{y}$ is is predication made by the model, $\mathbb{P}$ is the probability notation. Next for the hyper-parameters, we use $I=5$ for FedAvg, FedReg and our algorithms, and $I=1$ for FedMinMax and FCFL. For learning rates, we search over the range of $\{0.001, 0.01, 0.1, 1\}$, most algorithms get best performance at 0.1 or 1. Then for special hyper-parameters of each algorithms: For FedReg the regularization coefficient is set as 0.1; For FedMinMax, the stepsize for the group weights are 0.1; For FCFL, we use hyper-parameters provided from the original paper~\cite{cui2021addressing}. For our FedBiO, we set the outer learning rate $\eta$ as 0.1; For our FedBiOAcc, we set $\delta$ as 0.1, $u$ as 1 and $c_{\nu}$ as 1, $c_{\omega}$ as 1. The results reported in Table~\ref{tb:1} are run for 2000 steps with batchsize 128 for the Adult Dataset and 32 for the Credit Dataset.

The two datasets we used in experiments are widely used benchmarks for fair machine learning. More specifically, the Adult dataset aims to predict income level based on around 200 features, and the race/gender are sensitive groups. We use the race as the sensitive attribute in experiments, which include 5 different racial groups. Next the (German) Credit dataset aims to predict good and bad credit risks. the gender and marital status are sensitive attributes.

\end{appendices}

\end{onecolumn}

\end{document}